\documentclass{article}


\usepackage{smile}


\usepackage[final]{neurips_2023}


\usepackage[utf8]{inputenc} 
\usepackage[T1]{fontenc}    
\usepackage{hyperref}       
\usepackage{url}            
\usepackage{booktabs}       
\usepackage{amsfonts}       
\usepackage{nicefrac}       
\usepackage{microtype}      
\usepackage{xcolor}         

\pdfoutput=1

\usepackage{mathrsfs}
\usepackage{amsmath,amssymb}
\usepackage{bm}
\usepackage{natbib}
\usepackage{multirow} 
\usepackage{enumitem}
\usepackage{wrapfig}
\usepackage{caption}
\usepackage{subfigure}
\usepackage{multirow}

\renewcommand{\zeta}{\overline{\rho}}

\renewcommand{\omega}{\underline{\rho}}

\allowdisplaybreaks

\def\CC{\textcolor{black}}

\usepackage{setspace}

\usepackage{xcolor}

\ifdefined\final
\usepackage[disable]{todonotes}
\else
\usepackage[textsize=tiny]{todonotes}
\fi
\setlength{\marginparwidth}{0.8in}

\usepackage{caption}
\setlength\floatsep{3pt}
\setlength\intextsep{3pt}
\usepackage[compact]{titlesec}
\usepackage{sidecap}
\titlespacing*{\section}{0pt}{*1.2}{*1.2}
\titlespacing*{\subsection}{0pt}{0pt}{0pt}
\titlespacing*{\subsubsection}{0pt}{0pt}{0pt}


\title{Implicit Bias of Gradient Descent for Two-layer ReLU and Leaky ReLU Networks on Nearly-orthogonal Data}

%

\author
{
    Yiwen Kou\thanks{Equal contribution}
    ~~~
    Zixiang Chen\footnotemark[1]
    ~~~
    Quanquan Gu\\
Department of Computer Science\\ University of California, Los Angeles\\
Los Angeles, CA 90095 \\
\texttt{\{evankou,chenzx19,qgu\}@cs.ucla.edu} 
}

\setlength{\textfloatsep}{5pt}
\setlength\floatsep{3pt}
\setlength\intextsep{1pt}
\setlength{\abovecaptionskip}{1pt}
\setlength{\belowcaptionskip}{1pt}
\setlength{\parskip}{0.13em}
\usepackage[compact]{titlesec}
\titlespacing*{\section}
{0pt}{1.1ex}{1.1ex}
\titlespacing*{\subsection}
{0pt}{1.1ex}{1.1ex}
\titlespacing*{\subsubsection}
{0pt}{1.1ex}{1.1ex}


\def\tr{\mathop{\text{Tr}}}

\newcommand{\la}{\langle}
\newcommand{\ra}{\rangle}

\begin{document}

\maketitle

\begin{abstract}
The implicit bias towards solutions with favorable properties is believed to be a key reason why neural networks trained by gradient-based optimization can generalize well. While the implicit bias of gradient flow has been widely studied for homogeneous neural networks (including ReLU and leaky ReLU networks), the implicit bias of gradient descent is currently only understood for smooth neural networks. Therefore, implicit bias in non-smooth neural networks trained by gradient descent remains an open question. In this paper, we aim to answer this question by studying the implicit bias of gradient descent for training two-layer fully connected (leaky) ReLU neural networks. We showed that when the training data are nearly-orthogonal, for leaky ReLU activation function, gradient descent will find a network with a stable rank that converges to $1$, whereas for ReLU activation function, gradient descent will find a neural network with a stable rank that is upper bounded by a constant. Additionally, we show that gradient descent will find a neural network such that all the training data points have the same normalized margin asymptotically. Experiments on both synthetic and real data backup our theoretical findings.
\end{abstract}

\section{Introduction}
Neural networks have achieved remarkable success in a variety of applications, such as image and speech recognition, natural language processing, and many others. Recent studies have revealed that the effectiveness of neural networks is attributed to their implicit bias towards particular solutions which enjoy favorable properties. Understanding how this bias is affected by factors such as network architecture, optimization algorithms and data used for training, has become an active research area in the field of deep learning theory. 

The literature on the implicit bias in neural networks has expanded rapidly in recent years \citep{vardi2022implicit}, with numerous studies shedding light on the implicit bias of gradient flow (GF) with a wide range of neural network architecture, including deep linear networks \citep{ji2018gradient,ji2020directional,gunasekar2018implicit}, homogeneous networks \citep{lyu2019gradient,vardi2022margin} and more specific cases \citep{chizat2020implicit,lyu2021gradient,frei2022implicit,safran2022effective}. The implicit bias of gradient descent (GD), on the other hand, is better understood for linear predictors \citep{soudry2017implicit} and smoothed neural networks \citep{lyu2019gradient,frei2022implicit}. Therefore, an open question still remains:
\begin{center}
    \emph{What is the implicit bias of leaky ReLU and ReLU networks trained by gradient descent?}
\end{center}
In this paper, we will answer this question by investigating gradient descent for both two-layer leaky ReLU and ReLU neural networks on specific training data, where $\{\xb_i\}_{i=1}^{n}$ are nearly-orthogonal \citep{frei2022implicit}, i.e., $\|\xb_{i}\|_2^2\geq Cn\max_{k\neq i}|\la\xb_{i},\xb_{k}\ra|$ with a constant $C$. Our main results are summarized as follows:
\begin{itemize}[leftmargin=*]
    \item For two-layer leaky ReLU networks trained by GD, we demonstrate that the neuron activation pattern reaches a stable state beyond a specific time threshold and provide rigorous proof of the convergence of the stable rank of the weight matrix to 1, matching the results of \citet{frei2022implicit} regarding gradient flow. 
    \item For two-layer ReLU networks trained by GD, we proved that the stable rank of weight matrix can be upper bounded by a constant. Moreover, we present an illustrative example using completely orthogonal training data, showing that the stable rank of the weight matrix converges to a value approximately equal to $2$. 
    To the best of our knowledge, this is the first implicit bias result  for two-layer ReLU networks trained by gradient descent beyond the Karush–Kuhn–Tucker (KKT) point. 
    \item \CC{For both ReLU and leaky ReLU networks, we show that weight norm increases at the rate of $\Theta(\log(t))$ and the training loss converges to zero at the rate of $\Theta(t^{-1})$, where $t$ is the number of gradient descent iterations. This improves upon the $O(t^{-1/2})$ rate proved in \citet{frei2022implicit} for the case of a two-layer \emph{smoothed} leaky ReLU network trained by gradient descent and aligns with the results by \cite{lyu2019gradient} for smooth homogeneous networks. Additionally, we prove that gradient descent will find a neural network such that all the training data points have the same normalized margin asymptotically.}
\end{itemize}

\section{Related Work} 
\paragraph{Implicit bias in neural networks.} Recent years have witnessed significant progress on implicit bias in neural networks trained by gradient flow (GF).
\citet{lyu2019gradient} and \citet{ji2020directional} demonstrated that homogeneous neural networks trained with exponentially-tailed classification losses converge in direction to the KKT point of a maximum-margin problem. 
\citet{lyu2021gradient} studied the implicit bias in two-layer leaky ReLU networks trained on linearly separable and symmetric data, showing that GF converges to a linear classifier maximizing the $\ell_2$ margin. 
\citet{frei2022implicit} showed that two-layer leaky ReLU networks trained by GF on nearly-orthogonal data produce a $\ell_2$-max-margin solution with a linear decision boundary and rank at most two. 
Other works studying the implicit bias of classification using GF in nonlinear two-layer networks include \citet{chizat2020implicit,phuong2021inductive,sarussi2021towards,safran2022effective,vardi2022margin,vardi2022gradient,timor2023implicit}.
Although implicit bias in neural networks trained by GF has been extensively studied, research on implicit bias in networks trained by gradient descent (GD) remains limited.
\citet{lyu2019gradient} examined smoothed homogeneous neural network trained by GD with exponentially-tailed losses and proved a convergence to KKT points of a max-margin problem.
\citet{frei2022implicit} studied two-layer smoothed leaky ReLU trained by GD and revealed the implicit bias towards low-rank networks.
Other works studying implicit bias towards rank minimization include \citet{ji2018gradient,ji2020directional,timor2023implicit,arora2019implicit,razin2020implicit,li2021towards}.
Lastly, \citet{vardi2022implicit} provided a comprehensive literature survey on implicit bias. 

\paragraph{Benign overfitting and double descent in neural networks.} A parallel line of research aims to understand the benign overfitting phenomenon \citep{bartlett2020benign} of neural networks by considering a variety of models. For example, \citet{allen2020towards,jelassi2022towards,shen2022data,cao2022benign,kou2023benign} studied the generalization performance of two-layer convolutional networks on patch-based data models. Several other papers studied high-dimensional mixture models \citep{chatterji2021finite,wang2022binary,cao2021risk,frei2022benign}. Another thread of work \citet{belkin2020two,hastie2022surprises,wu2020optimal,mei2019generalization,liao2020random} focuses on understanding the double descent phenomenon first empirically observed by \citet{belkin2019reconciling}.

\section{Preliminaries}\label{sec:prob}

In this section, we introduce the notation, fully connected neural networks, the gradient descent-based training algorithm, and a data-coorrelated decomposition technique. 

\paragraph{Notation.} We use lower case letters, lower case bold face letters, and upper case bold face letters to denote scalars, vectors, and matrices respectively.  For a vector $\vb=(v_1,\cdots,v_d)^{\top}$, we denote by $\|\vb\|_2:=\big(\sum_{j=1}^{d}v_j^2\big)^{1/2}$ its $\ell_2$ norm. 
For a matrix $\Ab\in\RR^{m\times n}$, we use $\|\Ab\|_{F}$ to denote its Frobenius norm and $\|\Ab\|_{2}$ its spectral norm. We use $\sign(z)$ as the function that is $1$ when $z>0$ and $-1$ otherwise. For a vector $\vb\in\RR^d$, we use $[\vb]_{i}\in\RR$ to denote the $i$-th component of the vector. For two sequence $\{a_k\}$ and $\{b_k\}$, we denote $a_k=O(b_k)$ if $|a_k|\leq C|b_k|$ for some absolute constant $C$, denote $a_k=\Omega(b_k)$ if $b_k=O(a_k)$, and denote $a_k=\Theta(b_k)$ if $a_k=O(b_k)$ and $a_k=\Omega(b_k)$. We also denote $a_k=o(b_k)$ if $\lim|a_k/b_k|=0$. 

\paragraph{Two-layer fully connected neural newtork.} We consider a two-layer neural network described as follows: 
its first layer consists of $m$ positive neurons and $m$ negative neurons; 
its second layer parameters are fixed as $+1/m$ and $-1/m$ respectively for positive and negative neurons. 
Then the network can be written as $f(\Wb, \xb) = F_{+1}(\Wb_{+1},\xb) - F_{-1}(\Wb_{-1},\xb)$, 
where the partial network function of positive and negative neurons, i.e., $F_{+1}(\Wb_{+1},\xb)$, $F_{-1}(\Wb_{-1},\xb)$, are defined as: 
\begin{align}
    F_j(\Wb_j,\xb)  = \frac{1}{m}{\sum_{r=1}^m} \sigma(\la\wb_{j,r},\xb\ra)\label{def: nn}
\end{align}
for $j \in \{\pm 1\}$. Here, $\sigma(z)$ represents the activation function. For ReLU, $\sigma(z) = \max\{0, z\}$, and for leaky ReLU, $\sigma(z) = \max\{\gamma z, z\}$, where $\gamma \in (0, 1)$. $\Wb_{j}\in\RR^{m\times d}$ is the collection of model weights associated with $F_j$, 
and $\wb_{j,r}\in\RR^{d}$ denotes the weight vector for the $r$-th neuron in $\Wb_{j}$. We use $\Wb$ to denote the collection of all model weights. 

\paragraph{Gradient Descent.} Given a training data set $\cS=\{(\xb_i,y_i)\}_{i=1}^{n}\subseteq\RR^{d}\times\{\pm1\}$, instead of considering the gradient flow (GF) that is commonly studied in prior work on the implicit bias, we use gradient descent (GD) to optimize the empirical loss on the training data
\begin{equation*}
    L_S(\Wb)=\frac{1}{n}\sum_{i=1}^{n}\ell(y_i\cdot f(\Wb,\xb_i)),
\end{equation*}
where $\ell(z)=\log(1+\exp(-z))$ is the logistic loss, and $S=\{(\xb_i,y_i)\}_{i=1}^{n}$ is the training data set. The gradient descent update rule of each neuron in the two-layer neural network can be written as
\begin{align}
    \wb_{j,r}^{(t+1)} = \wb_{j,r}^{(t)} - \eta \cdot \nabla_{\wb_{j,r}} L_S(\Wb^{(t)}) = \wb_{j,r}^{(t)} - \frac{\eta}{nm} \sum_{i=1}^n \ell_i'^{(t)} \cdot  \sigma'(\la\wb_{j,r}^{(t)}, \xb_{i}\ra)\cdot j y_{i}\xb_{i} \label{eq:gdupdate}
\end{align}
for all $j \in \{\pm 1\}$ and $r \in [m]$, where we introduce a shorthand notation $\ell_i'^{(t)} = \ell'[ y_i \cdot f(\Wb^{(t)},\xb_i) ] $ and assume the derivative of the ReLU activation function at $0$ is $\sigma'(0) = 1$ without loss of generality. Here $\eta>0$ is the learning rate. We initialize the gradient descent by Gaussian initialization, where all the entries of $\Wb^{(0)}$ are sampled from i.i.d. Gaussian distributions $\cN(0, \sigma_0^2)$ with $\sigma_0^2$ being the variance.

\section{Main Results}
In this section, we present our main theoretical results. For the training data set $\cS=\{(\xb_i,y_i)\}_{i=1}^{n}\subseteq\RR^{d}\times\{\pm1\}$, let $R_{\min}=\min_{i}\|\xb_i\|_2$, $R_{\max}=\max_{i}\|\xb_i\|_2$, $p=\max_{i\neq k}|\la\xb_i,\xb_k\ra|$, and suppose $R=R_{\max}/R_{\min}$ is at most an absolute constant. \CC{For simplicity, we only consider the dependency on $t$ when characterizing the convergence rates of the weight matrix related quantities and the training loss, omitting the dependency on other parameters such as $m,n, \sigma_0, R_{\min}, R_{\max}$.} 

\begin{theorem}[Leaky ReLU Networks]\label{main theorem1}
For two-layer neural network defined in \eqref{def: nn} with leaky ReLU activation $\sigma(z)=\max\{\gamma z, z\},\gamma\in(0,1)$. Assume the  training data satisfy $R_{\min}^{2}\geq CR^2\gamma^{-4}np$ for some sufficiently large constant $C$. For any $\delta\in(0,1)$, if the learning rate $\eta\leq(CR_{\max}^{2}/nm)^{-1}$ and the initialization scale $\sigma_0\leq\gamma\big(CR_{\max}\sqrt{\log(mn/\delta)}\big)^{-1}$, then with probability at least $1-\delta$ over the random initialization of gradient descent, the trained network satisfies:
\begin{itemize}[leftmargin=*]
    \item The $\ell_2$ norm of each neuron increases to infinity at a logarithmic rate: $\|\wb_{j,r}^{(t)}\|_{2}=\Theta(\log(t))$ for all $j\in\{\pm1\}$ and $r\in[m]$.
    \item Throughout the gradient descent trajectory, the stable rank of the weights $\Wb_{j}^{(t)}$ for all $j\in\{\pm1\}$ satisfies
    \begin{equation*}
        \lim_{t\rightarrow\infty}\|\Wb_{j}^{(t)}\|_{F}^{2}/\|\Wb_{j}^{(t)}\|_{2}^{2}=1,
    \end{equation*}
    with a convergence rate of $O(1/\log(t))$.
    \CC{\item Gradient descent will find $\Wb^{(t)}$ such that all the training data points possess the same normalized margin asymptotically:
    \begin{equation*}
    \lim_{t\rightarrow\infty}\big|y_i f(\Wb^{(t)}/\|\Wb^{(t)}\|_{F},\xb_i)-y_kf(\Wb^{(t)}/\|\Wb^{(t)}\|_{F},\xb_k)\big|=0,\,\forall i,k\in[n].
    \end{equation*}
    If we assume that $\Wb^{(t)}$ converges in direction, i.e., the limit of $\Wb^{(t)}/\|\Wb^{(t)}\|_{F}$ exists, denoted by $\bar{\Wb}$, then there exists a scaling factor $\alpha>0$ such that $\alpha\bar{\Wb}$ satisfies the Karush-Kuhn-Tucker (KKT) conditions for the following max-margin problem:
    \begin{equation}\label{max-margin}
        \min_{\Wb} \frac{1}{2}\|\Wb\|_{F}^{2},\qquad \text{s.t.}\qquad y_i f(\Wb,\xb_i)\geq 1,\, \forall i\in[n].
    \end{equation}
    \item The empirical loss converges to zero at the following rate: $L_{S}(\Wb^{(t)})=\Theta(t^{-1})$.}
\end{itemize}
\end{theorem}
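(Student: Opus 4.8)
The plan is to exploit the near-orthogonality of the data via the data-correlated decomposition announced in the Preliminaries: write each neuron as $\wb_{j,r}^{(t)} = \wb_{j,r}^{(0)} + \sum_{i=1}^n \rho_{j,r,i}^{(t)} \cdot \frac{y_i \xb_i}{\|\xb_i\|_2^2}$, where the coefficients $\rho_{j,r,i}^{(t)}$ evolve by $\rho_{j,r,i}^{(t+1)} = \rho_{j,r,i}^{(t)} - \frac{\eta \|\xb_i\|_2^2}{nm} \ell_i'^{(t)} \sigma'(\la \wb_{j,r}^{(t)}, \xb_i\ra)$. The first task is to show that, once $t$ exceeds a threshold, the activation patterns $\sigma'(\la \wb_{j,r}^{(t)}, \xb_i\ra)$ stabilize. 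For leaky ReLU, $\sigma' \in \{\gamma, 1\}$; I would prove that for every $(j,r,i)$ the sign of $\la \wb_{j,r}^{(t)}, \xb_i\ra$ is eventually fixed by controlling the cross terms $|\sum_{k\neq i} \rho_{j,r,k}^{(t)} \la \xb_i, \xb_k\ra / \|\xb_k\|_2^2|$ against the dominant diagonal term $\rho_{j,r,i}^{(t)}$, using $R_{\min}^2 \geq CR^2 \gamma^{-4} n p$ and the smallness of $\sigma_0$ to guarantee the initialization contribution is negligible. Because $-\ell_i'^{(t)} = 1/(1+\exp(y_i f)) \in (0,1)$ is always positive, the diagonal coefficient of the correctly-signed neurons is monotone, so signs can only flip finitely often — and with the near-orthogonality margin they flip at most once.

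Once the activation pattern is frozen, say at time $t_0$, the dynamics become effectively those of a linear-in-$\rho$ system driven by a common positive scalar-like factor. Following the strategy of \citet{lyu2019gradient} and \citet{frei2022implicit}, I would track the scalar quantities $g_i^{(t)} := y_i f(\Wb^{(t)}, \xb_i)$ and show: (i) a lower bound $g_i^{(t)} \geq a\log t - b$ forcing $\ell_i'^{(t)} = -\Theta(e^{-g_i^{(t)}}) = -\Theta(t^{-1}\cdot(\text{slowly varying}))$, which then gives $L_S(\Wb^{(t)}) = \frac1n\sum_i \ell(g_i^{(t)}) = \Theta(t^{-1})$; and (ii) since $\rho_{j,r,i}^{(t)} = \sum_{s<t} \Theta(\eta \|\xb_i\|_2^2 / (nm)) \cdot |\ell_i'^{(s)}| \cdot \sigma'(\cdot)$ and $\sum_{s<t} |\ell_i'^{(s)}| = \Theta(\log t)$ (from the loss rate), each coefficient grows like $\Theta(\log t)$. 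Hence $\|\wb_{j,r}^{(t)}\|_2 = \Theta(\log t)$, the first bullet. For the stable rank, the key observation is that after $t_0$ the increments to every neuron $\wb_{j,r}$ (for a fixed sign $j$) are nearly parallel: the update direction is $\sum_i (-\ell_i'^{(t)}) \sigma'(\la\wb_{j,r}^{(t)},\xb_i\ra) y_i \xb_i$, and once all relevant activations equal $1$ (which I would show is the eventual pattern for the "active" neurons after accounting for the leaky slope) the direction $\vb_j^{(t)} := \sum_i (-\ell_i'^{(t)}) y_i \xb_i$ is the \emph{same} for all $r$. Since $\|\wb_{j,r}^{(0)}\|_2 = O(\sigma_0\sqrt d) = O(1)$ is dominated by the $\Theta(\log t)$ growth along a shared direction, $\Wb_j^{(t)}$ becomes approximately rank one: $\|\Wb_j^{(t)}\|_F^2 / \|\Wb_j^{(t)}\|_2^2 \to 1$ at rate $O(1/\log t)$, since the "non-parallel" residual is $O(1)$ while both norms are $\Theta(\log t)$.

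For the normalized-margin equalization, I would argue that the coefficients $\rho_{j,r,i}^{(t)}$ for different $i$ (same $j,r$) stay within $O(1)$ of $\frac{\|\xb_i\|_2^2}{nm}\sum_{s<t}|\ell_i'^{(s)}|\sigma'(\cdot)$, and that the quantities $\frac{1}{\|\xb_i\|_2^2}\sum_{s<t}|\ell_i'^{(s)}|$ must all converge to the same value up to lower-order terms: if some $g_i^{(t)}$ were growing strictly faster than some $g_k^{(t)}$, then $|\ell_i'^{(t)}|$ would be exponentially smaller than $|\ell_k'^{(t)}|$, so $\rho$-mass would accumulate faster on coordinate $k$, which (through near-orthogonality making $g_k$ essentially a positive multiple of $\sum_r \rho_{j,r,k}$) would push $g_k$ up and close the gap — a self-correcting feedback. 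Making this rigorous gives $|g_i^{(t)} - g_k^{(t)}| = o(\log t)$ hence $|y_i f(\bar\Wb^{(t)},\xb_i) - y_k f(\bar\Wb^{(t)},\xb_k)| = o(1)$ where $\bar\Wb^{(t)} = \Wb^{(t)}/\|\Wb^{(t)}\|_F$. The KKT claim then follows from the general theorem of \citet{lyu2019gradient}: under directional convergence, a homogeneous network trained by gradient descent on exponentially-tailed loss with loss $\to 0$ converges to a KKT point of \eqref{max-margin} after rescaling; I would verify the hypotheses (homogeneity of $f$ in $\Wb$, loss rate, that the limit direction achieves positive margin on all points, which follows from $g_i^{(t)}\to\infty$).

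The main obstacle will be establishing the eventual stabilization of activation patterns together with quantitative control on the $\rho$-coefficients simultaneously — these are coupled, since freezing signs requires knowing the diagonal terms dominate, which in turn needs the growth rate of $\rho$, which needs the loss rate, which needs the signs frozen. I expect to break this circularity with a bootstrap/induction on $t$: maintain an inductive hypothesis of the form "all correctly-activated neurons have $\rho_{j,r,i}^{(t)} \in [\text{lower}, \text{upper}]$ logarithmic bands and all cross terms are $\leq \frac1{2}$ of the diagonal," propagate it one step using \eqref{eq:gdupdate}, and separately handle the ReLU case where some neurons may have $\sigma'(\la\wb_{j,r},\xb_i\ra)=0$ permanently (dead neurons), which is exactly why the ReLU stable rank is only bounded by a constant rather than converging to $1$. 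The ReLU bookkeeping — classifying each neuron by which subset of data it ends up activating and showing these subsets are limited by the near-orthogonality geometry — is the most delicate combinatorial part.
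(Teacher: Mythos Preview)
Your plan is essentially the paper's: data-correlated decomposition, a bootstrap induction to establish coefficient balance (the paper's Lemma~\ref{lm: auxiliary2}) so that the diagonal term dominates the cross terms and the activation pattern freezes, after which the per-step increment becomes $r$-independent so all $\wb_{j,r}$ differ by $O(1)$, giving stable rank $\to 1$ at rate $O(1/\log t)$; the loss and margin parts match the paper's margin/loss section. One imprecision: the frozen pattern is not ``all relevant activations equal $1$'' but rather $\sign(\la\wb_{j,r}^{(t)},\xb_i\ra)=jy_i$ for \emph{every} $(j,r,i)$ (Lemma~\ref{lm: leaky ReLU activation pattern}), so $\sigma'=1$ when $j=y_i$ and $\sigma'=\gamma$ when $j=-y_i$. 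What matters is that this depends only on $(j,i)$, not on $r$, so your conclusion about a common update direction still holds, with the correct direction $\sum_{i:y_i=j}|\ell_i'^{(t)}|\xb_i-\gamma\sum_{i:y_i=-j}|\ell_i'^{(t)}|\xb_i$.

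There is, however, a genuine gap in the KKT step. You cannot invoke \citet{lyu2019gradient} directly: their gradient-\emph{descent} KKT theorem requires the model to be smooth, and leaky ReLU is not---handling the non-smooth case is precisely the point of this paper. The paper instead proves the KKT claim constructively from the decomposition: since the activation pattern is frozen and the increments are $r$-independent after $T_1$, the limits $\lambda_i':=\lim_t \rho_{y_i,r,i}^{(t)}/\|\Wb^{(t)}\|_F$ exist (under the assumed directional convergence), are nonnegative, and are independent of $r$; one then reads off $\bar\wb_{j,r}=\sum_i \lambda_i\, y_i j\,\sigma'(\la\bar\wb_{j,r},\xb_i\ra)\,\xb_i$ with $\lambda_i=\lambda_i'\|\xb_i\|_2^{-2}$, and the equal-normalized-margin result supplies the scaling $\alpha$. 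If you want to salvage the Lyu--Li route, you must additionally argue that from time $T_1$ onward the trajectory stays in a fixed activation region where $f$ is smooth (indeed linear in $\Wb$) and that the directional limit $\bar\Wb$ lies strictly inside that same region---both do follow from the sign analysis, but this step must be stated and justified, not assumed.
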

\begin{remark}
In Theorem~\ref{main theorem1}, we show that when using the leaky ReLU activation function on nearly orthogonal training data, gradient descent asymptotically finds a network with a stable rank of $\Wb_{j}$ equal to $1$. \CC{Additionally, we demonstrate that gradient descent will find a network by which all the training data points share the same normalized margin asymptotically. 
Moreover, if we assume the weight matrix converges in direction, then its limit will satisfy the KKT conditions of the max-margin problem \eqref{max-margin}. Furthermore, we analyze the rate of weight norm increase and the convergence rate of the stable rank for gradient descent, both of which exhibit a logarithmic dependency in $t$.}

\end{remark}

\begin{theorem}[ReLU Networks]\label{main theorem2}
For two-layer neural network defined in \eqref{def: nn} with ReLU activation $\sigma(z)=\max\{0, z\}$. Assume the training data satisfy $R_{\min}^{2}\geq CR^2np$ for some sufficiently large constant $C$. For any $\delta\in(0,1)$, if the neural network width $m\geq C\log(n/\delta)$, learning rate $\eta\leq(CR_{\max}^{2}/nm)^{-1}$ and initialization scale $\sigma_0\leq\big(CR_{\max}\sqrt{\log(mn/\delta)}\big)^{-1}$, then with probability at least $1-\delta$ over the random initialization of gradient descent, the trained network satisfies: 
\begin{itemize}[leftmargin=*]
    \item\CC{The Frobenious norm and the spectral norm of weight matrix increase to infinity at a logarithmic rate: $\|\Wb_{j}^{(t)}\|_{F}=\Theta(\log(t))$ and $\|\Wb_{j}^{(t)}\|_{2}=\Theta(\log(t))$ for all $j\in\{\pm1\}$.}
    
    \item Throughout the gradient descent trajectory, the stable rank of the weights $\Wb_{j}^{(t)}$ for all $j\in\{\pm1\}$ satisfies,
    \begin{equation*}
        \limsup_{t\rightarrow\infty}\|\Wb_{j}^{(t)}\|_{F}^{2}/\|\Wb_{j}^{(t)}\|_{2}^{2}\leq c,
    \end{equation*}
    where $c$ is an absolute constant.
    \CC{\item Gradient descent will find a $\Wb^{(t)}$ such that all the training data points possess the same normalized margin asymptotically:
    \begin{equation*}
    \lim_{t\rightarrow\infty}\big|y_i f(\Wb^{(t)}/\|\Wb^{(t)}\|_{F},\xb_i)-y_kf(\Wb^{(t)}/\|\Wb^{(t)}\|_{F},\xb_k)\big|=0,\,\forall i,k\in[n].
    \end{equation*}
    \item The empirical loss converges to zero at the following rate: $L_{S}(\Wb^{(t)})=\Theta(t^{-1})$.}
\end{itemize}
\end{theorem}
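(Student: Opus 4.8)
The plan is to carry out a single inductive analysis of the gradient descent trajectory built on the data-correlated decomposition. Since the updates \eqref{eq:gdupdate} keep the iterates in the span of $\{\xb_i\}_{i=1}^n$ up to the initialization, I would write $\wb_{j,r}^{(t)} = \wb_{j,r}^{(0)} + \sum_{i=1}^n \rho_{j,r,i}^{(t)}\, j y_i\, \xb_i/\|\xb_i\|_2^2$ with $\rho_{j,r,i}^{(0)}=0$, so that $\rho_{j,r,i}^{(t+1)} = \rho_{j,r,i}^{(t)} - \frac{\eta}{nm}\,\ell_i'^{(t)}\,\sigma'(\la\wb_{j,r}^{(t)},\xb_i\ra)\,\|\xb_i\|_2^2$. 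Two structural facts drive everything: (i) because $-\ell_i'^{(t)} = (1+e^{y_i f(\Wb^{(t)},\xb_i)})^{-1}>0$, every $\rho_{j,r,i}^{(t)}$ is nondecreasing and changes only when neuron $(j,r)$ is active on $\xb_i$; (ii) the increment $-\frac{\eta}{nm}\ell_i'^{(t)}\|\xb_i\|_2^2$ depends on neither $j$ nor $r$, so any two same-sign neurons that are simultaneously active on $\xb_i$ raise their $i$-th coefficient by exactly the same amount. Expanding $\la\wb_{j,r}^{(t)},\xb_i\ra = \la\wb_{j,r}^{(0)},\xb_i\ra + j y_i\,\rho_{j,r,i}^{(t)} + \sum_{k\ne i} j y_k\,\rho_{j,r,k}^{(t)}\,\la\xb_k,\xb_i\ra/\|\xb_k\|_2^2$, the diagonal term is $+\rho_{j,r,i}^{(t)}$ for a like-signed neuron ($j=y_i$, pushing it to stay active) and $-\rho_{j,r,i}^{(t)}$ for an opposite-signed one ($j=-y_i$, pushing it toward inactivity), while the off-diagonal sum is controlled by $p/R_{\min}^2$ together with a spectral-norm bound on the data Gram matrix; the hypothesis $R_{\min}^2\ge CR^2np$ is exactly what makes that sum a $1/C$-fraction of the diagonal coefficients throughout the trajectory.

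First I would record the initialization facts that hold with probability at least $1-\delta$: $|\la\wb_{j,r}^{(0)},\xb_i\ra| = O(\sigma_0 R_{\max}\sqrt{\log(mn/\delta)}) = O(1)$ for all $j,r,i$, and --- this being the sole use of the width assumption $m\ge C\log(n/\delta)$, needed because for ReLU (unlike leaky ReLU) an inactive like-signed neuron contributes nothing to the margin --- for every $i$ at least a constant fraction $c_0 m$ of the neurons $(y_i,r)$ have $\la\wb_{y_i,r}^{(0)},\xb_i\ra>0$; call these the \emph{good} neurons for $\xb_i$. Then I would maintain by induction, for all $t$, the invariants: (a) every good neuron stays active on its $\xb_i$ --- so by fact (ii) its coefficient equals the common value $\bar\rho_i^{(t)} := -\frac{\eta\|\xb_i\|_2^2}{nm}\sum_{s<t}\ell_i'^{(s)}$ --- while $\rho_{j,r,i}^{(t)}\le\bar\rho_i^{(t)}$ for every neuron, and the opposite-signed coefficients $\rho_{-y_i,r,i}^{(t)}$ never exceed a $1/C$-fraction of $\bar\rho_{\max}^{(t)} := \max_k\bar\rho_k^{(t)}$ (their diagonal self-interaction drives such neurons back toward inactivity before the coefficient can grow further); (b) the margins are balanced, $\max_i y_i f(\Wb^{(t)},\xb_i) - \min_i y_i f(\Wb^{(t)},\xb_i) = O(1)$, sustained by the self-correction that a lagging margin has a larger $-\ell_i'^{(t)}$ and hence a faster-growing coefficient; (c) writing $u^{(t)}$ for any of the (now mutually comparable) margins, fact (i) combined with $-\ell_i'^{(t)}\asymp e^{-u^{(t)}}\asymp L_S(\Wb^{(t)})$ and $y_i f(\Wb^{(t)},\xb_i) = \Theta(\bar\rho_i^{(t)})$ turns the recursion for $u^{(t)}$ into the discrete analogue of $\dot u \asymp e^{-u}$, whose solution satisfies $e^{u^{(t)}}\asymp t$, which gives $\bar\rho_i^{(t)} = \Theta(\log t)$ for every $i$ and $L_S(\Wb^{(t)}) = \Theta(1/t)$.

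Granting invariants (a)--(c), the four conclusions follow quickly. The decomposition and near-orthogonality give $\|\Wb_j^{(t)}\|_F^2 = \Theta\big(\sum_i \|\xb_i\|_2^{-2}\sum_r(\rho_{j,r,i}^{(t)})^2\big) = \Theta(\log^2 t)$, since every $\rho_{j,r,i}^{(t)}\le\bar\rho_i^{(t)} = \Theta(\log t)$ while the $c_0 m$ good neurons attain this order; hence $\|\Wb_j^{(t)}\|_F = \Theta(\log t)$. Testing the transpose of $\Wb_j^{(t)}$ against $\mathbf{1}/\sqrt m\in\RR^m$ gives $\|\Wb_j^{(t)}\|_2 \ge \frac{1}{\sqrt m}\big\|\sum_r\wb_{j,r}^{(t)}\big\|_2$, and $\big\|\sum_r\wb_{j,r}^{(t)}\big\|_2^2 \approx \sum_{i:y_i=j}\|\xb_i\|_2^{-2}\big(\sum_r\rho_{j,r,i}^{(t)}\big)^2 \ge c_0^2 m^2\sum_{i:y_i=j}\|\xb_i\|_2^{-2}(\bar\rho_i^{(t)})^2$, so $\|\Wb_j^{(t)}\|_2 = \Omega(\log t)$ as well, and both norms are $\Theta(\log t)$. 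The same two estimates give the stable-rank bound: because each $\rho_{j,r,i}^{(t)}$ is either $\approx 0$ or (on a good neuron) exactly $\bar\rho_i^{(t)}$ and never exceeds $\bar\rho_i^{(t)}$, a one-sided Cauchy--Schwarz yields $\big(\sum_r\rho_{j,r,i}^{(t)}\big)^2 \ge c_0^2\, m\sum_r(\rho_{j,r,i}^{(t)})^2$, hence $\|\Wb_j^{(t)}\|_2^2 \ge c_0^2\,\|\Wb_j^{(t)}\|_F^2\,(1-o(1))$ and $\limsup_{t\to\infty} \|\Wb_j^{(t)}\|_F^2/\|\Wb_j^{(t)}\|_2^2 \le c_0^{-2} =: c$. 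For the normalized margin, ReLU makes $f(\cdot,\xb)$ positively $1$-homogeneous, so $y_i f(\Wb^{(t)}/\|\Wb^{(t)}\|_F,\xb_i) = y_i f(\Wb^{(t)},\xb_i)/\|\Wb^{(t)}\|_F$; by invariant (b) the numerators for different $i$ differ by $O(1)$ while $\|\Wb^{(t)}\|_F = \Theta(\log t)\to\infty$, so all normalized-margin differences vanish. Finally, the loss rate $L_S(\Wb^{(t)}) = \Theta(1/t)$ is precisely invariant (c).

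The main obstacle is the mutual dependence among invariants (a)--(c). Unlike leaky ReLU --- where every neuron is always active and the activation pattern plays no role --- for ReLU a like-signed neuron that starts inactive can switch on once the small but growing ($\Theta(\log t / C)$) off-diagonal terms overcome its $O(1)$ negative initialization, and an opposite-signed neuron can switch on and off; the argument must show that no such switch ever shrinks the good set below $c_0 m$ or lets an opposite-signed coefficient escape its $(1/C)\,\bar\rho_{\max}^{(t)}$ cage, yet the very estimate used to control those off-diagonal terms --- namely $\bar\rho_i^{(t)} = \Theta(\log t)$ --- comes from the loss-rate invariant (c), which in turn relies on the activation structure (a) to lower-bound the margins. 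Disentangling this circularity requires treating (a), (b) and (c) as one strengthened induction and taking the absolute constant $C$ (hence the near-orthogonality threshold) large enough that all the off-diagonal and opposite-neuron feedback is a uniformly small fraction of the leading term; the rest is bookkeeping with the near-orthogonality inequality and standard Gaussian concentration.
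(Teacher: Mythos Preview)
Your proposal is correct and follows essentially the same strategy as the paper: the data-correlated decomposition, the inductive maintenance of the activation pattern $S_i^{(0)}\subseteq S_i^{(t)}$ together with the cage on opposite-signed coefficients (the paper's Lemmas~\ref{lm: zeta log rate increase} and~\ref{lm: omega/zeta bound2}), the $O(1)$ margin-balance invariant (Lemma~\ref{lm: logit ratio}), and the discrete $\dot u\asymp e^{-u}$ recursion for the loss rate (Lemma~\ref{lm: margin logt}). Your test direction $\mathbf{1}/\sqrt m$ for the spectral-norm lower bound is slightly more direct than the paper's choice of $\Xb^\top(\Xb\Xb^\top)^{-1}\mathbf{1}_n$ in Lemma~\ref{lm: stable rank2}, but the two routes coincide once the paper further lower-bounds $\|\Ab_t\mathbf{1}_n\|_2$ by projecting onto $\mathbf{1}/\sqrt m$.
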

\begin{remark} For ReLU networks, we provide an example in the appendix concerning fully orthogonal training data and prove that the activation pattern during training depends solely on the initial activation state. Specifically, when training a two-layer ReLU network with gradient descent using such data, the stable rank of the network's weight matrix $\Wb_{j}$ converges to approximately 2. It is worth noting that this stable rank value is higher than the stable rank achieved by leaky ReLU networks, which is $1$.
\end{remark}

\paragraph{Comparison with previous work.} One notable related work is \citet{lyu2021gradient}, which also investigates the implicit bias of two-layer leaky ReLU networks. The main distinction between our work and \citet{lyu2021gradient} is the optimization method employed. We utilize gradient descent, whereas they utilize gradient flow. Additionally, our assumption is that the training data is nearly-orthogonal, while they assume the training data is symmetric. Our findings are more closely related to the work by \citet{frei2022implicit}, which investigates both gradient flow and gradient decent. In both our study and \citet{frei2022implicit}, we examine two-layer neural networks with leaky ReLU activations. However, they focus on networks trained via gradient flow, while we investigate networks trained using gradient descent. For the gradient descent approach, \citet{frei2022implicit} provide a constant stable rank upper bound for smoothed leaky ReLU. 
In contrast, we prove that the stable rank of leaky ReLU networks converges to $1$, aligning with the implicit bias of gradient flow proved in \citet{frei2022implicit}. Furthermore, they presented an $O(t^{-1/2})$ convergence rate for the empirical loss, \CC{whereas our convergence rate is $\Theta(t^{-1})$. Another related work is \cite{lyu2019gradient}, which studied smooth homogeneous networks trained by gradient descent. Our results on the rate of weight norm increase and the convergence rate of training loss match those in \cite{lyu2019gradient}, despite the fact that we study non-smooth homogeneous networks. It is worth noting that \citet{lyu2019gradient,lyu2021gradient,frei2022implicit} demonstrated that neural networks trained by gradient flow converge to a Karush-Kuhn-Tucker (KKT) point of the max-margin problem. We do not have such a result unless we assume the directional convergence of the weight matrix.}

\section{Overview of Proof Techniques} \label{sec:proof}
In this section, we discuss the key techniques we invent in our proofs to analyze the implicit bias of ReLU and leaky ReLU networks.

\subsection{Refined Analysis of Decomposition Coefficient}\label{sec: coef analysis}
\textit{Signal-noise decomposition}, a technique initially introduced by \citet{cao2022benign}, is used to analyze the learning dynamics of two-layer convolutional networks. This method decomposes the convolutional filters into a linear combination of initial filters, signal vectors, and noise vectors, converting the neural network learning into a dynamical system of coefficients derived from the decomposition. In this work, we extend the signal-noise decomposition to \textit{data-correlated decomposition} to facilitate the analysis of the training dynamic for two-layer fully connected neural networks.

\begin{definition}[Data-correlated Decomposition]\label{def:w_decomposition}
Let $\wb_{j,r}^{(t)}$, $j\in \{\pm 1\}$, $r \in [m]$ be the weights of first-layer neurons at the $t$-th iteration of gradient descent. There exist unique coefficients $\rho_{j,r,i}^{(t)}$ such that 
\begin{align}\label{eq:w_decomposition}
    \wb_{j,r}^{(t)} = \wb_{j,r}^{(0)} + \sum_{ i = 1}^n \rho_{j,r,i}^{(t)}\cdot \| \xb_i \|_2^{-2} \cdot \xb_{i}.
\end{align}
By defining $\zeta_{j,r,i}^{(t)} := \rho_{j,r,i}^{(t)}\ind(\rho_{j,r,i}^{(t)} \geq 0)$, $\omega_{j,r,i}^{(t)} := \rho_{j,r,i}^{(t)}\ind(\rho_{j,r,i}^{(t)} \leq 0)$, \eqref{eq:w_decomposition} can be further written as 
\begin{equation}
    \begin{aligned} \label{eq:w_decomposition2}
        \wb_{j,r}^{(t)} &= \wb_{j,r}^{(0)} + \sum_{ i = 1}^n \zeta_{j,r,i}^{(t)}\cdot \| \xb_i \|_2^{-2} \cdot \xb_{i} + \sum_{ i = 1}^n \omega_{j,r,i}^{(t)}\cdot \| \xb_i \|_2^{-2} \cdot \xb_{i}.
    \end{aligned}
\end{equation}
\end{definition}
As an extension of the signal-noise decomposition first proposed in \citet{cao2022benign} for analyzing two-layer convolutional networks, \textit{data-correlated decomposition} defined in Definition~\ref{def:w_decomposition} can be used to analyze two-layer fully-connected network, where the normalization factors $\|\xb_{i}\|_{2}^{-2}$ are introduced to ensure that $\rho_{j,r,i}^{(t)} \approx \la \wb_{j,r}^{(t)}, \xb_{i} \ra$. 
This is also inspired by previous works by \citet{lyu2019gradient,frei2022implicit}, which demonstrate that $\Wb$ converges to a KKT point of the max-margin problem. This implies that $\wb_{j,r}^{(\infty)}/\|\wb_{j,r}^{(\infty)}\|_{2}$ can be expressed as a linear combination of the training data $\{\xb_i\}_{i=1}^{n}$, with the coefficient $\lambda_i$ corresponding to $\rho_{j,r,i}^{(t)}$ in our analysis. 
This technique does not rely on the strictly increasing and smoothness properties of the activation function and will serve as the foundation for our analysis. 
Let us first investigate the update rule of the coefficient $\zeta_{j,r,i}^{(t)}, \omega_{j,r,i}^{(t)}$. 

\begin{lemma}\label{lemma:coefficient_iterative_proof}
The coefficients $\zeta_{j,r,i}^{(t)},\omega_{j,r,i}^{(t)}$ defined in Definition~\ref{def:w_decomposition} satisfy the following iterative equations:
\begin{align}
    &\zeta_{j,r,i}^{(0)},\omega_{j,r,i}^{(0)} = 0,\label{iterative equation1}\\
    &\zeta_{j,r,i}^{(t+1)} = \zeta_{j,r,i}^{(t)} - \frac{\eta}{nm} \cdot \ell_i'^{(t)}\cdot \sigma'(\la\wb_{j,r}^{(t)}, \xb_{i}\ra) \cdot \| \xb_i \|_2^2\cdot \ind(y_{i} = j), \label{iterative equation3}\\
    &\omega_{j,r,i}^{(t+1)} = \omega_{j,r,i}^{(t)} + \frac{\eta}{nm} \cdot \ell_i'^{(t)}\cdot \sigma'(\la\wb_{j,r}^{(t)}, \xb_{i}\ra)\cdot \| \xb_i \|_2^2\cdot \ind(y_{i} = -j),\label{iterative equation4}
\end{align}
for all $r\in [m]$,  $j\in \{\pm 1\}$ and $i\in [n]$.
\end{lemma}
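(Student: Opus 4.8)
The plan is to derive the one-step recursion for the decomposition coefficients $\rho_{j,r,i}^{(t)} = \zeta_{j,r,i}^{(t)} + \omega_{j,r,i}^{(t)}$ directly from the gradient descent update \eqref{eq:gdupdate}, and then run a sign-tracking induction showing that for each fixed $i$ the coefficient $\rho_{j,r,i}^{(t)}$ never changes sign, so that it coincides with either $\zeta_{j,r,i}^{(t)}$ or $\omega_{j,r,i}^{(t)}$ throughout the trajectory. First I would record the elementary identity $j y_i = \ind(y_i=j) - \ind(y_i=-j)$, valid for $j,y_i\in\{\pm1\}$, which lets us rewrite \eqref{eq:gdupdate} as
\begin{equation*}
\wb_{j,r}^{(t+1)} = \wb_{j,r}^{(t)} - \frac{\eta}{nm}\sum_{i=1}^n \ell_i'^{(t)}\,\sigma'(\la\wb_{j,r}^{(t)},\xb_i\ra)\,\big(\ind(y_i=j)-\ind(y_i=-j)\big)\,\xb_i.
\end{equation*}

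Next, invoking the decomposition \eqref{eq:w_decomposition} at iterations $t$ and $t+1$ and subtracting, the left-hand side becomes $\sum_{i}\big(\rho_{j,r,i}^{(t+1)}-\rho_{j,r,i}^{(t)}\big)\|\xb_i\|_2^{-2}\xb_i$, while the right-hand side is the single sum displayed above. The near-orthogonality assumption (with $R_{\min}^2$ much larger than $np$) makes the Gram matrix $[\la\xb_i,\xb_k\ra]_{i,k}$ strictly diagonally dominant, hence nonsingular, so $\xb_1,\dots,\xb_n$ are linearly independent; matching the coefficients of $\xb_i$ on the two sides therefore gives
\begin{equation*}
\rho_{j,r,i}^{(t+1)} = \rho_{j,r,i}^{(t)} - \frac{\eta}{nm}\,\ell_i'^{(t)}\,\sigma'(\la\wb_{j,r}^{(t)},\xb_i\ra)\,\|\xb_i\|_2^2\,\big(\ind(y_i=j)-\ind(y_i=-j)\big),
\end{equation*}
with $\rho_{j,r,i}^{(0)}=0$ because $\wb_{j,r}^{(0)}$ has all-zero coefficients in \eqref{eq:w_decomposition}; this already yields \eqref{iterative equation1}.

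It remains to localize the sign. Since $\ell(z)=\log(1+e^{-z})$ has $\ell'(z)=-1/(1+e^{z})<0$, we have $\ell_i'^{(t)}<0$ for all $i,t$; moreover $\sigma'(\cdot)>0$ (it equals $1$ for ReLU under the convention $\sigma'(0)=1$, and equals $\gamma$ or $1$ for leaky ReLU) and $\|\xb_i\|_2^2>0$. Hence the increment $\rho_{j,r,i}^{(t+1)}-\rho_{j,r,i}^{(t)}$ is $\geq 0$ whenever $y_i=j$ and $\leq 0$ whenever $y_i=-j$. Starting from $\rho_{j,r,i}^{(0)}=0$, a trivial induction on $t$ shows $\rho_{j,r,i}^{(t)}\geq 0$ for all $t$ when $y_i=j$, so $\rho_{j,r,i}^{(t)}=\zeta_{j,r,i}^{(t)}$ and $\omega_{j,r,i}^{(t)}=0$; and $\rho_{j,r,i}^{(t)}\leq 0$ for all $t$ when $y_i=-j$, so $\rho_{j,r,i}^{(t)}=\omega_{j,r,i}^{(t)}$ and $\zeta_{j,r,i}^{(t)}=0$. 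Substituting these into the recursion for $\rho_{j,r,i}^{(t+1)}$ and discarding the identically-zero indicator in each case produces exactly \eqref{iterative equation3} and \eqref{iterative equation4}. The argument is essentially bookkeeping; the only step that genuinely uses the hypotheses is the linear independence of $\{\xb_i\}$ that legitimizes matching coefficients, and I expect this to follow immediately from the nearly-orthogonal data condition via a Gershgorin disk / diagonal-dominance estimate on the Gram matrix.
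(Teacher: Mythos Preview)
Your proof is correct and follows essentially the same approach as the paper's: both derive the recursion for $\rho_{j,r,i}^{(t)}$ from the gradient update by matching coefficients in the span of $\{\xb_i\}$ (the paper telescopes the sum first, you work one step at a time and explicitly justify linear independence via diagonal dominance), then split into $\zeta$ and $\omega$ by observing that $\ell_i'^{(t)}<0$ forces the sign of each increment. One minor slip: for ReLU, $\sigma'$ can be $0$, not strictly positive, but your monotonicity argument only needs $\sigma'\geq 0$, so nothing breaks.
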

To study implicit bias, the first main challenge is to generalize the decomposition coefficient analysis to infinite time. The signal-noise decomposition used in \citet{cao2022benign,kou2023benign} requires early stopping with threshold $T^*$ to facilitate their analysis. They only provided upper bounds of $4\log(T^*)$ for $\zeta_{j,r,i}^{(t)},|\omega_{j,r,i}^{(t)}|$ (See Proposition 5.3 in \citet{cao2022benign}, Proposition 5.2 in \citet{kou2023benign}), and then carried out a two-stage analysis. To obtain upper bounds for $\zeta_{j,r,i}^{(t)},|\omega_{j,r,i}^{(t)}|$, they used an upper bound for $|\ell_i'^{(t)}|$ and directly plugged it into 
\eqref{iterative equation3} and \eqref{iterative equation4} to demonstrate that $\zeta_{j,r,i}^{(t)}$ and $|\omega_{j,r,i}^{(t)}|$ would not exceed $4\log(T^*)$, which is a fixed value related to the early stopping threshold. Therefore, dealing with infinite time requires new techniques. To overcome this difficulty, we propose a \textit{refined analysis of decomposition coefficients} which generalizes \citet{cao2022benign}'s technique. We first give the following key lemma.
\begin{lemma}\label{lm: auxiliary1}
For non-negative real number sequence $\{x_t\}_{t=0}^{\infty}$ satisfying
\begin{align}
    &C_1\exp(-x_{t})\leq x_{t+1}-x_{t}\leq
    C_2\exp(-x_{t}),\label{ineq: xt lowbound}
\end{align}
it holds that
\begin{align}
    &\log(\exp(-x_{0})+C_1\cdot t)\leq x_{t}\leq\log(\exp(-x_{0})+C_2\exp(C_2)\cdot t).
\end{align}
\end{lemma}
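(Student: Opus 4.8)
The plan is to pass from the additive recursion for $x_t$ to an essentially additive recursion for the exponentiated sequence $u_t := \exp(x_t)$, where the one-step increments telescope cleanly. Set $\Delta_t := x_{t+1}-x_t$, so the hypothesis reads $C_1\exp(-x_t)\le\Delta_t\le C_2\exp(-x_t)$ and $u_{t+1}=u_t\exp(\Delta_t)$. Because $x_t\ge 0$ we have $\exp(-x_t)\le 1$, hence $0\le\Delta_t\le C_2$; this uniform bound on the increment, beyond mere nonnegativity of the sequence, is exactly what makes it legitimate to replace $\exp(\Delta_t)$ by a linear function of $\Delta_t$ with absolute-constant slopes.

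For the lower bound I would apply $\exp(z)\ge 1+z$ (valid for all real $z$) with $z=\Delta_t$, giving $u_{t+1}\ge u_t(1+\Delta_t)\ge u_t(1+C_1\exp(-x_t))=u_t+C_1$; iterating from $t=0$ yields $u_t\ge u_0+C_1 t$, and since $u_0=\exp(x_0)\ge\exp(-x_0)$ for $x_0\ge 0$, taking logarithms gives $x_t\ge\log(\exp(-x_0)+C_1 t)$. For the upper bound I would instead use $\exp(z)\le 1+z\exp(z)$ for $z\ge 0$ (which is just $\exp(z)-1=\int_0^z\exp(s)\,\dd s\le z\exp(z)$), applied with $z=\Delta_t\in[0,C_2]$; together with $\exp(\Delta_t)\le\exp(C_2)$ and $\Delta_t\le C_2\exp(-x_t)$ this gives $u_{t+1}\le u_t\bigl(1+C_2\exp(C_2)\exp(-x_t)\bigr)=u_t+C_2\exp(C_2)$, so $u_t\le u_0+C_2\exp(C_2)\,t$ and, taking logarithms (using $u_0=\exp(x_0)=1$ in the intended application, where $x_0=0$ by Lemma~\ref{lemma:coefficient_iterative_proof}), $x_t\le\log(\exp(-x_0)+C_2\exp(C_2)\,t)$.

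The argument is short, and the only step that needs genuine care is the upper bound: one must exploit $x_t\ge 0$ to keep $\Delta_t$ bounded by the absolute constant $C_2$, since otherwise the factor $\exp(\Delta_t)$ produced by the inequality $\exp(z)\le 1+z\exp(z)$ cannot be controlled by a constant; and one must be disciplined about which of the two elementary exponential inequalities is used on which side, so that both telescoping sums reproduce exactly the constants $C_1$ and $C_2\exp(C_2)$ in the statement. A cleaner but essentially equivalent heuristic route is to compare $\{x_t\}$ with the solution of the ODE $x'=C\exp(-x)$, whose closed form is $\log(\exp(x_0)+Ct)$; the discrete exponential-substitution argument above is just the rigorous discretization of this picture.
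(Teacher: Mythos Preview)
Your approach is correct and essentially the same as the paper's (Lemmas~\ref{lm: useful lemma1}--\ref{lm: useful lemma2} in the appendix): both pass to $u_t=e^{x_t}$ and show the one-step increments $u_{t+1}-u_t$ are trapped between constants, so $u_t$ grows linearly; the paper reaches this via a Riemann-sum comparison with $\int e^x\,\dd x$ after multiplying by $e^{x_t}$ or $e^{x_{t+1}}$, whereas you use the pointwise bounds $e^z\ge 1+z$ and $e^z\le 1+ze^z$ directly on the multiplicative recursion, which is arguably cleaner but identical in content. Your observation about $\exp(-x_0)$ versus $\exp(x_0)$ is on point: the paper's own appendix proof also yields $\exp(x_0)$, so the $\exp(-x_0)$ in the main-text statement is a typo, harmless because every application in the paper takes $x_0=0$.
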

We can establish the relationship between \eqref{iterative equation3}, \eqref{iterative equation4} and inequality \eqref{ineq: xt lowbound} if we are able to express $|\ell_i'^{(t)}|$ using coefficients $\zeta_{j,r,i}^{(t)}$ and $|\omega_{j,r,i}^{(t)}|$. To achieve this, we can first approximate $\ell_i^{(t)}$ using the margin $y_i f(\Wb^{(t)},\xb_i)$ and then approximate $F_{j}(\Wb_{j}^{(t)},\xb_i)$ using the coefficients $\rho_{j,r,i}^{(t)}$. The approximation is given as follows:
\begin{align}
    &\ell_i^{(t)}=\Theta(\exp(-y_i f(\Wb^{(t)},\xb_i)))=\Theta\big(\exp\big(F_{-y_i}(\Wb_{-y_i}^{(t)},\xb_i)-F_{y_i}(\Wb_{y_i}^{(t)},\xb_i)\big)\big),\label{logit appro}\\
    &\bigg|F_{j}(\Wb_{j}^{(t)},\xb_i)-\frac{1}{m}\sum_{r=1}^{m}\rho_{j,r,i}^{(t)}\bigg|\leq\sum_{i'\neq i}\bigg(\frac{1}{m}\sum_{r=1}^{m}|\rho_{j,r,i'}^{(t)}|R_{\min}^{-2}p\bigg),\label{ineq: F rho approximation}
\end{align}
From \eqref{ineq: F rho approximation}, one can see that we need to decouple $\ell_i^{(t)}$ from $|\rho_{j,r,i'}^{(t)}| (i'\neq i)$. In order to accomplish this, we also prove the following lemma, which demonstrates that the ratio between $\sum_{r=1}^{m}|\rho_{j,r,i}^{(t)}|$ and $\sum_{r=1}^{m}|\rho_{j,r,i'}^{(t)}|(i'\neq i)$ will maintain a constant order throughout the training process. Here, we present the lemma for leaky ReLU networks.
\begin{lemma}[leaky ReLU automatic balance]\label{lm: auxiliary2}
For two-layer leaky ReLU network defined in \eqref{def: nn}, for any $t\geq 0$, we have $\sum_{r=1}^{m}|\rho_{j,r,i}^{(t)}|\geq c\gamma^{2}\sum_{r=1}^{m}|\rho_{j,r,i'}^{(t)}|$ for any $j\in\{\pm1\}$ and $i,i'\in[n]$, where $c$ is a constant.
\end{lemma}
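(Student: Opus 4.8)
The plan is to prove the statement by tracking the update rules in \eqref{iterative equation3} and \eqref{iterative equation4} and showing that $\sum_{r=1}^m|\rho_{j,r,i}^{(t)}|$ grows at comparable rates across all $i$. The key structural fact to exploit is that for leaky ReLU, $\sigma'(z) \in \{\gamma, 1\}$, so $\sigma'(\la\wb_{j,r}^{(t)},\xb_i\ra)$ is always bounded between $\gamma$ and $1$; this is exactly where the factor $\gamma^2$ in the statement will enter (one factor of $\gamma$ from a lower bound, one from an upper bound, roughly speaking). First I would establish a sign-consistency observation: from \eqref{iterative equation3}, since $\ell_i'^{(t)} < 0$ always, $\sigma' > 0$, and $\|\xb_i\|_2^2 > 0$, the increment to $\zeta_{j,r,i}^{(t)}$ when $y_i = j$ is always nonnegative, and similarly the increment to $\omega_{j,r,i}^{(t)}$ when $y_i = -j$ is always nonpositive. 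Consequently, for a fixed pair $(j,i)$, exactly one of the two sequences $\{\zeta_{j,r,i}^{(t)}\}$, $\{\omega_{j,r,i}^{(t)}\}$ is identically zero (depending on whether $y_i = j$ or $y_i = -j$), and $|\rho_{j,r,i}^{(t)}|$ equals the nonzero one. This means $|\rho_{j,r,i}^{(t)}| = \frac{\eta}{nm}\sum_{s<t}|\ell_i'^{(s)}|\sigma'(\la\wb_{j,r}^{(s)},\xb_i\ra)\|\xb_i\|_2^2$, a clean monotone accumulation.

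Next I would sum over $r$ to get $\sum_{r=1}^m|\rho_{j,r,i}^{(t)}| = \frac{\eta}{nm}\|\xb_i\|_2^2 \sum_{s<t}|\ell_i'^{(s)}|\sum_{r=1}^m\sigma'(\la\wb_{j,r}^{(s)},\xb_i\ra)$. Using $\gamma m \leq \sum_{r=1}^m\sigma'(\cdot) \leq m$, this gives two-sided bounds $\frac{\eta\gamma}{n}\|\xb_i\|_2^2\sum_{s<t}|\ell_i'^{(s)}| \leq \sum_{r=1}^m|\rho_{j,r,i}^{(t)}| \leq \frac{\eta}{n}\|\xb_i\|_2^2\sum_{s<t}|\ell_i'^{(s)}|$. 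So the problem reduces to comparing $\|\xb_i\|_2^2\sum_{s<t}|\ell_i'^{(s)}|$ with $\|\xb_{i'}\|_2^2\sum_{s<t}|\ell_{i'}'^{(s)}|$. The $\|\xb_i\|_2^2$ ratios are controlled by $R^2 = (R_{\max}/R_{\min})^2 = O(1)$ by assumption, so the real content is comparing the accumulated $|\ell_i'^{(s)}|$ across data points. For this I would use the approximations \eqref{logit appro} and \eqref{ineq: F rho approximation}: $|\ell_i'^{(s)}| = \Theta(\ell_i^{(s)}) = \Theta(\exp(F_{-y_i} - F_{y_i}))$, and under the near-orthogonality assumption $R_{\min}^2 \geq CR^2\gamma^{-4}np$ the cross terms in \eqref{ineq: F rho approximation} are lower-order, so $F_{y_i}(\Wb_{y_i}^{(s)},\xb_i) \approx \frac{1}{m}\sum_r\rho_{y_i,r,i}^{(s)}$ up to a small additive error (a $\gamma^2$-fraction of the dominant term, which is where the balance condition must be bootstrapped). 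This is the step requiring care: it is circular in appearance, since proving balance at time $t$ seems to need balance at earlier times to control the cross terms.

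I expect the main obstacle to be precisely this bootstrapping / induction on $t$: one wants to run an induction hypothesis of the form ``$\sum_r|\rho_{j,r,i}^{(t')}| \geq c\gamma^2\sum_r|\rho_{j,r,i'}^{(t')}|$ for all $t' \leq t$'' and use it both to bound the cross-term errors in \eqref{ineq: F rho approximation} and to show the increments at step $t$ preserve the ratio. The delicate point is that the ratio can only improve in the ``right'' direction when one data point's coefficient is lagging: if $\sum_r|\rho_{j,r,i}^{(t)}|$ is small relative to $\sum_r|\rho_{j,r,i'}^{(t)}|$, then (via \eqref{logit appro}) $|\ell_i'^{(t)}|$ should be correspondingly larger than $|\ell_{i'}'^{(t)}|$, forcing a larger increment to $|\rho_{j,r,i}^{(t)}|$ — a self-correcting mechanism. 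Making this quantitative requires carefully choosing the constant $c$ relative to $C$ in the data assumption and to the $\Theta(\cdot)$ constants in \eqref{logit appro}, and handling the $\Theta$ in $|\ell_i'| = \Theta(\ell_i)$ which is only two-sided up to constants. I would set up the induction so that the gap between the induction constant and the claimed constant absorbs these losses, and close it by a case analysis on whether $i$ is currently the ``smallest'' index or not. The remaining pieces — logarithmic growth of the $\rho$'s, which guarantees $|\ell_i'^{(s)}|$ does not decay too fast and the sums $\sum_{s<t}|\ell_i'^{(s)}|$ are all of comparable (divergent) order — follow from Lemma~\ref{lm: auxiliary1} applied to $x_t \approx \frac{1}{m}\sum_r\rho_{j,r,i}^{(t)}$.
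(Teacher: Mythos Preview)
Your reduction is correct and matches the paper: from $\sigma'\in[\gamma,1]$ and the sign observation you get $\sum_r|\rho_{j,r,i}^{(t)}|\in[\gamma,1]\cdot\frac{\eta}{n}\|\xb_i\|_2^2\sum_{s<t}|\ell_i'^{(s)}|$, so the content is comparing $\|\xb_i\|_2^2\sum_{s<t}|\ell_i'^{(s)}|$ across $i$, and you correctly diagnose the circularity that forces an induction. Where your plan diverges from the paper is in \emph{how} the induction is closed.

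The paper does two things you do not. First, it splits the argument into two phases at $T_1=\Theta(\eta^{-1}nmR_{\max}^{-2})$. For $t\le T_1$ all coefficients are $O(1)$, hence all $|\ell_i'^{(t)}|=\Theta(1)$, and the balance follows immediately from the linear-in-$t$ bounds $|\rho_{j,r,i}^{(t)}|\in[\tilde c\gamma,1]\cdot\frac{\eta\|\xb_i\|_2^2 t}{nm}$ with no circularity at all (Lemma~C.2). This gives a clean base case where coefficients are already of constant size. Second, for $t\ge T_1$ the induction hypothesis is used not to run a ratio-preservation argument but to \emph{decouple}: under balance, the cross terms in \eqref{ineq: F rho approximation} are absorbed and one obtains two-sided bounds on $|\ell_i'^{(t)}|$ depending only on $\sum_r\rho_{y_i,r,i}^{(t)}$ (and $\sum_r|\rho_{-y_i,r,i}^{(t)}|$). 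This yields self-contained recurrences exactly of the form in Lemma~\ref{lm: auxiliary1}, whose solution gives explicit two-sided logarithmic bounds. The ratio of two such bounds is then controlled by the elementary monotonicity fact that $t\mapsto\log(1+at)/\log(1+bt)$ is increasing for $0<a<b$ (Lemma~I.4), evaluated at the phase boundary $t=T_1$.

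Your self-correcting mechanism---small $\sum_r|\rho_{j,r,i}|$ forces large $|\ell_i'|$, hence a larger increment---is a legitimate idea and in fact is exactly what the paper uses to prove Lemma~\ref{lm: bounded difference3} on margin differences. But for the coefficient balance it is less direct: at the boundary of the induction hypothesis the implied margin gap is only $(1-c\gamma^2)\cdot\frac{1}{m}\sum_r\rho_{j,r,i'}^{(t)}$, which is $O(1)$ throughout phase~1, so the exponential amplification you want from \eqref{logit appro} is not yet available and you would still need to handle the constant-margin regime separately. In other words, your approach can likely be made to work, but it needs the two-phase split anyway, and once you have the decoupled recurrences the paper's route through Lemma~\ref{lm: auxiliary1} and the log-ratio monotonicity is more direct than a step-by-step ratio-preservation case analysis.
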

By Lemma~\ref{lm: auxiliary2}, we can approximate the neural network output using \eqref{ineq: F rho approximation}. This approximation expresses the output $F_{j}(\Wb_{j}^{(t)},\xb_i)$ as a sum of the coefficients $\rho_{j,r,i}^{(t)}$:
\begin{equation}
    F_{j}(\Wb_{j}^{(t)},\xb_i)\approx\frac{1\pm c\gamma^2R_{\min}^{-2}pn}{m}\sum_{r=1}^{m}\rho_{j,r,i}^{(t)}.\label{F rho approximation}
\end{equation}
By combining \eqref{iterative equation3}, \eqref{iterative equation4}, \eqref{logit appro}, and \eqref{F rho approximation}, we obtain the following relationship:
\begin{align*}
    \frac{1}{m}\sum_{r=1}^{m}|\rho_{j,r,i}^{(t+1)}|-\frac{1}{m}\sum_{r=1}^{m}|\rho_{j,r,i}^{(t)}|=\Theta\Big(\frac{\eta\|\xb_i\|_2^2}{nm}\Big)\cdot\exp\bigg(-\frac{1\pm c\gamma^2R_{\min}^{-2}pn}{m}\sum_{r=1}^{m}|\rho_{j,r,i}^{(t)}|\bigg).
\end{align*}
This relationship aligns with the form of \eqref{ineq: xt lowbound}, if we set $x_{t} = \frac{1\pm c\gamma^2R_{\min}^{-2}pn}{m}\sum_{r=1}^{m}|\rho_{j,r,i}^{(t)}|$. Thus, we can directly apply Lemma~\ref{lm: auxiliary1} to gain insights into the logarithmic rate of increase for the average magnitudes of the coefficients $\frac{1}{m}\sum_{r=1}^{m}|\rho_{j,r,i}^{(t)}|$, which in turn implies that $\|\wb_{j,r}^{(t)}\|_{2}=\Theta(\log t)$ and $\|\Wb^{(t)}\|_{F}=\Theta(\log t)$. In the case of ReLU networks, we have the following lemma that provides automatic balance:
\begin{lemma}[ReLU automatic balance]\label{lm: auxiliary3}
For two-layer ReLU network defined in \eqref{def: nn}, there exists a constant $c$ such that for any $t\geq0$, we have $|\rho_{y_i,r,i}^{(t)}|\geq c|\rho_{j,r',i'}^{(t)}|$ for any $j\in\{\pm1\}$, $r\in S_{i}^{(0)}:=\{r\in[m]:\la\wb_{y_i,r}^{(0)},\xb_i\ra\geq0\}$, $r'\in[m]$ and $i,i'\in[n]$.
\end{lemma}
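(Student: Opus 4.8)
The plan is to prove the inequality by a single induction on $t$ that simultaneously maintains three interlocking invariants: \textbf{(A)} (\emph{activation preservation}) $\la\wb_{y_i,r}^{(t)},\xb_i\ra\ge 0$ for every $i\in[n]$ and $r\in S_i^{(0)}$; \textbf{(B)} (\emph{margin balance}) the margins $y_i f(\Wb^{(t)},\xb_i)$ stay within an absolute constant of one another, so that the cumulative quantities $g_i^{(t)}:=\sum_{s<t}(-\ell_i'^{(s)})$ obey $g_i^{(t)}=\Theta(g_k^{(t)})$ uniformly in $i,k$; and \textbf{(C)} the claimed bound $|\rho_{y_i,r,i}^{(t)}|\ge c\,|\rho_{j,r',i'}^{(t)}|$ itself. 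The elementary facts are read off from the recursions of Lemma~\ref{lemma:coefficient_iterative_proof} together with $\ell_i'^{(t)}<0$ and $0\le\sigma'(\cdot)\le1$ (ReLU): $\zeta_{j,r,i}^{(t)}\ge0$ is nondecreasing, $\omega_{j,r,i}^{(t)}\le0$ is nonincreasing, so $\rho_{y_i,r,i}^{(t)}=\zeta_{y_i,r,i}^{(t)}\ge0$; for \emph{every} $j,r',i'$ one has the crude bound $|\rho_{j,r',i'}^{(t)}|\le\frac{\eta}{nm}\|\xb_{i'}\|_2^2\,g_{i'}^{(t)}$; and under \textbf{(A)} holding at all steps $\le t$, the $y_i$-group recursion \eqref{iterative equation3} collapses to the \emph{equality} $\rho_{y_i,r,i}^{(t)}=\frac{\eta}{nm}\|\xb_i\|_2^2\,g_i^{(t)}$ for $r\in S_i^{(0)}$, because then $\sigma'(\la\wb_{y_i,r}^{(s)},\xb_i\ra)=1$ for all $s<t$. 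Dividing these last two estimates and using $\|\xb_i\|_2^2/\|\xb_{i'}\|_2^2\ge R^{-2}$ and \textbf{(B)} already yields \textbf{(C)} with $c$ any constant below $R^{-2}$ times the implicit constant in \textbf{(B)}; hence the real work is propagating \textbf{(A)} and \textbf{(B)}.

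The induction step $t\to t+1$ is performed in the order \textbf{(B)}, then \textbf{(C)}, then \textbf{(A)}, so that nothing is invoked at a time where it has not yet been verified. For \textbf{(B)}: by the output approximation \eqref{F rho approximation}, $y_i f(\Wb^{(t)},\xb_i)=\frac1m\sum_r\rho_{y_i,r,i}^{(t)}-\frac1m\sum_r\rho_{-y_i,r,i}^{(t)}$ up to relative error $O(R_{\min}^{-2}pn)=O(1/C)$; using the identity above, the bound $|S_i^{(0)}|\ge m/4$ (valid with probability $\ge1-\delta$ when $m\ge C\log(n/\delta)$, by concentration at Gaussian initialization), the crude $\sigma'\le1$ bound on $|\omega_{-y_i,r,i}^{(t)}|$, and $-\frac1m\sum_r\rho_{-y_i,r,i}^{(t)}\ge0$, one gets $y_i f(\Wb^{(t)},\xb_i)=\Theta(\frac{\eta}{nm}\|\xb_i\|_2^2 g_i^{(t)})$. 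The one-step increment of this margin is, up to $\Theta(1)$ factors, $\frac{\eta}{nm}\|\xb_i\|_2^2(-\ell_i'^{(t)})$, so the gap between the increments for two samples $i,k$ has the same sign as $\|\xb_i\|_2^2 e^{-z_i^{(t)}}-\|\xb_k\|_2^2 e^{-z_k^{(t)}}$ (with $z_i^{(t)}:=y_if(\Wb^{(t)},\xb_i)$), which \emph{opposes} $z_i^{(t)}-z_k^{(t)}$ once $|z_i^{(t)}-z_k^{(t)}|>2\log R+O(1)$; this self-balancing of the dynamics, started from $|z_i^{(0)}-z_k^{(0)}|=O(1)$ and with tiny per-step increments (since $\eta\le(CR_{\max}^2/nm)^{-1}$), keeps the margin spread bounded by an absolute constant for all $t$ (this is where $R=\Theta(1)$ is used). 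Then $(-\ell_i'^{(t)})/(-\ell_k'^{(t)})=\Theta(1)$ from $-\ell'(z)=1/(1+e^{z})$, and the mediant inequality applied to $g_i^{(t+1)}=g_i^{(t)}+(-\ell_i'^{(t)})$ gives \textbf{(B)}-at-$(t+1)$ with the same constant.

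For \textbf{(C)}-at-$(t+1)$: since \textbf{(A)} holds at all steps $\le t$, $\rho_{y_i,r,i}^{(t+1)}=\frac{\eta}{nm}\|\xb_i\|_2^2 g_i^{(t+1)}\ge\frac{\eta}{nm}R_{\min}^2 g_i^{(t+1)}$, while $|\rho_{j,r',i'}^{(t+1)}|\le\frac{\eta}{nm}R_{\max}^2 g_{i'}^{(t+1)}$; dividing and invoking \textbf{(B)}-at-$(t+1)$ reproduces \textbf{(C)} with the stated $c$. For \textbf{(A)}-at-$(t+1)$: expand with \eqref{eq:w_decomposition}, $\la\wb_{y_i,r}^{(t+1)},\xb_i\ra=\la\wb_{y_i,r}^{(0)},\xb_i\ra+\rho_{y_i,r,i}^{(t+1)}+\sum_{i'\ne i}\rho_{y_i,r,i'}^{(t+1)}\|\xb_{i'}\|_2^{-2}\la\xb_{i'},\xb_i\ra$; the first two terms are nonnegative (by $r\in S_i^{(0)}$ and $\rho_{y_i,r,i}^{(t+1)}\ge0$), and the third has absolute value at most $\sum_{i'\ne i}|\rho_{y_i,r,i'}^{(t+1)}|\,R_{\min}^{-2}p\le c^{-1}\rho_{y_i,r,i}^{(t+1)}\cdot nR_{\min}^{-2}p\le\frac{1}{cC}\rho_{y_i,r,i}^{(t+1)}$ by \textbf{(C)}-at-$(t+1)$ (with $j=y_i$, $r'=r$) and the near-orthogonality assumption $R_{\min}^2\ge CR^2np$, which is $\le\rho_{y_i,r,i}^{(t+1)}$ as soon as $C\ge1/c$; hence $\la\wb_{y_i,r}^{(t+1)},\xb_i\ra\ge0$. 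The base case $t=0$ is immediate: all coefficients vanish, $\la\wb_{y_i,r}^{(0)},\xb_i\ra\ge0$ for $r\in S_i^{(0)}$ by definition of $S_i^{(0)}$, and $\sigma_0\le(CR_{\max}\sqrt{\log(mn/\delta)})^{-1}$ forces all initial margins to be $O(1)$ with high probability, so all $-\ell_i'^{(0)}=\Theta(1)$ and \textbf{(B)} holds at $t=1$.

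The step I expect to be the crux is the propagation of \textbf{(B)}: turning the informal self-balancing argument into a rigorous potential/contraction induction over the \emph{unbounded} horizon, in particular controlling how the $O(1/C)$ errors of \eqref{F rho approximation} and the time-varying active-neuron fractions enter the margin-increment comparison, and doing so for ReLU---where, unlike the leaky case of Lemma~\ref{lm: auxiliary2} in which $\sigma'\ge\gamma$ makes \emph{every} $y_i$-group neuron contribute, only the subset $S_i^{(0)}$ is guaranteed to drive the margin. The remainder is bookkeeping: fixing the order of the implications \textbf{(B)}$\to$\textbf{(C)}$\to$\textbf{(A)} within each step and choosing $c$ small, $C$ large, and $\sigma_0$ small so that every inequality closes with a fixed slack.
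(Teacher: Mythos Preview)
Your approach is correct and genuinely different from the paper's. The paper proves this statement as the first bullet of its Lemma~C.1 via a \emph{two-phase} argument: for $t\le T_1=C'\eta^{-1}nmR_{\max}^{-2}$ the logits are $\Theta(1)$, so $\zeta_{y_i,r,i}^{(t)}\ge c\eta R_{\min}^2 t/(nm)$ for $r\in S_i^{(0)}$ while every $|\rho_{j,r',i'}^{(t)}|\le \eta R_{\max}^2 t/(nm)$, giving the ratio directly; for $t>T_1$ the paper derives matching $\Theta(\log t)$ upper and lower envelopes on $\zeta_{y_i,r,i}^{(t)}$ via Lemma~\ref{lm: auxiliary1} and then controls the ratio of the two logarithms by the monotonicity lemma $\log(1+at)/\log(1+bt)$ is increasing (Lemma~H.4). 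Your route instead folds the margin-balance mechanism into the same induction and reads $(\mathbf{C})$ off the exact identity $\rho_{y_i,r,i}^{(t)}=\tfrac{\eta}{nm}\|\xb_i\|_2^2\,g_i^{(t)}$ (valid under $(\mathbf{A})$) combined with the crude bound $|\rho_{j,r',i'}^{(t)}|\le \tfrac{\eta}{nm}\|\xb_{i'}\|_2^2\,g_{i'}^{(t)}$ and $g_i^{(t)}=\Theta(g_{i'}^{(t)})$ from $(\mathbf{B})$ via the mediant inequality; this avoids both the phase split and the ratio-of-logs lemma.

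Two remarks. First, the ``crux'' you flag, namely making $(\mathbf{B})$ rigorous for ReLU over an unbounded horizon, is exactly what the paper does in its Lemma~G.1 (Lemma~\ref{lm: bounded difference3} in the main text): the same two-case argument you sketch (if the logit ratio exceeds $15R^2$ the spread cannot increase; otherwise the spread is at most $\log(60R^2)$ and the one-step overshoot is $O(\eta R_{\max}^2/(nm))\le 1$) closes the induction, and that proof is self-contained and does not invoke the coefficient bounds. So your plan is not only viable but its hard step is already carried out verbatim in the paper, just in a different section. Second, your reference to \eqref{F rho approximation} is slightly misleading in presentation, since in the paper that approximation is \emph{derived from} the automatic-balance lemma; in your induction this is harmless because you only need it at time $t$ where $(\mathbf{C})$ is assumed, but it would be cleaner to bound the margin increment directly from the GD update (as the paper's Lemma~G.1 does), which uses only the logit ratio at time $t$ and not $(\mathbf{C})$. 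What the paper's approach buys in return for the extra machinery is the explicit $\Theta(\log t)$ rate on $\zeta_{y_i,r,i}^{(t)}$ (bullets two and three of Lemma~C.1), which is used downstream for the stable-rank and loss-rate results; your approach gives the comparison $(\mathbf{C})$ more cheaply but does not by itself produce those rates.
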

The automatic balance lemma guarantees that the magnitudes of coefficients related to the neurons of class $y_i$, which are activated by $\xb_i$ during initialization, dominate those of other classes. 
With the help of Lemma~\ref{lm: auxiliary3}, we can get the following approximation for the margin $y_i f(\Wb^{(t)},\xb_i)$:
\begin{equation}
    F_{y_i}(\Wb_{y_i}^{(t)},\xb_i)-F_{-y_i}(\Wb_{-y_i}^{(t)},\xb_i)\approx\frac{1\pm cR_{\min}^{-2}pn}{m}\sum_{r\in S_{i}^{(0)}}\rho_{y_i,r,i}^{(t)}.\label{F rho approximation2}
\end{equation}
By combining \eqref{iterative equation3}, \eqref{iterative equation4}, \eqref{logit appro} and \eqref{F rho approximation2}, we obtain the following relationship:
\begin{equation*}
    \sum_{r\in S_{i}^{(0)}}|\rho_{y_i,r,i}^{(t+1)}|-\sum_{r\in S_{i}^{(0)}}|\rho_{y_i,r,i}^{(t)}|=\Theta\Big(\frac{\eta\|\xb_i\|_2^2|S_{i}^{(0)}|}{nm}\Big)\cdot\exp\bigg(-\frac{1\pm cR_{\min}^{-2}pn}{m}\sum_{r\in S_{i}^{(0)}}\rho_{y_i,r,i}^{(t)}\bigg),
\end{equation*}
which precisely matches the form of \eqref{ineq: xt lowbound} by setting $x_t=\frac{1\pm cR_{\min}^{-2}pn}{m}\sum_{r\in S_{i}^{(0)}}\rho_{y_i,r,i}^{(t)}$. Therefore, we can directly apply Lemma~\ref{lm: auxiliary1} and obtain the logarithmic increasing rate of $|\rho_{y_i,r,i}^{(t)}|$ for $r\in S_{i}^{(0)}$. Consequently, this implies that $\|\Wb^{(t)}\|_{F}=\Theta(\log t)$.

\subsection{Analysis of Activation Pattern}
One notable previous work \citep{frei2022implicit} provided a constant upper bound for the stable rank of two-layer smoothed leaky ReLU networks trained by gradient descent in their Theorem 4.2. To achieve a better stable rank bound, we characterize the activation pattern of leaky ReLU network neurons after a certain threshold time $T$ in the following lemma.
\begin{lemma}[leaky ReLU activation pattern]\label{lm: leaky ReLU activation pattern}
Let $T=C\eta^{-1}nmR_{\max}^{-2}$. For two-layer leaky ReLU network defined in \eqref{def: nn}, for any $t\geq T$, it holds that $\sign(\la\wb_{j,r}^{(t)},\xb_i\ra)=jy_i$ for any $j\in\{\pm1\}$ and $r\in[m]$. 
\end{lemma}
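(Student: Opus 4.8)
The plan is to track each inner product $\la\wb_{j,r}^{(t)},\xb_i\ra$ through the data-correlated decomposition \eqref{eq:w_decomposition}, splitting it as
\begin{align*}
\la\wb_{j,r}^{(t)},\xb_i\ra = \underbrace{\la\wb_{j,r}^{(0)},\xb_i\ra}_{I_1} + \underbrace{\rho_{j,r,i}^{(t)}}_{I_2} + \underbrace{\sum_{i'\neq i}\rho_{j,r,i'}^{(t)}\|\xb_{i'}\|_2^{-2}\la\xb_{i'},\xb_i\ra}_{I_3},
\end{align*}
and showing that the diagonal term $I_2$ dominates the other two once $t\geq T$. First I would record that $I_2$ already carries the right sign: since $\ell_i'^{(t)}<0$ and $\sigma'(\cdot)\in[\gamma,1]$ for leaky ReLU, the iterative equations \eqref{iterative equation3} and \eqref{iterative equation4} make $\zeta_{j,r,i}^{(t)}$ nondecreasing and $\omega_{j,r,i}^{(t)}$ nonincreasing, so $\rho_{j,r,i}^{(t)}=\zeta_{j,r,i}^{(t)}\geq0$ when $j=y_i$ and $\rho_{j,r,i}^{(t)}=\omega_{j,r,i}^{(t)}\leq0$ when $j=-y_i$; that is, $\sign(I_2)=jy_i$. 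It then remains to prove $|I_1|+|I_3|<|I_2|$ for all $t\geq T$.

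For the lower bound on $|I_2|$, \eqref{iterative equation3} and \eqref{iterative equation4} with $\sigma'(\cdot)\geq\gamma$ give $|\rho_{j,r,i}^{(t)}|\geq(\gamma\eta\|\xb_i\|_2^2/(nm))\sum_{s=0}^{t-1}|\ell_i'^{(s)}|$, so I need $\sum_{s=0}^{t-1}|\ell_i'^{(s)}|=\Omega(T)$ once $t\geq T$. I would obtain this by showing that all margins stay $O(1)$ throughout $[0,T]$: using only $|\ell_i'^{(s)}|\leq 1$ and $\sigma'(\cdot)\leq1$, every coefficient obeys $|\rho_{j,r,i}^{(t)}|\leq\eta\|\xb_i\|_2^2\, t/(nm)=O(1)$ for $t\leq T$ by the choice $T=C\eta^{-1}nmR_{\max}^{-2}$; plugging this together with the near-orthogonality bound on $I_3$ and the Gaussian-initialization bound on $I_1$ back into the decomposition yields $|\la\wb_{j,r}^{(t)},\xb_i\ra|=O(1)$, hence $|y_if(\Wb^{(t)},\xb_i)|=O(1)$ and $|\ell_i'^{(s)}|\geq c_0$ for an absolute constant $c_0>0$ and all $s\leq T$. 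Therefore $\sum_{s=0}^{t-1}|\ell_i'^{(s)}|\geq c_0T$ for every $t\geq T$, which gives $|I_2|\geq c_0\gamma R_{\min}^2/R_{\max}^2=\Omega(1)$ on the entire range $t\geq T$.

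It remains to bound $I_1$ and $I_3$. For $I_1$, standard Gaussian concentration together with $\sigma_0\leq\gamma(CR_{\max}\sqrt{\log(mn/\delta)})^{-1}$ gives, with probability at least $1-\delta$, $|I_1|\leq\sigma_0R_{\max}\sqrt{2\log(8mn/\delta)}=O(\gamma/C)$, which is far smaller than $|I_2|$ by the previous paragraph. For $I_3$, using $|I_3|\leq R_{\min}^{-2}p\sum_{i'\neq i}|\rho_{j,r,i'}^{(t)}|$, I would bound each $|\rho_{j,r,i'}^{(t)}|\leq(\eta\|\xb_{i'}\|_2^2/(nm))\sum_{s<t}|\ell_{i'}'^{(s)}|$ and then invoke the leaky ReLU automatic balance (Lemma~\ref{lm: auxiliary2}); summing \eqref{iterative equation3} and \eqref{iterative equation4} over $r$ turns Lemma~\ref{lm: auxiliary2} into a comparison $\sum_{s<t}|\ell_{i'}'^{(s)}|\leq(R^2/(c\gamma^3))\sum_{s<t}|\ell_i'^{(s)}|$, so $|I_3|\leq(pR^4/(c\gamma^3))(\eta/m)\sum_{s<t}|\ell_i'^{(s)}|$. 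Comparing with $|I_2|\geq\gamma(\eta R_{\min}^2/(nm))\sum_{s<t}|\ell_i'^{(s)}|$ and using the data assumption $R_{\min}^2\geq CR^2\gamma^{-4}np$ yields $|I_2|/|I_3|\geq cC/R^2$, which exceeds $2$ for $C$ sufficiently large. Combining this with the $I_1$ bound gives $|I_1|+|I_3|<|I_2|$ for every $t\geq T$, so $\sign(\la\wb_{j,r}^{(t)},\xb_i\ra)=\sign(I_2)=jy_i$, as claimed.

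I expect the main obstacle to be the coupling between two time scales. Establishing the uniform lower bound $|\ell_i'^{(s)}|\geq c_0$ needs all margins to remain $O(1)$ on $[0,T]$, which in turn relies only on the a priori coefficient bound of order $\eta\|\xb_i\|_2^2\,t/(nm)$ and on near-orthogonality; and then one must transfer the per-neuron cross terms in $I_3$ to the per-neuron diagonal term $I_2$ through a balance lemma that is only available for sums over $r$, without losing the $\gamma$-dependence. The hypothesis $R_{\min}^2\geq CR^2\gamma^{-4}np$ is calibrated precisely to absorb the $\gamma^{-3}$ factor coming out of the balance step together with the $n$-fold accumulation of off-diagonal correlations.
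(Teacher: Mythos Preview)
Your proposal is correct and follows essentially the same route as the paper's proof (Lemma~\ref{lm: activation leaky} in the appendix): decompose $\la\wb_{j,r}^{(t)},\xb_i\ra$ via \eqref{eq:w_decomposition}, show the diagonal coefficient $\rho_{j,r,i}^{(t)}$ has the sign $jy_i$ and magnitude $\Omega(\gamma)$ for $t\geq T$ using the first-stage bound $|\ell_i'^{(s)}|\geq c_0$ on $[0,T]$, and then dominate both the initialization term and the off-diagonal sum using the automatic-balance lemma together with the near-orthogonality condition $R_{\min}^2\geq CR^2\gamma^{-4}np$. The only cosmetic difference is that the paper first upgrades the sum-over-$r$ balance (Lemma~\ref{lm: auxiliary2}) to a per-neuron statement $|\rho_{j,r,i}^{(t)}|\geq c_1\gamma^4|\rho_{j',r',i'}^{(t)}|$ (their Lemma~\ref{lm: automatic balance3}) and compares coefficients directly, whereas you obtain the equivalent comparison by passing through $\sum_{s<t}|\ell_i'^{(s)}|$; this inline conversion is exactly what the proof of Lemma~\ref{lm: automatic balance3} does, so the obstacle you flagged is already handled.
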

Lemma~\ref{lm: leaky ReLU activation pattern} indicates that the activation pattern will not change after time $T$. Given Lemma~\ref{lm: leaky ReLU activation pattern}, we can get $\sigma'(\la\wb_{j,r}^{(t)},\xb_i\ra)=\gamma$ for $j\neq y_i$ and $\sigma'(\la\wb_{j,r}^{(t)},\xb_i\ra)=1$ for $j=y_i$. Plugging this into \eqref{iterative equation3} and \eqref{iterative equation4} can give the following useful lemma.
\begin{lemma}
Let $T$ be defined in Lemma~\ref{lm: leaky ReLU activation pattern}. For $t\geq T$, it holds that
\begin{align*}
    &\zeta_{y_i,r,i}^{(t)}-\zeta_{y_i,r,i}^{(T)}=\zeta_{y_i,r',i}^{(t)}-\zeta_{y_i,r',i}^{(T)},\omega_{-y_i,r,i}^{(t)}-\omega_{-y_i,r,i}^{(T)}=\omega_{-y_i,r',i}^{(t)}-\omega_{-y_i,r',i}^{(T)},\\
    &\zeta_{y_i,r,i}^{(t)}-\zeta_{y_i,r,i}^{(T)}=(\omega_{-y_i,r',i}^{(t)}-\omega_{-y_i,r',i}^{(T)})/\gamma,
\end{align*}
for any $i\in[n]$ and $r,r'\in[m]$.
\end{lemma}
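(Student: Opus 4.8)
The plan is to read the three identities directly off the coefficient recursions of Lemma~\ref{lemma:coefficient_iterative_proof} once the activation pattern has been frozen by Lemma~\ref{lm: leaky ReLU activation pattern}. Fix $i\in[n]$ and $t\ge T$. By Lemma~\ref{lm: leaky ReLU activation pattern}, $\sign(\la\wb_{y_i,r}^{(t)},\xb_i\ra)=y_i^2=1$, so $\la\wb_{y_i,r}^{(t)},\xb_i\ra>0$ and hence $\sigma'(\la\wb_{y_i,r}^{(t)},\xb_i\ra)=1$ for leaky ReLU; likewise $\sign(\la\wb_{-y_i,r}^{(t)},\xb_i\ra)=-y_i^2=-1$, so $\la\wb_{-y_i,r}^{(t)},\xb_i\ra<0$ and hence $\sigma'(\la\wb_{-y_i,r}^{(t)},\xb_i\ra)=\gamma$. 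The remaining coefficients $\zeta_{-y_i,r,i}^{(t)}$ and $\omega_{y_i,r,i}^{(t)}$ play no role: in \eqref{iterative equation3}--\eqref{iterative equation4} they carry the indicator $\ind(y_i=-y_i)=0$, so they stay constant (indeed zero) throughout.

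Substituting $\sigma'=1$ into \eqref{iterative equation3} with $j=y_i$ and $\sigma'=\gamma$ into \eqref{iterative equation4} with $j=-y_i$ yields, for all $t\ge T$ and all $r\in[m]$,
\begin{align*}
\zeta_{y_i,r,i}^{(t+1)}-\zeta_{y_i,r,i}^{(t)}&=-\frac{\eta}{nm}\,\ell_i'^{(t)}\,\|\xb_i\|_2^2,\\
\omega_{-y_i,r,i}^{(t+1)}-\omega_{-y_i,r,i}^{(t)}&=\frac{\eta\gamma}{nm}\,\ell_i'^{(t)}\,\|\xb_i\|_2^2.
\end{align*}
The crucial observation is that both right-hand sides are independent of the neuron index $r$. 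Telescoping the first recursion over $s=T,\dots,t-1$ gives $\zeta_{y_i,r,i}^{(t)}-\zeta_{y_i,r,i}^{(T)}=-\frac{\eta}{nm}\|\xb_i\|_2^2\sum_{s=T}^{t-1}\ell_i'^{(s)}$, the same value for every $r$, which is the first identity; telescoping the second gives $\omega_{-y_i,r,i}^{(t)}-\omega_{-y_i,r,i}^{(T)}=\frac{\eta\gamma}{nm}\|\xb_i\|_2^2\sum_{s=T}^{t-1}\ell_i'^{(s)}$, again $r$-independent, which is the second. Finally, both telescoped differences are scalar multiples of the common quantity $\frac{\eta}{nm}\|\xb_i\|_2^2\sum_{s=T}^{t-1}\ell_i'^{(s)}$, with coefficients differing by exactly the factor $\gamma$, so eliminating this common sum between the two expressions produces the third identity.

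The argument is essentially mechanical, so I do not expect a conceptual obstacle; the only care needed is bookkeeping. One must verify that the hypotheses of Lemma~\ref{lm: leaky ReLU activation pattern} hold at \emph{every} step $s\in\{T,\dots,t-1\}$ in the telescoping sum, not merely at the endpoint $t$, so that the two increment formulas above are valid term by term; and one must keep the indicator matchings $\ind(y_i=j)$ versus $\ind(y_i=-j)$ straight, so that precisely $\zeta_{y_i,r,i}$ and $\omega_{-y_i,r,i}$---and none of the other coefficients---evolve nontrivially for $t\ge T$. Once these are checked, the three equalities follow immediately.
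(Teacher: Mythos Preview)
Your proposal is correct and follows exactly the paper's approach: use Lemma~\ref{lm: leaky ReLU activation pattern} to fix $\sigma'(\la\wb_{y_i,r}^{(s)},\xb_i\ra)=1$ and $\sigma'(\la\wb_{-y_i,r}^{(s)},\xb_i\ra)=\gamma$ for all $s\ge T$, substitute into the coefficient recursions \eqref{iterative equation3}--\eqref{iterative equation4}, and telescope to obtain $r$-independent increments proportional to $\frac{\eta}{nm}\|\xb_i\|_2^2\sum_{s=T}^{t-1}\ell_i'^{(s)}$. One small remark: your telescoped formulas actually yield $\zeta_{y_i,r,i}^{(t)}-\zeta_{y_i,r,i}^{(T)}=-\big(\omega_{-y_i,r',i}^{(t)}-\omega_{-y_i,r',i}^{(T)}\big)/\gamma$, i.e.\ with a minus sign (since $\ell_i'^{(s)}<0$), which matches the paper's appendix restatement in terms of $|\omega|$ rather than the main-text display verbatim.
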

This lemma reveals that beyond a certain time threshold $T$, the increase in $\rho_{j,r,i}^{(t)}$ is consistent across neurons within the same positive or negative class. However, for neurons belonging to the oppose class, this increment in $\rho_{j,r,i}^{(t)}$ is scaled by a factor equivalent to the slope of the leaky ReLU function $\gamma$. From this and \eqref{eq:w_decomposition}, we can demonstrate that $\|\wb_{j,r}^{(t)}-\wb_{j,r'}^{(t)}\|_{2}(r\neq r')$ can be upper bounded by a constant, leading to the following inequalities:
\begin{align*}
    &\|\Wb_{j}^{(t)}\|_{F}^{2}\leq m\|\wb_{j,1}^{(t)}\|_{2}^{2}+mC_1\|\wb_{j,1}^{(t)}\|_{2}+mC_2,\,\|\Wb_{j}^{(t)}\|_{2}^{2}\geq m\|\wb_{j,1}^{(t)}\|_{2}^{2}-mC_3\|\wb_{j,1}^{(t)}\|_{2}-mC_4.
\end{align*}
Considering that $\|\wb_{j,r}^{(t)}\|_{2}=\Theta(\log t)$, the stable rank of $\Wb_{j}^{(t)}$ naturally converges to a value of 1. 
For ReLU networks, we can partially characterize the activation pattern as illustrated in the following lemma.
\CC{\begin{lemma}(ReLU activation pattern)\label{lemma:ReLU activation pattern}
For two-layer ReLU networks defined in \eqref{def: nn}, for any $i\in[n]$, we have $S_{i}^{(t)}\subseteq S_{i}^{(t+1)}$ for any $t\geq0$, where $S_{i}^{(t)}:=\{r\in[m]:\la\wb_{y_i,r}^{(t)},\xb_i\ra\geq0\}$. 
\end{lemma}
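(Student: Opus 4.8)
The plan is to establish the one-step implication: for every $t\ge0$, $i\in[n]$ and $r\in S_i^{(t)}$, one has $\langle\wb_{y_i,r}^{(t+1)},\xb_i\rangle\ge0$. Three elementary facts are used throughout. (i) The logistic-loss derivative obeys $\ell_i'^{(t)}\in(-1,0)$, so $-\ell_i'^{(t)}>0$. (ii) For ReLU, $\sigma'(\cdot)\in\{0,1\}$ with $\sigma'(0)=1$, hence $\sigma'(\langle\wb_{y_i,r}^{(t)},\xb_i\rangle)=1$ whenever $r\in S_i^{(t)}$. (iii) In the recursions \eqref{iterative equation3}--\eqref{iterative equation4} the indicator $\ind(y_i=-j)$ vanishes for $j=y_i$, so $\omega_{y_i,r,i}^{(t)}\equiv0$, while $\zeta_{y_i,r,i}^{(t)}$ is non-decreasing by (i); therefore $\rho_{y_i,r,i}^{(t)}=\zeta_{y_i,r,i}^{(t)}\ge0$ and is non-decreasing in $t$.

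The next step is to isolate the ``diagonal'' growth term. Using the decomposition \eqref{eq:w_decomposition} with $j=y_i$, write $\langle\wb_{y_i,r}^{(t)},\xb_i\rangle=\rho_{y_i,r,i}^{(t)}+\varepsilon_{r,i}^{(t)}$, where $\varepsilon_{r,i}^{(t)}:=\langle\wb_{y_i,r}^{(0)},\xb_i\rangle+\sum_{i'\ne i}\rho_{y_i,r,i'}^{(t)}\|\xb_{i'}\|_2^{-2}\langle\xb_{i'},\xb_i\rangle$ collects the initialization and off-diagonal contributions. For $r\in S_i^{(t)}$, fact (ii) and \eqref{iterative equation3} give the exact update $\rho_{y_i,r,i}^{(t+1)}=\rho_{y_i,r,i}^{(t)}+A_t$ with $A_t:=-\tfrac{\eta}{nm}\ell_i'^{(t)}\|\xb_i\|_2^2>0$. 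On the other hand, from \eqref{iterative equation3}--\eqref{iterative equation4} and $\sigma'\le1$ we get $|\rho_{y_i,r,i'}^{(t+1)}-\rho_{y_i,r,i'}^{(t)}|\le\tfrac{\eta}{nm}|\ell_{i'}'^{(t)}|R_{\max}^2$, so that $|\varepsilon_{r,i}^{(t+1)}-\varepsilon_{r,i}^{(t)}|\le\tfrac{\eta R^2 p}{nm}\sum_{i'\ne i}|\ell_{i'}'^{(t)}|$ after using $\|\xb_{i'}\|_2^{-2}\le R_{\min}^{-2}$, $|\langle\xb_{i'},\xb_i\rangle|\le p$ and $R_{\max}^2/R_{\min}^2=R^2$. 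Combining these with $\rho_{y_i,r,i}^{(t)}+\varepsilon_{r,i}^{(t)}=\langle\wb_{y_i,r}^{(t)},\xb_i\rangle\ge0$ gives
\[
\langle\wb_{y_i,r}^{(t+1)},\xb_i\rangle=\big(\rho_{y_i,r,i}^{(t)}+\varepsilon_{r,i}^{(t)}\big)+A_t+\big(\varepsilon_{r,i}^{(t+1)}-\varepsilon_{r,i}^{(t)}\big)\ \ge\ A_t-\frac{\eta R^2 p}{nm}\sum_{i'\ne i}|\ell_{i'}'^{(t)}|,
\]
so, since $A_t\ge\tfrac{\eta}{nm}|\ell_i'^{(t)}|R_{\min}^2$, it suffices to prove $|\ell_i'^{(t)}|R_{\min}^2\ge R^2 p\sum_{i'\ne i}|\ell_{i'}'^{(t)}|$.

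This last bound is the crux, and where I expect the real effort. Under the near-orthogonality hypothesis $R_{\min}^2\ge CR^2np$ it reduces to controlling the per-sample loss-derivative ratio $\max_{i'}|\ell_{i'}'^{(t)}|/|\ell_i'^{(t)}|$ by an absolute constant uniformly in $t$; since $|\ell'(z)|=(1+e^z)^{-1}$, this is the same as showing that all per-sample margins $y_if(\Wb^{(t)},\xb_i)$ differ from one another by only $O(1)$. I would obtain this from the refined coefficient analysis of Section~\ref{sec: coef analysis}: the margin approximation \eqref{F rho approximation2}, the ReLU automatic-balance Lemma~\ref{lm: auxiliary3}, and the scalar comparison Lemma~\ref{lm: auxiliary1} together yield that each margin equals $\log t+O(1)$ with a $t$-uniform error, hence $|\ell_i'^{(t)}|=\Theta\big(|\ell_{i'}'^{(t)}|\big)$ for all $i,i'$. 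Because these margin and coefficient estimates are themselves proved by induction on $t$ and in turn depend on the activation-pattern inclusion (for instance \eqref{F rho approximation2} relies on $S_i^{(0)}\subseteq S_i^{(t)}$), the clean route is a single simultaneous induction on $t$ carrying the activation-pattern inclusion, the coefficient and margin bounds, and the loss-derivative ratio bound together; organizing this mutually dependent package of estimates — rather than the one-step computation above, which becomes routine once the ratio bound is in hand — is the main obstacle.
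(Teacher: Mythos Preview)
Your proposal is correct and follows essentially the same route as the paper: a simultaneous induction on $t$ that carries the activation-pattern inclusion together with a uniform bound on the loss-derivative ratios $|\ell_{i'}'^{(t)}|/|\ell_i'^{(t)}|$, from which the one-step computation you wrote becomes immediate. The paper's one-step computation works directly with the gradient-descent update on $\langle\wb_{y_i,r}^{(t)},\xb_i\rangle$ rather than through the decomposition coefficients, but the resulting inequality and the needed condition $|\ell_i'^{(t)}|R_{\min}^2\gtrsim p\sum_{i'\ne i}|\ell_{i'}'^{(t)}|$ are identical to yours (your extra $R^2$ factor comes only from bounding $\|\xb_{i'}\|_2^2$ and $\|\xb_{i'}\|_2^{-2}$ separately instead of letting them cancel).

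One difference worth noting: to obtain the loss-derivative ratio bound inside the induction, the paper does \emph{not} go through the $\log t$ margin asymptotics or the coefficient analysis you cite. Instead it bounds the margin \emph{difference} $y_kf(\Wb^{(t)},\xb_k)-y_if(\Wb^{(t)},\xb_i)$ directly via the two-case argument sketched around Lemma~\ref{lm: bounded difference3}: either the ratio $|\ell_i'^{(t)}|/|\ell_k'^{(t)}|$ is large, in which case the margin difference cannot increase, or it is bounded, in which case the margin difference itself is already bounded (since the ratio is essentially $\exp$ of the difference) and a single step can only add an $O(\eta R_{\max}^2/nm)=O(1)$ amount. This gives the ratio bound without ever establishing the absolute margin level, so the induction package carries only \{margin lower bound $\ge -c$, margin difference $\le C_1$, ratio $\le C_2$, $S_i^{(t)}\subseteq S_i^{(t+1)}$\} and avoids the coefficient machinery. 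Your route through Lemmas~\ref{lm: auxiliary1} and~\ref{lm: auxiliary3} would also close, but it threads more estimates through the induction than necessary.
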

Lemma \ref{lemma:ReLU activation pattern} suggests that once the neuron of class $y_i$ is activated by $\xb_i$, it will remain activated throughout the training process. Leveraging such an activation pattern, we can establish a lower bound for $\|\Wb_{j}^{(t)}\|_{2}$ as $\Omega(\log(t))$. Together with the trivial upper bound for $\|\Wb_{j}^{(t)}\|_{F}$ of order $O(\log(t))$, it provides a constant upper bound for the stable rank of ReLU network weight. }

\CC{\subsection{Analysis of Margin and Training Loss}
Notably, \citet{lyu2019gradient} established in their Theorem 4.4 that any limit point of smooth homogeneous neural networks $f(\Wb,\xb)$ trained by gradient descent is along the direction of a KKT point for the max-margin problem \eqref{max-margin}. 
Additionally, \citet{lyu2019gradient} provided precise bounds on the training loss and weight norm for smooth homogeneous neural networks in their Theorem 4.3 as follows:
\begin{equation*}
    L_{S}(\Wb^{(t)})=\Theta\Big(\frac{1}{t(\log t)^{2-2/L}}\Big),\qquad\|\Wb^{(t)}\|_{F}=\Theta\big((\log t)^{1/L}\big),
\end{equation*}
where $L$ is the order of the homogeneous network satisfying the property $f(c\Wb,\xb)=c^{L}f(\Wb,\xb)$ for all $c>0$, $\Wb$, and $\xb$. It is worth noting that the two-layer (leaky) ReLU neural network analyzed in this paper is 1-homogeneous but not smooth. In Section~\ref{sec: coef analysis}, we have already demonstrated that $\|\Wb^{(t)}\|_{F}=\Theta(\log t)$, and in this subsection, we will discuss the proof technique employed to show a convergence rate of $\Theta(t^{-1})$ for the loss and establish the same normalized margin for all the training data points asymptotically. These results align with those presented by \citet{lyu2019gradient} regarding smooth homogeneous networks.}

\CC{By the nearly orthogonal property, we can bound the increment of margin as follows:
\begin{equation}\label{ineq: margin incre}
\begin{aligned}
    &\frac{\eta}{5nm}\cdot|\ell_i'^{(t)}|\cdot\|\xb_i\|_2^2\leq y_i f(\Wb^{(t+1)},\xb_i)-y_i f(\Wb^{(t)},\xb_i)\leq\frac{3\eta}{nm}\cdot|\ell_i'^{(t)}|\|\xb_i\|_2^2.
\end{aligned}
\end{equation}
Given \eqref{logit appro} and \eqref{ineq: margin incre}, we can apply Lemma~\ref{lm: auxiliary1} and obtain
\begin{equation}\label{margin increase}
    \Big|y_i f(\Wb^{(t)},\xb_i)-\log t-\log(\eta\|\xb_i\|_2^2/nm)\Big|\leq C_3,
\end{equation}
where $C_3$ is a constant. Utilizing \eqref{margin increase} and the inequality $z-z^2/2\leq\log(1+z)\leq z$ for $z\geq0$, we can derive:
\begin{align*}
    L_{S}(\Wb^{(t)})&\leq\frac{1}{n}\sum_{i=1}^{n}\exp\big(-y_i f(\Wb^{(t)},\xb_i)\big)\\
    &\leq\frac{1}{n}\sum_{i=1}^{n}\exp\big(-\log t-\log(\eta\|\xb_i\|_2^2/nm)+C_3\big)=O(t^{-1}),\\
    L_{S}(\Wb^{(t)})&\geq\frac{1}{n}\sum_{i=1}^{n}\exp\big(-y_i f(\Wb^{(t)},\xb_i)\big)-\exp\big(-2y_i f(\Wb^{(t)},\xb_i)\big)=\Omega(t^{-1}).
\end{align*}
To demonstrate that all the training data points attain the same normalized margin as $t$ goes to infinity, we first observe that \eqref{ineq: margin incre} provides the following bounds for the increment of margin difference:
\begin{equation}\label{ineq: margin difference incre}
\begin{aligned}
    &y_k f(\Wb^{(t+1)},\xb_k)-y_i f(\Wb^{(t+1)},\xb_i)\\
    &\leq y_k f(\Wb^{(t)},\xb_k)-y_i f(\Wb^{(t)},\xb_i)+\frac{3\eta}{nm}\cdot|\ell_k'^{(t)}|\cdot\|\xb_k\|_2^2-\frac{\eta}{5nm}\cdot|\ell_i'^{(t)}|\cdot\|\xb_i\|_2^2.
\end{aligned}
\end{equation}
Now, we consider two cases:}
\CC{\begin{itemize}[leftmargin=*]
    \item If the ratio $|\ell_i'^{(t)}|/|\ell_k'^{(t)}|$ is relatively large, then $y_k f(\Wb^{(t)},\xb_k)-y_i f(\Wb^{(t)},\xb_i)$ will not increase.
    \item If the ratio $|\ell_i'^{(t)}|/|\ell_k'^{(t)}|$ is relatively small, then $y_k f(\Wb^{(t)},\xb_k)-y_i f(\Wb^{(t)},\xb_i)$ will also be relatively small. In fact, it can be bounded by a constant due to the fact that $|\ell_i'^{(t)}|/|\ell_k'^{(t)}|$ can be approximated by $\exp(y_k f(\Wb^{(t)},\xb_k)-y_i f(\Wb^{(t)},\xb_i))$. By \eqref{ineq: margin difference incre}, we can show that $y_k f(\Wb^{(t+1)},\xb_k)-y_i f(\Wb^{(t+1)},\xb_i)$ can also be bounded by a constant, provided that the learning rate $\eta$ is sufficiently small.
\end{itemize} }
\CC{By combining both cases, we can conclude that both $|\ell_i'^{(t)}|/|\ell_k'^{(t)}|$ and $y_k f(\Wb^{(t)},\xb_k)-y_i f(\Wb^{(t)},\xb_i)$ can be bounded by constants. This result is formally stated in the following lemma.
\begin{lemma}\label{lm: bounded difference3}
For two-layer neural networks defined in \eqref{def: nn} with (leaky) ReLU activation, the following bounds
hold for any $t\geq0$:
\begin{equation}
    y_i f(\Wb^{(t)},\xb_i) - y_k f(\Wb^{(t)},\xb_k)\leq C_1, \quad\ell_i'^{(t)}/\ell_k'^{(t)}\leq C_2,
\end{equation}
for any $i,k\in[n]$, where $C_1, C_2$ are positive constants.
\end{lemma}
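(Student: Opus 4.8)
The plan is to prove both inequalities simultaneously by induction on $t$, taking the margin-increment estimate \eqref{ineq: margin incre} and the logit approximation \eqref{logit appro} as the only ingredients. Two preliminary observations reduce everything to controlling the margin gap $\Delta_{i,k}^{(t)} := y_k f(\Wb^{(t)},\xb_k)-y_i f(\Wb^{(t)},\xb_i)$. First, the lower bound in \eqref{ineq: margin incre} makes $t\mapsto y_i f(\Wb^{(t)},\xb_i)$ non-decreasing, while the initialization-scale assumption (via a standard Gaussian concentration bound and $|\sigma(z)|\le |z|$, valid for both ReLU and leaky ReLU) gives $|y_i f(\Wb^{(0)},\xb_i)|\le c_0$ for an absolute constant $c_0$ on the $1-\delta$ event; hence $y_i f(\Wb^{(t)},\xb_i)\ge -c_0$ for all $t$ and all $i$. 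Second, since $\ell'(z)=-1/(1+e^z)$, this uniform lower bound on the margins upgrades \eqref{logit appro} to the two-sided estimate $|\ell_i'^{(t)}|=\Theta\big(\exp(-y_i f(\Wb^{(t)},\xb_i))\big)$ with absolute constants, so that $\ell_i'^{(t)}/\ell_k'^{(t)}=\Theta\big(\exp(\Delta_{i,k}^{(t)})\big)$. Thus it suffices to prove $\Delta_{i,k}^{(t)}\le C_1$ for all $i,k$ and all $t$; the ratio bound then follows immediately with $C_2$ a constant multiple of $e^{C_1}$.

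For the induction, the base case is $\Delta_{i,k}^{(0)}\le 2c_0$. For the inductive step, fix $i,k$ and invoke \eqref{ineq: margin difference incre}:
\[
\Delta_{i,k}^{(t+1)}\ \le\ \Delta_{i,k}^{(t)}+\frac{3\eta}{nm}\,|\ell_k'^{(t)}|\,\|\xb_k\|_2^2-\frac{\eta}{5nm}\,|\ell_i'^{(t)}|\,\|\xb_i\|_2^2 .
\]
If $|\ell_i'^{(t)}|\,\|\xb_i\|_2^2\ge 15\,|\ell_k'^{(t)}|\,\|\xb_k\|_2^2$, the last two terms have nonpositive sum, so $\Delta_{i,k}^{(t+1)}\le\Delta_{i,k}^{(t)}\le C_1$ by the inductive hypothesis. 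Otherwise $|\ell_i'^{(t)}|/|\ell_k'^{(t)}|< 15\,\|\xb_k\|_2^2/\|\xb_i\|_2^2\le 15R^2$, and combining this with $\ell_i'^{(t)}/\ell_k'^{(t)}=\Theta(\exp(\Delta_{i,k}^{(t)}))$ forces $\Delta_{i,k}^{(t)}\le c_1$ for some absolute constant $c_1$ (here we use that $R$ is an absolute constant); then, dropping the negative term and using $|\ell_k'^{(t)}|\le 1$ together with the learning-rate bound $\eta\le (CR_{\max}^2/nm)^{-1}$, we obtain $\Delta_{i,k}^{(t+1)}\le c_1+3\eta R_{\max}^2/(nm)\le c_1+1$. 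Taking $C_1:=\max\{2c_0,\ c_1+1\}$ closes the induction and yields both claimed bounds.

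The step I expect to be the crux is the second case of the inductive step: one must guarantee that the constant bound on $\Delta_{i,k}^{(t+1)}$ does not drift upward across iterations, and this is exactly where the smallness of $\eta$ relative to $nm/R_{\max}^2$ enters, ensuring a single gradient step perturbs every margin by at most an absolute constant. A secondary technical point is to verify that the near-orthogonality error factors hidden inside \eqref{ineq: margin incre} and \eqref{logit appro} are $1+o(1)$ multiplicative corrections rather than growing quantities, which is precisely what the data assumptions $R_{\min}^2\ge CR^2\gamma^{-4}np$ (leaky ReLU) and $R_{\min}^2\ge CR^2 np$ (ReLU) provide; everything else is bookkeeping that is uniform in $i,k$ and $t$.
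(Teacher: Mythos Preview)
Your two-case induction is exactly the paper's approach, and the choice $C_1=\max\{2c_0,\,c_1+1\}$ matches. But there is a genuine gap in how you invoke \eqref{ineq: margin incre}. You treat it as a given input (``the only ingredients''), yet the lower bound in \eqref{ineq: margin incre} is not a standalone consequence of near-orthogonality: the cross terms in the margin increment are of the form $\sum_{i'\neq i}|\ell_{i'}'^{(t)}|\,|\la\xb_i,\xb_{i'}\ra|$, and controlling them against the diagonal term $|\ell_i'^{(t)}|\,\|\xb_i\|_2^2$ requires precisely the bound $|\ell_{i'}'^{(t)}|/|\ell_i'^{(t)}|\le C_2$ that you are in the process of proving. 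The paper resolves this circularity by deriving the margin-increment bounds \emph{inside} the inductive step from the hypotheses at time $t$, rather than assuming them a priori. Your ``once and for all'' argument that $y_i f(\Wb^{(t)},\xb_i)\ge -c_0$ via monotonicity has the same circularity and must likewise be folded into the joint induction (the paper carries it as a separate bullet).

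More seriously, for ReLU you are missing an ingredient the paper carries as an additional induction hypothesis: the activation-pattern monotonicity $S_i^{(t)}\subseteq S_i^{(t+1)}$ (cf.\ Lemma~\ref{lemma:ReLU activation pattern}). This is what guarantees that at least $|S_i^{(0)}|\ge 0.4m$ neurons of class $y_i$ contribute slope $1$ to the lower bound in \eqref{ineq: margin incre}; without it, for ReLU $\sigma'$ could vanish on every neuron and the increment lower bound collapses to zero, destroying both the monotonicity of the margins and the inductive step. The paper proves this activation hypothesis jointly with the other three, using the logit-ratio bound at time $t$ together with $R_{\min}^2\ge Cnp$ to show $\la\wb_{y_i,r}^{(t+1)},\xb_i\ra\ge\la\wb_{y_i,r}^{(t)},\xb_i\ra$ for every $r\in S_i^{(t)}$. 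For leaky ReLU the omission is survivable (one can replace $1/5$ by a $\gamma$-dependent constant using $\sigma'\ge\gamma$), but for ReLU it is essential and your proposal does not supply it.
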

By Lemma~\ref{lm: bounded difference3}, which shows that the difference between the margins of any two data points can be bounded by a constant, and taking into account that $\|\Wb^{(t)}\|_{F}=\Theta(\log t)$, we can deduce the following result:
\begin{equation*}
    \lim_{t\rightarrow\infty}\big|y_i f(\Wb^{(t)}/\|\Wb^{(t)}\|_{F},\xb_i)-y_k f(\Wb^{(t)}/\|\Wb^{(t)}\|_{F},\xb_k)\big|=0, \forall i,k\in[n].
\end{equation*}
This demonstrates that gradient descent will asymptotically find a neural network in which all the training data points achieve the same normalized margin.}

\section{Experiments}\label{sec:exp}
In this section, we present simulations of both synthetic and real data to back up our theoretical analysis in the previous section. 
\begin{figure}[t]
    \centering
    \includegraphics[width=1.0\textwidth]{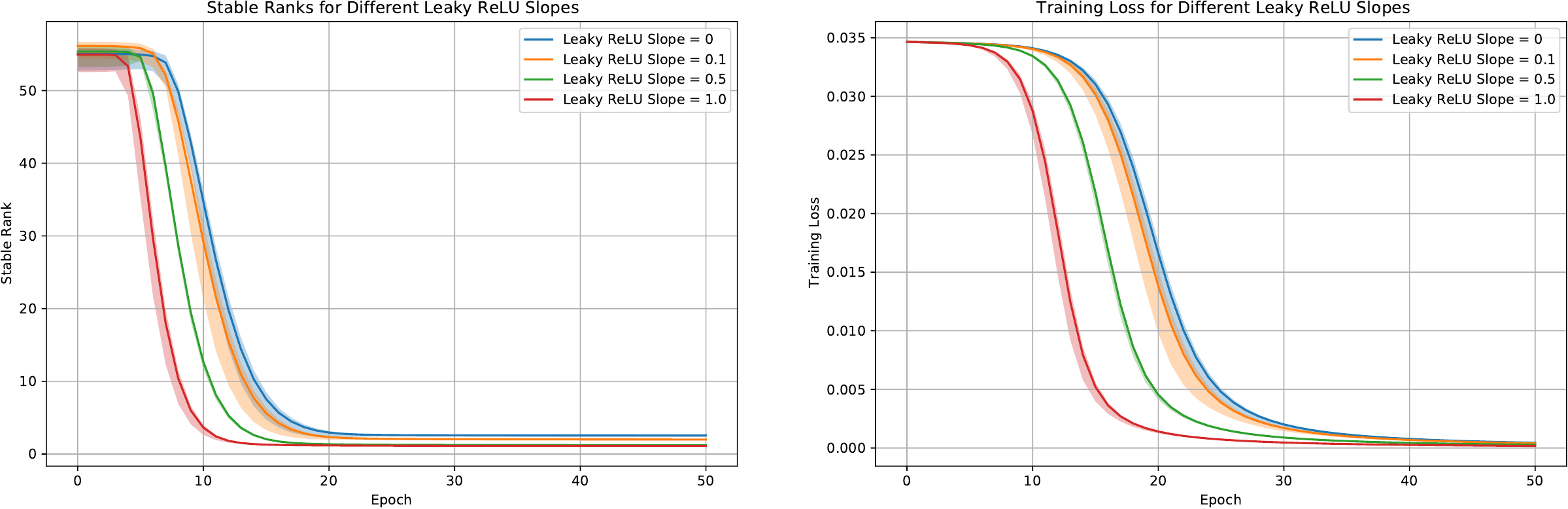}
    \caption{Stable ranks and training loss for different leaky ReLU slopes $\gamma$ across multiple runs. A slope of $1$ corresponds to linear activation, while a slope of $0$ corresponds to ReLU activation. Each line represents the mean stable rank or training loss for a given leaky ReLU slope, while the shaded regions indicate the variability of the values ($\pm 3$  times the standard deviation) across the $5$ runs.}
    \label{fig:slop}
\end{figure}
\paragraph{Synthetic-data experiments.} 
Here we generate a synthetic mixture of Gaussian data as follows:

Let $\bmu\in \RR^d$ be a fixed vector representing the signal contained in each data point. Each data point $(\xb,y)$ with predictor $\xb \in \RR^{d}$ and label $y\in\{-1,1\}$ is generated from a distribution $\cD$, which we specify as follows: 
\begin{enumerate}[leftmargin = *]
    \item The label $y$ is generated as a Rademacher random variable, i.e. $\PP[y=1] = \PP[y=-1]= 1/2$.
    \item A noise vector $\bxi$ is generated from the Gaussian distribution $\cN(\mathbf{0}, \sigma_p^2 \Ib_d)$.  And $\xb$ is assigned as $y\cdot\bmu + \bxi$ where $\bmu$ is a fixed feature  vector.
\end{enumerate}
Specifically, we set training data size $n = 10, d=784$ and train the NN with gradient descent using learning rate $0.1$ for $50$ epochs. We set $\bmu$ to be a feature randomly drawn from $\cN(0, 10^{-4} \Ib_{d})$. We then generate the noise vector $\bxi$ from the Gaussian distribution $\cN (\mathbf{0}, \sigma_p^2 \Ib)$ with fixed standard deviation $\sigma_{p} = 1$.  We train the FNN model defined in Section~\ref{sec:prob} with ReLU (or leaky-RelU) activation function and width $m=100$. As we can infer from Figure~\ref{fig:slop}, the stable rank will decrease faster for larger leaky ReLU slopes and have a smaller value when epoch $t \rightarrow \infty$. 

\paragraph{Real-data experiments on MNIST dataset.} 
Here we train a two-layer feed-forward neural network defined in Section~\ref{sec:prob} with ReLU (or leaky-ReLU) functions. The number of widths is set as $m=1000$. We use the Gaussian initialization and consider different weight variance $\sigma_0 \in \{0.00001, 0.00005, 0.0001, 0.0005, 0.001\}$. We train the NN with stochastic gradient descent with batch size $64$ and learning rate $0.1$ for $10$ epochs. As we can infer from Figures~\ref{fig:2} and \ref{fig:3}, the stable rank of ReLU or leaky ReLU networks will largely depend on the initialization and the training time. When initialization is sufficiently small, the stable rank will quickly decrease to a small value compared to its initialization values.

\begin{figure}[t]
    \centering
    \includegraphics[width=1.0\textwidth]{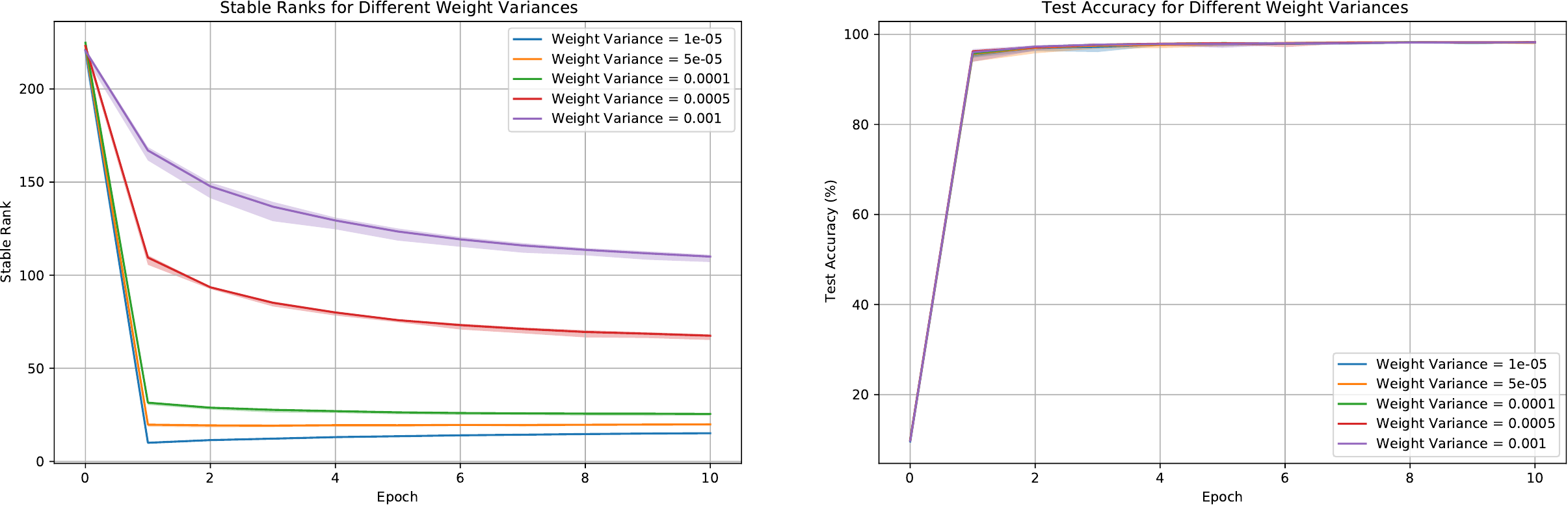}
    \caption{Stable ranks and test errors for different weight variances across multiple runs (ReLU Activation Function). Each line represents the mean stable rank or test accuracy for a given weight variance, while the shaded regions indicate the variability of the values ($\pm 3$ times the standard deviation) across the $5$ runs.}
    \label{fig:2}
\end{figure}

\begin{figure}[t]
    \centering
    \includegraphics[width=1.0\textwidth]{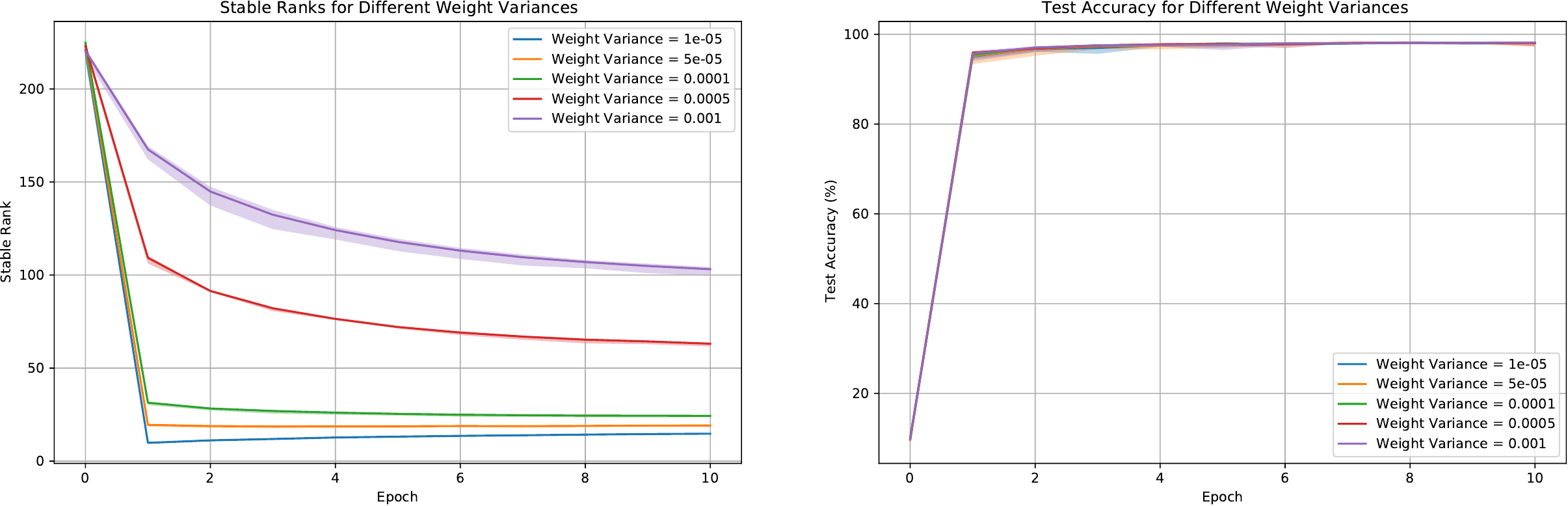}
    \caption{Stable ranks and test errors for different weight variances across multiple runs (leaky-ReLU Activation Function with slope $0.1$). Each line represents the mean stable rank or test accuracy for a given weight variance, while the shaded regions indicate the variability of the values ($\pm 3$ times the standard deviation) across the $5$ runs.}
    \label{fig:3}
\end{figure}

\section{Conclusion and Future Work}
This paper employs a data-correlated decomposition technique to examine the implicit bias of two-layer ReLU and Leaky ReLU networks trained using gradient descent. By analyzing the training dynamics, we provide precise characterizations of the weight matrix stable rank limits for both ReLU and Leaky ReLU cases, demonstrating that both scenarios will yield a network with a low stable rank. Additionally, we present an analysis for the convergence rate of the loss function. An important future work is to investigate the directional convergence of the weight matrix in neural networks trained via gradient descent, which is essential to prove the convergence to a KKT point of the max-margin problem. Furthermore, it is important to extend our analysis to fully understand the neuron activation patterns in ReLU networks. Specifically, we will explore whether certain neurons will switch their activation patterns by an infinite number of times throughout the training or if the activation patterns stabilize after a certain number of gradient descent iterations.

\section*{Acknowledgements}

We thank the anonymous reviewers and area chair for their helpful comments. YK, ZC, and QG are supported in part by the National Science Foundation CAREER Award 1906169 and IIS-2008981, and the Sloan Research Fellowship. The views and conclusions contained in this paper are those of the authors and should not be interpreted as representing any funding agencies. 

\bibliography{deeplearningreference}
\bibliographystyle{ims}

\newpage
\appendix


\section{Additional Experiments}
In this section, we conduct additional experiments on the nearly orthogonal and MINIST datasets.

\subsection{Additional Experiment on Nearly Orthogonal Dataset}
In this subsection, we conduct additional experiments on a nearly orthogonal dataset for long epochs to support our main Theorems~\ref{main theorem1} and \ref{main theorem2}.

\begin{figure}[H]
    \centering
    \includegraphics[width=0.9\textwidth]{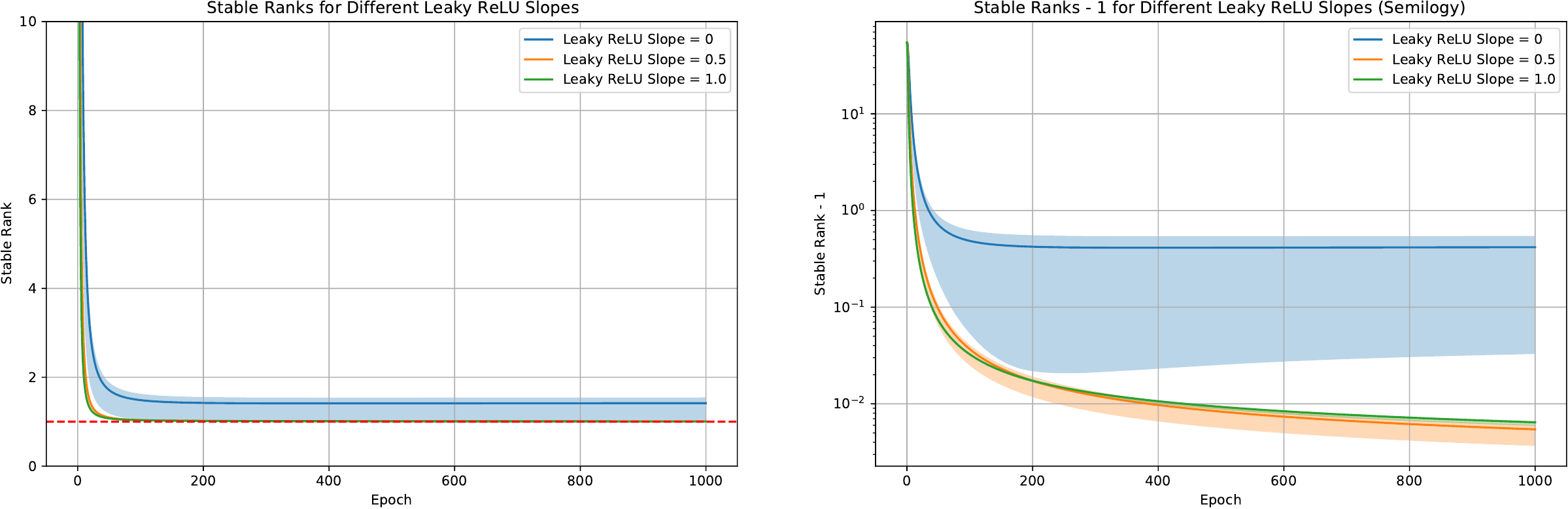}
    \caption{(Left) Stable ranks for different leaky ReLU slopes $\gamma$ across multiple runs. A slope of $1$ corresponds to linear activation, while a slope of $0$ corresponds to ReLU activation. Each line represents the mean stable rank for a given leaky ReLU slope, while the shaded regions indicate the variability of the values ($\pm 3$  times the standard deviation) across the $20$ runs. The red dashed line indicates a stable rank of $1$. (Right) The difference between stable rank and $1$ from the left figure is visualized on a semilog y-axis.}
    \label{fig:compare}
\end{figure}

\begin{figure}[H]
    \centering
    \includegraphics[width=0.9\textwidth]{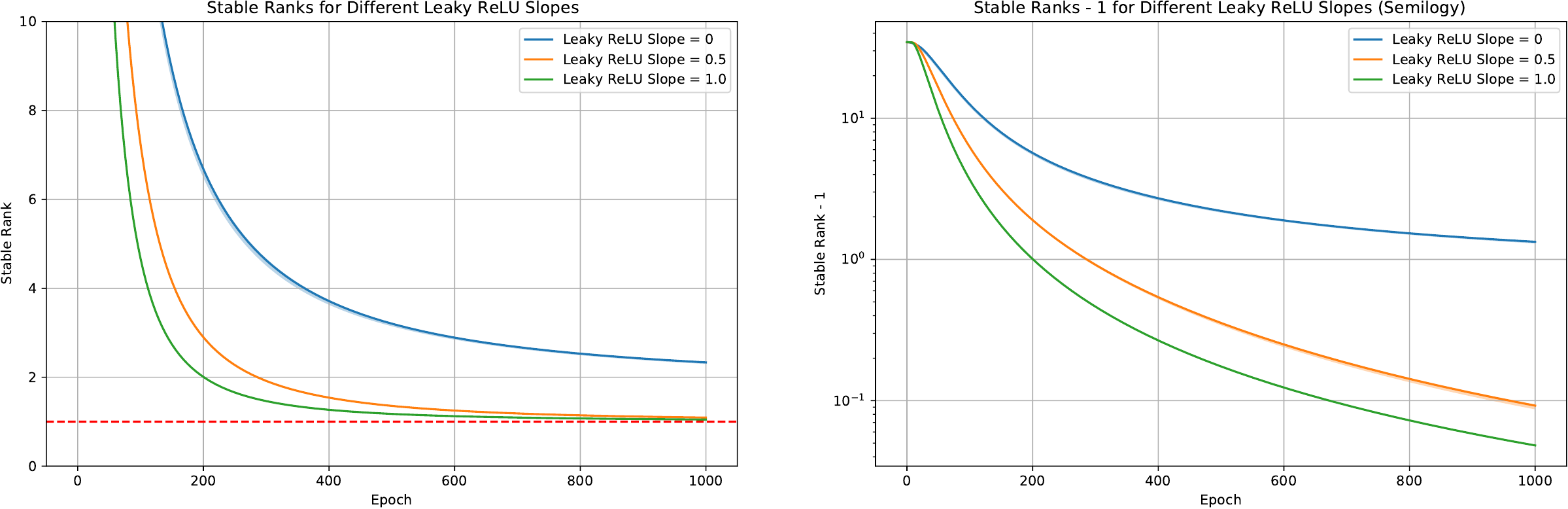}
    \caption{\textbf{fully orthogonal data}: (Left) Stable ranks for different leaky ReLU slopes $\gamma$ across multiple runs. A slope of $1$ corresponds to linear activation, while a slope of $0$ corresponds to ReLU activation. Each line represents the mean stable rank for a given leaky ReLU slope, while the shaded regions indicate the variability of the values ($\pm 3$  times the standard deviation) across the $20$ runs. The red dashed line indicates a stable rank of $1$. (Right) The difference between stable rank and $1$ from the left figure is visualized on a semilog y-axis.}
    \label{fig:compare2}
\end{figure}

\noindent\textbf{Figure~\ref{fig:compare}:} Under the same setting of the synthetic data introduced in Section $6$, we train the NN with full batch gradient descent with a learning rate $0.1$ for $1000$ epochs. We set $\bmu$ to be a feature randomly drawn from $\cN(0, 10^{-4} \Ib_{d})$. We then generate the noise vector $\bxi$ from the Gaussian distribution $\cN (\mathbf{0}, \sigma_p^2 \Ib_d)$ with fixed standard deviation $\sigma_{p} = 1$.  We train the FNN model defined in Section $3$ with ReLU (or leaky-ReLU) activation function and width $m=100$. As we can see from Figure~\ref{fig:compare}, the stable rank for the leaky ReLU network with large slopes $\gamma$ will converge to $1$  when epoch $t \rightarrow \infty$. In comparison, the stable rank for the ReLU network will not converge to $1$. 

\noindent\textbf{Figure~\ref{fig:compare2}:} To further illustrate the behavior of the ReLU network, we generate the synthetic training data with fully orthogonal input. Each data point $(\xb,y)$ with input $\xb \in \RR^{d}$ and label $y\in\{-1,1\}$ is generated from a distribution $\cD$, which we specify as follows: 
\begin{enumerate}[leftmargin=*]
    \item The label $y$ is generated as a Rademacher random variable, i.e., $\PP[y=1] = \PP[y=-1]= 1/2$.
    \item Input $\xb$ is randomly generated from the basis $\{\eb_1, \eb_2, \ldots, \eb_d\}$.
\end{enumerate} 
Specifically, we set training data size $n = 20, d=40$ and train the NN with full batch gradient descent with a learning rate $0.1$ for $1000$ epochs. We train the FNN model defined in Section $3$ with ReLU (or leaky-ReLU) activation function and width $m=10000$. As we can observe from Figure~\ref{fig:compare2}, the stable rank for the leaky ReLU network with large slopes $\gamma$ will converge to $1$  when epoch $t \rightarrow \infty$. In comparison, the stable rank for the large-width ReLU network will not converge to $1$ but to $2$. 

\subsection{Additional Experiment on MINIST}
Our focus is the training of a two-layer feed-forward neural network, as discussed in Section~\ref{sec:prob}, utilizing either ReLU or leaky-ReLU activation functions. We examine different widths, specifically choosing from $\{10, 50, 100, 500, 1000\}$.

The network initialization process follows a Gaussian distribution, with a variance of $\sigma_0 = 0.00001$. Training is executed using stochastic gradient descent, a batch size of $64$, and a learning rate of $0.1$, for a total of 10 epochs. As discerned from Figures~\ref{stable_ranks_and_test_errors_hidden} and \ref{fig:stable_ranks_and_test_errors_leaky}, the stable rank of networks utilizing either ReLU or leaky ReLU is weakly influenced by the width. For an exceedingly small width such as 10, the weight matrix is low rank with a correspondingly small stable rank. However, this also results in low test accuracy as the network cannot effectively learn all necessary features. As the width increases, the test accuracy and final stable rank will increase. However, for sufficiently large widths, an increase in width no longer corresponds to stable rank or test accuracy increases.

\begin{figure}[ht]
    \centering
    \includegraphics[width=1.0\textwidth]{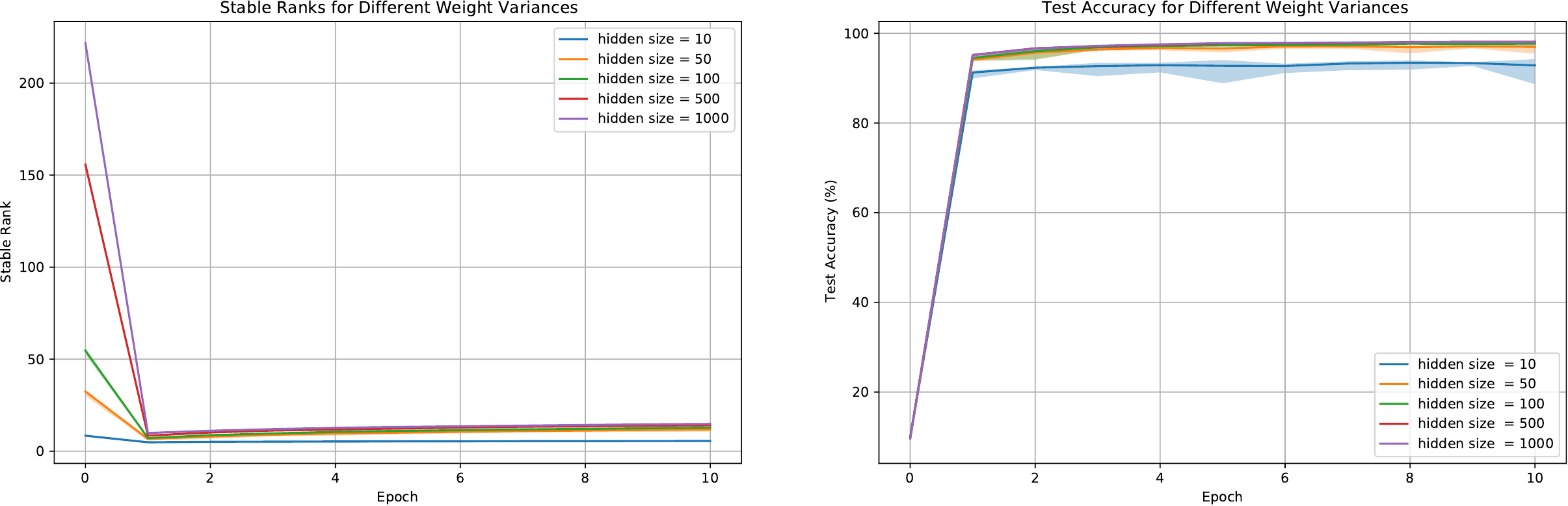}
    \caption{Stable ranks and test errors for different width across multiple runs (ReLU Activation Function). Each line represents the mean stable rank or test error for a given weight variance, while the shaded regions indicate the variability of the values (±3 times the standard deviation) across the $5$ runs.}
    \label{stable_ranks_and_test_errors_hidden}
\end{figure}

\begin{figure}[ht]
    \centering
    \includegraphics[width=1.0\textwidth]{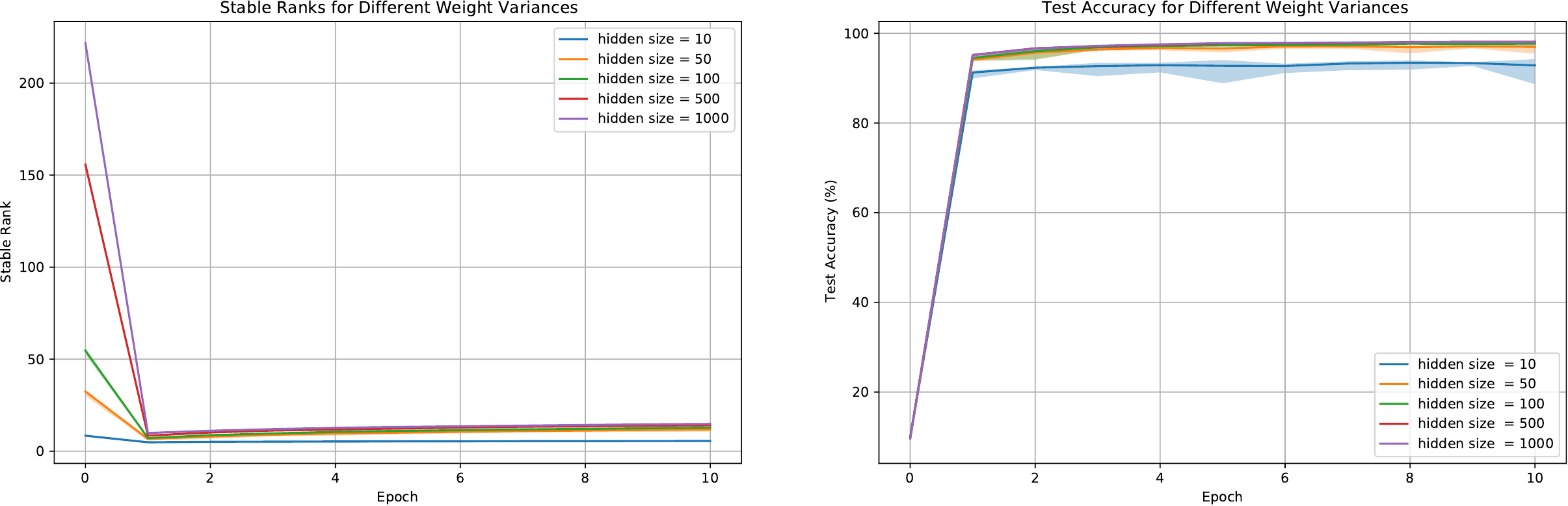}
    \caption{Stable ranks and test errors for different width across multiple runs (leaky-ReLU Activation Function). Each line represents the mean stable rank or test error for a given weight variance, while the shaded regions indicate the variability of the values (±3 times the standard deviation) across the $5$ runs.}
    \label{fig:stable_ranks_and_test_errors_leaky}
\end{figure}

\section{Preliminary Lemmas}\label{sec: initial}
In this section, we present some pivotal lemmas that illustrate some important properties of the data and neural network parameters at their random initialization and provide the update rule of coefficients from data-correlated decomposition. 

Now turning to network initialization, the following lemma studies the inner product between a randomly initialized neural network neuron $\wb_{j,r}^{(0)}$ ($j\in \{\pm1\}$ and $r\in [m]$) and the training data. The calculations characterize how the neural network at initialization randomly captures the information in training data.

\begin{lemma}\label{lm: initialization inner products} Suppose that $d = \Omega(\log(mn/\delta))$, $ m = \Omega(\log(1 / \delta))$. Then with probability at least $1 - \delta$, 
\begin{align*}
    & \sigma_0^2 d / 2 \leq \| \wb_{j, r}^{(0)} \|_2^2 \leq 3 \sigma_0^2 d / 2, \\
    &| \la \wb_{j,r}^{(0)}, \xb_i \ra | \leq \sqrt{2\log(8mn/\delta)}\cdot\sigma_0 R_{\max}
\end{align*}
for all $r\in [m]$,  $j\in \{\pm 1\}$ and $i\in [n]$.
\end{lemma}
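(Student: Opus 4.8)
The plan is to prove the two claims separately by Gaussian concentration and then combine them with a union bound; the hypotheses on $d$ and $m$ are more than sufficient for the needed tail estimates. For the norm bound, I would fix $j\in\{\pm1\}$ and $r\in[m]$ and observe that, since the entries of $\wb_{j,r}^{(0)}$ are i.i.d.\ $\cN(0,\sigma_0^2)$, the quantity $\|\wb_{j,r}^{(0)}\|_2^2/\sigma_0^2$ has a chi-squared distribution with $d$ degrees of freedom. Applying the standard Laurent--Massart tail bounds, namely $\PP(\chi^2_d \geq d + 2\sqrt{dx} + 2x)\leq e^{-x}$ and $\PP(\chi^2_d \leq d - 2\sqrt{dx})\leq e^{-x}$, with the choice $x = \log(8m/\delta)$ and invoking $d = \Omega(\log(mn/\delta))$ with a sufficiently large hidden constant (so that $2\sqrt{dx}+2x \leq d/2$), we get $\sigma_0^2 d/2 \leq \|\wb_{j,r}^{(0)}\|_2^2 \leq 3\sigma_0^2 d/2$ outside an event of probability at most $\delta/(4m)$. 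A union bound over the $2m$ neurons then establishes the first inequality with total failure probability at most $\delta/2$.

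For the inner-product bound, I would fix a triple $(j,r,i)$ and note that since $\xb_i$ is deterministic, $\la\wb_{j,r}^{(0)},\xb_i\ra$ is a mean-zero Gaussian with variance $\sigma_0^2\|\xb_i\|_2^2 \leq \sigma_0^2 R_{\max}^2$. The Gaussian tail bound $\PP(|g|\geq t)\leq 2\exp\big(-t^2/(2\,\mathrm{variance})\big)$ with $t = \sqrt{2\log(8mn/\delta)}\,\sigma_0 R_{\max}$ gives $\PP\big(|\la\wb_{j,r}^{(0)},\xb_i\ra| \geq t\big) \leq 2\exp(-\log(8mn/\delta)) = \delta/(4mn)$. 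Taking a union bound over all $2mn$ triples $(j,r,i)$ yields the second inequality with total failure probability at most $\delta/2$. Combining the two events by a final union bound shows both statements hold simultaneously with probability at least $1-\delta$.

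There is no genuine obstacle in this argument: it is entirely a matter of chi-squared and Gaussian concentration together with bookkeeping of the failure probabilities. The only points requiring mild care are calibrating the constant implicit in the assumption $d = \Omega(\log(mn/\delta))$ so that the $\chi^2_d$ fluctuations are controlled to within a factor of two of their mean, and splitting the $\delta$ budget evenly between the two union bounds (and, within the first, between the two chi-squared tails).
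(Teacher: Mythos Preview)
Your proposal is correct and follows essentially the same approach as the paper: the paper likewise handles the norm bound via sub-exponential/Bernstein concentration for $\|\wb_{j,r}^{(0)}\|_2^2$ around $\sigma_0^2 d$ with a union bound over the $2m$ neurons, and the inner-product bound via the Gaussian tail for each $\la\wb_{j,r}^{(0)},\xb_i\ra$ followed by a union bound over all $(j,r,i)$. The only cosmetic difference is that you invoke the Laurent--Massart chi-squared bounds explicitly, whereas the paper phrases the same step as Bernstein's inequality.
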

\begin{proof}[Proof of Lemma~\ref{lm: initialization inner products}]
First of all, the initial weights $\wb_{j,r}^{(0)} \sim \cN (\mathbf{0}, \sigma_0 \Ib)$. 
By Bernstein's inequality, with probability at least $1 - \delta / (4m)$ we have
\begin{align*}
    \big| \| \wb_{j,r}^{(0)} \|_2^2 - \sigma_0^2 d \big| = O(\sigma_0^2 \cdot \sqrt{d \log(8m / \delta)}).
\end{align*}
Therefore, if we set appropriately $d = \Omega( \log(m/ \delta) )$, we have with probability at least $1 - \delta / 2$, for all $j\in \{\pm 1\}$ and $r\in [m]$,  
\begin{align*}
     \sigma_0^2 d /2  \leq \| \wb_{j,r}^{(0)} \|_2^2 \leq 3\sigma_0^2 d / 2.
\end{align*}
Under definition, we have $ \| \xb_i \|_2 \leq R_{\max}$ for all $i\in [n]$. It  is clear that for each $j,r$, $\la\wb_{j,r}^{(0)},\bmu\ra$ is a Gaussian random variable with mean zero and variance $\sigma_0^2\| \xb_i \|_2^2$. Therefore, by Gaussian tail bound and union bound, with probability at least $1-\delta/2$, 
\begin{equation*}
    |\la\wb_{j,r}^{(0)},\xb_i\ra|\leq\sqrt{2\log(8mn/\delta)}\cdot\sigma_0 R_{\max}.
\end{equation*}
\end{proof}

Next, we denote $S_{i}^{(0)}$ as $\{ r\in[m]:\la\wb_{y_i,r}^{(0)},\xb_i\ra>0\}$. We give a lower bound of $|S_{i}^{(0)}|$ in the following two lemmas. 
\begin{lemma}\label{lm: number of initial activated neurons}
Suppose that $\delta>0$ and $m\geq50\log(2n/\delta)$. Then with probability at least $1-\delta$, 
\begin{equation*}
    0.4m\leq|S_{i}^{(0)}|\leq 0.6m,\, \forall i\in[n].
\end{equation*}
\end{lemma}
\begin{proof}[Proof of Lemma \ref{lm: number of initial activated neurons}]
Note that $|S_{i}^{(0)}|=\sum_{r=1}^{m}\ind[\la\wb_{y_i,r}^{(0)},\xb_i\ra>0]$ and $P(\la\wb_{y_i,r}^{(0)},\xb_i\ra>0)=1/2$, then by Hoeffding’s inequality, with probability at least $1-\delta/n$, we have
\begin{equation*}
    \bigg| \frac{|S_{i}^{(0)}|} {m} - \frac{1}{2}\bigg|  \leq\sqrt{\frac{\log(2n/\delta)}{2m}}.
\end{equation*}
Therefore, as long as $m\geq50\log(2n/\delta)$, by applying union bound, with probability at least $1-\delta$, we have
\begin{equation*}
    0.4m\leq|S_{i}^{(0)}|\leq 0.6m,\, \forall i\in[n].
\end{equation*}
\end{proof}
Now we give the update rule of coefficients from data-correlated decomposition. We will begin by analyzing the coefficients in the data-correlated decomposition in Definition~\ref{def:w_decomposition}. The following lemma presents an iterative expression for the coefficients. 
\begin{lemma}\label{lm: coefficient iterative} (Restatement of Lemma~\ref{lemma:coefficient_iterative_proof})
The coefficients $\zeta_{j,r,i}^{(t)},\omega_{j,r,i}^{(t)}$ defined in Definition~\ref{def:w_decomposition} satisfy the following iterative equations:
\begin{align*}
    &\zeta_{j,r,i}^{(0)}, \omega_{j,r,i}^{(0)} = 0,\\
    &\zeta_{j,r,i}^{(t+1)} = \zeta_{j,r,i}^{(t)} - \frac{\eta}{nm} \cdot \ell_i'^{(t)}\cdot \sigma'(\la\wb_{j,r}^{(t)}, \xb_{i}\ra) \cdot \| \xb_i \|_2^2 \cdot \ind(y_{i} = j), \\
    &\omega_{j,r,i}^{(t+1)} = \omega_{j,r,i}^{(t)} + \frac{\eta}{nm} \cdot \ell_i'^{(t)}\cdot \sigma'(\la\wb_{j,r}^{(t)}, \xb_{i}\ra) \cdot \| \xb_i \|_2^2 \cdot \ind(y_{i} = -j),
\end{align*}
for all $r\in [m]$,  $j\in \{\pm 1\}$ and $i\in [n]$.
\end{lemma}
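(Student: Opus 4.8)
The plan is to argue by induction on $t$, reducing the stated recursions for $\zeta_{j,r,i}^{(t)}$ and $\omega_{j,r,i}^{(t)}$ to a single recursion for $\rho_{j,r,i}^{(t)}$ obtained by matching coefficients in the gradient descent update \eqref{eq:gdupdate}. First I would record that the decomposition \eqref{eq:w_decomposition} is well-posed. Existence follows by induction along the trajectory: $\wb_{j,r}^{(0)}-\wb_{j,r}^{(0)}=\mathbf{0}$, and by \eqref{eq:gdupdate} each step adds to $\wb_{j,r}^{(t)}$ a linear combination of $\{\xb_i\}_{i=1}^n$, so $\wb_{j,r}^{(t)}-\wb_{j,r}^{(0)}\in\mathrm{span}\{\xb_1,\dots,\xb_n\}$ for all $t$. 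Uniqueness of the coefficients then follows from linear independence of $\{\xb_i\}_{i=1}^n$: under the nearly-orthogonal assumption $R_{\min}^2\ge Cnp$ with $C$ large, the Gram matrix $G$ with entries $G_{ik}=\la\xb_i,\xb_k\ra$ is strictly diagonally dominant, since $G_{ii}=\|\xb_i\|_2^2\ge R_{\min}^2\ge Cnp>(n-1)p\ge\sum_{k\ne i}|G_{ik}|$, hence $G\succ 0$ and in particular $d\ge n$ and the $\xb_i$ are linearly independent.

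Next I would substitute \eqref{eq:gdupdate} into \eqref{eq:w_decomposition}. Writing $j y_i=\ind(y_i=j)-\ind(y_i=-j)$ and attaching to index $i$ the direction $\|\xb_i\|_2^{-2}\xb_i$, matching the (now unique) coefficient of $\|\xb_i\|_2^{-2}\xb_i$ on both sides gives
\[
\rho_{j,r,i}^{(t+1)}=\rho_{j,r,i}^{(t)}-\frac{\eta}{nm}\,\ell_i'^{(t)}\,\sigma'(\la\wb_{j,r}^{(t)},\xb_i\ra)\,\|\xb_i\|_2^2\,\big(\ind(y_i=j)-\ind(y_i=-j)\big),
\]
with $\rho_{j,r,i}^{(0)}=0$. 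I would then run a sign argument: since $\ell(z)=\log(1+\exp(-z))$ has $\ell'(z)=-1/(1+\exp(z))<0$, and $\sigma'(\cdot)\ge 0$ (equal to $\gamma$ or $1$ for leaky ReLU, and $0$ or $1$ for ReLU), and $\|\xb_i\|_2^2>0$, the increment $\rho_{j,r,i}^{(t+1)}-\rho_{j,r,i}^{(t)}$ is $\ge 0$ whenever $y_i=j$ and $\le 0$ whenever $y_i=-j$. Because $\rho_{j,r,i}^{(0)}=0$, an immediate induction yields $\rho_{j,r,i}^{(t)}\ge 0$ for all $t$ when $y_i=j$, and $\rho_{j,r,i}^{(t)}\le 0$ for all $t$ when $y_i=-j$ (the case $\sigma'=0$ for ReLU merely freezes the coefficient at that step, which is harmless).

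Finally I would combine this with the definitions $\zeta_{j,r,i}^{(t)}=\rho_{j,r,i}^{(t)}\ind(\rho_{j,r,i}^{(t)}\ge 0)$ and $\omega_{j,r,i}^{(t)}=\rho_{j,r,i}^{(t)}\ind(\rho_{j,r,i}^{(t)}\le 0)$. When $y_i=j$, $\rho_{j,r,i}^{(t)}\ge 0$, so $\zeta_{j,r,i}^{(t)}=\rho_{j,r,i}^{(t)}$ and $\omega_{j,r,i}^{(t)}\equiv 0$; substituting into the $\rho$-recursion (whose $\ind(y_i=-j)$ term vanishes) gives exactly \eqref{iterative equation3} for $\zeta$, while $\omega$ staying $0$ matches \eqref{iterative equation4}. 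Symmetrically, when $y_i=-j$, $\rho_{j,r,i}^{(t)}\le 0$, so $\omega_{j,r,i}^{(t)}=\rho_{j,r,i}^{(t)}$, $\zeta_{j,r,i}^{(t)}\equiv 0$, and the $\rho$-recursion collapses to \eqref{iterative equation4} for $\omega$ while $\zeta$ remains $0$, matching \eqref{iterative equation3}. In both cases both recursions hold, and the base case $\zeta_{j,r,i}^{(0)}=\omega_{j,r,i}^{(0)}=0$ is immediate from $\rho_{j,r,i}^{(0)}=0$. I expect no substantive obstacle here: the content is essentially bookkeeping, and the only points requiring care are the well-posedness of the decomposition (the linear-independence argument above) and tracking the signs of $\ell'$, $\sigma'$, and $jy_i$ so that $\zeta$ and $\omega$ are genuinely the positive and negative parts of $\rho$.
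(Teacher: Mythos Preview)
Your proposal is correct and follows essentially the same route as the paper: both derive the recursion for $\rho_{j,r,i}^{(t)}$ from the gradient update, use $\ell'<0$ and $\sigma'\ge 0$ to pin down the sign of $\rho_{j,r,i}^{(t)}$ according to whether $y_i=j$ or $y_i=-j$, and then read off the stated recursions for $\zeta$ and $\omega$ as the positive/negative parts. Your explicit diagonal-dominance argument for the linear independence of $\{\xb_i\}$ (and hence uniqueness of the decomposition) is a nice addition that the paper only asserts in Definition~\ref{def:w_decomposition} without proof.
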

\begin{proof}[Proof of Lemma \ref{lm: coefficient iterative}]

First, we iterate the gradient descent update rule \eqref{eq:gdupdate} $t$ times and get
\begin{align*}
    \wb_{j,r}^{(t+1)} &=\wb_{j,r}^{(0)}- \frac{\eta}{nm}\sum_{s=0}^{t}\sum_{i=1}^n\ell_i'^{(s)}\cdot  \sigma'(\la\wb_{j,r}^{(s)}, \xb_{i}\ra)\cdot j y_{i}\xb_{i}.
\end{align*}
According to the definition of $\rho_{j,r,i}^{(t)}$, we have
\begin{align*}
    \wb_{j,r}^{(t)} = \wb_{j,r}^{(0)} + \sum_{ i = 1}^n \rho_{j,r,i}^{(t) }\cdot \| \xb_i \|_2^{-2} \cdot \xb_{i}.
\end{align*}
Therefore, we have the unique representation
\begin{align*}
   \rho_{j,r,i}^{(t)} = -\frac{\eta}{nm}\sum_{s=0}^{t}\ell_i'^{(s)}\cdot  \sigma'(\la\wb_{j,r}^{(s)}, \xb_{i}\ra)\cdot\|\xb_i\|_{2}^{2}\cdot j y_{i}.
\end{align*}
Now with the notation $\zeta_{j,r,i}^{(t)} := \rho_{j,r,i}^{(t)}\ind(\rho_{j,r,i}^{(t)} \geq 0)$, $\omega_{j,r,i}^{(t)} := \rho_{j,r,i}^{(t)}\ind(\rho_{j,r,i}^{(t)} \leq 0)$ and the fact $\ell_i'^{(s)}<0$, we get 
\begin{align}
    \zeta_{j,r,i}^{(t)}&=-\frac{\eta}{nm}\sum_{s=0}^{t}\ell_i'^{(s)}\cdot  \sigma'(\la\wb_{j,r}^{(s)}, \xb_{i}\ra)\cdot\|\xb_i\|_{2}^{2}\cdot\ind(y_i=j),\label{eq: zeta update rule} \\
    \omega_{j,r,i}^{(t)}&=\frac{\eta}{nm}\sum_{s=0}^{t}\ell_i'^{(s)}\cdot  \sigma'(\la\wb_{j,r}^{(s)}, \xb_{i}\ra)\cdot\|\xb_i\|_{2}^{2}\cdot\ind(y_i=-j).\label{eq: omega update rule}
\end{align}
Writing out the iterative versions of \eqref{eq: zeta update rule} and \eqref{eq: omega update rule} completes the proof.
\end{proof}

\section{Coefficient Analysis of Leaky ReLU}\label{sec: leaky coefficient}
In this section, we establish a series of results on the data-correlated decomposition for two-layer leaky ReLU network defined as
\begin{equation}
\begin{aligned}
    f(\Wb^{(t)},\xb)&=F_{+1}(\Wb_{+1}^{(t)},\xb)-F_{-1}(\Wb_{-1}^{(t)},\xb)\\
    &=\frac{1}{m}\sum_{r=1}^{m}\sigma(\la\wb_{+1,r}^{(t)},\xb\ra)-\frac{1}{m}\sum_{r=1}^{m}\sigma(\la\wb_{-1,r}^{(t)},\xb\ra), \\
    &\sigma(z)=\max\{\gamma z,z\}, \gamma\in(0,1).
\end{aligned}
\end{equation}
The results in Section~\ref{sec: leaky coefficient}, \ref{sec: leaky stable rank} and \ref{sec: loss} are based on Lemma~\ref{lm: initialization inner products}, which hold with high probability.  Denote by  $\cE_{\mathrm{prelim}}$ the event that Lemma~\ref{lm: initialization inner products} in Section~\ref{sec: initial} holds (for a given $\delta$, we see $\PP (\cE_{\mathrm{prelim}}) \geq 1 - \delta$). For simplicity and clarity, we state all the results in Section~\ref{sec: leaky coefficient}, \ref{sec: leaky stable rank} and \ref{sec: loss} conditional on $\cE_{\mathrm{prelim}}$. 

Denote $\beta=\max_{i,j,r}\{|\la\wb_{j,r}^{(0)},\xb_{i}\ra|\}$, $R_{\max}=\max_{i\in[n]}\|\xb_i\|_{2}$, $R_{\min}=\min_{i\in[n]}\|\xb_i\|_{2}$, $p=\max_{i\neq k}|\la\xb_i,\xb_k\ra|$ and suppose $R=R_{\max}/R_{\min}$ is at most an absolute constant. Here we list the exact conditions for $\eta,\sigma_0, R_{\min}, R_{\max}, p$ required by the proofs in this section.
\begin{align}
    &\sigma_0\leq\gamma\big(CR_{\max}\sqrt{\log(mn/\delta)}\big)^{-1},\label{cond: sigma_0}\\
    &\eta\leq (CR_{\max}^{2}/nm)^{-1},\\
    &R_{\min}^{2}\geq Cr^{-4}R^2np,\label{cond: nearly-orth}
\end{align}
where $C$ is a large enough constant. By Lemma~\ref{lm: initialization inner products}, we can upper bound $\beta$ by $2\sqrt{ \log(12 mn/\delta)}\cdot \sigma_0 R_{\max}$. Then, by \eqref{cond: sigma_0} and \eqref{cond: nearly-orth}, it is straightforward to verify the following inequality:
\begin{align}
    &\beta\leq c\gamma,\label{ineq: beta upper bound}\\
    & \gamma^{-4}R_{\min}^{-2} np \leq c,\label{ineq: nearly-orth bound}\\
    & \gamma^{-4}R_{\min}^{-2}R^2 np \leq c,\label{ineq: nearly-orth bound2}
\end{align}
where $c$ is a sufficiently small constant.

Suppose the conditions listed in \eqref{cond: sigma_0} and \eqref{cond: nearly-orth} hold, we claim that for any $t\geq 0$ the following property holds.
\begin{lemma}\label{lm: automatic balance}
    Under the same conditions as Theorem~\ref{main theorem1}, for any $t\geq 0$, we have that
    \begin{equation}
        \sum_{r=1}^{m}|\rho_{j,r,i}^{(t)}|\geq c_1\gamma^{2}\sum_{r=1}^{m}|\rho_{j',r,i'}^{(t)}|,\forall j,j'\in\{\pm1\},\forall i,i'\in[n],\label{ineq: automatic balance1}
    \end{equation}
    where $c_1$ is a constant.
\end{lemma}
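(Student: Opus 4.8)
\textbf{Proof proposal for Lemma~\ref{lm: automatic balance} (leaky ReLU automatic balance).}

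The plan is to prove \eqref{ineq: automatic balance1} by induction on $t$, maintaining a stronger two-sided control on the coefficients. Since $\rho_{j,r,i}^{(0)}=0$ for all $j,r,i$, the base case is degenerate; the real content is in the induction step. The key observation is that for leaky ReLU, $\sigma'(\cdot)\in[\gamma,1]$ always, so from the update rules \eqref{iterative equation3}, \eqref{iterative equation4} we have
\begin{equation*}
    |\rho_{j,r,i}^{(t+1)}|-|\rho_{j,r,i}^{(t)}| = \frac{\eta}{nm}\cdot|\ell_i'^{(t)}|\cdot\sigma'(\la\wb_{j,r}^{(t)},\xb_i\ra)\cdot\|\xb_i\|_2^2 \cdot \big(\ind(y_i=j)+\ind(y_i=-j)\big),
\end{equation*}
which for every valid $(j,r,i)$ is sandwiched between $\frac{\eta\gamma}{nm}|\ell_i'^{(t)}|\|\xb_i\|_2^2$ and $\frac{\eta}{nm}|\ell_i'^{(t)}|\|\xb_i\|_2^2$. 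Summing over $r\in[m]$, the per-step increment of $\sum_r|\rho_{j,r,i}^{(t)}|$ depends on $i$ only through $|\ell_i'^{(t)}|\|\xb_i\|_2^2$ (up to the $[\gamma,1]$ slack) and is \emph{independent of $j$}. So the whole difficulty reduces to: (i) controlling the ratio $|\ell_i'^{(t)}|/|\ell_{i'}'^{(t)}|$ across data indices, and (ii) converting the $[\gamma,1]$ per-step slack and the $\|\xb_i\|_2^2$-vs-$\|\xb_{i'}\|_2^2$ ratio (bounded by $R^2$) into the claimed $c_1\gamma^2$ factor.

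The main step is therefore to show that $|\ell_i'^{(t)}|$ and $|\ell_{i'}'^{(t)}|$ stay within a $\poly(\gamma^{-1},R)$ factor of each other throughout training. I would do this via a coupled induction: assume at step $t$ that $\sum_r|\rho_{j,r,i}^{(t)}| \ge c_1\gamma^2\sum_r|\rho_{j',r,i'}^{(t)}|$ holds, use \eqref{ineq: F rho approximation} together with the nearly-orthogonal bound \eqref{ineq: nearly-orth bound2} to get $F_j(\Wb_j^{(t)},\xb_i) = \frac{1\pm o(1)}{m}\sum_r\rho_{j,r,i}^{(t)}$ (here the cross terms $\sum_{i'\neq i}\frac1m\sum_r|\rho_{j,r,i'}^{(t)}|R_{\min}^{-2}p$ are controlled precisely because the induction hypothesis bounds $\sum_r|\rho_{j,r,i'}^{(t)}|$ in terms of $\sum_r|\rho_{j,r,i}^{(t)}|$), hence $y_if(\Wb^{(t)},\xb_i)$ is comparable (up to additive constants coming from $\beta$ and the initialization, controlled by \eqref{ineq: beta upper bound}) to $\frac1m\big(\sum_r\rho_{y_i,r,i}^{(t)} - \sum_r\rho_{-y_i,r,i}^{(t)}\big)$; by the $\sigma'\ge\gamma$ lower bound the negative-class coefficients are at least a $\gamma$-fraction of the positive-class ones, so $y_if(\Wb^{(t)},\xb_i)$ is itself comparable to $\frac{1-\gamma}{m}\sum_r|\rho_{y_i,r,i}^{(t)}|$ up to constants. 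Then $|\ell_i'^{(t)}| = \Theta(\exp(-y_if(\Wb^{(t)},\xb_i)))$ from \eqref{logit appro}, and comparing $i$ to the data point with the largest coefficient sum gives the needed ratio bound. Feeding this back into the summed update rule closes the induction: the data index achieving the max coefficient sum grows by at most $\frac{\eta}{nm}|\ell_{\max}'^{(t)}|R_{\max}^2$, while any other index $i$ with $\sum_r|\rho_{y_i,r,i}^{(t)}|$ near the $c_1\gamma^2$ boundary has $|\ell_i'^{(t)}|$ exponentially \emph{larger}, so its coefficient sum grows faster and the inequality is restored with room to spare.

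The step I expect to be the main obstacle is making the coupled induction airtight, specifically ensuring the additive error terms (the initialization inner products $\beta$, the cross-data contamination $\sum_{i'\neq i}$ terms in \eqref{ineq: F rho approximation}, and the $[\gamma,1]$ slack compounding over $t$ steps) never accumulate beyond a constant and hence never degrade the multiplicative $c_1\gamma^2$ gap — this is delicate because a naive bound would let the slack compound geometrically. The resolution is that the inequality is self-correcting: whenever $\sum_r|\rho_{j,r,i}^{(t)}|$ approaches the lower boundary $c_1\gamma^2\max_{j',i'}\sum_r|\rho_{j',r,i'}^{(t)}|$, the corresponding $|\ell_i'^{(t)}|$ becomes large enough (by a factor $\gg \gamma^{-2}R^2$, achievable by choosing $c_1$ small and using \eqref{ineq: nearly-orth bound2}) that index $i$'s increment strictly dominates, pushing it back above the boundary. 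Formalizing "strictly dominates" requires carefully tracking that $|\ell_i'^{(t)}|/|\ell_{i'}'^{(t)}| \gtrsim \exp\big(\frac{c(1-\gamma)}{m}(\sum_r|\rho_{y_{i'},r,i'}^{(t)}| - \sum_r|\rho_{y_i,r,i}^{(t)}|)\big)$, which follows from the margin approximation but needs the nearly-orthogonality condition to absorb cross terms; this is where conditions \eqref{cond: sigma_0} and \eqref{cond: nearly-orth} (equivalently \eqref{ineq: beta upper bound}–\eqref{ineq: nearly-orth bound2}) are used in full strength.
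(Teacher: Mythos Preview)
Your approach is genuinely different from the paper's. The paper does \emph{not} run a single self-correcting induction; it splits into two phases. For $t\le T_1=\Theta(\eta^{-1}nmR_{\max}^{-2})$ (Lemma~\ref{lm: automatic balance2}) all margins are $O(1)$, hence every $|\ell_i'^{(t)}|=\Theta(1)$, and the per-step increment ratio is automatically $\Theta(\gamma R^{-2})\ge c_1\gamma^2$ by brute force. For $t\ge T_1$ (Lemma~\ref{lm: activation pattern}) the paper uses the induction hypothesis to sandwich $|\ell_i'^{(t)}|$ by exponentials of $\frac{1}{m}\sum_r\rho_{j,r,i}^{(t)}$, feeds these into the summed update rule, applies Lemma~\ref{lm: auxiliary1} to obtain explicit two-sided bounds $\frac{1}{m}\sum_r|\rho_{j,r,i}^{(t)}|=\Theta(\log(1+c_i t))$, and finally invokes the monotonicity of $t\mapsto \log(1+at)/\log(1+bt)$ (Lemma~\ref{lm: useful lemma4}) to bound the ratio. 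The paper's route is heavier but yields the explicit growth rates as a byproduct, which it reuses downstream.

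Your self-correcting induction has a real gap exactly at the point you flagged. The claim that when $\sum_r|\rho_{j,r,i}^{(t)}|$ sits at the boundary ``the corresponding $|\ell_i'^{(t)}|$ becomes large enough (by a factor $\gg\gamma^{-2}R^2$, achievable by choosing $c_1$ small)'' is false in the early regime: when all margins are $O(1)$ every $|\ell_i'^{(t)}|=\Theta(1)$, so the ratio $|\ell_i'^{(t)}|/|\ell_{i'}'^{(t)}|$ is bounded by an absolute constant \emph{regardless} of $c_1$; the exponential amplification $\exp\big(\tfrac{c}{m}(\sum_r|\rho_{y_{i'},r,i'}|-\sum_r|\rho_{y_i,r,i}|)\big)$ you invoke is $\gg 1$ only once the coefficient sums themselves are large. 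What actually protects the inequality early on is the \emph{opposite} mechanism --- both loss derivatives are $\Theta(1)$, so the increment ratio is already $\Theta(\gamma R^{-2})\ge c_1\gamma^2$ with no self-correction needed --- but you never state this, and without it the induction cannot get started. (A smaller point: since $F_{-y_i}\approx\frac{\gamma}{m}\sum_r\omega_{-y_i,r,i}^{(t)}<0$, the margin is $\approx\frac{1}{m}\sum_r\zeta_{y_i,r,i}^{(t)}+\frac{\gamma}{m}\sum_r|\omega_{-y_i,r,i}^{(t)}|$, with multiplicative constant in $[1+\gamma^2,2]$, not $1-\gamma$.) To close the gap you either need an explicit two-phase split as the paper does, or you can first establish the uniform bound $\ell_i'^{(t)}/\ell_k'^{(t)}\le C_2$ directly from the margin dynamics --- the paper does this independently in Lemma~\ref{lm: logit ratio}, without using Lemma~\ref{lm: automatic balance} --- after which your mediant-type argument goes through immediately with $c_1=\Theta(C_2^{-1}R^{-2})$.
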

To prove Lemma~\ref{lm: automatic balance}, we divide it into two lemmas, each addressing a specific case: $0\leq t\leq T_1$ (Lemma~\ref{lm: automatic balance2}) when the logit $|\ell_i^{(t)}|=\Theta(1)$, and $t\geq T_1$ (Lemma~\ref{lm: activation pattern}) when the logit $|\ell_i^{(t)}|$ is smaller than constant order. Here, $T_1 = C'\eta^{-1}nmR_{\max}^{-2}$, and $C'$ is a constant. For each case, we apply different techniques to establish the proof.

\begin{lemma}[$0\leq t\leq T_1$]\label{lm: automatic balance2}
    Under the same conditions as Theorem~\ref{main theorem1}, for any $0\leq t\leq T_1=C'\eta^{-1}nmR_{\max}^{-2}$, where $C'$ is a constant, we have that
    \begin{align}
        &|\rho_{j,r,i}^{(t)}|\geq c_2\gamma|\rho_{j',r',i'}^{(t)}|,\forall j,j'\in\{\pm1\},\forall r,r'\in[m],\forall i,i'\in[n],\label{ineq: automatic balance3}
    \end{align}
    where $c_2$ is a constant.
\end{lemma}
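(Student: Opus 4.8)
The plan is to reduce the statement to a uniform two-sided estimate on $\sum_{s<t}|\ell_i'^{(s)}|$. First I would use the closed form for the coefficients obtained in the proof of Lemma~\ref{lm: coefficient iterative}: since $\ell_i'^{(s)}<0$ and the factors $jy_i$ and $\ind(y_i=\pm j)$ only track the sign of $\rho_{j,r,i}^{(t)}$, one has for every $j,r,i$ and $t\ge 1$
\[
\big|\rho_{j,r,i}^{(t)}\big|=\frac{\eta}{nm}\,\|\xb_i\|_2^2\sum_{s=0}^{t-1}\big|\ell_i'^{(s)}\big|\cdot\sigma'\big(\la\wb_{j,r}^{(s)},\xb_i\ra\big).
\]
For leaky ReLU $\gamma\le\sigma'(\cdot)\le 1$, so $\big|\rho_{j,r,i}^{(t)}\big|$ is pinched between $\tfrac{\eta\gamma}{nm}\|\xb_i\|_2^2\sum_{s<t}|\ell_i'^{(s)}|$ and $\tfrac{\eta}{nm}\|\xb_i\|_2^2\sum_{s<t}|\ell_i'^{(s)}|$. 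Since $R_{\min}^2\le\|\xb_i\|_2^2\le R_{\max}^2$ with $R=R_{\max}/R_{\min}=O(1)$, the target inequality \eqref{ineq: automatic balance3} will follow (the case $t=0$ being trivial as all coefficients vanish) once I show $\sum_{s<t}|\ell_i'^{(s)}|$ has the same order across all $i\in[n]$; the cleanest way to get that is to prove $|\ell_i'^{(s)}|=\Theta(1)$ for every $i$ and every $s<T_1$, which gives $\sum_{s<t}|\ell_i'^{(s)}|=\Theta(t)$ uniformly in $i$.

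So the actual work is showing that in the early phase $t\le T_1=C'\eta^{-1}nmR_{\max}^{-2}$ every margin $|y_i f(\Wb^{(t)},\xb_i)|$ stays bounded by an absolute constant. I would do this directly, with no bootstrapping: inserting only the crude bounds $|\ell_i'^{(s)}|\le 1$ and $\sigma'(\cdot)\le 1$ into the closed form above yields $\big|\rho_{j,r,i}^{(t)}\big|\le \eta R_{\max}^2 t/(nm)\le C'$ for all $j,r,i$ whenever $t\le T_1$. Substituting the data-correlated decomposition \eqref{eq:w_decomposition} into $\la\wb_{j,r}^{(t)},\xb_i\ra$ and bounding the off-diagonal inner products $\la\xb_{i'},\xb_i\ra$ by $p$ gives
\[
\big|\la\wb_{j,r}^{(t)},\xb_i\ra\big|\le\beta+\big|\rho_{j,r,i}^{(t)}\big|+\sum_{i'\ne i}\big|\rho_{j,r,i'}^{(t)}\big|\,R_{\min}^{-2}p\le\beta+C'\big(1+nR_{\min}^{-2}p\big),
\]
which is $O(1)$ by \eqref{ineq: beta upper bound} and \eqref{ineq: nearly-orth bound}. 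Because $|\sigma(z)|\le|z|$ for leaky ReLU, $|F_j(\Wb_j^{(t)},\xb_i)|\le\max_r|\la\wb_{j,r}^{(t)},\xb_i\ra|=O(1)$, and hence $|y_i f(\Wb^{(t)},\xb_i)|\le|F_{+1}(\Wb_{+1}^{(t)},\xb_i)|+|F_{-1}(\Wb_{-1}^{(t)},\xb_i)|=O(1)$.

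To finish, I would combine $|y_i f(\Wb^{(s)},\xb_i)|=O(1)$ for $s<T_1$ with $|\ell'(z)|=(1+e^z)^{-1}$ to get absolute constants $0<c_\ell\le C_\ell$ with $c_\ell\le|\ell_i'^{(s)}|\le C_\ell$ for all $i$ and all $s<T_1$; plugging this into the pinching bounds gives $\big|\rho_{j,r,i}^{(t)}\big|/\big|\rho_{j',r',i'}^{(t)}\big|\ge \gamma c_\ell R_{\min}^2/(C_\ell R_{\max}^2)=c_2\gamma$ with $c_2=c_\ell/(C_\ell R^2)$, which is \eqref{ineq: automatic balance3}. I do not expect a real obstacle in this regime --- nothing has drifted far yet --- and the only point requiring care is keeping the argument non-circular: the constant bound on $\big|\rho_{j,r,i}^{(t)}\big|$, and therefore on the margins, uses only $|\ell'|\le 1$ and is thus unconditional for $t\le T_1$, so the two-sided bound on $|\ell_i'^{(s)}|$ is a consequence of it, not an input. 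The genuinely delicate regime is the complementary one $t\ge T_1$, treated separately in Lemma~\ref{lm: activation pattern}, where $|\ell_i'^{(t)}|$ is no longer of constant order.
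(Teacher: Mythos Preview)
Your proposal is correct and follows essentially the same route as the paper: use $|\ell'|\le 1$ and $\sigma'\le 1$ to get the unconditional bound $|\rho_{j,r,i}^{(t)}|\le \eta R_{\max}^2 t/(nm)\le C'$ on $[0,T_1]$, deduce that margins and hence $|\ell_i'^{(t)}|$ are of constant order, then combine the lower bound $|\rho_{j,r,i}^{(t)}|\ge \tfrac{\gamma\eta}{nm}R_{\min}^2\cdot c_\ell t$ with the upper bound to obtain the ratio. Your explicit comment that the argument is non-circular (the $O(1)$ coefficient bound uses only $|\ell'|\le 1$) is a useful clarification the paper leaves implicit.
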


\begin{proof}[Proof of Lemma~\ref{lm: automatic balance2}]
In this lemma, we first show that \eqref{ineq: automatic balance1} hold for $t\leq T_1=C'\eta^{-1}nmR_{\max}^{-2}$ where $C'=\Theta(1)$ is a constant. Recall from Lemma~\ref{lm: coefficient iterative} that 
\begin{align*}
    &\zeta_{j,r,i}^{(t+1)} = \zeta_{j,r,i}^{(t)} - \frac{\eta}{nm} \cdot \ell_i'^{(t)}\cdot \sigma'(\la\wb_{j,r}^{(t)}, \xb_{i}\ra) \cdot \| \xb_i \|_2^2 \cdot \ind(y_{i} = j), \\
    &\omega_{j,r,i}^{(t+1)} = \omega_{j,r,i}^{(t)} + \frac{\eta}{nm} \cdot \ell_i'^{(t)}\cdot \sigma'(\la\wb_{j,r}^{(t)}, \xb_{i}\ra) \cdot \| \xb_i \|_2^2 \cdot \ind(y_{i} = -j),
\end{align*}
we can get
\begin{equation}
    \zeta_{-y_i,r,i}^{(t)},\omega_{y_i,r,i}^{(t)}=0,\label{ineq: off- zero}
\end{equation}
and
\begin{align}
    \zeta_{y_i,r,i}^{(t+1)}&\leq \zeta_{y_i,r,i}^{(t)}+\frac{\eta}{nm}\cdot\|\xb_i\|_2^2\leq \zeta_{y_i,r,i}^{(t)}+\frac{\eta R_{\max}^{2}}{nm},\label{ineq: zeta iter upbound1}\\
    |\omega_{-y_i,r,i}^{(t+1)}|&\leq|\omega_{-y_i,r,i}^{(t)}|+\frac{\eta}{nm}\cdot\|\xb_i\|_2^2\leq|\omega_{-y_i,r,i}^{(t)}|+\frac{\eta R_{\max}^{2}}{nm}\label{ineq: omega iter upbound1}.
\end{align}
Therefore, we have $\max_{j,r,i}\{\zeta_{j,r,i}^{(t)},|\omega_{j,r,i}^{(t)}|\}=O(1)$ for any $t\leq T_1$ and hence $\max_{i}\{F_{+1}(\Wb_{+1}^{(t)},\xb_i),F_{-1}(\Wb_{-1}^{(t)},\xb_i)\}=O(1)$ for any $t\leq T_1$. Thus there exists a positive constant $\tilde{c}$ such that $|\ell_i'^{(t)}|\geq \tilde{c}$ for any $t\leq T_1$. And it follows for any $j\in\{\pm1\},r\in[m],i\in[n]$ that
\begin{align}
    &|\rho_{j,r,i}^{(t+1)}|\geq|\rho_{j,r,i}^{(t)}|+\frac{\gamma\eta}{nm}\cdot|\ell_i'^{(t)}|\cdot\|\xb_i\|_2^2\geq|\rho_{j,r,i}^{(t)}|+\frac{\tilde{c}\gamma\eta}{nm}\cdot\|\xb_i\|_2^2,\forall\, 0\leq t\leq T_1,\notag\\
    &|\rho_{j,r,i}^{(t)}|\geq\frac{\tilde{c}\gamma\eta t}{nm}\cdot\|\xb_i\|_2^2\geq\frac{\tilde{c}\gamma\eta R_{\min}^{2} t}{nm},\forall\, 0\leq t\leq T_1.\label{ineq: rho-t upbound1}
\end{align}
On the other hand, by \eqref{ineq: off- zero}, \eqref{ineq: zeta iter upbound1} and \eqref{ineq: omega iter upbound1}, we have for any $j'\in\{\pm1\},r'\in[m],i'\in[n]$ that
\begin{equation}
    |\rho_{j',r',i'}^{(t)}|\leq\frac{\eta R_{\max}^{2}t}{nm}, \forall\, 0\leq t\leq T_1.\label{ineq: zeta-t lowbound1}
\end{equation}
Dividing \eqref{ineq: zeta-t lowbound1} by \eqref{ineq: rho-t upbound1}, we can get for any $j,j'\in\{\pm1\},r,r'\in[m],i,i'\in[n]$ that
\begin{equation*}
    |\rho_{j,r,i}^{(t)}|\geq \frac{\tilde{c}\gamma R_{\min}^{2}}{R_{\max}^{2}}|\rho_{j',r',i'}^{(t)}|,
\end{equation*}
which indicates that the first bullet holds for time $t\leq T_1$ as long as $c_2\leq \tilde{c} R_{\min}^{2}R_{\max}^{-2}$.
\end{proof}

\begin{lemma}[$t\geq T_1$]\label{lm: activation pattern}
Let $T_1$ be defined in Lemma~\ref{lm: automatic balance2}. Under the same conditions as Theorem~\ref{main theorem1}, for any $t\geq T_1$, we have that
\begin{align}
    \sum_{r=1}^{m}|\rho_{j,r,i}^{(t)}|\geq c_3\gamma^{2} \sum_{r=1}^{m}|\rho_{j',r,i'}^{(t)}|,\forall j,j'\in\{\pm 1\},\forall i,i'\in[n],\label{ineq: automatic balance5}
\end{align}
    where $c_3=\Theta(1)$ is a constant. Moreover, we also have the following increasing rate estimation of $|\rho_{y_i,r,i}^{(t)}|,|\rho_{-y_i,r,i}^{(t)}|$:
    \begin{itemize}[leftmargin=*]
        \item $ \frac{1}{m}\sum_{r=1}^{m}\rho_{y_i,r,i}^{(t)}\leq c_4^{-1}\log\bigg(1+\frac{\eta\|\xb_i\|_2^2 c_4e^{2\beta}}{nm}\cdot t\bigg)$,
        \item $\frac{1}{m}\sum_{r=1}^{m}|\rho_{-y_i,r,i}^{(t)}|\leq c_{5}^{-1}\gamma^{-1}\log\bigg(1+\frac{\gamma\eta\|\xb_i\|_2^2c_5e^{2\beta}}{nm}\cdot t\bigg)$,
        \item $\frac{1}{m}\sum_{r=1}^{m}\rho_{y_i,r,i}^{(t)}
        \geq c_6^{-1}\log\bigg(1+\frac{\gamma\eta\|\xb_i\|_2^2 c_6 e^{-(\gamma+1)\beta}}{nm}\cdot t\bigg)$,
        \item $\frac{1}{m}\sum_{r=1}^{m}|\rho_{-y_i,r,i}^{(t)}|\geq  c_6^{-1}\gamma\log\bigg(1+\frac{\eta\|\xb_i\|_2^2 c_6 e^{-(\gamma+1)\beta}}{nm}\cdot t\bigg)$,
    \end{itemize}
where $c_4,c_5,c_6$ are constants.
\end{lemma}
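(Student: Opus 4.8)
\textbf{Proof proposal for Lemma~\ref{lm: activation pattern}.}

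The plan is to build this lemma on top of Lemma~\ref{lm: automatic balance2} (which establishes the balance property for $0 \le t \le T_1$) by a continuation/induction argument in $t$. First, I would carry over the key structural facts that persist for all $t$: the ``off-class'' coefficients vanish, $\zeta_{-y_i,r,i}^{(t)} = 0$ and $\omega_{y_i,r,i}^{(t)} = 0$, so that $|\rho_{y_i,r,i}^{(t)}| = \zeta_{y_i,r,i}^{(t)}$ and $|\rho_{-y_i,r,i}^{(t)}| = |\omega_{-y_i,r,i}^{(t)}|$, and both are monotonically non-decreasing in $t$. The central object is the quantity $x_t^{(i,j)} := \frac{1}{m}\sum_{r=1}^m |\rho_{j,r,i}^{(t)}|$; I want to show it obeys a two-sided recursion of the form $C_1 \exp(-x_t) \le x_{t+1} - x_t \le C_2 \exp(-x_t)$ (up to the $\gamma$ factors distinguishing $j = y_i$ from $j = -y_i$), so that Lemma~\ref{lm: auxiliary1} delivers the four logarithmic rate bounds. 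To get there I must translate the update \eqref{iterative equation3}--\eqref{iterative equation4} into a recursion in $x_t$: the factor $|\ell_i'^{(t)}| = \Theta(\exp(-y_i f(\Wb^{(t)},\xb_i)))$ by \eqref{logit appro}, and $y_i f(\Wb^{(t)},\xb_i) = F_{y_i}(\Wb_{y_i}^{(t)},\xb_i) - F_{-y_i}(\Wb_{-y_i}^{(t)},\xb_i)$, which by the approximation \eqref{ineq: F rho approximation} is controlled by $\frac{1}{m}\sum_r \rho_{y_i,r,i}^{(t)}$ plus cross terms of size $\sum_{i'\neq i}\frac{1}{m}\sum_r |\rho_{j,r,i'}^{(t)}| R_{\min}^{-2} p$. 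The balance inequality \eqref{ineq: automatic balance5} is exactly what is needed to absorb those cross terms into a constant multiplicative error via \eqref{ineq: nearly-orth bound}, and the $e^{\pm 2\beta}$, $e^{-(\gamma+1)\beta}$ factors appearing in the stated bounds come from carrying the initialization term $\wb_{j,r}^{(0)}$ through the approximation $\rho_{j,r,i}^{(t)} \approx \langle \wb_{j,r}^{(t)},\xb_i\rangle$ (this is where $\beta = \max|\langle \wb_{j,r}^{(0)},\xb_i\rangle|$ enters), using $|\langle \wb_{j,r}^{(0)},\xb_i\rangle| \le \beta$ and the sign/activation facts.

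The argument is a simultaneous induction: assume \eqref{ineq: automatic balance5} holds at all times up to $t$ (with base case $t = T_1$ supplied by Lemma~\ref{lm: automatic balance2}, since on $[0,T_1]$ the pointwise bound \eqref{ineq: automatic balance3} is stronger and summing over $r$ gives \eqref{ineq: automatic balance5} with, say, $c_3 \le c_2$). Under this hypothesis, I can write, for $j = y_i$,
\begin{equation*}
    x_{t+1}^{(i,y_i)} - x_t^{(i,y_i)} = \frac{\eta}{nm}|\ell_i'^{(t)}|\,\|\xb_i\|_2^2 = \Theta\Big(\frac{\eta\|\xb_i\|_2^2}{nm}\Big)\exp\big(-(1 \pm c)x_t^{(i,y_i)} \pm O(\beta)\big),
\end{equation*}
and similarly for $j = -y_i$ with an extra $\gamma$ prefactor (coming from $\sigma'(\langle \wb_{-y_i,r}^{(t)},\xb_i\rangle) = \gamma$, once Lemma~\ref{lm: leaky ReLU activation pattern} guarantees the sign at those neurons is negative — though for $t \ge T_1$ I should instead use the one-sided facts $|\omega_{-y_i,r,i}^{(t+1)}| \ge |\omega_{-y_i,r,i}^{(t)}| + \frac{\gamma\eta}{nm}|\ell_i'^{(t)}|\|\xb_i\|_2^2$ which hold unconditionally since $\sigma'(\cdot) \ge \gamma$ always, giving the lower bounds, and $\le$ with $\sigma'(\cdot) \le 1$ for the upper bounds). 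Feeding these into Lemma~\ref{lm: auxiliary1} with $x_t = (1 \pm c)\,\frac{1}{m}\sum_r \rho_{y_i,r,i}^{(t)}$ yields the claimed $c_k^{-1}\log(1 + c_k' \eta \|\xb_i\|_2^2 t / nm)$-type bounds; the $e^{2\beta}$ versus $e^{-(\gamma+1)\beta}$ discrepancy between the upper and lower estimates is precisely the $C_2 \exp(C_2)$ slack in Lemma~\ref{lm: auxiliary1} together with the two-sided bound on the $\beta$-perturbation. Finally, to close the induction and verify \eqref{ineq: automatic balance5} at time $t+1$: both $\frac{1}{m}\sum_r|\rho_{y_i,r,i}^{(t)}|$ and $\frac{1}{m}\sum_r|\rho_{-y_i,r,i}^{(t)}|$ for every $i$ are, by the bounds just derived, of the same order $\Theta(\log(1 + \text{const}\cdot t))$ up to $\gamma$ and $R$ factors — since all the $\|\xb_i\|_2^2$ lie in $[R_{\min}^2, R_{\max}^2]$ with $R = R_{\max}/R_{\min}$ constant — which gives the uniform-in-$i,i',j,j'$ ratio bound with $c_3 = \Theta(\gamma^2)$.

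The main obstacle is the circularity: the rate estimates for $x_t$ require the balance \eqref{ineq: automatic balance5} to control the cross terms in the $F \approx \rho$ approximation, but proving balance at the next step requires the rate estimates. Breaking this requires a carefully staged induction where the ``error budget'' in the cross-term bound \eqref{ineq: nearly-orth bound} is strictly smaller than the balance constant $c_3\gamma^2$ being propagated, so that the induction does not degrade — this is exactly why the near-orthogonality condition \eqref{cond: nearly-orth} is quantitatively $R_{\min}^2 \ge C\gamma^{-4}R^2 np$ with $\gamma^{-4}$: one power of $\gamma^{-2}$ to beat the $c_3\gamma^2$ floor and another $\gamma^{-2}$ from the $\sigma'$ asymmetry between classes. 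A secondary technical point is handling the lower bound on $|\ell_i'^{(t)}|$: one needs $y_i f(\Wb^{(t)},\xb_i)$ to not overshoot, i.e., an a priori \emph{upper} bound on $x_t$ must be established first (the first two bullets), and only then can it be fed back to get the matching \emph{lower} bounds (the last two bullets) — so the four bullets should be proved in the order listed, upper bounds before lower bounds, with the balance inequality \eqref{ineq: automatic balance5} established alongside the upper bounds.
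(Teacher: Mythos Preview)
Your proposal is essentially correct and tracks the paper's proof closely: the induction scaffold, the role of the balance hypothesis in absorbing cross terms via \eqref{ineq: nearly-orth bound}, the two-sided bounds $\sigma'\in[\gamma,1]$ to get unconditional upper/lower recursions, and the application of Lemma~\ref{lm: auxiliary1} are all exactly what the paper does.

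There is one technical step you glossed over that the paper handles explicitly. When you close the induction by saying both sums are ``of the same order $\Theta(\log(1+\text{const}\cdot t))$ up to $\gamma$ and $R$ factors,'' this hides the issue that the upper and lower logarithmic bounds have \emph{different} constants inside the log (roughly $\gamma R_{\max}^2 e^{2\beta}$ versus $R_{\min}^2 e^{-(\gamma+1)\beta}$), so the ratio $\log(1+at)/\log(1+bt)$ with $a\neq b$ must be bounded below by a $t$-independent constant to recover \eqref{ineq: automatic balance5} with the \emph{same} $c_3$ at step $\tilde t$. The paper does this via an auxiliary monotonicity fact (Lemma~\ref{lm: useful lemma4}): for $b>a>0$, the map $t\mapsto \log(1+at)/\log(1+bt)$ is increasing, so its infimum over $t\ge T_1$ is attained at $t=T_1$ and can be computed explicitly in terms of $C', R, \beta, \gamma$. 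That explicit value at $T_1$ is what fixes $c_3$ once and for all and prevents the induction from degrading. Your ``error budget'' discussion addresses a different (also necessary) point --- that $\gamma^{-2}c_3^{-1}R_{\min}^{-2}pn$ must be $\le 1/2$ so that $c_4,c_5,c_6$ are $\Theta(1)$ --- but it does not by itself close the loop on the log-ratio.
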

\begin{proof}[Proof of Lemma~\ref{lm: activation pattern}]
We prove this lemma by induction. By Lemma~\ref{lm: automatic balance2}, we know that \eqref{ineq: automatic balance5} holds for time $t=T_1$ as long as $c_3\leq c_2$. Suppose that there exists $\tilde{t}>T_1$ such that \eqref{ineq: automatic balance5} holds for all time $0\leq t\leq\tilde{t}-1$. We aim to prove that they also hold for $t=\tilde{t}$.
For any $0\leq t\leq\tilde{t}-1$, we have
\begin{equation}\label{ineq: Fy lower bound}
    \begin{aligned}
        F_{y_i}(\Wb_{y_i}^{(t)},\xb_i)&=\frac{1}{m}\sum_{r=1}^{m}\sigma(\la\wb_{y_i,r}^{(t)},\xb_i\ra)\\
        &\geq\frac{1}{m}\sum_{r=1}^{m}\la\wb_{y_i,r}^{(t)},\xb_i\ra\\
        &=\frac{1}{m}\sum_{r=1}^{m}\bigg(\la\wb_{y_i,r}^{(0)},\xb_i\ra+\sum_{i'=1}^{n}\rho_{y_i,r,i'}^{(t)}\|\xb_{i'}\|_{2}^{-2}\cdot\la\xb_{i'},\xb_{i}\ra\bigg)\\
        &\geq\frac{1}{m}\sum_{r=1}^{m}\bigg(\rho_{y_i,r,i}^{(t)}-\sum_{i'\neq i}|\rho_{y_i,r,i'}^{(t)}|R_{\min}^{-2}p\bigg)-\beta\\
        &=\frac{1}{m}\sum_{r=1}^{m}\rho_{y_i,r,i}^{(t)}-\sum_{i'\neq i}\bigg(\frac{1}{m}\sum_{r=1}^{m}\rho_{y_i,r,i'}^{(t)}\bigg)R_{\min}^{-2}p-\beta\\
        &\geq\frac{1-\gamma^{-2}c_3^{-1}R_{\min}^{-2}pn}{m}\sum_{r=1}^{m}\rho_{y_i,r,i}^{(t)}-\beta,
        \end{aligned}
\end{equation}
where the first inequality is by $\sigma(z)\geq z$; the second equality is by \eqref{eq:w_decomposition}; the third inequality is by triangle inequality and the definition of $\beta,p,R_{min}$; the fourth inequality is by the induction hypothesis \eqref{ineq: automatic balance5}. Besides, for any $0\leq t\leq\tilde{t}-1$, we also have the following upper bound of $F_{y_i}(\Wb_{y_i}^{(t)},\xb_i)$:
    \begin{equation}\label{ineq: Fy upper bound}
        \begin{aligned}
        F_{y_i}(\Wb_{y_i}^{(t)},\xb_i)&=\frac{1}{m}\sum_{r=1}^{m}\sigma(\la\wb_{y_i,r}^{(t)},\xb_i\ra)\\
        &=\frac{1}{m}\sum_{r=1}^{m}\sigma\bigg(\la\wb_{y_i,r}^{(0)},\xb_i\ra+\sum_{i'=1}^{n}\rho_{y_i,r,i'}^{(t)}\|\xb_{i'}\|_{2}^{-2}\cdot\la\xb_{i'},\xb_{i}\ra\bigg)\\
        &\leq\frac{1}{m}\sum_{r=1}^{m}\sigma\bigg(\rho_{y_i,r,i}^{(t)}+\sum_{i'\neq i}|\rho_{y_i,r,i'}^{(t)}|R_{\min}^{-2}p+\beta\bigg)\\
        &=\frac{1}{m}\sum_{r=1}^{m}\bigg(\rho_{y_i,r,i}^{(t)}+\sum_{i'\neq i}|\rho_{y_i,r,i'}^{(t)}|R_{\min}^{-2}p+\beta\bigg)\\
        &=\frac{1}{m}\sum_{r=1}^{m}\rho_{y_i,r,i}^{(t)}+\sum_{i'\neq i}\bigg(\frac{1}{m}\sum_{r=1}^{m}\rho_{y_i,r,i'}^{(t)}\bigg)R_{\min}^{-2}p+\beta\\
        &\leq\frac{1+\gamma^{-2}c_3^{-1}R_{\min}^{-2}pn}{m}\sum_{r=1}^{m}\rho_{y_i,r,i}^{(t)}+\beta,
        \end{aligned}
    \end{equation}
where the first inequality is by triangle inequality and the definition of $\beta,p,R_{min}$; the second inequality is by the induction hypothesis \eqref{ineq: automatic balance5}. On the other hand, for any $0\leq t\leq\tilde{t}$, we can give following upper and lower bounds for $F_{-y_i}(\Wb_{-y_i}^{(t)},\xb_i)$ by applying similar arguments like \eqref{ineq: Fy lower bound} and \eqref{ineq: Fy upper bound}:
    \begin{align}
        F_{-y_i}(\Wb_{-y_i}^{(t)},\xb_i)&\geq\frac{\gamma}{m}\sum_{r=1}^{m}\la\wb_{-y_i,r}^{(t)},\xb_i\ra\notag\\
        &\geq\frac{\gamma}{m}\sum_{r=1}^{m}\bigg(\rho_{-y_i,r,i}^{(t)}-\sum_{i'\neq i}|\rho_{-y_i,r,i'}^{(t)}|R_{\min}^{-2}p-\beta\bigg),\notag\\
        &\geq\frac{\gamma(1+\gamma^{-2}c_3^{-1}R_{\min}^{-2}pn)}{m}\sum_{r=1}^{m}\rho_{-y_i,r,i}^{(t)}-\gamma\beta,\label{ineq: F-y lower bound}
    \end{align}
and
\begin{align}
F_{-y_i}(\Wb_{-y_i}^{(t)},\xb_i)&\leq\frac{1}{m}\sum_{r=1}^{m}\sigma\bigg(\rho_{-y_i,r,i}^{(t)}+\sum_{i'\neq i}|\rho_{-y_i,r,i'}^{(t)}|R_{\min}^{-2}p+\beta\bigg)\notag\\
&\leq\frac{1}{m}\sum_{r=1}^{m}\bigg[\sigma(\rho_{-y_i,r,i}^{(t)})+\sigma\Big(\sum_{i'\neq i}|\rho_{-y_i,r,i'}^{(t)}|R_{\min}^{-2}p\Big)+\sigma(\beta)\bigg]\notag\\
&=\frac{\gamma}{m}\sum_{r=1}^{m}\rho_{-y_i,r,i}^{(t)}+\sum_{i'\neq i}\bigg(\frac{1}{m}\sum_{r=1}^{m}|\rho_{-y_i,r,i}^{(t)}|\bigg)+\beta\notag\\
&=\frac{\gamma(1-\gamma^{-3}c_3^{-1}R_{\min}^{-2}pn)}{m}\sum_{r=1}^{m}\rho_{-y_i,r,i}^{(t)}+\beta,\label{ineq: F-y upper bound}
\end{align}
where the second inequality is by a property of leaky ReLU function that $\sigma(a+b)\leq\sigma(a)+\sigma(b),\forall a,b\in\RR$. 

Next, we can bound $|\ell_i'^{(t)}|$ for $0\leq t\leq\tilde{t}-1$:
\begin{equation}\label{ineq: l' upper bound}
\begin{aligned}
    &|\ell_i'^{(t)}|\\
    &=\frac{1}{1+\exp\{F_{y_i}(\Wb_{y_i}^{(t)},\xb_i)-F_{-y_i}(\Wb_{-y_i}^{(t)},\xb_i)\}}\\
    &\leq\exp\{-F_{y_i}(\Wb_{y_i}^{(t)},\xb_i)+F_{-y_i}(\Wb_{-y_i}^{(t)},\xb_i)\}\\
    &\leq\exp\bigg\{-\frac{1-\gamma^{-2}c_3^{-1}R_{\min}^{-2}pn}{m}\sum_{r=1}^{m}\rho_{y_i,r,i}^{(t)}+\frac{\gamma(1-\gamma^{-3}c_3^{-1}R_{\min}^{-2}pn)}{m}\sum_{r=1}^{m}\rho_{-y_i,r,i}^{(t)}+2\beta\bigg\},
\end{aligned}
\end{equation}
where the second inequality is by \eqref{ineq: Fy lower bound} and \eqref{ineq: F-y upper bound}. And
    \begin{equation}\label{ineq: l' lower bound}
        \begin{aligned}
            &|\ell_i'^{(t)}|\\
            &=\frac{1}{1+\exp\{F_{y_i}(\Wb_{y_i}^{(t)},\xb_i)-F_{-y_i}(\Wb_{-y_i}^{(t)},\xb_i)\}}\\
            &\geq\frac{1}{1+\exp\Big\{\frac{1+\gamma^{-2}c_3^{-1}R_{\min}^{-2}pn}{m}\sum_{r=1}^{m}\rho_{y_i,r,i}^{(t)}-\frac{\gamma(1+\gamma^{-2}c_3^{-1}R_{\min}^{-2}pn)}{m}\sum_{r=1}^{m}\rho_{-y_i,r,i}^{(t)}+(\gamma+1)\beta\Big\}}\\
            &\geq\frac{1}{2}\exp\bigg\{-\frac{1+\gamma^{-2}c_3^{-1}R_{\min}^{-2}pn}{m}\sum_{r=1}^{m}\rho_{y_i,r,i}^{(t)}+\frac{\gamma(1+\gamma^{-2}c_3^{-1}R_{\min}^{-2}pn)}{m}\sum_{r=1}^{m}\rho_{-y_i,r,i}^{(t)}-(\gamma+1)\beta\bigg\},
        \end{aligned}
    \end{equation}
    where the first inequality is by \eqref{ineq: Fy upper bound} and \eqref{ineq: F-y lower bound}; the last inequality is by $1/(1+\exp(z))\geq \exp(-z)/2$ if $z\geq 0$. By \eqref{ineq: l' upper bound}, we can get for $0\leq t\leq\tilde{t}-1$ that
    \begin{align}
        &|\ell_i'^{(t)}|\leq\exp\bigg\{-\frac{1-\gamma^{-2}c_3^{-1}R_{\min}^{-2}pn}{m}\sum_{r=1}^{m}\rho_{y_i,r,i}^{(t)}+2\beta\bigg\},\label{ineq: l' zeta upper bound}\\
        &|\ell_i'^{(t)}|\leq\exp\bigg\{\frac{\gamma(1-\gamma^{-3}c_3^{-1}R_{\min}^{-2}pn)}{m}\sum_{r=1}^{m}\rho_{-y_i,r,i}^{(t)}+2\beta\bigg\}.\label{ineq: l' omega upper bound}
    \end{align}
By \eqref{ineq: l' lower bound} and $\gamma\rho_{y_i,r,i}^{(t)}\leq|\rho_{-y_i,r,i}^{(t)}|\leq\gamma^{-4}\rho_{y_i,r,i}^{(t)}$, we can get for $0\leq t\leq\tilde{t}-1$ that
\begin{align}
    |\ell_i'^{(t)}|&\geq\frac{1}{2}\exp\bigg\{-\frac{2(1+\gamma^{-2}c_3^{-1}R_{\min}^{-2}pn)}{m}\sum_{r=1}^{m}\rho_{y_i,r,i}^{(t)}-(\gamma+1)\beta\bigg\},\label{ineq: l' zeta lower bound}\\
    |\ell_i'^{(t)}|&\geq\frac{1}{2}\exp\bigg\{\frac{(\gamma^{-1}+\gamma)(1+\gamma^{-2}c_3^{-1}R_{\min}^{-2}pn)}{m}\sum_{r=1}^{m}\rho_{-y_i,r,i}^{(t)}-(\gamma+1)\beta\bigg\}\notag\\
    &\geq\frac{1}{2}\exp\bigg\{\frac{2(1+\gamma^{-2}c_3^{-1}R_{\min}^{-2}pn)}{\gamma m}\sum_{r=1}^{m}\rho_{-y_i,r,i}^{(t)}-(\gamma+1)\beta\bigg\}.\label{ineq: l' omega lower bound}
\end{align}
By \eqref{iterative equation3}, \eqref{iterative equation4} and $\sigma'\in[\gamma,1]$, we have for $0\leq t\leq\tilde{t}-1$ that
\begin{equation}\label{iterative equation5}
\begin{aligned}
    &\rho_{y_i,r,i}^{(t+1)}\leq\rho_{y_i,r,i}^{(t)}+\frac{\eta}{nm}\cdot|\ell_i'^{(t)}|\cdot\|\xb_i\|_2^2,\\
    &\rho_{y_i,r,i}^{(t+1)}\geq\rho_{y_i,r,i}^{(t)}+\frac{\gamma\eta}{nm}\cdot|\ell_i'^{(t)}|\cdot\|\xb_i\|_2^2,\\
    &|\rho_{-y_i,r,i}^{(t+1)}|\leq|\rho_{-y_i,r,i}^{(t)}|+\frac{\eta}{nm}\cdot|\ell_i'^{(t)}|\cdot\|\xb_i\|_2^2,\\
    &|\rho_{-y_i,r,i}^{(t+1)}|\geq|\rho_{-y_i,r,i}^{(t)}|+\frac{\gamma\eta}{nm}\cdot|\ell_i'^{(t)}|\cdot\|\xb_i\|_2^2.
\end{aligned}
\end{equation}
By plugging \eqref{ineq: l' zeta upper bound}, \eqref{ineq: l' zeta lower bound}, \eqref{ineq: l' omega upper bound} and \eqref{ineq: l' omega lower bound} into \eqref{iterative equation5}, we have for $0\leq t\leq\tilde{t}-1$ that
    \begin{align}
        &\sum_{r=1}^{m}\rho_{y_i,r,i}^{(t+1)}\leq\sum_{r=1}^{m}\rho_{y_i,r,i}^{(t)}+\frac{\eta\|\xb_i\|_2^2e^{2\beta}}{n}\cdot\exp\bigg\{-\frac{1-\gamma^{-2}c_3^{-1}R_{\min}^{-2}pn}{m}\sum_{r=1}^{m}\rho_{y_i,r,i}^{(t)}\bigg\},\label{zeta iterative upper bound}\\
        &\sum_{r=1}^{m}|\rho_{-y_i,r,i}^{(t+1)}|\leq\sum_{r=1}^{m}|\rho_{-y_i,r,i}^{(t)}|+\frac{\eta\|\xb_i\|_2^2e^{2\beta}}{n}\cdot\exp\bigg\{-\frac{\gamma(1-\gamma^{-3}c_3^{-1}R_{\min}^{-2}pn)}{m}\sum_{r=1}^{m}|\rho_{-y_i,r,i}^{(t)}|\bigg\},\label{omega iterative upper bound}\\
        &\sum_{r=1}^{m}\rho_{y_i,r,i}^{(t+1)}\geq\sum_{r=1}^{m}\rho_{y_i,r,i}^{(t)}+\frac{\gamma\eta\|\xb_i\|_2^2e^{-(\gamma+1)\beta}}{2n}\cdot\exp\bigg\{-\frac{2(1+\gamma^{-2}c_3^{-1}R_{\min}^{-2}pn)}{m}\sum_{r=1}^{m}\rho_{y_i,r,i}^{(t)}\bigg\},\label{zeta iterative lower bound}\\
        &\sum_{r=1}^{m}|\rho_{-y_i,r,i}^{(t+1)}|\geq\sum_{r=1}^{m}|\rho_{-y_i,r,i}^{(t)}|+\frac{\gamma\eta\|\xb_i\|_2^2e^{-(\gamma+1)\beta}}{2n}\cdot\exp\bigg\{-\frac{2(1+\gamma^{-2}c_3^{-1}R_{\min}^{-2}pn)}{\gamma m}\sum_{r=1}^{m}|\rho_{-y_i,r,i}^{(t)}|\bigg\}.\label{omega iterative lower bound}
    \end{align}
By applying Lemma~\ref{lm: useful lemma1} to \eqref{zeta iterative upper bound} and taking
\begin{equation*}
    x_t=\frac{1-\gamma^{-2}c_3^{-1}R_{\min}^{-2}pn}{m}\sum_{r=1}^{m}\rho_{y_i,r,i}^{(t)},
\end{equation*}
we can get for $0\leq t\leq \tilde{t}$ that
    \begin{equation}\label{ineq: sum zeta upbound}
    \begin{aligned}
        \frac{1}{m}\sum_{r=1}^{m}\rho_{y_i,r,i}^{(t)}&\leq c_4^{-1}\log\bigg(1+\frac{\eta\|\xb_i\|_2^2 c_4e^{2\beta}}{nm}\exp\bigg\{\frac{\eta\|\xb_i\|_2^2 c_4e^{2\beta}}{nm}\bigg\}\cdot t\bigg)\\
        &\leq c_4^{-1}\log\bigg(1+\frac{\eta\|\xb_i\|_2^2 c_4e^{2\beta}}{nm}\cdot t\bigg),
    \end{aligned}
    \end{equation}
    where $c_4:=1-\gamma^{-2}c_3^{-1}R_{\min}^{-2}pn$
    and the last inequality is by $\eta\leq(CR_{\max}^{2}/nm)^{-1}$ and $C$ is a sufficiently large constant.

By applying Lemma~\ref{lm: useful lemma1} to \eqref{omega iterative upper bound} and taking
\begin{equation*}
    x_t=\frac{\gamma(1-\gamma^{-3}c_3^{-1}R_{\min}^{-2}pn)}{m}\sum_{r=1}^{m}|\rho_{-y_i,r,i}^{(t)}|,
\end{equation*}
we can get for $0\leq t\leq \tilde{t}$ that
    \begin{equation}\label{ineq: sum omega upbound}
    \begin{aligned}
        \frac{1}{m}\sum_{r=1}^{m}|\rho_{-y_i,r,i}^{(t)}|&\leq c_{5}^{-1}\gamma^{-1}\log\bigg(1+\frac{\gamma\eta\|\xb_i\|_2^2c_5e^{2\beta}}{nm}\exp\bigg\{\frac{\gamma\eta\|\xb_i\|_2^2c_5e^{2\beta}}{nm}\bigg\}\cdot t\bigg)\\
        &\leq c_{5}^{-1}\gamma^{-1}\log\bigg(1+\frac{\gamma\eta\|\xb_i\|_2^2c_5e^{2\beta}}{nm}\cdot t\bigg),
    \end{aligned}
    \end{equation}
    where $c_5:=1-\gamma^{-3}c_3^{-1}R_{\min}^{-2}pn$ and the last inequality is by $\eta\leq (CR_{\max}^{2}/nm)^{-1}$ and $C$ is a sufficiently large constant.

By applying Lemma~\ref{lm: useful lemma2} to \eqref{zeta iterative lower bound} and taking
\begin{equation*}
    x_t=\frac{2(1+\gamma^{-2}c_3^{-1}R_{\min}^{-2}pn)}{m}\sum_{r=1}^{m}\rho_{y_i,r,i}^{(t)},
\end{equation*}
we can get
    \begin{equation}\label{ineq: sum zeta lowbound}
        \frac{1}{m}\sum_{r=1}^{m}\rho_{y_i,r,i}^{(t)}
        \geq (2c_6)^{-1}\log\bigg(1+\frac{\gamma\eta\|\xb_i\|_2^2 c_6 e^{-(\gamma+1)\beta}}{nm}\cdot t\bigg),
    \end{equation}
where $c_6:=1+\gamma^{-2}c_3^{-1}R_{\min}^{-2}pn$.

By applying Lemma~\ref{lm: useful lemma2} to \eqref{omega iterative lower bound} and taking 
    \begin{equation*}
        x_t=\frac{2(1+\gamma^{-4}c_3^{-1}R_{\min}^{-2}pn)}{\gamma m}\sum_{r=1}^{m}|\rho_{-y_i,r,i}^{(t)}|,
    \end{equation*}
    we can get
    \begin{equation}\label{ineq: sum omega lowbound}
        \frac{1}{m}\sum_{r=1}^{m}|\rho_{-y_i,r,i}^{(t)}|\geq  (2c_6)^{-1}\gamma\log\bigg(1+\frac{\eta\|\xb_i\|_2^2 c_6 e^{-(\gamma+1)\beta}}{nm}\cdot t\bigg),
    \end{equation}
where $c_6:=1+\gamma^{-2}c_3^{-1}R_{\min}^{-2}pn$.

In order to apply Lemma~\ref{lm: useful lemma4} (requiring $b>a$), we loosen the bounds in \eqref{ineq: sum zeta upbound}, \eqref{ineq: sum omega upbound}, \eqref{ineq: sum zeta lowbound} and \eqref{ineq: sum omega lowbound} as follows:
\begin{align}
    \frac{1}{m}\sum_{r=1}^{m}\rho_{y_i,r,i}^{(t)}&\leq c_4^{-1}\gamma^{-1}\log\bigg(1+\frac{\gamma\eta R_{\max}^2 c_5 e^{2\beta}}{nm}\cdot t\bigg),\forall\, 0\leq t\leq \tilde{t},\label{ineq: sum zeta logt upbound}\\
    \frac{1}{m}\sum_{r=1}^{m}|\rho_{-y_i,r,i}^{(t)}|&\leq c_{4}^{-1}\gamma^{-1}\log\bigg(1+\frac{\gamma\eta R_{\max}^2 c_5 e^{2\beta}}{nm}\cdot t\bigg),\forall\, 0\leq t\leq \tilde{t},\label{ineq: sum omega logt upbound}\\
    \frac{1}{m}\sum_{r=1}^{m}\rho_{y_i,r,i}^{(t)}&\geq (2c_6)^{-1}\gamma\log\bigg(1+\frac{\eta R_{\min}^{2} c_6 e^{-(\gamma+1)\beta}}{nm}\cdot t\bigg),\forall\, 0\leq t\leq \tilde{t},\label{ineq: sum zeta logt lowbound}\\
    \frac{1}{m}\sum_{r=1}^{m}|\rho_{-y_i,r,i}^{(t)}|&\geq (2c_6)^{-1}\gamma\log\bigg(1+\frac{\eta R_{\min}^{2} c_6 e^{-(\gamma+1)\beta}}{nm}\cdot t\bigg),\forall\, 0\leq t\leq \tilde{t},\label{ineq: sum omega logt lowbound}
\end{align}
where \eqref{ineq: sum zeta logt upbound} is by Bernoulli's inequality that $1+\gamma^{-1}x\leq(1+x)^{\gamma^{-1}}$ for every real number $0\leq r\leq 1$ and $x\geq-1$; \eqref{ineq: sum zeta logt lowbound} is by Bernoulli's inequality that $1+\gamma x\geq(1+x)^{\gamma}$ for every real number $0\leq r\leq 1$ and $x\geq-1$. If $R_{\min}^{2} c_6 e^{-(\gamma+1)\beta}\geq\gamma R_{\max}^2 c_5e^{2\beta}$, we have
\begin{equation}
\begin{aligned}
\frac{1}{m}\sum_{r=1}^{m}|\rho_{j,r,i}^{(t)}|&\geq \frac{\gamma^{2}(2c_6)^{-1}c_4}{m}\sum_{r=1}^{m}|\rho_{j',r,i'}^{(t)}|.
\end{aligned}
\end{equation}
If $R_{\min}^{2} c_6 e^{-(\gamma+1)\beta}<\gamma R_{\max}^2 c_5e^{2\beta}$, by Lemma~\ref{lm: useful lemma4}, we have
\begin{align*}
&\frac{\min\{ \frac{1}{m}\sum_{r=1}^{m}\rho_{y_i,r,i}^{(t)},\frac{1}{m}\sum_{r=1}^{m}|\rho_{-y_i,r,i}^{(t)}|\}}{\max\{\frac{1}{m}\sum_{r=1}^{m}\rho_{y_{i'},r,i'}^{(t)},\frac{1}{m}\sum_{r=1}^{m}|\rho_{-y_{i'},r,i'}^{(t)}|\}}\\
&\geq\gamma^{2}(2c_6)^{-1}c_4\cdot\frac{\log\big(1+\frac{\eta R_{\min}^{2} c_6 e^{-(\gamma+1)\beta}}{nm}\cdot t\big)}{\log\big(1+\frac{\gamma\eta R_{\max}^2 c_5e^{2\beta}}{nm}\cdot t\big)}\\
&\geq\gamma^{2}(2c_6)^{-1}c_4\cdot\frac{\log\big(1+\frac{\eta R_{\min}^{2} c_6 e^{-(\gamma+1)\beta}}{nm}\cdot T_1\big)}{\log\big(1+\frac{\gamma\eta R_{\max}^2 c_5e^{2\beta}}{nm}\cdot T_1\big)}\\
&\geq\gamma^{2}(2c_6)^{-1}c_4\cdot\frac{\log(1+R^{-2} c_6 e^{-(\gamma+1)\beta}C')}{\log(1+\gamma c_5e^{2\beta}C')}.
\end{align*}
Therefore, we can get for $0\leq t\leq \tilde{t}$ that
\begin{equation}
\begin{aligned}
    &\sum_{r=1}^{m}|\rho_{j,r,i}^{(t)}|\geq \gamma^2 c_3\sum_{r=1}^{m}|\rho_{j',r,i'}^{(t)}|,\forall j,j'\in\{\pm1\}, \forall i,i'\in[n],
\end{aligned}
\end{equation}
as long as
\begin{equation*}
    c_3\leq (2c_6)^{-1}c_4\cdot\min\bigg\{1,\frac{\log(1+R^{-2} c_6 e^{-(\gamma+1)\beta}C')}{\log(1+\gamma c_5 e^{2\beta}C')}\bigg\}.
\end{equation*}
This condition holds under the following conditions:
\begin{align*}
    &\gamma^{-3}c_3^{-1}R_{\min}^{-2}pn\leq\frac{1}{2}\quad\Longrightarrow \quad c_4,c_5\geq\frac{1}{2}, c_6\leq\frac{3}{2},\\
    &c_3=\frac{1}{6}\min\bigg\{1,\frac{\log(1+R^{-2} e^{-(\gamma+1)\beta}C')}{\log(1+\gamma e^{2\beta}C')}\bigg\}.
\end{align*}
This implies that induction hypothesis \eqref{ineq: automatic balance5} holds for $t=\tilde{t}$.
\end{proof}

\begin{lemma}[Implication of Lemma~\ref{lm: automatic balance}]\label{lm: automatic balance3}
    Under the same condition as Theorem~\ref{main theorem1}, if \eqref{ineq: automatic balance1} hold for time $t$, then we have that
    \begin{align*}
        |\rho_{j,r,i}^{(t)}|\geq  c_1 \gamma^{4}|\rho_{j',r',i'}^{(t)}|,
    \end{align*}
    where $c_1$ is the same constant as defined in Lemma \ref{lm: automatic balance}.
\end{lemma}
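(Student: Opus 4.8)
The plan is to upgrade the per-class balance \eqref{ineq: automatic balance1} to a per-neuron balance by observing that, for a fixed data index $i$, every coefficient $|\rho_{j,r,i}^{(t)}|$ is pinned between $\gamma$ and $1$ times a single $r$- and $j$-independent quantity, so all of them are comparable up to powers of $\gamma$.

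Concretely, I would first unroll the coefficient recursions of Lemma~\ref{lm: coefficient iterative} (equivalently \eqref{eq: zeta update rule}--\eqref{eq: omega update rule}). Since for each pair $(j,i)$ exactly one of the indicators $\ind(y_i = j)$, $\ind(y_i = -j)$ is active (so exactly one of $\zeta_{j,r,i}^{(t)},\omega_{j,r,i}^{(t)}$ can be nonzero) and $\ell_i'^{(s)} < 0$, one obtains the closed form
\[
|\rho_{j,r,i}^{(t)}| \;=\; \frac{\eta\|\xb_i\|_2^2}{nm}\sum_{s<t}|\ell_i'^{(s)}|\,\sigma'\!\big(\la\wb_{j,r}^{(s)},\xb_i\ra\big).
\]
Because $\sigma'(z)\in[\gamma,1]$ for leaky ReLU, if we set $A_i^{(t)}:=\frac{\eta\|\xb_i\|_2^2}{nm}\sum_{s<t}|\ell_i'^{(s)}|$ (which depends only on $i$ and $t$, not on $j$ or $r$), then $\gamma A_i^{(t)}\le |\rho_{j,r,i}^{(t)}|\le A_i^{(t)}$ for all $j\in\{\pm1\}$ and $r\in[m]$; summing over $r$ gives $m\gamma A_i^{(t)}\le\sum_{r}|\rho_{j,r,i}^{(t)}|\le m A_i^{(t)}$.

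I would then plug these two-sided bounds into the hypothesis \eqref{ineq: automatic balance1}: $m A_i^{(t)} \ge \sum_r|\rho_{j,r,i}^{(t)}| \ge c_1\gamma^2\sum_r|\rho_{j',r,i'}^{(t)}| \ge c_1\gamma^2\cdot m\gamma A_{i'}^{(t)}$, hence $A_i^{(t)}\ge c_1\gamma^3 A_{i'}^{(t)}$, and therefore $|\rho_{j,r,i}^{(t)}|\ge\gamma A_i^{(t)}\ge c_1\gamma^4 A_{i'}^{(t)}\ge c_1\gamma^4|\rho_{j',r',i'}^{(t)}|$, which is exactly the claim; the case $t=0$ is trivial since all coefficients vanish.

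There is no substantial obstacle here: once Lemma~\ref{lm: automatic balance} is available this is a short corollary, and the only points needing care are the sign-and-indicator bookkeeping that yields the clean product form for $|\rho_{j,r,i}^{(t)}|$ and keeping track of the factors of $m$ and $\gamma$ through the chain of inequalities. Combined with Lemma~\ref{lm: automatic balance}, which establishes \eqref{ineq: automatic balance1} for all $t\ge0$, this gives the per-neuron balance unconditionally, in the form later used by the stable-rank and margin analyses.
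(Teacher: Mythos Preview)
Your argument is correct and essentially identical to the paper's proof: both unroll the coefficient recursions to get $|\rho_{j,r,i}^{(t)}|$ sandwiched between $\gamma$ and $1$ times an $(r,j)$-independent quantity (your $A_i^{(t)}$), then chain this with \eqref{ineq: automatic balance1} to pick up the extra $\gamma^2$ factor. The only cosmetic difference is that you name $A_i^{(t)}$ explicitly, whereas the paper writes the equivalent inequalities $m|\rho_{j,r,i}^{(t)}|\ge\gamma\sum_{r}|\rho_{j,r,i}^{(t)}|$ and $\sum_{r}|\rho_{j',r,i'}^{(t)}|\ge m\gamma|\rho_{j',r',i'}^{(t)}|$ directly.
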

\begin{proof}[Proof of Lemma~\ref{lm: automatic balance3}]
By $\sigma'\in[\gamma,1]$, \eqref{iterative equation3} and \eqref{iterative equation4}, we have
\begin{align*}
    |\rho_{j,r,i}^{(t+1)}|&\geq|\rho_{j,r,i}^{(t)}|+\frac{\gamma\eta}{nm}\cdot|\ell_{i}'^{(t)}|\cdot\|\xb_i\|_{2}^{2},\forall j\in\{\pm1\}, \forall r\in[m],\forall i\in[n],\\
    |\rho_{j,r,i}^{(t+1)}|&\leq|\rho_{j,r,i}^{(t)}|+\frac{\eta}{nm}\cdot|\ell_{i}'^{(t)}|\cdot\|\xb_i\|_{2}^{2},\forall j\in\{\pm1\}, \forall r\in[m],\forall i\in[n].
\end{align*}
Thus, we have
\begin{align*}
    |\rho_{j,r,i}^{(t)}|&\geq\frac{\gamma\eta\|\xb_i\|_{2}^{2}}{nm}\cdot\sum_{s=1}^{t-1}|\ell_{i}'^{(t)}|,\forall j\in\{\pm1\}, \forall r\in[m],\forall i\in[n],\\
    |\rho_{j,r,i}^{(t)}|&\leq\frac{\eta\|\xb_i\|_{2}^{2}}{nm}\cdot\sum_{s=1}^{t-1}|\ell_{i}'^{(t)}|,\forall j\in\{\pm1\}, \forall r\in[m],\forall i\in[n].
\end{align*}
Therefore, $|\rho_{j,r,i}^{(t)}|\geq\gamma|\rho_{j',r',i}^{(t)}|$ for any $j,j'\in\{\pm1\}$, $r',r\in[m]$ and $i\in[n]$, and hence
\begin{equation}\label{ineq: automatic balance}
    \begin{aligned}
        &m|\rho_{j,r,i}^{(t)}|\geq\gamma\sum_{r=1}^{m}|\rho_{j,r,i}^{(t)}|,\\
        &\sum_{r=1}^{m}|\rho_{j',r,i'}^{(t)}|\geq m\gamma|\rho_{j',r',i'}^{(t)}|.
    \end{aligned}
\end{equation}
Plugging \eqref{ineq: automatic balance} back into \eqref{ineq: automatic balance1} completes the proof.
\end{proof}

\begin{lemma}\label{lm: activation leaky}
Let $T_1$ be defined in Lemma~\ref{lm: automatic balance2}. Every neuron will never change its activation pattern after time $T_1$, i.e., 
\begin{equation*}
    \sign(\la\wb_{j,r}^{(t)},\xb_i\ra)=\sign(\la\wb_{j,r}^{(T_1)},\xb_i\ra),
\end{equation*}
for any $t\geq T_1$, $j\in\{\pm1\}$ and $r\in[m]$. Moreover, it holds that
\begin{equation}
    \sign(\la\wb_{j,r}^{(t)},\xb_i\ra)=jy_i,\label{eq: activation pattern}
\end{equation}
for any $t\geq T_1$, $j\in\{\pm1\}$ and $r\in[m]$.
\end{lemma}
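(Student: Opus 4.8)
The plan is to read off the sign of $\la\wb_{j,r}^{(t)},\xb_i\ra$ directly from the data-correlated decomposition and to show that, once $t\ge T_1$, this sign is governed entirely by the coefficient $\rho_{j,r,i}^{(t)}$, whose sign is in turn pinned down by whether the first-layer class $j$ matches the label $y_i$. First I would expand, using \eqref{eq:w_decomposition},
\[
\la\wb_{j,r}^{(t)},\xb_i\ra=\la\wb_{j,r}^{(0)},\xb_i\ra+\rho_{j,r,i}^{(t)}+\sum_{i'\ne i}\rho_{j,r,i'}^{(t)}\|\xb_{i'}\|_2^{-2}\la\xb_{i'},\xb_i\ra,
\]
and bound the initialization term by $|\la\wb_{j,r}^{(0)},\xb_i\ra|\le\beta$ and the cross terms by $\sum_{i'\ne i}|\rho_{j,r,i'}^{(t)}|R_{\min}^{-2}p$, using the definitions of $\beta$, $p$, and $R_{\min}$.

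Next I would control the cross terms with the per-neuron automatic balance of Lemma~\ref{lm: automatic balance3} (available because Lemma~\ref{lm: automatic balance} holds for every $t\ge0$), which gives $|\rho_{j,r,i'}^{(t)}|\le (c_1\gamma^4)^{-1}|\rho_{j,r,i}^{(t)}|$. Together with the near-orthogonality bound \eqref{ineq: nearly-orth bound}, $\gamma^{-4}R_{\min}^{-2}np\le c$, this forces $\sum_{i'\ne i}|\rho_{j,r,i'}^{(t)}|R_{\min}^{-2}p\le\frac12|\rho_{j,r,i}^{(t)}|$ once $c$ is small enough, so that $\big|\la\wb_{j,r}^{(t)},\xb_i\ra-\rho_{j,r,i}^{(t)}\big|\le\frac12|\rho_{j,r,i}^{(t)}|+\beta$. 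It therefore suffices to show $|\rho_{j,r,i}^{(t)}|>2\beta$ for all $t\ge T_1$. For this I would invoke the lower bound \eqref{ineq: rho-t upbound1} from the proof of Lemma~\ref{lm: automatic balance2}, which at $t=T_1=C'\eta^{-1}nmR_{\max}^{-2}$ gives $|\rho_{j,r,i}^{(T_1)}|\ge\tilde{c}\gamma\eta R_{\min}^2T_1/(nm)=\tilde{c}\gamma C'R^{-2}$, and then observe that $|\rho_{j,r,i}^{(t)}|$ is nondecreasing in $t$: when $j=y_i$, $\rho_{j,r,i}^{(t)}=\zeta_{j,r,i}^{(t)}\ge0$ is nondecreasing, and when $j=-y_i$, $\rho_{j,r,i}^{(t)}=\omega_{j,r,i}^{(t)}\le0$ is nonincreasing, since $\ell_i'^{(t)}<0$ and $\sigma'\ge\gamma>0$ in \eqref{iterative equation3}--\eqref{iterative equation4}. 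Because $\beta\le c\gamma$ with $c$ taken sufficiently small relative to the fixed constants $\tilde{c},C',R$, the bound $\tilde{c}\gamma C'R^{-2}$ dominates $2\beta$, so $|\rho_{j,r,i}^{(t)}|>2\beta$ for all $t\ge T_1$.

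Combining the two displayed bounds, for every $t\ge T_1$ the quantity $\la\wb_{j,r}^{(t)},\xb_i\ra$ has the same sign as $\rho_{j,r,i}^{(t)}$; and as noted above $\rho_{j,r,i}^{(t)}$ is strictly positive for $t\ge T_1$ when $j=y_i$ and strictly negative when $j=-y_i$, i.e. $\sign(\rho_{j,r,i}^{(t)})=jy_i$. Hence $\sign(\la\wb_{j,r}^{(t)},\xb_i\ra)=jy_i$ for every $t\ge T_1$, which is in particular constant in $t$ on $[T_1,\infty)$ and so equals $\sign(\la\wb_{j,r}^{(T_1)},\xb_i\ra)$, establishing both assertions. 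I expect the one delicate point to be the quantitative comparison $|\rho_{j,r,i}^{(T_1)}|>2\beta$: one must verify that the coefficient growth accumulated by time $T_1$ (which is $\Theta(1)$ in $t$ but $\gamma$-dependent) genuinely beats both the initialization term $\beta$ and the accumulated cross-correlation error, which is exactly why $T_1$ is placed at the scale $\Theta(\eta^{-1}nmR_{\max}^{-2})$ and why the near-orthogonality hypothesis carries the extra $\gamma^{-4}$ factor; everything else is triangle-inequality bookkeeping.
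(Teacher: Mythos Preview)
Your proposal is correct and follows essentially the same approach as the paper's proof: both expand $\la\wb_{j,r}^{(t)},\xb_i\ra$ via the data-correlated decomposition, control the cross terms using the per-neuron automatic balance (Lemma~\ref{lm: automatic balance3}) together with the near-orthogonality condition, and then compare the dominant coefficient $\rho_{j,r,i}^{(t)}$ against the initialization residual $\beta$ using the $\Theta(\gamma)$ lower bound on $|\rho_{j,r,i}^{(T_1)}|$ from \eqref{ineq: rho-t upbound1} and the monotonicity of $|\rho_{j,r,i}^{(t)}|$. The only cosmetic difference is that the paper splits into the two cases $j=y_i$ and $j=-y_i$ explicitly, whereas you package the argument uniformly via $\sign(\rho_{j,r,i}^{(t)})=jy_i$; the content is identical.
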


\begin{proof}[Proof of Lemma~\ref{lm: activation leaky}]
For $j=y_i$ and $t\geq0$, we have $\omega_{j,r,i}^{(t)}=0$, and so
\begin{align*}
    \la\wb_{j,r}^{(t)},\xb_{i}\ra&=\la\wb_{j,r}^{(0)},\xb_{i}\ra+\sum_{i'=1}^{n}\rho_{j,r,i'}^{(t)}\|\xb_{i'}\|_{2}^{-2}\cdot\la\xb_{i'},\xb_{i}\ra\\
     &=\la\wb_{j,r}^{(0)},\xb_{i}\ra+\zeta_{j,r,i}^{(t)} + \sum_{i'\neq i}\rho_{j,r,i'}^{(t)}\|\xb_{i'}\|_{2}^{-2}\cdot\la\xb_{i'},\xb_{i}\ra\\
     &\geq\zeta_{j,r,i}^{(t)}-\sum_{i'\neq i}|\rho_{j,r,i'}^{(t)}|R_{\min}^{-2}p-\beta\\
     &\geq\zeta_{j,r,i}^{(t)}-\gamma^{-4}c_1^{-1}\zeta_{j,r,i}^{(t)}R_{\min}^{-2}pn-\beta\\
     &=(1-\gamma^{-4}c_1^{-1}R_{\min}^{-2}pn)\cdot\zeta_{j,r,i}^{(t)}-\beta,
\end{align*}
where the first inequality is by triangle inequality; the second inequality is by $|\rho_{j,r,i'}^{(t)}|\leq \gamma^{-4}c_1^{-1}\zeta_{y_i,r,i}^{(t)}$ from Lemma~\ref{lm: automatic balance} and Lemma~\ref{lm: automatic balance3}. 

By \eqref{ineq: rho-t upbound1}, we have for $t\geq T_1$ that
\begin{equation}
    \zeta_{y_i,r,i}^{(t)}\geq\frac{\tilde{c}\gamma\eta R_{\min}^{2}T_1}{nm}=C'\tilde{c}\gamma R_{\min}^{2}R_{\max}^{-2}.\label{ineq: zeta_T1 lowbound}
\end{equation}
Therefore, by \eqref{ineq: beta upper bound}, \eqref{ineq: nearly-orth bound} and \eqref{ineq: zeta_T1 lowbound}, we know that
\begin{equation*}
    (1-\gamma^{-1}c_1^{-4}R_{\min}^{-2}pn)\cdot\zeta_{y_i,r,i}^{(t)}>\beta,\forall\, r\in[m], i\in[n].
\end{equation*}
and thus $\sign(\la\wb_{j,r}^{(t)},\xb_{i}\ra)=1$ for any $r\in[m], i\in[n], j=y_i$. 

For $j\neq y_i$ and any $t\geq 0$, we have $\zeta_{j,r,i}^{(t)}=0$, and so
\begin{align*}
     \la\wb_{j,r}^{(t)},\xb_{i}\ra&=\la\wb_{j,r}^{(0)},\xb_{i}\ra+\sum_{i'=1}^{n}\rho_{j,r,i'}^{(t)}\|\xb_{i'}\|_{2}^{-2}\cdot\la\xb_{i'},\xb_{i}\ra\\
     &=\omega_{j,r,i}^{(t)} + \sum_{i'\neq i}\rho_{j,r,i'}^{(t)}\|\xb_{i'}\|_{2}^{-2}\cdot\la\xb_{i'},\xb_{i}\ra\\
     &\leq\omega_{j,r,i}^{(t)}+\sum_{i'\neq i}|\rho_{j,r,i'}^{(t)}|R_{\min}^{-2}p+\beta\\
     &\leq\omega_{j,r,i}^{(t)}-\gamma^{-1}c_2^{-1}\omega_{j,r,i}^{(t)}R_{\min}^{-2}pn-\beta\\
     &=(1-\gamma^{-1}c_2^{-1}R_{\min}^{-2}pn)\omega_{j,r,i}^{(t)}-\beta,
\end{align*}
where the first inequality is by triangle inequality; the second inequality is by $|\rho_{j,r,i'}^{(t)}|\leq \gamma^{-4}c_1^{-1}|\omega_{-y_i,r,i}^{(t)}|$ from Lemma~\ref{lm: automatic balance} and Lemma~\ref{lm: automatic balance3}. 

By \eqref{ineq: rho-t upbound1}, we have
\begin{equation}
    |\omega_{-y_i,r,i}^{(t)}|\geq\frac{\tilde{c}\gamma\eta R_{\min}^{2}T_1}{nm}=C'\tilde{c}\gamma R_{\min}^{2}R_{\max}^{-2}.\label{ineq: omega_T1 lowbound}
\end{equation}
Therefore, by \eqref{ineq: beta upper bound}, \eqref{ineq: nearly-orth bound} and \eqref{ineq: omega_T1 lowbound}, we know that
\begin{equation*}
    (1-\gamma^{-4}c_1^{-1}R_{\min}^{-2}pn)\cdot|\omega_{-y_i,r,i}^{(t)}|>\beta,\forall\, r\in[m], i\in[n],
\end{equation*}
and thus $\sign(\la\wb_{j,r}^{(t)},\xb_{i}\ra)=-1$ for $j\neq y_i$, which completes the proof.
\end{proof}

\section{Stable Rank of Leaky ReLU Network}\label{sec: leaky stable rank}
In this section, we consider the properties of stable
rank of the weight matrix $\Wb^{(t)}$ found by gradient descent at time $t$, defined as $\|\Wb^{(t)}\|_{F}^{2}/\|\Wb^{(t)}\|_{2}^{2}$, the square
of the ratio of the Frobenius norm to the spectral norm of $\Wb^{(t)}$. Given Lemma~\ref{lm: activation leaky}, we have following coefficient update rule for $t\geq T_1$ where $T_1$ is defined in Lemma~\ref{lm: automatic balance2}:
\begin{align}
    &\zeta_{y_i,r,i}^{(t+1)}=\zeta_{y_i,r,i}^{(t)}+\frac{\eta}{nm}\cdot|\ell_i'^{(t)}|\cdot\|\xb_i\|_2^2,\label{zetay update rule}\\
    &\omega_{-y_i,r,i}^{(t+1)}=\omega_{-y_i,r,i}^{(t)}-\frac{\gamma\eta}{nm}\cdot|\ell_i'^{(t)}|\cdot\|\xb_i\|_2^2,\label{zeta-y update rule}
\end{align}
where
\begin{equation*}
    |\ell_i'^{(t)}|=\frac{1}{1+\exp\{F_{y_i}(\Wb_{y_i}^{(t)},\xb_i)-F_{-y_i}(\Wb_{-y_i}^{(t)},\xb_i)\}}.
\end{equation*}
Based on \eqref{zetay update rule} and \eqref{zeta-y update rule}, we first introduce the following helpful lemmas.
\begin{lemma}\label{lm: bounded difference2}
Let $T_1$ be defined in Lemma~\ref{lm: activation pattern}. For any $r,r'\in[m]$, $i\in[n]$ and $t\leq T_1$,
\begin{equation}\label{ineq: |rho| upbound}
    |\zeta_{y_i,r,i}^{(t)}-\zeta_{y_i,r',i}^{(t)}|\leq C',|\omega_{-y_i,r,i}^{(t)}-\omega_{-y_i,r',i}^{(t)}|\leq C'.
\end{equation}
\end{lemma}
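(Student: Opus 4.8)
The plan is to exploit the fact that on the short time window $t\le T_1$ \emph{every} coefficient $\zeta_{y_i,r,i}^{(t)}$ and $|\omega_{-y_i,r,i}^{(t)}|$ is already of constant order, so the claimed bounds on the inter-neuron differences are immediate; the genuinely delicate control of $\|\wb_{j,r}^{(t)}-\wb_{j,r'}^{(t)}\|_2$ will be carried out separately in the regime $t\ge T_1$ using the frozen activation patterns from Lemma~\ref{lm: activation leaky}.

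First I would recall from the proof of Lemma~\ref{lm: automatic balance2} that $\omega_{y_i,r,i}^{(t)}=0$ and $\zeta_{-y_i,r,i}^{(t)}=0$ for all $t$ (equation~\eqref{ineq: off- zero}), so that $\rho_{y_i,r,i}^{(t)}=\zeta_{y_i,r,i}^{(t)}\ge 0$ and $\rho_{-y_i,r,i}^{(t)}=\omega_{-y_i,r,i}^{(t)}\le 0$. Plugging the crude bounds $\sigma'\le 1$ and $|\ell_i'^{(t)}|\le 1$ into the coefficient recursions \eqref{iterative equation3}--\eqref{iterative equation4}, exactly as in \eqref{ineq: zeta iter upbound1}--\eqref{ineq: omega iter upbound1}, and telescoping from the zero initialization yields
\[
    \zeta_{y_i,r,i}^{(t)}\le \frac{\eta R_{\max}^2}{nm}\,t,\qquad |\omega_{-y_i,r,i}^{(t)}|\le \frac{\eta R_{\max}^2}{nm}\,t,
\]
for every $r\in[m]$, $i\in[n]$ and all $t\ge 0$; this is precisely \eqref{ineq: zeta-t lowbound1}.

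Next I would substitute $t\le T_1=C'\eta^{-1}nmR_{\max}^{-2}$, which collapses the right-hand side to the constant $C'$, giving $0\le \zeta_{y_i,r,i}^{(t)}\le C'$ and $-C'\le \omega_{-y_i,r,i}^{(t)}\le 0$ for all $r\in[m]$. Since two numbers lying in a common interval of length $C'$ differ by at most $C'$ in absolute value, this immediately produces $|\zeta_{y_i,r,i}^{(t)}-\zeta_{y_i,r',i}^{(t)}|\le C'$ and $|\omega_{-y_i,r,i}^{(t)}-\omega_{-y_i,r',i}^{(t)}|\le C'$, which is the claim.

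There is essentially no obstacle here beyond bookkeeping the constant: one should note that the $C'$ appearing in the conclusion is the \emph{same} constant used to define $T_1$, so no new constant is introduced. The hard estimate deferred to the companion lemma is to show that the spread $\max_{r,r'}|\rho_{j,r,i}^{(t)}-\rho_{j,r',i}^{(t)}|$ remains bounded for \emph{all} $t\ge T_1$; there one uses that after $T_1$ the activation derivatives $\sigma'(\la\wb_{j,r}^{(t)},\xb_i\ra)$ are frozen at $\gamma$ or $1$ (Lemma~\ref{lm: activation leaky}), so the per-step increments of $\rho_{j,r,i}^{(t)}$ become identical across neurons of the same class (up to the factor $\gamma$ across classes) and the differences cease to grow.
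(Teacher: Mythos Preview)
Your proposal is correct and follows essentially the same route as the paper: both arguments bound each individual coefficient by $\eta R_{\max}^2 t/(nm)\le C'$ on $t\le T_1$ via \eqref{ineq: zeta-t lowbound1}, then observe that two nonnegative (respectively nonpositive) numbers each of size at most $C'$ differ by at most $C'$. The paper phrases the last step as $|\zeta_{y_i,r,i}^{(t)}-\zeta_{y_i,r',i}^{(t)}|\le\max\{|\zeta_{y_i,r,i}^{(t)}|,|\zeta_{y_i,r',i}^{(t)}|\}$, which is equivalent to your interval argument.
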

\begin{proof}[Proof of Lemma~\ref{lm: bounded difference2}]
By \eqref{ineq: zeta-t lowbound1}, we can get
\begin{equation*}
    |\rho_{j,r,i}^{(t)}|\leq\frac{\eta R_{\max}^{2}T_1}{nm}=C',
\end{equation*}
for $t\leq T_1$.
Notice that
\begin{align*}
    &|\zeta_{y_i,r,i}^{(t)}-\zeta_{y_i,r',i}^{(t)}|\leq\max\{|\zeta_{y_i,r,i}^{(t)}|,|\zeta_{y_i,r',i}^{(t)}|\},\\
    &|\omega_{-y_i,r,i}^{(t)}-\omega_{-y_i,r',i}^{(t)}|\leq\max\{|\omega_{-y_i,r,i}^{(t)}|,|\omega_{-y_i,r',i}^{(t)}|\},
\end{align*}
which completes the proof.
\end{proof}
\begin{lemma}\label{lm: bounded difference}
Let $T_1$ be defined in Lemma~\ref{lm: activation pattern}. For any $r,r'\in[m]$, $i\in[n]$ and $t\geq T_1$,
\begin{equation*}
    |\zeta_{y_i,r,i}^{(t)}-\zeta_{y_i,r',i}^{(t)}|\leq C',|\omega_{-y_i,r,i}^{(t)}-\omega_{-y_i,r',i}^{(t)}|\leq C'.
\end{equation*}
\end{lemma}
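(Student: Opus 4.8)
The plan is to reduce the claim for $t\geq T_1$ to the already-established bound at $t=T_1$ by observing that, once the activation pattern has frozen, the per-step increments of $\zeta_{y_i,r,i}^{(t)}$ and of $\omega_{-y_i,r,i}^{(t)}$ no longer depend on the neuron index $r$. Concretely, Lemma~\ref{lm: activation leaky} guarantees $\sign(\la\wb_{j,r}^{(t)},\xb_i\ra)=jy_i$ for every $t\geq T_1$, so $\sigma'(\la\wb_{y_i,r}^{(t)},\xb_i\ra)=1$ and $\sigma'(\la\wb_{-y_i,r}^{(t)},\xb_i\ra)=\gamma$ uniformly over $r$. Plugging this into \eqref{iterative equation3} and \eqref{iterative equation4} yields precisely the update rules \eqref{zetay update rule} and \eqref{zeta-y update rule}, in which the increment $\tfrac{\eta}{nm}\cdot|\ell_i'^{(t)}|\cdot\|\xb_i\|_2^2$ (resp. its $\gamma$-scaled version) is a quantity depending only on $i$ and $t$, not on $r$.

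First I would fix $i\in[n]$ and $r,r'\in[m]$ and subtract the update for index $r'$ from that for index $r$: for every $t\geq T_1$,
\begin{align*}
    \zeta_{y_i,r,i}^{(t+1)}-\zeta_{y_i,r',i}^{(t+1)}&=\zeta_{y_i,r,i}^{(t)}-\zeta_{y_i,r',i}^{(t)},\\
    \omega_{-y_i,r,i}^{(t+1)}-\omega_{-y_i,r',i}^{(t+1)}&=\omega_{-y_i,r,i}^{(t)}-\omega_{-y_i,r',i}^{(t)}.
\end{align*}
Hence both differences are constant in $t$ on $[T_1,\infty)$, so they equal their values at $t=T_1$. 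Then I would invoke Lemma~\ref{lm: bounded difference2}, which (holding at $t=T_1$) gives $|\zeta_{y_i,r,i}^{(T_1)}-\zeta_{y_i,r',i}^{(T_1)}|\leq C'$ and $|\omega_{-y_i,r,i}^{(T_1)}-\omega_{-y_i,r',i}^{(T_1)}|\leq C'$; combining the two observations yields the stated bound for all $t\geq T_1$.

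There is no real analytic obstacle here: the only point that needs care is making sure the activation pattern is genuinely frozen for all $t\geq T_1$ (so that the $r$-independent update rules \eqref{zetay update rule}–\eqref{zeta-y update rule} apply at every such step), which is exactly the content of Lemma~\ref{lm: activation leaky}, together with the compatibility of the threshold $T_1$ across Lemmas~\ref{lm: automatic balance2}, \ref{lm: activation pattern}, and \ref{lm: bounded difference2}. Given that, the argument is a one-line telescoping/constancy statement seeded by the $t=T_1$ base case.
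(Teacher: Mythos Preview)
Your proposal is correct and essentially identical to the paper's proof: both use the frozen activation pattern after $T_1$ (Lemma~\ref{lm: activation leaky}) to conclude that the per-step increments in \eqref{zetay update rule}--\eqref{zeta-y update rule} are $r$-independent, so the differences are constant for $t\geq T_1$, and then bound them at $t=T_1$ using the coarse bound $|\rho_{j,r,i}^{(T_1)}|\leq C'$ (which is exactly the content of Lemma~\ref{lm: bounded difference2}).
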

\begin{proof}[Proof of Lemma~\ref{lm: bounded difference}]
By \eqref{zetay update rule} and \eqref{zeta-y update rule}, we can get for any $r\in[m]$, $i\in[n]$ and $t\geq T_1$ that
    \begin{align*}
        &\zeta_{y_i,r,i}^{(t)}=\zeta_{y_i,r,i}^{(T_1)}+\frac{\eta}{nm}\sum_{s=T_1}^{t-1}|\ell_i'^{(t)}|\cdot\|\xb_i\|_2^2,\\
        &\omega_{-y_i,r,i}^{(t)}=\omega_{-y_i,r,i}^{(T_1)}+\frac{\eta}{nm}\sum_{s=T_1}^{t-1}|\ell_i'^{(t)}|\cdot\|\xb_i\|_2^2.
    \end{align*}
    Since $\zeta_{y_i,r,i}^{(t)},\zeta_{y_i,r',i}^{(t)}$ possess the same increment and $\omega_{-y_i,r,i}^{(t)},\omega_{-y_i,r',i}^{(t)}$ possess the same increment, we have
    \begin{align*}
        &\zeta_{y_i,r,i}^{(t)}-\zeta_{y_i,r',i}^{(t)}=\zeta_{y_i,r,i}^{(T_1)}-\zeta_{y_i,r',i}^{(T_1)},\\
        &\omega_{-y_i,r,i}^{(t)}-\omega_{-y_i,r',i}^{(t)}=\omega_{-y_i,r,i}^{(T_1)}-\omega_{-y_i,r',i}^{(T_1)}.
    \end{align*}
    Notice that
    \begin{equation*}
        \max_{i,r,r'}\{|\zeta_{y_i,r,i}^{(T_1)}-\zeta_{y_i,r',i}^{(T_1)}|,|\omega_{-y_i,r,i}^{(T_1)}-\omega_{-y_i,r',i}^{(T_1)}|\}\leq\max_{i,r,r'}\{|\zeta_{y_i,r,i}^{(T_1)}|,|\omega_{-y_i,r,i}^{(T_1)}|\}\leq C',
    \end{equation*}
    which completes the proof.
\end{proof}
\begin{lemma}\label{lm: same difference}
Let $T_1$ be defined in Lemma~\ref{lm: activation pattern}. For $t\geq T_1$, it holds that
    \begin{align*}
        &\zeta_{y_i,r,i}^{(t)}-\zeta_{y_i,r,i}^{(T_1)}=\zeta_{y_i,r',i}^{(t)}-\zeta_{y_i,r',i}^{(T_1)},\\
        &\omega_{-y_i,r,i}^{(t)}-\omega_{-y_i,r,i}^{(T_1)}=\omega_{-y_i,r',i}^{(t)}-\omega_{-y_i,r',i}^{(T_1)},\\&\zeta_{y_i,r,i}^{(t)}-\zeta_{y_i,r,i}^{(T_1)}=\big(|\omega_{-y_i,r',i}^{(t)}|-|\omega_{-y_i,r',i}^{(T_1)}|\big)/\gamma,
    \end{align*}
for any $i\in[n]$ and $r,r'\in[m]$.
\end{lemma}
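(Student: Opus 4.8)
\textbf{Proof proposal for Lemma~\ref{lm: same difference}.}
The plan is to observe that this lemma is essentially a bookkeeping consequence of the frozen activation pattern established in Lemma~\ref{lm: activation leaky}, together with the resulting closed-form update rules \eqref{zetay update rule} and \eqref{zeta-y update rule}. First I would invoke Lemma~\ref{lm: activation leaky}: for every $t\ge T_1$, $j\in\{\pm1\}$, $r\in[m]$ and $i\in[n]$ we have $\sign(\la\wb_{j,r}^{(t)},\xb_i\ra)=jy_i$, hence $\sigma'(\la\wb_{y_i,r}^{(t)},\xb_i\ra)=1$ and $\sigma'(\la\wb_{-y_i,r}^{(t)},\xb_i\ra)=\gamma$. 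Plugging these into \eqref{iterative equation3}, \eqref{iterative equation4} (and using $\ell_i'^{(t)}<0$) yields exactly the update rules $\zeta_{y_i,r,i}^{(t+1)}=\zeta_{y_i,r,i}^{(t)}+\tfrac{\eta}{nm}|\ell_i'^{(t)}|\,\|\xb_i\|_2^2$ and $\omega_{-y_i,r,i}^{(t+1)}=\omega_{-y_i,r,i}^{(t)}-\tfrac{\gamma\eta}{nm}|\ell_i'^{(t)}|\,\|\xb_i\|_2^2$, valid for all $t\ge T_1$.

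The key point is that the per-step increments above depend on $i$ (through $|\ell_i'^{(t)}|$ and $\|\xb_i\|_2^2$) but \emph{not} on the neuron index $r$. Telescoping from $T_1$ to $t$ therefore gives $\zeta_{y_i,r,i}^{(t)}-\zeta_{y_i,r,i}^{(T_1)}=\tfrac{\eta\|\xb_i\|_2^2}{nm}\sum_{s=T_1}^{t-1}|\ell_i'^{(s)}|$, an expression with no $r$-dependence; equating it with the same sum written for $r'$ proves the first identity, and the analogous telescoping of the $\omega$ update (which produces $\omega_{-y_i,r,i}^{(t)}-\omega_{-y_i,r,i}^{(T_1)}=-\tfrac{\gamma\eta\|\xb_i\|_2^2}{nm}\sum_{s=T_1}^{t-1}|\ell_i'^{(s)}|$) proves the second. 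For the third identity, note that by Definition~\ref{def:w_decomposition} $\omega_{-y_i,r',i}^{(t)}\le 0$ for all $t$, so $|\omega_{-y_i,r',i}^{(t)}|-|\omega_{-y_i,r',i}^{(T_1)}|=\omega_{-y_i,r',i}^{(T_1)}-\omega_{-y_i,r',i}^{(t)}=\gamma\cdot\tfrac{\eta\|\xb_i\|_2^2}{nm}\sum_{s=T_1}^{t-1}|\ell_i'^{(s)}|=\gamma\big(\zeta_{y_i,r,i}^{(t)}-\zeta_{y_i,r,i}^{(T_1)}\big)$; dividing by $\gamma$ gives the claim.

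I do not anticipate a genuine obstacle here: the substantive work was done in Lemma~\ref{lm: activation leaky} (freezing the activation pattern past $T_1$), and what remains is a telescoping argument. The only points requiring a little care are (i) confirming the update rules hold for \emph{every} $t\ge T_1$ and not just at $T_1$, which follows because Lemma~\ref{lm: activation leaky} applies at all such $t$, and (ii) keeping track of the sign convention $\omega\le 0$ so that the absolute values in the third identity come out correctly.
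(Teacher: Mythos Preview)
Your proposal is correct and follows essentially the same approach as the paper: invoke the frozen activation pattern after $T_1$ to obtain the $r$-independent update rules, telescope from $T_1$ to $t$, and read off the three identities (handling the sign of $\omega$ for the last one). There is nothing substantive to add.
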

\begin{proof}[Proof of Lemma~\ref{lm: same difference}]
By Lemma~\ref{lm: activation pattern} about the activation pattern after time $T_1$, we can get
    \begin{align}
    &\zeta_{y_i,r,i}^{(t+1)}=\zeta_{y_i,r,i}^{(t)}+\frac{\eta}{nm}\cdot|\ell_i'^{(t)}|\cdot\|\xb_i\|_2^2,\label{zetay update rule1}\\
    &\omega_{-y_i,r,i}^{(t+1)}=\omega_{-y_i,r,i}^{(t)}-\frac{\gamma\eta}{nm}\cdot|\ell_i'^{(t)}|\cdot\|\xb_i\|_2^2,\label{zeta-y update rule1}
\end{align}
for $t\geq T_1$. Recursively using \eqref{zetay update rule1} and \eqref{zeta-y update rule1} $t-T_1$ times, we can get
\begin{align*}
    &\zeta_{y_i,r,i}^{(t)}-\zeta_{y_i,r,i}^{(T_1)}=\frac{\eta\|\xb_i\|_2^2}{nm}\sum_{s=T_1}^{t-1}|\ell_i'^{(s)}|,\\
    &|\omega_{-y_i,r,i}^{(t)}|-|\omega_{-y_i,r,i}^{(T_1)}|=\frac{\gamma\eta\|\xb_i\|_2^2}{nm}\sum_{s=T_1}^{t-1}|\ell_i'^{(s)}|.
\end{align*}
This indicates that for different $r,r'\in[m]$, $\zeta_{y_i,r,i}^{(t)}-\zeta_{y_i,r,i}^{(T_1)}$ and $\zeta_{y_i,r',i}^{(t)}-\zeta_{y_i,r',i}^{(T_1)}$ are the same, whereas $\gamma(|\omega_{-y_i,r,i}^{(t)}|-|\omega_{-y_i,r,i}^{(T_1)})|$ and $\zeta_{y_i,r',i}^{(t)}-\zeta_{y_i,r',i}^{(T_1)}$ are the same, which completes the proof.
\end{proof}
Now we are ready to prove the second bullet of Theorem~\ref{main theorem1}.
\begin{lemma}\label{lm: stable rank1}
Throughout the gradient descent trajectory, the stable rank of the weights $\Wb_j$ satisfies,
    \begin{equation*}
        \lim_{t\rightarrow\infty}\frac{\|\Wb_{j}\|_{F}^{2}}{\|\Wb_{j}\|_2^2}=1,\forall j\in\{\pm1\},
    \end{equation*}
with a decreasing rate of $O\big(1/\log(t)\big)$.
\end{lemma}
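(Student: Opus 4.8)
The plan is to deduce the result from two facts about the $m$ neurons that make up $\Wb_j^{(t)}$: (i) they stay within a fixed constant distance of one another for every $t\ge0$, and (ii) their common length grows like $\log t$. Fix $j\in\{\pm1\}$. For (i), note that in the decomposition \eqref{eq:w_decomposition} the only nonzero coefficients of $\wb_{j,r}^{(t)}$ are $\zeta_{j,r,i}^{(t)}$ (when $y_i=j$) and $\omega_{j,r,i}^{(t)}$ (when $y_i=-j$), so for any $r,r'\in[m]$,
\[
\|\wb_{j,r}^{(t)}-\wb_{j,r'}^{(t)}\|_2\le R_{\min}^{-1}\sum_{i=1}^n\big|\rho_{j,r,i}^{(t)}-\rho_{j,r',i}^{(t)}\big|,
\]
and Lemmas~\ref{lm: bounded difference2} and \ref{lm: bounded difference} bound every summand by the constant $C'$, uniformly in $t$. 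Hence $\|\wb_{j,r}^{(t)}-\wb_{j,1}^{(t)}\|_2\le C'':=nC'/R_{\min}$ for all $r\in[m]$ and all $t\ge0$. For (ii), I would combine the $\Theta(\log t)$ estimates on $\tfrac1m\sum_r\rho_{y_i,r,i}^{(t)}$ from Lemma~\ref{lm: activation pattern} with the automatic-balance Lemmas~\ref{lm: automatic balance} and \ref{lm: automatic balance3} to get $|\rho_{j,r,i}^{(t)}|=\Theta(\log t)$ for every $j,r,i$; the near-orthogonality bound \eqref{ineq: nearly-orth bound} then yields $|\la\wb_{j,r}^{(t)},\xb_i\ra|\ge\tfrac12|\rho_{j,r,i}^{(t)}|-\beta$, so $\|\wb_{j,r}^{(t)}\|_2\ge|\la\wb_{j,r}^{(t)},\xb_i\ra|/R_{\max}=\Omega(\log t)$, while trivially $\|\wb_{j,r}^{(t)}\|_2\le\|\wb_{j,r}^{(0)}\|_2+R_{\min}^{-1}\sum_i|\rho_{j,r,i}^{(t)}|=O(\log t)$. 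Write $a_t:=\|\wb_{j,1}^{(t)}\|_2=\Theta(\log t)$, which $\to\infty$.

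Given (i) and (ii) the comparison is elementary. From $\|\wb_{j,r}^{(t)}\|_2\le a_t+C''$ for every $r$,
\[
\|\Wb_j^{(t)}\|_F^2=\sum_{r=1}^m\|\wb_{j,r}^{(t)}\|_2^2\le m(a_t+C'')^2.
\]
For the spectral norm, test against the unit vector $\hat\vb_t:=\wb_{j,1}^{(t)}/a_t$: once $a_t\ge C''$, the $r$-th coordinate of $\Wb_j^{(t)}\hat\vb_t$ equals $\la\wb_{j,r}^{(t)},\hat\vb_t\ra\ge a_t-\|\wb_{j,r}^{(t)}-\wb_{j,1}^{(t)}\|_2\ge a_t-C''\ge0$, so $\|\Wb_j^{(t)}\|_2^2\ge\|\Wb_j^{(t)}\hat\vb_t\|_2^2\ge m(a_t-C'')^2$. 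Therefore, using also the trivial bound $\|\Wb_j^{(t)}\|_F\ge\|\Wb_j^{(t)}\|_2$, for all $t$ with $a_t\ge2C''$,
\[
1\le\frac{\|\Wb_j^{(t)}\|_F^2}{\|\Wb_j^{(t)}\|_2^2}\le\frac{(a_t+C'')^2}{(a_t-C'')^2}=1+\frac{4C''a_t}{(a_t-C'')^2}=1+O(1/a_t)=1+O(1/\log t).
\]
Letting $t\to\infty$ gives $\lim_{t\to\infty}\|\Wb_j^{(t)}\|_F^2/\|\Wb_j^{(t)}\|_2^2=1$ with convergence rate $O(1/\log t)$, which is the claim (the argument is identical for $j=+1$ and $j=-1$).

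The only real obstacle is fact (i), and it is exactly where the activation-pattern analysis is needed. For $t\le T_1$ the differences are small merely because all coefficients grow by $O(1)$ over the finite horizon $T_1$ (Lemma~\ref{lm: bounded difference2}). For $t\ge T_1$, Lemma~\ref{lm: activation leaky} freezes the sign pattern $\sign(\la\wb_{j,r}^{(t)},\xb_i\ra)=jy_i$, so by Lemma~\ref{lm: same difference} every class-$j$ neuron receives the \emph{same} increment of $\zeta_{j,r,i}^{(t)}$ on same-class data and the same increment (up to the fixed factor $\gamma$) of $\omega_{j,r,i}^{(t)}$ on opposite-class data; hence $\wb_{j,r}^{(t)}-\wb_{j,r'}^{(t)}$ is \emph{constant} for $t\ge T_1$ and equal to its $O(1)$ value at $T_1$. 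Without this rigidity the rows of $\Wb_j^{(t)}$ could separate into genuinely different directions and the spectral norm would no longer track the Frobenius norm, so that step carries the essential content of the lemma.
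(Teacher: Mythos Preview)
Your approach is essentially the same as the paper's: show the rows of $\Wb_j^{(t)}$ stay within $O(1)$ of one another, show they all have norm $\Theta(\log t)$, then test the spectral norm against a unit vector aligned with one of the rows. The paper carries this out by splitting $\wb_{j,r}^{(t)}=\wb_{j,r}^{(0)}+\vb_{j,r}^{(t)}$ and working with the increment $\vb_{j,r}^{(t)}$, testing against $\vb_{j,1}^{(t)}/\|\vb_{j,1}^{(t)}\|_2$, but the resulting inequalities and the final $1+O(1/\log t)$ estimate are the same as yours.

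One small slip: your displayed bound $\|\wb_{j,r}^{(t)}-\wb_{j,r'}^{(t)}\|_2\le R_{\min}^{-1}\sum_i|\rho_{j,r,i}^{(t)}-\rho_{j,r',i}^{(t)}|$ omits the initialization term $\wb_{j,r}^{(0)}-\wb_{j,r'}^{(0)}$, which is nonzero (at $t=0$ your inequality would give $0$ on the right while the left is a difference of two independent Gaussians). The fix is trivial---add the constant $\|\wb_{j,r}^{(0)}\|_2+\|\wb_{j,r'}^{(0)}\|_2\le2\sqrt{3\sigma_0^2 d/2}$ to your $C''$---and the paper sidesteps the issue precisely by working with $\vb_{j,r}^{(t)}$ instead of $\wb_{j,r}^{(t)}$. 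Everything else in your argument is correct.
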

\begin{proof}[Proof of Lemma~\ref{lm: stable rank1}]
By Definition~\ref{def:w_decomposition}, we have
\begin{equation*}
    \wb_{j,r}^{(t)}=\wb_{j,r}^{(0)}+\underbrace{\sum_{i=1}^{n}\rho_{j,r,i}^{(t)}\cdot\|\xb_i\|_{2}^{-2}\cdot\xb_i}_{:=\vb_{j,r}^{(t)}}.
\end{equation*}
We first show that $\|\vb_{j,r}^{(t)}\|_{2}=\Theta(\log t)$.
\begin{align*}
    \|\vb_{j,r}^{(t)}\|_{2}^{2}&=\bigg(\sum_{i=1}^{n}\rho_{j,r,i}^{(t)}\cdot\|\xb_i\|_{2}^{-2}\cdot\xb_i\bigg)^{2}\\
    &=\sum_{i=1}^{n}(\rho_{j,r,i}^{(t)})^{2}\cdot\|\xb_i\|_{2}^{-2}+\sum_{i\neq i'}\rho_{j,r,i}^{(t)}\rho_{j,r,i'}^{(t)}\|\xb_i\|_{2}^{-2}\|\xb_{i'}\|_{2}^{-2}\la\xb_i,\xb_{i'}\ra\\
    &\geq\sum_{i=1}^{n}(\rho_{j,r,i}^{(t)})^{2}\cdot R_{\max}^{-2}-\sum_{i\neq i'}|\rho_{j,r,i}^{(t)}||\rho_{j,r,i'}^{(t)}|\cdot R_{\min}^{-4}p\\
    &\geq R_{\max}^{-2}(1-R^2  R_{\min}^{-2}c_1^{-1}\gamma^{-4}np)\sum_{i=1}^{n}(\rho_{j,r,i}^{(t)})^{2}\\
    &=\Theta(nR_{\max}^{-2}\log^2(t)),
\end{align*}
where the second last inequality is by triangle inequality; the last inequality is by $|\rho_{j,r,i}^{(t)}|\leq \gamma^{-4}c_1^{-1}|\rho_{j,r,i'}^{(t)}|$ from Lemma~\ref{lm: automatic balance} and Lemma~\ref{lm: automatic balance3}.

By the definition of $\vb_{j,r}^{(t)}$, we have
\begin{align*}
    \|\vb_{j,r}^{(t)}-\vb_{j,r'}^{(t)}\|_2^2&=\bigg\|\sum_{i=1}^{n}\rho_{j,r,i}^{(t)}\cdot\|\xb_i\|_{2}^{-2}\cdot\xb_i-\sum_{i=1}^{n}\rho_{j,r',i}^{(t)}\cdot\|\xb_i\|_{2}^{-2}\cdot\xb_i\bigg\|_2^2\\
    &=\bigg\|\sum_{i=1}^{n}(\rho_{j,r,i}^{(t)}-\rho_{j,r',i}^{(t)})\cdot\|\xb_i\|_{2}^{-2}\cdot\xb_i\bigg\|_2^2\\
    &=\sum_{i=1}^{n}(\rho_{j,r,i}^{(t)}-\rho_{j,r',i}^{(t)})^2\cdot\|\xb_i\|_{2}^{-2}+\sum_{i\neq i'}(\rho_{j,r,i}^{(t)}-\rho_{j,r',i}^{(t)})(\rho_{j,r,i'}^{(t)}-\rho_{j,r',i'}^{(t)})\frac{\la\xb_i,\xb_{i'}\ra}{\|\xb_i\|_{2}^{2}\|\xb_{i'}\|_{2}^{2}}\\
    &\leq (C')^2nR_{\min}^{-2}+(C')^2n^2R_{\min}^{-4}p\\
    &\leq 2(C')^2nR_{\min}^{-2},
\end{align*}
where the first inequality is by Lemma~\ref{lm: bounded difference2} and Lemma~\ref{lm: bounded difference}. 

Now, we are ready to estimate the stable rank of $\Wb^{(t)}$. On the one hand, for $\|\Wb_{j}^{(t)}\|_{F}^2$, we have
\begin{align*}
    \|\Wb_{j}^{(t)}\|_{F}^2&=\sum_{r}\|\wb_{j,r}^{(t)}\|_2^2\\
    &=\sum_{r}\|\wb_{j,r}^{(0)}+\vb_{j,r}^{(t)}\|_2^2\\
    &=\sum_{r}\|\wb_{j,r}^{(0)}\|_2^2+\|\vb_{j,r}^{(t)}\|_2^2+2\la\wb_{j,r}^{(0)},\vb_{j,r}^{(t)}\ra\\
    &\leq\sum_{r}\|\wb_{j,r}^{(0)}\|_2^2+(\|\vb_{j,1}^{(t)}\|_2+\|\vb_{j,r}^{(t)}-\vb_{j,1}^{(t)}\|_2)^2+2\|\wb_{j,r}^{(0)}\|_{2}(\|\vb_{j,1}^{(t)}\|_{2}+\|\vb_{j,r}^{(t)}-\vb_{j,1}^{(t)}\|_{2})\\
    &=m\|\vb_{j,1}^{(t)}\|_2^2+2\Big(\sum_{r}\|\vb_{j,r}^{(t)}-\vb_{j,1}^{(t)}\|_2+\|\wb_{j,r}^{(0)}\|_{2}\Big)\|\vb_{j,1}^{(t)}\|_2\\
    &\qquad+\Big(\sum_{r}\|\wb_{j,r}^{(0)}\|_2^2+\|\vb_{j,r}^{(t)}-\vb_{j,1}^{(t)}\|_2^2+2\|\wb_{j,r}^{(0)}\|_{2}\|\vb_{j,r}^{(t)}-\vb_{j,1}^{(t)}\|_{2}\Big)\\
    &\leq m\|\vb_{j,1}^{(t)}\|_2^2+mC_1\|\vb_{j,1}^{(t)}\|_2+mC_2.
\end{align*}
where the first inequality is by triangle inequality and Cauchy inequality; the last inequality is by Lemma~\ref{lm: bounded difference2}, Lemma~\ref{lm: bounded difference} and taking
\begin{align*}
    &C_1=3(\sigma_0\sqrt{d}+C'\sqrt{n}R_{\min}^{-1}),\\
    &C_2=2(\sigma_0\sqrt{d}+C'\sqrt{n}R_{\min}^{-1})^{2}.
\end{align*}
On the other hand, for $\|\Wb_{j}^{(t)}\|_2^2$, we have
\begin{align*}
    \|\Wb_{j}^{(t)}\|_2^2&=\max_{\xb\in S^{d-1}}\|\Wb_{j}^{(0)}\xb+\Vb_{j}^{(t)}\xb\|_2^2\\
    &=\max_{\xb\in S^{d-1}}\|\Wb_{j}^{(0)}\xb\|_2^2+\|\Vb_{j}^{(t)}\xb\|_2^2+2\la\Wb_{j}^{(0)}\xb,\Vb_{j}^{(t)}\xb\ra\\
    &=\max_{\xb\in S^{d-1}}\sum_{r}\la\wb_{j,r}^{(0)},\xb\ra^2+\sum_{r}\la\vb_{j,r}^{(t)},\xb\ra^2+\sum_{r}\la\wb_{j,r}^{(0)},\xb\ra\cdot\la\vb_{j,r}^{(t)},\xb\ra\\
    &\geq\sum_{r}\bigg\la\wb_{j,r}^{(0)},\frac{\vb_{j,1}^{(t)}}{\|\vb_{j,1}^{(t)}\|_2}\bigg\ra^2 +\sum_{r}\bigg\la\vb_{j,r}^{(t)},\frac{\vb_{j,1}^{(t)}}{\|\vb_{j,1}^{(t)}\|_2}\bigg\ra^2\\
    &\qquad +\sum_{r}\bigg\la\wb_{j,r}^{(0)},\frac{\vb_{j,1}^{(t)}}{\|\vb_{j,1}^{(t)}\|_2}\bigg\ra\cdot\bigg\la\vb_{j,r}^{(t)},\frac{\vb_{j,1}^{(t)}}{\|\vb_{j,1}^{(t)}\|_2}\bigg\ra\\
    &\geq\sum_{r}\bigg\la\vb_{j,r}^{(t)},\frac{\vb_{j,1}^{(t)}}{\|\vb_{j,1}^{(t)}\|_2}\bigg\ra^2-\sum_{r}\|\wb_{j,r}^{(0)}\|_2^2-\sum_{r}\|\wb_{j,r}^{(0)}\|_2\|\vb_{j,r}^{(t)}\|_2\\
    &\geq m\|\vb_{j,1}^{(t)}\|_2^2+2\sum_{r}\|\vb_{j,1}^{(t)}\|_2\cdot\bigg\la\vb_{j,r}^{(t)}-\vb_{j,1}^{(t)},\frac{\vb_{j,1}^{(t)}}{\|\vb_{j,1}^{(t)}\|_2}\bigg\ra+\sum_{r}\bigg\la\vb_{j,r}^{(t)}-\vb_{j,1}^{(t)},\frac{\vb_{j,1}^{(t)}}{\|\vb_{j,1}^{(t)}\|_2}\bigg\ra^2\\
    &\qquad-\sum_{r}\|\wb_{j,r}^{(0)}\|_2^2-\sum_{r}\|\wb_{j,r}^{(0)}\|_2\big(\|\vb_{j,1}^{(t)}\|_2+\|\vb_{j,r}^{(t)}-\vb_{j,1}^{(t)}\|_2\big)\\
    &\geq m\|\vb_{j,1}^{(t)}\|_2^2-\Big(\sum_{r}2\|\vb_{j,r}^{(t)}-\vb_{j,1}^{(t)}\|_2+\|\wb_{j,r}^{(0)}\|_2\Big)\cdot\|\vb_{j,1}^{(t)}\|_2\\
    &\qquad-\Big(\sum_{r}\|\wb_{j,r}^{(0)}\|_2^2+\|\wb_{j,r}^{(0)}\|_2\|\vb_{j,r}^{(t)}-\vb_{j,1}^{(t)}\|_2\Big)\\
    &\geq m\|\vb_{j,1}^{(t)}\|_2^2-mC_3\|\vb_{j,1}^{(t)}\|_2-mC_4
\end{align*}
where the first inequality is by taking $\xb=\vb_{j,1}^{(t)}/\|\vb_{j,1}^{(t)}\|_2$; the second inequality is by Cauchy inequality; the third inequality by breaking $\vb_{j,r}^{(t)}$ down into $\vb_{j,1}^{(t)}+\vb_{j,r}^{(t)}-\vb_{j,1}^{(t)}$ and then expanding the first term as well as applying triangle inequality to the last term; the fourth inequality is by Cauchy inequality; the last inequality is by Lemma~\ref{lm: bounded difference2}, Lemma~\ref{lm: bounded difference} and taking
\begin{align*}
    &C_3=1.5\sigma_0\sqrt{d}+3C'\sqrt{n}R_{\min}^{-1},\\
    &C_4=1.5\sigma_0^2d+3C'\sigma_0\sqrt{d}\sqrt{n}R_{\min}^{-1}.
\end{align*}
By leverage the upper bound of $\|\Wb_{j}^{(t)}\|_{F}^{2}$ as well as the lower bound of $\|\Wb_{j}^{(t)}\|_{2}^{2}$, we can get
\begin{align*}
    \frac{\|\Wb_{j}^{(t)}\|_F^2}{\|\Wb_{j}^{(t)}\|_2^2}&\leq\frac{\|\vb_{j,1}^{(t)}\|_2^2+C_1\|\vb_{j,1}^{(t)}\|_2+C_2}{\|\vb_{j,1}^{(t)}\|_2^2-C_3\|\vb_{j,1}^{(t)}\|_2-C_4}.
\end{align*}
Since $\|\Wb_{j}^{(t)}\|_F^2/\|\Wb_{j}^{(t)}\|_2^2\geq1$, $\|\vb_{j,1}^{(t)}\|_2=\Theta(\log t)$ and $C_1,C_2,C_3,C_4$ are constants, it follow that
\begin{equation*}
    \lim_{t\rightarrow\infty}\frac{\|\Wb_{j}^{(t)}\|_F^2}{\|\Wb_{j}^{(t)}\|_2^2}=1,
\end{equation*}
and
\begin{align*}
    \frac{\|\Wb_{j}^{(t)}\|_F^2}{\|\Wb_{j}^{(t)}\|_2^2}-1&\leq\frac{(C_1+C_3)\|\vb_{j,1}^{(t)}\|_2+(C_2+C_4)}{\|\vb_{j,1}^{(t)}\|_2^2-C_3\|\vb_{j,1}^{(t)}\|_2-C_4}\\
    &\preceq\frac{C_1+C_3}{\|\vb_{j,1}^{(t)}\|_2}\\
    &\leq\frac{6(C'\sqrt{n}R_{\min}^{-1}+\sigma_0\sqrt{d})}{\|\vb_{j,1}^{(t)}\|_2}\\
    &=\Theta\Big(\frac{\sqrt{n}R_{\min}^{-1}+\sigma_0\sqrt{d}}{\sqrt{n}R_{\max}^{-1}\log(t)}\Big)=\Theta\Big(\frac{1+\sigma_0\sqrt{d/n}R_{\max}}{\log(t)}\Big),
\end{align*}
which completes the proof. 
\end{proof}

\section{Coefficient Analysis of ReLU}
In this section, we discuss the stable rank of two-layer ReLU neural network, which is defined as
\begin{equation}\label{def: two-layer ReLU nn}
\begin{aligned}
    &f(\Wb,\xb)=F_{+1}(\Wb_{+1},\xb)-F_{+1}(\Wb_{+1},\xb),\\
    &F_{j}(\Wb_{j},\xb)=\frac{1}{m}\sum_{r=1}^{m}\sigma(\la\wb_{j,r},\xb\ra),
\end{aligned}
\end{equation}
where $\sigma(z)=\max\{0,z\}$ is ReLU activation function. 

These results are based on the conclusions in Section~\ref{sec: initial}, which hold with high probability. Denote by  $\cE_{\mathrm{prelim}}'$ the event that all the results in Section~\ref{sec: initial} hold (for a given $\delta$, we see $\PP (\cE_{\mathrm{prelim}}') \geq 1 - 2\delta$ by a union bound). For simplicity and clarity, we state all the results in this and the following sections conditional on $\cE_{\mathrm{prelim}}'$. 

Denote $\beta=\max_{i,j,r}\{|\la\wb_{j,r}^{(0)},\xb_{i}\ra|\}$, $R_{\max}=\max_{i\in[n]}\|\xb_i\|_{2}$, $R_{\min}=\min_{i\in[n]}\|\xb_i\|_{2}$, $p=\max_{i\neq k}|\la\xb_i,\xb_k\ra|$ and suppose $R=R_{\max}/R_{\min}$ is at most an absolute constant. Here we list the exact conditions for $\eta,\sigma_0, R_{\min}, R_{\max}, p$ required by the proofs in this section:
\begin{align}
    &\sigma_0\leq\big(CR_{\max}\sqrt{\log(mn/\delta)}\big)^{-1},\label{cond: sigma_0 1}\\
    &\eta\leq (CR_{\max}^{2}/nm)^{-1},\\
    &R_{\min}^{2}\geq CR^2np,\label{cond: nearly-orth 1}
\end{align}
where $C$ is a large enough constant. By Lemma~\ref{lm: initialization inner products}, we can upper bound $\beta$ by $2\sqrt{ \log(12 mn/\delta)}\cdot \sigma_0 R_{\max}$. Then, by \eqref{cond: sigma_0} and \eqref{cond: nearly-orth}, it is straightforward to verify the following inequality:
\begin{align}
    &\beta\leq c,\label{ineq: beta upper bound 1}\\
    & R_{\min}^{-2} np \leq c,\label{ineq: nearly-orth bound 3}\\
    & R_{\min}^{-2}R^2 np \leq c,\label{ineq: nearly-orth bound 4}
\end{align}
where $c$ is a sufficiently small constant.

We first introduce the following lemma which characterizes the increasing rate of coefficients $\rho_{j,r,i}^{(t)}$.

\begin{lemma}\label{lm: zeta log rate increase}
For two-layer ReLU neural network defined in \eqref{def: two-layer ReLU nn}, under the same condition as Theorem~\ref{main theorem2}, the decomposition coefficients $\rho_{j,r,i}^{(t)}$ satisfy following properties:
\begin{itemize}[leftmargin=*]
\item $\zeta_{y_i,r,i}^{(t)}\geq c_1|\rho_{j,r',i'}^{(t)}|$ for any $r\in S_{i}^{(0)}$, $r'\in[m]$, $j\in\{\pm 1\}$ and $i,i'\in[n]$,
\item $\zeta_{y_i,r,i}^{(t)}\geq c_2\log\Big(1+\frac{\eta|S_i^{(0)}|\|\xb_i\|_2^2 e^{-\beta}}{2nm^2}\cdot t\Big)$ for any $r\in S_{i}^{(0)}$ and $i\in[n]$,
\item $\zeta_{y_i,r,i}^{(t)}\leq c_3\log\Big(1+\frac{2\eta|S_i^{(0)}|\|\xb_i\|_2^2 e^{2\beta}}{nm^2}\cdot t\Big)$ for any $r\in S_{i}^{(0)}$ and $i\in[n]$,
\end{itemize}
where $c_1,c_2,c_3$ are constants.
And the following activation pattern is also observed: $S_{i}^{(0)}\subseteq S_{i}^{(t)}$ where $S_{i}^{(t)}:=\{r\in[m]:\la\wb_{y_i,r}^{(t)},\xb_i\ra\geq 0\}$, i.e., the on-diagonal neuron activated at initialization will remain activated throughout the training.
\end{lemma}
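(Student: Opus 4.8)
The plan is to prove all four assertions simultaneously by strong induction on $t$, following the two-phase template already worked out for leaky ReLU (Lemmas~\ref{lm: automatic balance2}, \ref{lm: activation pattern}, \ref{lm: activation leaky}, \ref{lm: automatic balance3}), with two facts available unconditionally throughout. First, from \eqref{iterative equation3}--\eqref{iterative equation4}, $\sigma'\in\{0,1\}$ and $\ell_i'^{(s)}<0$, every coefficient obeys $0\le|\rho_{j,r',i'}^{(t)}|\le\frac{\eta\|\xb_{i'}\|_2^2}{nm}\sum_{s=0}^{t-1}|\ell_{i'}'^{(s)}|=:\zeta_{i'}^{(t)}$. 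Second, if $r\in S_i^{(0)}$ and $r\in S_i^{(s)}$ for all $s<t$ (the activation hypothesis), then $\zeta_{y_i,r,i}^{(t)}=\zeta_i^{(t)}$ exactly, i.e.\ the same value for every $r\in S_i^{(0)}$, while for \emph{any} $r$ one always has $\zeta_{y_i,r,i}^{(t)}\le\zeta_i^{(t)}$ (since $\sigma'\le1$). Thus the balance reduces to proving $\zeta_i^{(t)}\ge c_1\zeta_{i'}^{(t)}$ for all $i,i'$, and the whole task is to pin down the growth of $\zeta_i^{(t)}$; note $|S_i^{(0)}|\in[0.4m,0.6m]$ by Lemma~\ref{lm: number of initial activated neurons}, so the prefactor $m/|S_i^{(0)}|$ is $\Theta(1)$.

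\emph{Phase 1 ($0\le t\le T_1$, $T_1=C'\eta^{-1}nmR_{\max}^{-2}$).} Here $|\rho_{j,r',i'}^{(t)}|\le\eta R_{\max}^2 t/(nm)\le C'$, so after bounding cross terms via \eqref{ineq: nearly-orth bound 3}--\eqref{ineq: nearly-orth bound 4} one gets $|F_j(\Wb_j^{(t)},\xb_i)|=O(1)$, hence $|\ell_i'^{(t)}|\ge\tilde c$ for a fixed constant $\tilde c$. For $r\in S_i^{(0)}$, the decomposition $\la\wb_{y_i,r}^{(t)},\xb_i\ra=\la\wb_{y_i,r}^{(0)},\xb_i\ra+\zeta_{y_i,r,i}^{(t)}+(\text{cross})$ stays positive because $\la\wb_{y_i,r}^{(0)},\xb_i\ra>0$, $\zeta_{y_i,r,i}^{(t)}\ge0$, and the cross term is a small multiple of $\zeta_{y_i,r,i}^{(t)}$ once one uses the $t$-proportional lower bound $\zeta_{y_i,r,i}^{(t)}\ge\tilde c\eta R_{\min}^2 t/(nm)$ against the $t$-proportional upper bound on the off-diagonal coefficients — exactly the mechanism in the proof of Lemma~\ref{lm: automatic balance2}. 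Combining these two $t$-proportional bounds yields the balance with $c_1\le\tilde c R^{-2}$ on $[0,T_1]$, which is the base case for Phase 2.

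\emph{Phase 2 ($t\ge T_1$).} Assume the balance and $S_i^{(0)}\subseteq S_i^{(s)}$ for all $s\le t-1$; we establish the two log bounds, then the balance, then the activation pattern, at time $t$ — in that order, which keeps the step non-circular. Using \eqref{eq:w_decomposition}, the induction hypotheses and near-orthogonality: $F_{-y_i}(\Wb_{-y_i}^{(s)},\xb_i)\in[0,\,O(1)+\text{small}\cdot\zeta_i^{(s)}]$ because $\omega_{-y_i,r,i}^{(s)}\le0$ and ReLU outputs are nonnegative; and $\tfrac{|S_i^{(0)}|}{m}\zeta_i^{(s)}-\beta-\text{small}\cdot\zeta_i^{(s)}\le F_{y_i}(\Wb_{y_i}^{(s)},\xb_i)\le(1+\text{small})\zeta_i^{(s)}+\beta$, the lower bound by restricting the sum to $S_i^{(0)}$, the upper bound by using $\zeta_{y_i,r,i}^{(s)}\le\zeta_i^{(s)}$ for \emph{every} $r$ so that the leading constant is $1$. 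Hence the margin lies between $(\tfrac{|S_i^{(0)}|}{m}-\text{small})\zeta_i^{(s)}-O(1)$ and $(1+\text{small})\zeta_i^{(s)}+O(1)$, so $|\ell_i'^{(s)}|$ is sandwiched between multiples of $\exp\!\big(-(1+\text{small})\zeta_i^{(s)}-\beta\big)$ and $\exp\!\big(-(\tfrac{|S_i^{(0)}|}{m}-\text{small})\zeta_i^{(s)}+O(1)\big)$. Plugging into $\zeta_i^{(s+1)}-\zeta_i^{(s)}=\frac{\eta\|\xb_i\|_2^2}{nm}|\ell_i'^{(s)}|$ and setting $x_s=(1+\text{small})\zeta_i^{(s)}$ (resp.\ $x_s=(\tfrac{|S_i^{(0)}|}{m}-\text{small})\zeta_i^{(s)}$) puts each recursion in the form of Lemma~\ref{lm: auxiliary1} after a Bernoulli's-inequality step to absorb the $(1\pm\text{small})$ multipliers exactly as in the proof of Lemma~\ref{lm: activation pattern}, yielding the claimed lower bound with $c_2=1/(1+\text{small})$ and upper bound with $c_3=m/|S_i^{(0)}|$ and the stated rate constants. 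The balance at time $t$ then follows from $|\rho_{j,r',i'}^{(t)}|\le\zeta_{i'}^{(t)}$ by comparing the two log bounds for $i$ and $i'$: their rate constants differ only by a bounded factor (using $|S_\cdot^{(0)}|\in[0.4m,0.6m]$, $R=O(1)$ and $\beta=O(1)$ from \eqref{ineq: beta upper bound 1}), and $\log(1+at)/\log(1+bt)$ on $[T_1,\infty)$ is bounded below by its value at $t=T_1$ (which is $\Theta(1)$) or by its limit $1$, giving $\zeta_i^{(t)}\ge c_1\zeta_{i'}^{(t)}$. Finally, for $r\in S_i^{(0)}$, $\la\wb_{y_i,r}^{(t)},\xb_i\ra\ge\la\wb_{y_i,r}^{(0)},\xb_i\ra+(1-c_1^{-1}nR_{\min}^{-2}p)\zeta_i^{(t)}>0$ by the freshly established balance and $nR_{\min}^{-2}p\ll c_1$, closing the induction.

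\emph{Main obstacle.} The genuine departure from the leaky case is the upper bound on $F_{y_i}$: unlike leaky ReLU, a neuron $r\notin S_i^{(0)}$ can eventually turn on for $\xb_i$ (once its off-diagonal cross term, which grows like $\log t$, overcomes the $O(\beta)$-small negative initialization inner product) and then contribute to $F_{y_i}$. The resolution is that such a neuron still satisfies $\zeta_{y_i,r,i}^{(s)}\le\zeta_i^{(s)}$ — it can only accumulate at the common rate, and it starts later — so $F_{y_i}(\Wb_{y_i}^{(s)},\xb_i)\le(1+\text{small})\zeta_i^{(s)}+O(1)$ with leading constant still $1$; these neurons cost only a constant factor in $c_2,c_3$, not a blow-up. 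The delicate part is therefore the bookkeeping of the simultaneous induction — deriving all the logarithmic bounds at time $t$ from the hypotheses \emph{before} invoking the balance at time $t$, and only afterwards the activation pattern — rather than any single hard estimate.
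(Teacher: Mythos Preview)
Your proposal is correct and follows essentially the same two-phase strong induction as the paper's proof, with the same ordering of the inductive step (derive the log bounds at time $t$ from the hypotheses at $s\le t-1$, then the balance, then the activation inclusion). The only cosmetic difference is that you track the scalar $\zeta_i^{(t)}=\frac{\eta\|\xb_i\|_2^2}{nm}\sum_{s<t}|\ell_i'^{(s)}|$ directly while the paper tracks $\sum_{r\in S_i^{(0)}}\zeta_{y_i,r,i}^{(t)}=|S_i^{(0)}|\zeta_i^{(t)}$, which makes your upper bound on $F_{y_i}$ slightly cleaner (leading coefficient $1$ rather than $c'\le 2.5$); the Bernoulli step you mention is not actually needed here and the paper does not use it in this lemma.
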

\begin{proof}[Proof of Lemma~\ref{lm: zeta log rate increase}]
We first show that the first bullet and $S_{i}^{(0)}\subseteq S_{i}^{(t)}$ hold for $t\leq T_1=C\eta^{-1}nmR_{\max}^{-2}$ where $C=\Theta(1)$ is a constant. Now we prove this by induction. When $t=0$, the two hypotheses hold naturally. Suppose that there exists time $\tilde{t}\leq T_1$ such that the two hypotheses hold for all time $t\leq \tilde{t}-1$. We aim to prove they also hold for $t=\tilde{t}$. Recall from Lemma~\ref{lm: coefficient iterative} that 
\begin{align*}
    &\zeta_{j,r,i}^{(t+1)} = \zeta_{j,r,i}^{(t)} - \frac{\eta}{nm} \cdot \ell_i'^{(t)}\cdot \sigma'(\la\wb_{j,r}^{(t)}, \xb_{i}\ra) \cdot \| \xb_i \|_2^2 \cdot \ind(y_{i} = j), \\
    &\omega_{j,r,i}^{(t+1)} = \omega_{j,r,i}^{(t)} + \frac{\eta}{nm} \cdot \ell_i'^{(t)}\cdot \sigma'(\la\wb_{j,r}^{(t)}, \xb_{i}\ra) \cdot \| \xb_i \|_2^2 \cdot \ind(y_{i} = -j),
\end{align*}
we can get
\begin{align}
    \zeta_{j,r,i}^{(t+1)}&\leq \zeta_{j,r,i}^{(t)}+\frac{\eta}{nm}\cdot\|\xb_i\|_2^2\leq \zeta_{j,r,i}^{(t)}+\frac{\eta R_{\max}^{2}}{nm},\label{ineq: zeta iter upbound}\\
    |\omega_{j,r,i}^{(t+1)}|&\leq|\omega_{j,r,i}^{(t)}|+\frac{\eta}{nm}\cdot\|\xb_i\|_2^2\leq|\omega_{j,r,i}^{(t)}|+\frac{\eta R_{\max}^{2}}{nm}\label{ineq: omega iter upbound}.
\end{align}
Therefore, $\max_{j,r,i}\{\zeta_{j,r,i}^{(t)},|\omega_{j,r,i}^{(t)}|\}=O(1)$ for any $t\leq T_1$ and hence $\max_{i}\{F_{+1}(\Wb_{+1}^{(t)},\xb_i),F_{-1}(\Wb_{-1}^{(t)},\xb_i)\}=O(1)$ for any $t\leq T_1$. Thus there exists a positive constant $c$ such that $|\ell_i'^{(t)}|\geq c$ for any $t\leq T_1$. By induction hypothesis, we have $S_{i}^{(0)}\subseteq S_{i}^{(t)}$ for all $0\leq t\leq \tilde{t}-1$ and hence $\sigma'(\la\wb_{y_i,r}^{(t)},\xb_i\ra)=1$ for all $0\leq t\leq \tilde{t}-1$. And it follows that for $r\in S_{i}^{(0)}$
\begin{align}
    &\zeta_{y_i,r,i}^{(t+1)}= \zeta_{y_i,r,i}^{(t)}+\frac{\eta}{nm}\cdot|\ell_i'^{(t)}|\cdot\|\xb_i\|_2^2\geq \zeta_{y_i,r,i}^{(t)}+\frac{c\eta}{nm}\cdot\|\xb_i\|_2^2,\forall\, 0\leq t\leq \tilde{t}-1,\notag\\
    &\zeta_{y_i,r,i}^{(\tilde{t})}\geq \frac{c\eta\tilde{t}}{nm}\cdot\|\xb_i\|_2^2\geq\frac{c\eta R_{\min}^{2}\tilde{t}}{nm}.\label{ineq: zeta-t lowbound}
\end{align}
On the other hand, by \eqref{ineq: zeta iter upbound} and \eqref{ineq: omega iter upbound}, we have
\begin{equation}
    \zeta_{j,r',i'}^{(\tilde{t})}\leq \frac{\eta R_{\max}^{2}\tilde{t}}{nm},\,|\omega_{j,r',i'}^{(\tilde{t})}|\leq\frac{\eta R_{\max}^{2}\tilde{t}}{nm}\Longrightarrow |\rho_{j,r',i'}^{(\tilde{t})}|\leq\frac{\eta R_{\max}^{2}\tilde{t}}{nm}.\label{ineq: rho-t upbound}
\end{equation}
Dividing \eqref{ineq: zeta-t lowbound} by \eqref{ineq: rho-t upbound}, we can get
\begin{equation}
    \frac{\zeta_{y_i,r,i}^{(\tilde{t})}}{|\rho_{j,r',i'}^{(\tilde{t})}|}\geq\frac{cR_{\min}^{2}}{R_{\max}^{2}},\forall\, r\in S_{i}^{(0)},j\in\{\pm 1\}, i,i'\in[n],\label{ineq: zeta/rho bound}
\end{equation}
which indicates that the first bullet holds for time $t=\tilde{t}$ as long as $c_1\leq(cR_{\min}^{2})/R_{\max}^{2}$. For $r\in S_{i}^{(0)}$, we have
\begin{align*}
    \la\wb_{y_i,r}^{(\tilde{t})},\xb_i\ra&=\la\wb_{y_i,r}^{(0)},\xb_i\ra+\sum_{i'=1}^{n}\rho_{y_i,r,i'}^{(\tilde{t})}\| \xb_{i'} \|_{2}^{-2} \cdot \la \xb_{i'}, \xb_{i} \ra\\
    &=\la\wb_{y_i,r}^{(0)},\xb_i\ra+\zeta_{y_i,r,i}^{(\tilde{t})}+\sum_{i'\neq i}\rho_{y_i,r,i'}^{(\tilde{t})}\| \xb_{i'} \|_{2}^{-2} \cdot \la \xb_{i'}, \xb_{i}\ra\\
    &\geq\zeta_{y_i,r,i}^{(\tilde{t})}-\sum_{i'\neq i}|\rho_{y_i,r,i'}^{(\tilde{t})}|R_{\min}^{-2}p\\
    &\geq\zeta_{y_i,r,i}^{(\tilde{t})}-\sum_{i'\neq i}\frac{R_{\max}^{2}}{cR_{\min}^{2}}\zeta_{y_i,r,i}^{(\tilde{t})}\cdot R_{\min}^{-2}p\\
    &\geq\Big(1-\frac{R_{\max}^{2}}{cR_{\min}^{4}}pn\Big)\cdot\zeta_{y_i,r,i}^{(\tilde{t})}\geq0,
\end{align*}
where the second inequality is by \eqref{ineq: zeta/rho bound}. This implies that $S_{i}^{(0)}\subseteq S_{i}^{(t)}$ holds for time $t=\tilde{t}$, which completes the induction. By then, we have already proved that the first bullet and $S_{i}^{(0)}\subseteq S_{i}^{(t)}$ hold for $t\leq T_1=C\eta^{-1}nmR_{\max}^{-2}$.

Next, we will prove by induction that the three bullets as well as $S_{i}^{(0)}\subseteq S_{i}^{(t)}$ hold for any time $t\geq 0$. The second and third bullets are obvious at $t=0$ as all the coefficients are zero. Suppose there exists $\tilde{t}$ such that the three bullets as well as $S_{i}^{(0)}\subseteq S_{i}^{(t)}$ hold for all time $0\leq t\leq\tilde{t}-1$. We aim to prove that they also hold for $t=\tilde{t}$. We first prove that the second and third bullets hold for $t=\tilde{t}$. To prove this, we first provide more precise upper and lower bounds for $|\ell_i'^{(t)}|$. For lower bound, we have
\begin{align}
    |\ell_i^{(t)}|&=\frac{1}{1+\exp\big\{F_{y_i}(\Wb_{y_i}^{(t)},\xb_i)-F_{-y_i}(\Wb_{-y_i}^{(t)},\xb_i)\big\}}\notag\\
    &\geq\frac{1}{1+\exp\big\{F_{y_i}(\Wb_{y_i}^{(t)},\xb_i)\big\}}\notag\\
    &=\frac{1}{1+\exp\{\frac{1}{m}\sum_{r\in S_{i}^{(t)}}\la\wb_{y_i,r}^{(t)},\xb_i\ra\}}\label{ineq: logit upbound}
\end{align}
and
\begin{align}
    \sum_{r\in S_{i}^{(t)}}\la\wb_{y_i,r}^{(t)},\xb_i\ra&=\sum_{r\in S_{i}^{(t)}}\Big(\la\wb_{y_i,r}^{(0)},\xb_i\ra+\zeta_{y_i,r,i}^{(t)}+\sum_{i'\neq i}\rho_{y_i,r,i'}^{(t)}\| \xb_{i'} \|_{2}^{-2} \cdot \la \xb_{i'}, \xb_{i}\ra\Big)\notag\\
    &\leq\sum_{r\in S_{i}^{(t)}}\zeta_{y_i,r,i}^{(t)}+\sum_{r\in S_{i}^{(t)}}\sum_{i'\neq i}|\rho_{y_i,r,i'}^{(t)}|R_{\min}^{-2}p+|S_{i}^{(t)}|\cdot\beta\notag\\
    &\leq\sum_{r\in S_{i}^{(t)}}\zeta_{y_i,r,i}^{(t)}+\frac{|S_{i}^{(t)}|}{c_1|S_{i}^{(0)}|}\sum_{r\in S_{i}^{(0)}}\zeta_{y_i,r,i'}^{(t)}R_{\min}^{-2}pn+|S_{i}^{(t)}|\cdot\beta\notag\\
    &\leq\frac{|S_i^{(t)}|}{|S_i^{(0)}|}\sum_{r\in S_i^{(0)}}\zeta_{y_i,r,i}^{(t)}+\frac{|S_{i}^{(t)}|}{c_1|S_{i}^{(0)}|}\sum_{r\in S_{i}^{(0)}}\zeta_{y_i,r,i'}^{(t)}R_{\min}^{-2}pn+|S_{i}^{(t)}|\cdot\beta\notag\\
    &\leq c'(1+R_{\min}^{-2}pn/c_1)\sum_{r\in S_i^{(0)}}\zeta_{y_i,r,i}^{(t)}+|S_{i}^{(t)}|\cdot\beta,\label{ineq: sum innerproduct upbound}
\end{align}
where the first inequality is by triangle inequality; the second inequality is by the first induction hypothesis that $\zeta_{y_i,r,i}^{(t)}\geq c_1|\rho_{y_i,r',i'}^{(t)}|$ for $r\in S_{i}^{(0)}$ and $0\leq t\leq\tilde{t}-1$ and hence $|\rho_{y_i,r',i'}^{(t)}|\leq\frac{1}{c_1|S_{i}^{(0)}|}\sum_{r\in S_{i}^{(0)}}\zeta_{y_i,r,i}^{(t)}$; the third inequality is by
\begin{align*}
    &\zeta_{y_i,r',i}^{(t)}=\frac{\eta}{nm}\sum_{s=0}^{t-1}|\ell_i'^{(s)}|\cdot\sigma'(\la\wb_{y_i,r'}^{(s)},\xb_i\ra)\cdot\|\xb_i\|_2^2\leq\frac{\eta}{nm}\sum_{s=0}^{t-1}|\ell_i'^{(s)}|\cdot\|\xb_i\|_2^2,\\
    &\zeta_{y_i,r,i}^{(t)}=\frac{\eta}{nm}\sum_{s=0}^{t-1}|\ell_i'^{(s)}|\cdot\|\xb_i\|_2^2,
\end{align*}
and hence $\zeta_{y_i,r',i}^{(t)}\leq\zeta_{y_i,r,i}^{(t)},\forall r'\in S_{i}^{(t)}\setminus S_{i}^{(0)}, r\in S_{i}^{(0)}$ for $0\leq t\leq\tilde{t}-1$; the last inequality is by $|S_i^{(t)}|\leq m \leq c'|S_i^{(0)}|$ and $c'$ can be taken as $2.5$ by Lemma~\ref{lm: number of initial activated neurons}. By plugging \eqref{ineq: sum innerproduct upbound} back into \eqref{ineq: logit upbound}, we can get
\begin{equation}
\begin{aligned}\label{ineq: logit-zeta lowbound}
    |\ell_i^{(t)}|&\geq\frac{1}{1+\exp\Big\{\frac{c'(1+R_{\min}^{-2}pn/c_1)}{m}\sum_{r\in S_i^{(0)}}\zeta_{y_i,r,i}^{(t)}+\frac{|S_{i}^{(t)}|}{m}\cdot\beta\Big\}}\\
    &\geq\frac{1}{1+\exp\Big\{\frac{c'(1+R_{\min}^{-2}pn/c_1)}{m}\sum_{r\in S_i^{(0)}}\zeta_{y_i,r,i}^{(t)}+\beta\Big\}}\\
    &\geq\frac{1}{2}\exp\bigg\{-\frac{c'(1+R_{\min}^{-2}pn/c_1)}{m}\sum_{r\in S_i^{(0)}}\zeta_{y_i,r,i}^{(t)}-\beta\bigg\}, \forall\, 0\leq t\leq\tilde{t}-1.
\end{aligned}
\end{equation}
For upper bound of $|\ell_i'^{(t)}|$, we first bound $F_{y_i}(\Wb_{y_i}^{(t)},\xb_i)-F_{-y_i}(\Wb_{-y_i}^{(t)},\xb_i)$ as follows:
\begin{align*}
    &F_{y_i}(\Wb_{y_i}^{(t)},\xb_i)-F_{-y_i}(\Wb_{-y_i}^{(t)},\xb_i)\\
    &\geq\frac{1}{m}\bigg(\sum_{r\in S_{i}^{(t)}}\zeta_{y_i,r,i}^{(t)}-\sum_{r\in S_{i}^{(t)}}\sum_{i'\neq i}|\rho_{y_i,r,i'}^{(t)}|R_{\min}^{-2}p-|S_{i}^{(t)}|\cdot\beta\bigg)-\frac{1}{m}\sum_{r=1}^{m}\Big(\beta+\sum_{i'\neq i}|\rho_{-y_i,r,i'}^{(t)}|R_{\min}^{-2}p\Big)\\
    &\geq\frac{1}{m}\sum_{r\in S_{i}^{(t)}}\zeta_{y_i,r,i}^{(t)}-\frac{|S_i^{(t)}|}{c_1m|S_i^{(0)}|}\sum_{r\in S_i^{(0)}}\zeta_{y_i,r,i}^{(t)}R_{\min}^{-2}pn-\frac{1}{c_1|S_i^{(0)}|}\sum_{r\in S_i^{(0)}}\zeta_{y_i,r,i}^{(t)}R_{\min}^{-2}pn-2\beta\\
    &\geq\frac{1}{m}\sum_{r\in S_{i}^{(t)}}\zeta_{y_i,r,i}^{(t)}-\frac{2}{c_1|S_i^{(0)}|}\sum_{r\in S_i^{(0)}}\zeta_{y_i,r,i}^{(t)}R_{\min}^{-2}pn-2\beta\\
    &\geq\frac{1}{m}\sum_{r\in S_{i}^{(0)}}\zeta_{y_i,r,i}^{(t)}-\frac{2c'R_{\min}^{-2}pn}{c_1m}\sum_{r\in S_{i}^{(0)}}\zeta_{y_i,r,i}^{(t)}-2\beta\\
    &=\frac{1-2c'R_{\min}^{-2}pn/c_1}{m}\sum_{r\in S_{i}^{(0)}}\zeta_{y_i,r,i}^{(t)}-2\beta,
\end{align*}
where the first inequality is by
\begin{align*}
    F_{y_i}(\Wb_{y_i}^{(t)},\xb_i)&=\frac{1}{m}\sum_{r\in S_{i}^{(t)}}\la\wb_{y_i,r}^{(t)},\xb_i\ra\\
    &=\frac{1}{m}\sum_{r\in S_{i}^{(t)}}\Big(\la\wb_{y_i,r}^{(0)},\xb_i\ra+\zeta_{y_i,r,i}^{(t)}+\sum_{i'\neq i}\rho_{y_i,r,i'}^{(t)}\| \xb_{i'} \|_{2}^{-2} \cdot \la \xb_{i'}, \xb_{i}\ra\Big)\\
    &\geq\frac{1}{m}\bigg(\sum_{r\in S_{i}^{(t)}}\zeta_{y_i,r,i}^{(t)}-\sum_{r\in S_{i}^{(t)}}\sum_{i'\neq i}|\rho_{y_i,r,i'}^{(t)}|R_{\min}^{-2}p-|S_{i}^{(t)}|\cdot\beta\bigg),
\end{align*}
and
\begin{align*}
    \la\wb_{-y_i,r}^{(t)},\xb_i\ra&=\la\wb_{-y_i,r}^{(0)},\xb_i\ra+\sum_{i'=1}^{n}\rho_{-y_i,r,i'}^{(t)}\| \xb_{i'} \|_{2}^{-2} \cdot \la \xb_{i'}, \xb_{i} \ra\\
    &=\la\wb_{-y_i,r}^{(0)},\xb_i\ra+\omega_{-y_i,r,i}^{(t)}+\sum_{i'\neq i}\rho_{-y_i,r,i'}^{(t)}\| \xb_{i'} \|_{2}^{-2} \cdot \la \xb_{i'}, \xb_{i}\ra\\
    &\leq\beta+\sum_{i'\neq i}|\rho_{-y_i,r,i'}^{(t)}|R_{\min}^{-2}p,
\end{align*}
and hence
\begin{equation}\label{ineq: logit-zeta upbound}
\begin{aligned}
    F_{-y_i}(\Wb_{-y_i}^{(t)},\xb_i)&=\frac{1}{m}\sum_{r=1}^{m}\sigma(\la\wb_{-y_i,r}^{(t)},\xb_i\ra)\leq\frac{1}{m}\sum_{r=1}^{m}\Big(\beta+\sum_{i'\neq i}|\rho_{-y_i,r,i'}^{(t)}|R_{\min}^{-2}p\Big);
\end{aligned}
\end{equation}
the second inequality is by the first induction hypothesis that $\zeta_{y_i,r,i}^{(t)}\geq c_1|\rho_{y_i,r',i'}^{(t)}|,\zeta_{y_i,r,i}^{(t)}\geq c_1|\rho_{-y_i,r',i'}^{(t)}|$ and hence $|\rho_{y_i,r',i'}^{(t)}|\leq\frac{1}{c_1|S_{i}^{(0)}|}\sum_{r\in S_{i}^{(0)}}\zeta_{y_i,r,i}^{(t)},|\rho_{-y_i,r',i'}^{(t)}|\leq\frac{1}{c_1|S_{i}^{(0)}|}\sum_{r\in S_{i}^{(0)}}\zeta_{y_i,r,i}^{(t)}$ for $r\in S_{i}^{(0)}$ and $0\leq t\leq\tilde{t}-1$; the third inequality is by $|S_{i}^{(t)}|\leq m$; the fourth inequality is by $m\leq c'|S_{i}^{(0)}|$. Therefore,
\begin{align*}
    |\ell_i'^{(t)}|&=\frac{1}{1+\exp\big\{F_{y_i}(\Wb_{y_i}^{(t)},\xb_i)-F_{-y_i}(\Wb_{-y_i}^{(t)},\xb_i)\big\}}\\
    &\leq\exp\big\{-F_{y_i}(\Wb_{y_i}^{(t)},\xb_i)+F_{-y_i}(\Wb_{-y_i}^{(t)},\xb_i)\big\}\\
    &\leq\exp\bigg\{-\frac{1-2c'R_{\min}^{-2}pn/c_1}{m}\sum_{r\in S_{i}^{(0)}}\zeta_{y_i,r,i}^{(t)}+2\beta\bigg\},\forall\,0\leq t\leq\tilde{t}-1.
\end{align*}
By the induction hypothesis, we know that $S_{i}^{(0)}\subseteq S_{i}^{(t)}$ for all $0\leq t\leq \tilde{t}-1$ and hence $\sigma'(\la\wb_{y_i,r}^{(t)},\xb_i\ra)=1$ for all $r\in S_{i}^{(0)}$ and $0\leq t\leq \tilde{t}-1$. By \eqref{ineq: logit-zeta lowbound} and \eqref{ineq: logit-zeta upbound}, it follows that for all $0\leq t\leq \tilde{t}-1$
\begin{align}
    &\sum_{r\in S_{i}^{(0)}}\zeta_{y_i,r,i}^{(t+1)}\geq\sum_{r\in S_{i}^{(0)}}\zeta_{y_i,r,i}^{(t)}+\frac{\eta|S_{i}^{(0)}|\|\xb_i\|_2^2e^{-\beta}}{2nm}\cdot\exp\bigg\{-\frac{c'(1+R_{\min}^{-2}pn/c_1)}{m}\sum_{r\in S_i^{(0)}}\zeta_{y_i,r,i}^{(t)}\bigg\},\label{ineq: zeta upper bound}\\
    &\sum_{r\in S_{i}^{(0)}}\zeta_{y_i,r,i}^{(t+1)}\leq\sum_{r\in S_{i}^{(0)}}\zeta_{y_i,r,i}^{(t)}+\frac{\eta|S_{i}^{(0)}|\|\xb_i\|_2^2e^{2\beta}}{nm}\cdot\exp\bigg\{-\frac{1-2c'R_{\min}^{-2}pn/c_1}{m}\sum_{r\in S_{i}^{(0)}}\zeta_{y_i,r,i}^{(t)}\bigg\}.\label{ineq: zeta lower bound}
\end{align}
By applying Lemma~\ref{lm: useful lemma2} to \eqref{ineq: zeta upper bound} and taking $x_t=\frac{c'(1+R_{\min}^{-2}pn/c_1)}{m}\sum_{r\in S_i^{(0)}}\zeta_{y_i,r,i}^{(t)}$, we can get
\begin{equation}\label{ineq: zeta logt lower bound}
    \sum_{r\in S_{i}^{(0)}}\zeta_{y_i,r,i}^{(t)}\geq\frac{m}{c'(1+R_{\min}^{-2}pn/c_1)}\log\bigg(1+\frac{c'(1+R_{\min}^{-2}pn/c_1)\eta|S_{i}^{(0)}|\|\xb_i\|_2^2 e^{-\beta}}{2nm^2}\cdot t\bigg),\forall\,0\leq t\leq\tilde{t}.
\end{equation}
By applying Lemma~\ref{lm: useful lemma1} to \eqref{ineq: zeta lower bound} and taking $x_t=\frac{1-2c'R_{\min}^{-2}pn/c_1}{m}\sum_{r\in S_{i}^{(0)}}\zeta_{y_i,r,i}^{(t)}$, we can get for any $0\leq t\leq\tilde{t}$ that
\begin{equation}\label{ineq: zeta logt upper bound}
\begin{aligned}
    \sum_{r\in S_{i}^{(0)}}\zeta_{y_i,r,i}^{(t)}&\leq\frac{m}{1-2c'R_{\min}^{-2}pn/c_1}\log\bigg(1+\frac{(1-2c'R_{\min}^{-2}pn/c_1)\eta|S_{i}^{(0)}|\|\xb_i\|_2^2 e^{2\beta}}{nm^2}\cdot\\
    &\qquad\qquad\qquad\qquad\qquad\qquad\qquad\exp\Big(\frac{(1-2c'R_{\min}^{-2}pn/c_1)\eta|S_{i}^{(0)}|\|\xb_i\|_2^2 e^{2\beta}}{nm^2}\Big)\cdot t\bigg)\\
    &\leq\frac{m}{1-2c'R_{\min}^{-2}pn/c_1}\log\bigg(1+\frac{2(1-2c'R_{\min}^{-2}pn/c_1)\eta|S_{i}^{(0)}|\|\xb_i\|_2^2 e^{2\beta}}{nm^2}\cdot t\bigg),
\end{aligned}
\end{equation}
where the last inequality is by $\eta\leq(CR_{\max}^{2}/nm)^{-1}$, $C$ is a large enough constant and hence $(1-2c'R_{\min}^{-2}pn/c_1)\eta|S_{i}^{(0)}|\|\xb_i\|_2^2 e^{2\beta}/nm^2\leq\log 2$. Since $S_{i}^{(0)}\subseteq S_{i}^{(t)}$ for all $0\leq t\leq \tilde{t}-1$ and hence $\sigma'(\la\wb_{y_i,r}^{(t)},\xb_i\ra)=1$ for all $r\in S_{i}^{(0)}$ and $0\leq t\leq \tilde{t}-1$, we have
\begin{equation*}
    \zeta_{y_i,r,i}^{(t)}=\frac{\eta}{nm}\sum_{s=0}^{t-1}|\ell_i'^{(s)}|\cdot\|\xb_i\|_2^2,\forall\,0\leq t\leq \tilde{t}.
\end{equation*}
Accordingly, it holds that
\begin{equation*}
    \zeta_{y_i,r,i}^{(t)}=\zeta_{y_i,r',i}^{(t)},\forall\, r,r'\in S_{i}^{(0)},\forall\,0\leq t\leq \tilde{t}.
\end{equation*}
Applying this to \eqref{ineq: zeta logt lower bound} and \eqref{ineq: zeta logt upper bound}, we can get
\begin{equation}\label{ineq: zeta up/lowbound}
\begin{aligned}
    \zeta_{y_i,r,i}^{(t)}&\geq\frac{m}{c'(1+R_{\min}^{-2}pn/c_1)|S_{i}^{(0)}|}\log\bigg(1+\frac{c'(1+R_{\min}^{-2}pn/c_1)\eta|S_{i}^{(0)}|\|\xb_i\|_2^2 e^{-\beta}}{2nm^2}\cdot t\bigg)\\
    &\geq\frac{1}{c'(1+R_{\min}^{-2}pn/c_1)}\log\bigg(1+\frac{c'(1+R_{\min}^{-2}pn/c_1)\eta|S_{i}^{(0)}|\|\xb_i\|_2^2 e^{-\beta}}{2nm^2}\cdot t\bigg),\forall\,0\leq t\leq\tilde{t},\\
    \zeta_{y_i,r,i}^{(t)}&\leq\frac{m}{(1-2c'R_{\min}^{-2}pn/c_1)|S_{i}^{(0)}|}\log\bigg(1+\frac{2(1-2c'R_{\min}^{-2}pn/c_1)\eta|S_{i}^{(0)}|\|\xb_i\|_2^2 e^{2\beta}}{nm^2}\cdot t\bigg),\\
    &\leq\frac{c'}{(1-2c'R_{\min}^{-2}pn/c_1)}\log\bigg(1+\frac{2(1-2c'R_{\min}^{-2}pn/c_1)\eta|S_{i}^{(0)}|\|\xb_i\|_2^2 e^{2\beta}}{nm^2}\cdot t\bigg),\forall\,0\leq t\leq\tilde{t},
\end{aligned}
\end{equation}
By taking
\begin{align*}
    &c_2=\frac{1}{c'(1+R_{\min}^{-2}pn/c_1)},\,c_3=\frac{c'}{(1-2c'R_{\min}^{-2}pn/c_1)},
\end{align*}the above inequalities indicates that the second and third bullets hold at time $t=\tilde{t}$. For the first bullet, it is only necessary to consider the situation where $\tilde{t}\geq T_1=C\eta^{-1}nmR_{\max}^{-2}$. In order to apply Lemma~\ref{lm: useful lemma4} (requiring $b>a$), we loosen the bounds in \eqref{ineq: zeta up/lowbound} as follows:
\begin{align}
    \zeta_{y_i,r,i}^{(t)}&\geq c_2\log\bigg(1+\frac{\eta R_{\min}^{2} e^{-\beta}}{5nm}\cdot t\bigg),\forall\,0\leq t\leq\tilde{t},\label{ineq: zeta logt lower bound2}\\
    \zeta_{y_i,r,i}^{(t)}&\leq c_3\log\bigg(1+\frac{6\eta R_{\max}^{2} e^{2\beta}}{5nm}\cdot t\bigg),\forall\,0\leq t\leq\tilde{t},\label{ineq: zeta logt upper bound2}
\end{align}
where we use $0.4m\leq|S_{i}^{(0)}|\leq 0.6m$.

By applying Lemma~\ref{lm: useful lemma4} to \eqref{ineq: zeta logt lower bound2} and \eqref{ineq: zeta logt upper bound2}, we can get for any $i,i'\in[n]$, $r\in S_{i}^{(0)}$, $r'\in S_{i'}^{(0)}$ and $T_1\leq t\leq\tilde{t}$ that
\begin{align*}
    \frac{\zeta_{y_i,r,i}^{(t)}}{\zeta_{y_{i'},r',i'}^{(t)}}&\geq\frac{c_2}{c_3}\cdot\frac{\log\Big(1+\frac{\eta R_{\min}^{2} e^{-\beta}}{5nm}\cdot t\Big)}{\log\Big(1+\frac{6\eta R_{\max}^{2} e^{2\beta}}{5nm}\cdot t\Big)}\\
    &\geq\frac{c_2}{c_3}\cdot\frac{\log\Big(1+\frac{\eta R_{\max}^{2} e^{2\beta}}{5nm}\cdot T_1\Big)}{\log\Big(1+\frac{6\eta R_{\max}^{2} e^{2\beta}}{5nm}\cdot T_1\Big)}\\
    &=\frac{c_2}{c_3}\cdot\frac{\log\Big(1+0.2Ce^{-\beta}R_{\min}^{2}\Big)}{\log\Big(1+1.2Ce^{2\beta}R_{\max}^{2}\Big)}.
\end{align*}
Notice that $S_{i'}^{(0)}\subseteq S_{i'}^{(t)}$ for all $0\leq t\leq \tilde{t}-1$ and hence $\sigma'(\la\wb_{y_{i'},r}^{(t)},\xb_{i'}\ra)=1$ for all $r\in S_{i'}^{(0)}$ and $0\leq t\leq \tilde{t}-1$, we have
\begin{align*}
    |\rho_{j,r'',i'}^{(t)}|&=\frac{\eta}{nm}\sum_{s=0}^{t-1}|\ell_{i'}'^{(s)}|\cdot\sigma'(\la\wb_{j,r''}^{(s)},\xb_{i'}\ra)\cdot\|\xb_i\|_2^2\\
    &\leq\frac{\eta}{nm}\sum_{s=0}^{t-1}|\ell_{i'}'^{(s)}|\cdot\|\xb_{i'}\|_2^2,&&\forall j\in\{\pm1\},r''\in[m],i'\in[n],\\
    \zeta_{y_{i'},r',i'}^{(t)}&=\frac{\eta}{nm}\sum_{s=0}^{t-1}|\ell_{i'}'^{(s)}|\cdot\|\xb_{i'}\|_2^2,&&\forall r'\in S_{i'}^{(0)},i'\in[n],
\end{align*}
and hence $|\rho_{j,r',i'}^{(t)}|\leq\zeta_{y_{i'},r',i'}^{(t)}$ for $0\leq t\leq\tilde{t}$. Therefore, as long as
\begin{equation*}
    c_1\leq\frac{c_2}{c_3}\cdot\frac{\log\Big(1+0.2Ce^{-\beta}R_{\min}^{2}\Big)}{\log\Big(1+1.2Ce^{2\beta}R_{\max}^{2}\Big)},
\end{equation*}
the first bullet hold for time $t=\tilde{t}$. This condition holds as long as
\begin{align*}
    &c'=2.5,\\
    &2c'c_1^{-1}R_{\min}^{-2}pn\leq 0.5 \quad \Longrightarrow \quad c_2\geq 0.37, c_3\leq 5,\\
    &c_1=\frac{\log\big(1+0.2Ce^{-\beta}R_{\min}^{2}\big)}{14\log\big(1+1.2Ce^{2\beta}R_{\max}^{2}\big)}.
\end{align*}

Finally, we verify that $S_{i}^{(0)}\subseteq S_{i}^{(\tilde{t})}$. For $r\in S_{i}^{(0)}$, we have
\begin{align*}
    \la\wb_{y_i,r}^{(\tilde{t})},\xb_i\ra&=\la\wb_{y_i,r}^{(0)},\xb_i\ra+\sum_{i'=1}^{n}\rho_{y_i,r,i'}^{(\tilde{t})}\| \xb_{i'} \|_{2}^{-2} \cdot \la \xb_{i'}, \xb_{i} \ra\\
    &=\la\wb_{y_i,r}^{(0)},\xb_i\ra+\zeta_{y_i,r,i}^{(\tilde{t})}+\sum_{i'\neq i}\rho_{y_i,r,i'}^{(\tilde{t})}\| \xb_{i'} \|_{2}^{-2} \cdot \la \xb_{i'}, \xb_{i}\ra\\
    &\geq\zeta_{y_i,r,i}^{(\tilde{t})}-\sum_{i'\neq i}|\rho_{y_i,r,i'}^{(\tilde{t})}|R_{\min}^{-2}p\\
    &\geq\zeta_{y_i,r,i}^{(\tilde{t})}-(R_{\min}^{-2}pn/c_1)\zeta_{y_i,r,i'}^{(\tilde{t})}\\
    &=(1-R_{\min}^{-2}pn/c_1)\cdot\zeta_{y_i,r,i}^{(\tilde{t})}\geq 0,
\end{align*}
where the second inequality is by $|\rho_{y_i,r,i'}^{(\tilde{t})}|\leq \zeta_{y_i,r,i}^{(\tilde{t})}/c_1$. This implies that $S_{i}^{(0)}\subseteq S_{i}^{(t)}$ holds for time $t=\tilde{t}$, which completes the induction.
\end{proof}

Next, we show that $|\omega_{-y_i,r,i'}^{(t)}|$ will be much smaller than $|\zeta_{y_i,r',i}^{(t)}|$ as the training goes on.
\begin{lemma}\label{lm: omega/zeta bound2}
There exists time $T_2$ and constant $c$ such that for any time $t\geq T_2$
\begin{equation*}
    |\omega_{y_{i},r,i'}^{(t)}|\leq cR_{\min}^{-2}pn|\zeta_{y_i,r',i}^{(t)}|,
\end{equation*}
where $r\in[m], r'\in S_{i}^{(0)}$ and $i,i'\in[n]$ satisfying $-y_i=y_{i'}$.
\end{lemma}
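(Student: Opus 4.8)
\emph{Proof proposal.} The mechanism to exploit is \emph{self-limitation}. Since $-y_i=y_{i'}$, the update rule \eqref{iterative equation4} only ever touches $\omega_{y_i,r,i'}^{(t)}$ (the indicator $\ind(y_{i'}=-y_i)$ is $1$, the one for $\zeta_{y_i,r,i'}$ is $0$), so $\zeta_{y_i,r,i'}^{(t)}=0$ and $\rho_{y_i,r,i'}^{(t)}=\omega_{y_i,r,i'}^{(t)}\le 0$ with $|\omega_{y_i,r,i'}^{(t)}|$ nondecreasing; moreover its magnitude can only increase at a step where the $y_i$-class neuron $\wb_{y_i,r}^{(t)}$ is still activated by $\xb_{i'}$, i.e. $\sigma'(\la\wb_{y_i,r}^{(t)},\xb_{i'}\ra)=1$. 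I will show that once $|\omega_{y_i,r,i'}^{(t)}|$ grows past a constant multiple of $R_{\min}^{-2}pn\,\zeta_{y_i,r',i}^{(t)}$ this neuron gets deactivated by $\xb_{i'}$, which freezes $\omega_{y_i,r,i'}^{(t)}$; combined with $\zeta_{y_i,r',i}^{(t)}\to\infty$ this forces the claimed bound for all large $t$.

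First I would expand, via Definition~\ref{def:w_decomposition} and $\rho_{y_i,r,i'}^{(t)}=\omega_{y_i,r,i'}^{(t)}$,
\[
\la\wb_{y_i,r}^{(t)},\xb_{i'}\ra
=\la\wb_{y_i,r}^{(0)},\xb_{i'}\ra+\omega_{y_i,r,i'}^{(t)}+\sum_{i''\neq i'}\rho_{y_i,r,i''}^{(t)}\|\xb_{i''}\|_2^{-2}\la\xb_{i''},\xb_{i'}\ra .
\]
Bounding $|\la\wb_{y_i,r}^{(0)},\xb_{i'}\ra|\le\beta$, $|\la\xb_{i''},\xb_{i'}\ra|\le p$, $\|\xb_{i''}\|_2^{-2}\le R_{\min}^{-2}$, and each $|\rho_{y_i,r,i''}^{(t)}|\le c_1^{-1}\zeta_{y_i,r',i}^{(t)}$ by the first bullet of Lemma~\ref{lm: zeta log rate increase} (which bounds an arbitrary coefficient by any on-diagonal $\zeta$ of a neuron activated at initialization, here $r'\in S_i^{(0)}$), together with $\omega_{y_i,r,i'}^{(t)}\le 0$, yields
\[
\la\wb_{y_i,r}^{(t)},\xb_{i'}\ra\le\beta-|\omega_{y_i,r,i'}^{(t)}|+c_1^{-1}R_{\min}^{-2}pn\,\zeta_{y_i,r',i}^{(t)} .
\]
Using the logarithmic lower bound on $\zeta_{y_i,r',i}^{(t)}$ from the second bullet of Lemma~\ref{lm: zeta log rate increase}, I pick a finite $T_2$ with $c_1^{-1}R_{\min}^{-2}pn\,\zeta_{y_i,r',i}^{(t)}\ge\beta$ for all $t\ge T_2$ and all admissible $i,i'$. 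Then for $t\ge T_2$, whenever $|\omega_{y_i,r,i'}^{(t)}|>3c_1^{-1}R_{\min}^{-2}pn\,\zeta_{y_i,r',i}^{(t)}$ the display above forces $\la\wb_{y_i,r}^{(t)},\xb_{i'}\ra<0$, hence $\sigma'(\la\wb_{y_i,r}^{(t)},\xb_{i'}\ra)=0$ and $\omega_{y_i,r,i'}^{(t+1)}=\omega_{y_i,r,i'}^{(t)}$ by \eqref{iterative equation4}.

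To upgrade this to a bound valid for every $t\ge T_2$, I would run a potential argument on $\Phi^{(t)}:=|\omega_{y_i,r,i'}^{(t)}|-3c_1^{-1}R_{\min}^{-2}pn\,\zeta_{y_i,r',i}^{(t)}$. When $\Phi^{(t)}>0$ the magnitude $|\omega_{y_i,r,i'}^{(t)}|$ is frozen while $\zeta_{y_i,r',i}^{(t)}$ is nondecreasing, so $\Phi^{(t+1)}\le\Phi^{(t)}$; when $\Phi^{(t)}\le 0$ a single step raises $|\omega_{y_i,r,i'}^{(t)}|$ by at most $\tfrac{\eta}{nm}\|\xb_{i'}\|_2^2|\ell_{i'}'^{(t)}|\le\eta R_{\max}^2/(nm)\le 1/C$ (by the step-size condition $\eta\le(CR_{\max}^2/nm)^{-1}$ and $|\ell_{i'}'^{(t)}|\le 1$), so $\Phi^{(t+1)}\le 1/C$. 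Hence $\Phi^{(t)}\le\max\{\Phi^{(T_2)},1/C\}$ for all $t\ge T_2$, where $\Phi^{(T_2)}\le|\omega_{y_i,r,i'}^{(T_2)}|\le c_1^{-1}\zeta_{y_i,r',i}^{(T_2)}$ is an explicit finite number by Lemma~\ref{lm: zeta log rate increase}. Therefore $|\omega_{y_i,r,i'}^{(t)}|\le 3c_1^{-1}R_{\min}^{-2}pn\,\zeta_{y_i,r',i}^{(t)}+O(1)$; since $\zeta_{y_i,r',i}^{(t)}\to\infty$, enlarging $T_2$ so that this additive $O(1)$ is at most $c_1^{-1}R_{\min}^{-2}pn\,\zeta_{y_i,r',i}^{(t)}$ for $t\ge T_2$ gives $|\omega_{y_i,r,i'}^{(t)}|\le 4c_1^{-1}R_{\min}^{-2}pn\,\zeta_{y_i,r',i}^{(t)}$, i.e. the claim with $c=4c_1^{-1}$.

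The main obstacle is this last step. After the neuron deactivates, the cross terms in $\la\wb_{y_i,r}^{(t)},\xb_{i'}\ra$ keep growing (the relevant $\zeta$'s grow like $\log t$), so the neuron could re-activate and $|\omega_{y_i,r,i'}^{(t)}|$ resume growing, and one must control the cumulative effect of infinitely many activation/deactivation cycles. The potential-function bookkeeping handles this precisely because each ``bad'' step (when $\Phi^{(t)}\le 0$) increases $\Phi$ by at most one step of coefficient growth — a small constant thanks to $\eta R_{\max}^2/(nm)\le 1/C$ — while any excess above the target level $3c_1^{-1}R_{\min}^{-2}pn\,\zeta_{y_i,r',i}^{(t)}$ is immediately clamped by deactivation. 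Everything else (the decomposition estimates and the choice of $T_2$) is routine given Lemma~\ref{lm: zeta log rate increase}.
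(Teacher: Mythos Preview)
Your proposal is correct and follows essentially the same route as the paper. The paper proves by a two-case induction that $|\omega_{y_i,r,i'}^{(t)}|\le \beta+1+c_1^{-1}R_{\min}^{-2}pn\,\zeta_{y_i,r',i}^{(t)}$ for \emph{all} $t\ge 0$ (case one: $|\omega|$ exceeds $\beta$ plus the actual cross-term sum, forcing deactivation and freezing $\omega$; case two: otherwise, one step adds at most $\eta R_{\max}^2/(nm)\le 1$), and only afterwards picks $T_2$ to absorb the additive $\beta+1$ into $c_1^{-1}R_{\min}^{-2}pn\,\zeta$; your potential-function bookkeeping on $\Phi^{(t)}$ is the same self-limitation mechanism repackaged, with the minor difference that you fix a preliminary $T_2$ first and then track $\Phi$ from there, whereas the paper runs the induction globally and postpones the choice of threshold to the end.
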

\begin{proof}[Proof of Lemma~\ref{lm: omega/zeta bound2}]
First, we will prove by induction that for $r\in[m], r'\in S_{i}^{(0)}$ and $i,i'\in[n]$ satisfying $-y_i=y_{i'}$ it holds that
\begin{equation}
    |\omega_{y_i,r,i'}^{(t)}|\leq \beta+1+R_{\min}^{-2}pn|\zeta_{y_i,r',i}^{(t)}|/c_1.\label{ineq: induction}
\end{equation}
This result holds naturally when $t=0$ since all the coefficients are zero. Suppose that there exists time $\tilde{t}$ such that the induction hypothesis \eqref{ineq: induction} holds for all time $t\leq\tilde{t}-1$. We aim to prove that \eqref{ineq: induction} also holds for $t=\tilde{t}$. In the following analysis, two cases will be considered: $|\omega_{y_i,r,i'}^{(\tilde{t}-1)}|>\beta+\sum_{k\neq i'}|\rho_{y_i,r,k}^{(\tilde{t}-1)}|\|\xb_k\|_2^{-2}p$ and $|\omega_{y_i,r,i'}^{(\tilde{t}-1)}|\leq\beta+\sum_{k\neq i'}|\rho_{y_i,r,k}^{(\tilde{t}-1)}|\|\xb_k\|_2^{-2}p$. 

For if $|\omega_{y_i,r,i'}^{(\tilde{t}-1)}|>\beta+\sum_{k\neq i'}|\rho_{y_i,r,k}^{(\tilde{t}-1)}|\|\xb_k\|_2^{-2}p$, then
by the decomposition \eqref{eq:w_decomposition} we have
\begin{equation*}
\begin{aligned}
    \la\wb_{y_i,r}^{(\tilde{t}-1)},\xb_{i'}\ra&=\la\wb_{y_i,r}^{(0)},\xb_{i'}\ra+\omega_{y_i,r,i'}^{(\tilde{t}-1)}+\sum_{k\neq i'}\rho_{y_i,r,k}^{(\tilde{t}-1)}\|\xb_k\|_2^{-2}\la\xb_k,\xb_{i'}\ra\\
    &\leq\omega_{y_i,r,i'}^{(\tilde{t}-1)}+\beta+\sum_{k\neq i'}|\rho_{y_i,r,k}^{(\tilde{t}-1)}|\|\xb_k\|_2^{-2}p<0.
\end{aligned}
\end{equation*}
and hence
\begin{equation*}
    \omega_{y_i,r,i'}^{(\tilde{t})}=\omega_{y_i,r,i'}^{(\tilde{t}-1)}+\frac{\eta}{nm}\cdot\ell_i'^{(\tilde{t}-1)}\cdot\sigma'(\la\wb_{y_i,r}^{(\tilde{t}-1)},\xb_{i'}\ra)\cdot\|\xb_{i'}\|_2^2=\omega_{y_i,r,i'}^{(\tilde{t}-1)}.
\end{equation*}
Therefore, by induction hypothesis \eqref{ineq: induction} at time $t=\tilde{t}-1$, we have
\begin{equation*}
    \omega_{y_i,r,i'}^{(\tilde{t})}=\omega_{y_i,r,i'}^{(\tilde{t}-1)}\leq \beta+1+R_{\min}^{-2}pn|\zeta_{y_i,r',i}^{(\tilde{t}-1)}|/c_1\leq \beta+1+R_{\min}^{-2}pn|\zeta_{y_i,r',i}^{(\tilde{t})}|/c_1.
\end{equation*}
For if $|\omega_{y_i,r,i'}^{(\tilde{t}-1)}|\leq\beta+\sum_{k\neq i'}|\rho_{y_i,r,k}^{(\tilde{t}-1)}|\|\xb_k\|_2^{-2}p$, by the first bullet in Lemma~\ref{lm: zeta log rate increase}, we have
\begin{equation}
    |\omega_{y_i,r,i'}^{(\tilde{t}-1)}|\leq\beta+\sum_{k\neq i'}|\zeta_{y_i,r',i}^{(\tilde{t}-1)}|\|\xb_k\|_2^{-2}p/c_1\leq\beta+|\zeta_{y_i,r',i}^{(\tilde{t}-1)}|R_{\min}^{-2}pn/c_1,\label{ineq: auxiliary2}
\end{equation}
and thus
\begin{align*}
    -\omega_{y_i,r,i'}^{(\tilde{t})}&=-\omega_{y_i,r,i'}^{(\tilde{t}-1)}+\frac{\eta}{nm}\cdot|\ell_i'^{(\tilde{t}-1)}|\cdot\sigma'(\la\wb_{y_i,r}^{(\tilde{t}-1)},\xb_{i'}\ra)\cdot\|\xb_{i'}\|_2^2\\
    &\leq-\omega_{y_i,r,i'}^{(\tilde{t}-1)}+\frac{\eta R_{\max}^{2}}{nm}\\
    &\leq\beta+1+|\zeta_{y_i,r',i}^{(t)}|R_{\min}^{-2}pn/c_1,
\end{align*}
where the last inequality is by \eqref{ineq: auxiliary2} and $\eta\leq (CR_{\max}^{2}/nm)^{-1}$ with a sufficiently large constant $C$. This demonstrates that inequality \eqref{ineq: induction} holds for $t=\tilde{t}$, thereby completing the induction process. By Lemma~\ref{lm: zeta log rate increase}, we know that there exists time $T'$ such that
\begin{equation*}
    |\zeta_{y_i,r',i}^{(t)}|\geq c_1(\beta+1)R_{\min}^{2}/pn,
\end{equation*}
for any time $t\geq T'$. Taking $T_2=T'$ and $c=2/c_1$, we have
\begin{equation*}
    |\omega_{y_i,r,i'}^{(t)}|\leq \beta+1+R_{\min}^{-2}pn|\zeta_{y_i,r',i}^{(t)}|/c_1\leq 2R_{\min}^{-2}pn|\zeta_{y_i,r',i}^{(t)}|/c_1=cR_{\min}^{-2}pn|\zeta_{y_i,r',i}^{(t)}|,
\end{equation*}
which completes the proof.
\end{proof}
Given Lemma~\ref{lm: omega/zeta bound2}, the following corollary can be directly obtained.
\begin{corollary}\label{lm: omega/zeta bound}
There exists time $T_3$ and constant $c$ such that for any time $t\geq T_3$
\begin{equation*}
    |\omega_{y_i,r,i'}^{(t)}|\leq cR_{\min}^{-2}pn|\zeta_{y_i,r',i}^{(t)}|,\forall r\in[m], r'\in S_{i}^{(0)},i,i'\in[n]\text{ with }-y_i=y_{i'}.
\end{equation*}
\end{corollary}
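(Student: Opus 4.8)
The plan is to read Corollary~\ref{lm: omega/zeta bound} off Lemma~\ref{lm: omega/zeta bound2} by making the threshold time and the multiplicative constant \emph{uniform} over all admissible index tuples $(i,i',r,r')$ with $-y_i=y_{i'}$, $r\in[m]$ and $r'\in S_i^{(0)}$. Lemma~\ref{lm: omega/zeta bound2} already supplies, for each such tuple, a time $T_2$ beyond which $|\omega_{y_i,r,i'}^{(t)}|\le cR_{\min}^{-2}pn\,|\zeta_{y_i,r',i}^{(t)}|$, and its proof exhibits the constant as $c=2/c_1$ with $c_1$ the absolute constant from Lemma~\ref{lm: zeta log rate increase}, which is independent of the tuple. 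So the only thing to settle is that $T_2$ can be chosen independently of the tuple; once that is done the corollary follows with $T_3:=T_2$.

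First I would isolate the origin of $T_2$ in the proof of Lemma~\ref{lm: omega/zeta bound2}: it is the first time at which $|\zeta_{y_i,r',i}^{(t)}|\ge c_1(\beta+1)R_{\min}^2/(pn)$, since combining this with the inequality $|\omega_{y_i,r,i'}^{(t)}|\le \beta+1+R_{\min}^{-2}pn|\zeta_{y_i,r',i}^{(t)}|/c_1$ (established there for all $t\ge 0$ and all tuples) yields $|\omega_{y_i,r,i'}^{(t)}|\le 2R_{\min}^{-2}pn|\zeta_{y_i,r',i}^{(t)}|/c_1$. Next I would invoke the second bullet of Lemma~\ref{lm: zeta log rate increase}, which gives $\zeta_{y_i,r',i}^{(t)}\ge c_2\log\!\big(1+\eta|S_i^{(0)}|\|\xb_i\|_2^2 e^{-\beta}t/(2nm^2)\big)$ for every $r'\in S_i^{(0)}$, and then degrade it using $|S_i^{(0)}|\ge 0.4m$ (Lemma~\ref{lm: number of initial activated neurons}) and $\|\xb_i\|_2\ge R_{\min}$ to the index-free lower bound $\zeta_{y_i,r',i}^{(t)}\ge c_2\log\!\big(1+0.2\eta R_{\min}^2 e^{-\beta}t/(nm)\big)$, which diverges like $\Theta(\log t)$ with all constants depending only on the global quantities $n,m,\sigma_0,R_{\min},R_{\max}$. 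Hence there is a single time $T_3$ past which this bound exceeds $c_1(\beta+1)R_{\min}^2/(pn)$ for every admissible $(i,r')$ at once, and the chain of inequalities above then holds uniformly for all tuples and all $t\ge T_3$.

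I do not anticipate a genuine obstacle: the substantive work is already in Lemma~\ref{lm: omega/zeta bound2}, and what remains is the bookkeeping that upgrades a per-tuple statement to a uniform one. If one prefers to skip the explicit threshold estimate, an equivalent route is to use finiteness of $[m]\times[n]$ directly: take $T_3$ to be the maximum of the finitely many times $T_2$ delivered by Lemma~\ref{lm: omega/zeta bound2} and keep $c=2/c_1$, which is the same for every tuple. Either way one obtains the stated uniform bound for all $t\ge T_3$.
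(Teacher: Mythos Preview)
Your proposal is correct and aligns with the paper's treatment: the paper simply states that the corollary is ``directly obtained'' from Lemma~\ref{lm: omega/zeta bound2} without further argument, and your reasoning spells out exactly why---the constant $c=2/c_1$ is already tuple-independent, and the threshold time is already uniform because the proof of Lemma~\ref{lm: omega/zeta bound2} chooses $T_2=T'$ via the uniform lower bound on $\zeta_{y_i,r',i}^{(t)}$ from Lemma~\ref{lm: zeta log rate increase}. In fact, a close reading of the lemma's proof shows the quantification is already uniform there, so the corollary is a pure restatement; your extra bookkeeping is a harmless (and clarifying) expansion of what the paper leaves implicit.
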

\section{Stable Rank of ReLU Network}
In this section, we consider the properties of stable
rank of the weight matrix $\Wb^{(t)}$ found by gradient descent at time $t$, defined as $\|\Wb^{(t)}\|_{F}^{2}/\|\Wb^{(t)}\|_{2}^{2}$. Given Lemma~\ref{lm: zeta log rate increase}, we have following coefficient update rule for any $t\geq 0$, $i\in[n]$ and $r\in S_i^{(0)}$:
\begin{align}
    &\zeta_{y_i,r,i}^{(t+1)}=\zeta_{y_i,r,i}^{(t)}+\frac{\eta}{nm}\cdot|\ell_i'^{(t)}|\cdot\|\xb_i\|_2^2,
\end{align}
where
\begin{equation*}
    |\ell_i'^{(t)}|=\frac{1}{1+\exp\{F_{y_i}(\Wb_{y_i}^{(t)},\xb_i)-F_{-y_i}(\Wb_{-y_i}^{(t)},\xb_i)\}}.
\end{equation*}
Now we are ready to prove the first bullet of Theorem~\ref{main theorem2}.
\begin{lemma}\label{lm: stable rank2}
For two-layer ReLU neural network defined in \eqref{def: two-layer ReLU nn}, under the same condition as Theorem~\ref{main theorem2}, the stable rank of $\Wb_j^{(t)}$ satisfies the following property:
\begin{equation*}
    \limsup_{t\rightarrow\infty}\frac{\|\Wb_{j}^{(t)}\|_F^2}{\|\Wb_{j}^{(t)}\|_2^2}\leq C,
\end{equation*}
where $C=\Theta(1)$ is a constant.
\end{lemma}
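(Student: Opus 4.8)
The plan is to sandwich both norms. Fix a sign $j\in\{\pm1\}$. Using the data-correlated decomposition \eqref{eq:w_decomposition}, together with the fact that $\zeta_{j,r,i}^{(t)}=0$ unless $y_i=j$ while $\omega_{j,r,i}^{(t)}=0$ unless $y_i=-j$, split each neuron as $\wb_{j,r}^{(t)}=\wb_{j,r}^{(0)}+\vb_{j,r}^{(t)}+\bm g_{j,r}^{(t)}$ with signal part $\vb_{j,r}^{(t)}:=\sum_{i:\,y_i=j}\zeta_{j,r,i}^{(t)}\|\xb_i\|_2^{-2}\xb_i$ and cross part $\bm g_{j,r}^{(t)}:=\sum_{i:\,y_i=-j}\omega_{j,r,i}^{(t)}\|\xb_i\|_2^{-2}\xb_i$. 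Write $q_i^{(t)}:=\zeta_{y_i,r,i}^{(t)}$ for $r\in S_i^{(0)}$; by the proof of Lemma~\ref{lm: zeta log rate increase} this does not depend on the choice of such $r$, satisfies $q_i^{(t)}=\Theta(\log t)$, and obeys $0\le\zeta_{j,r,i}^{(t)}\le q_i^{(t)}$ for all $r$, while the first bullet of Lemma~\ref{lm: zeta log rate increase} forces $q_i^{(t)}/q_k^{(t)}\in[c_1,c_1^{-1}]$ for all $i,k$; finally, for $t\ge T_3$, Corollary~\ref{lm: omega/zeta bound} gives $|\omega_{j,r,i}^{(t)}|\le cR_{\min}^{-2}pn\max_k q_k^{(t)}$, which the nearly-orthogonal hypothesis $R_{\min}^2\ge CR^2np$ makes much smaller than $\max_k q_k^{(t)}$.

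For the Frobenius upper bound I expand $\|\wb_{j,r}^{(t)}\|_2^2$, absorb the cross terms $\la\xb_i,\xb_{i'}\ra$ into a factor $1+O(R_{\min}^{-2}pn)\le 2$ via near-orthogonality, bound $\zeta_{j,r,i}^{(t)}\le q_i^{(t)}$ and the $\omega$'s by Corollary~\ref{lm: omega/zeta bound}, and sum over the at most $n$ indices of each label; the $\omega$-contribution is then a small multiple of the $\zeta$-contribution, and $\|\wb_{j,r}^{(0)}\|_2^2=O(\sigma_0^2d)$ is a fixed constant dominated once $\log t$ is large. Summing over $r$ gives $\|\Wb_j^{(t)}\|_F^2=O\big(mn\,R_{\min}^{-2}(\max_k q_k^{(t)})^2\big)$ for $t$ large.

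The crux is the matching spectral lower bound, and the key observation is that for \emph{ReLU} every same-label coefficient $\zeta_{j,r,i}^{(t)}$ is nonnegative, so the neurons reinforce one another along the all-ones direction of the $m$-dimensional output space rather than cancel. Testing with $m^{-1/2}\mathbf 1\in\RR^m$ gives $\|\Wb_j^{(t)}\|_2\ge\big\|(\Wb_j^{(t)})^{\top}(m^{-1/2}\mathbf 1)\big\|_2=m^{-1/2}\big\|\sum_{r=1}^m\wb_{j,r}^{(t)}\big\|_2$, and $\sum_r\wb_{j,r}^{(t)}=A+B+D$ with $A=\sum_r\wb_{j,r}^{(0)}$, $B=\sum_{i:\,y_i=j}\big(\sum_r\zeta_{j,r,i}^{(t)}\big)\|\xb_i\|_2^{-2}\xb_i$, and $D=\sum_{i:\,y_i=-j}\big(\sum_r\omega_{j,r,i}^{(t)}\big)\|\xb_i\|_2^{-2}\xb_i$. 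By Lemma~\ref{lm: number of initial activated neurons}, $\sum_r\zeta_{j,r,i}^{(t)}\ge|S_i^{(0)}|q_i^{(t)}\ge0.4\,m\,q_i^{(t)}$ for $y_i=j$, so near-orthogonality gives $\|B\|_2^2\ge\tfrac12\sum_{i:\,y_i=j}(0.4\,m\,q_i^{(t)})^2R_{\max}^{-2}=\Omega\big(m^2n\,R_{\max}^{-2}(\min_k q_k^{(t)})^2\big)$; since $B$ and $D$ sit in the nearly-orthogonal spans of the same-label and opposite-label data, $|\la B,D\ra|$ is a small fraction of $\|B\|_2^2$, and $\|A\|_2=O(\sqrt{md}\,\sigma_0)$ is a fixed constant, so $\|A+B+D\|_2^2\ge\tfrac14\|B\|_2^2$ for $t$ large, hence $\|\Wb_j^{(t)}\|_2^2=\Omega\big(mn\,R_{\max}^{-2}(\min_k q_k^{(t)})^2\big)$.

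Dividing the two bounds, the factors of $m$, $n$, and of $\log t$ cancel, and since $\max_k q_k^{(t)}/\min_k q_k^{(t)}\le c_1^{-1}$ we obtain $\|\Wb_j^{(t)}\|_F^2/\|\Wb_j^{(t)}\|_2^2\le C'' R^2 c_1^{-2}$ for all large $t$, an absolute constant because $R$ is; taking $\limsup_{t\to\infty}$ and repeating for $-j$ completes the proof. I expect the spectral lower bound to be the main obstacle: one must find the correct test direction — the all-ones vector on the neuron side, since any single data direction $\xb_i/\|\xb_i\|_2$ only witnesses $\|\Wb_j^{(t)}\|_2^2=\Omega(m(\log t)^2)$ and hence an unhelpful $O(n)$ ratio — and then verify that neither the Gaussian initialization $\wb_{j,r}^{(0)}$ nor the opposite-label coefficients $\omega_{j,r,i}^{(t)}$ (controlled by Corollary~\ref{lm: omega/zeta bound} together with the near-orthogonality assumption) can erode it.
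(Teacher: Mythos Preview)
Your argument is correct and reaches the same conclusion, but the spectral lower bound is obtained by a genuinely different route from the paper's. Both proofs upper-bound $\|\Wb_j^{(t)}\|_F^2$ in essentially the same way (expand via the decomposition, use near-orthogonality to control cross terms, and invoke the $\Theta(\log t)$ coefficient bounds). The difference is in the test direction for $\|\Wb_j^{(t)}\|_2$: you test on the \emph{neuron side} with $m^{-1/2}\mathbf 1_m$ and lower-bound $\big\|\sum_r\wb_{j,r}^{(t)}\big\|_2$, exploiting directly that all same-label coefficients $\zeta_{j,r,i}^{(t)}$ are nonnegative and hence add up; the paper instead tests on the \emph{input side} with $\Xb^\top(\Xb\Xb^\top)^{-1}\mathbf 1_n$, which reduces to bounding $\|\Ab_t\mathbf 1_n\|_2$ and then requires the Gershgorin circle theorem to control $\lambda_{\min}(\Xb\Xb^\top)$. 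Your approach is more elementary in that it avoids the pseudoinverse and Gershgorin machinery entirely, and it also works directly on $\Wb_j^{(t)}$ rather than on $\Wb_j^{(t)}-\Wb_j^{(0)}$ (handling the initialization as the separate term $A$). The paper's route, on the other hand, packages the near-orthogonality of the data more cleanly into a single eigenvalue bound.

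One small point to clean up: the estimate $\|A\|_2=\|\sum_r\wb_{j,r}^{(0)}\|_2=O(\sqrt{md}\,\sigma_0)$ is not among the preliminary high-probability events the paper conditions on (Lemma~\ref{lm: initialization inner products} only bounds individual $\|\wb_{j,r}^{(0)}\|_2$). You can either add a one-line Gaussian concentration step for the sum, or simply use the cruder triangle-inequality bound $\|A\|_2\le m\max_r\|\wb_{j,r}^{(0)}\|_2=O(m\sigma_0\sqrt d)$, which is still a fixed constant and is therefore still dominated by $\|B\|_2=\Omega(m\sqrt n\,R_{\max}^{-1}\log t)$ for large $t$.
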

\begin{proof}[Proof of Lemma~\ref{lm: stable rank2}]
By decomposition \eqref{eq:w_decomposition}, we have
\begin{equation*}
    \wb_{j,r}^{(t)}-\wb_{j,r}^{(0)}=\sum_{i=1}^{n}\rho_{j,r,i}^{(t)}\cdot\|\xb_i\|_2^{-2}\cdot\xb_i=\begin{bmatrix}
        \rho_{j,r,1}^{(t)}\|\xb_1\|_2^{-2},\cdots,\rho_{j,r,n}^{(t)}\|\xb_n\|_2^{-2}
    \end{bmatrix}\cdot\begin{bmatrix}
        \xb_1\\
        \vdots\\
        \xb_n
    \end{bmatrix},
\end{equation*}
and
\begin{align*}
    \Wb_{j}^{(t)}-\Wb_{j}^{(0)}&=\underbrace{\begin{bmatrix}
        \rho_{j,1,1}^{(t)}\|\xb_1\|_2^{-2}&\rho_{j,1,2}^{(t)}\|\xb_1\|_2^{-2}&\cdots&\rho_{j,1,n}^{(t)}\|\xb_n\|_2^{-2}\\
        \rho_{j,2,1}^{(t)}\|\xb_1\|_2^{-2}&\rho_{j,2,2}^{(t)}\|\xb_1\|_2^{-2}&\cdots&\rho_{j,2,n}^{(t)}\|\xb_n\|_2^{-2}\\
        \vdots&\vdots&\ddots&\vdots\\
        \rho_{j,m,1}^{(t)}\|\xb_1\|_2^{-2}&\rho_{j,m,2}^{(t)}\|\xb_1\|_2^{-2}&\cdots&\rho_{j,m,n}^{(t)}\|\xb_n\|_2^{-2}
    \end{bmatrix}}_{\Ab_t}\cdot\underbrace{\begin{bmatrix}
        \xb_1\\
        \vdots\\
        \xb_n
    \end{bmatrix}}_{:=\Xb}.
\end{align*}
Let $\ab_{i}(t)^{\top}$ be the $i$-th column of $\Ab_t$. It follows that
\begin{align*}
    \|\Wb_{j}^{(t)}-\Wb_{j}^{(0)}\|_{F}^{2}&=\tr(\Ab_t\Xb\Xb^{\top}\Ab_t^{\top})\\
    &=\tr\bigg(\Big(\sum_{i=1}^{n}\ab_{i}(t)^{\top}\xb_{i}\Big)\Big(\sum_{i=1}^{n}\xb_{i}^{\top}\ab_{i}(t)\Big)\bigg)\\
    &=\tr\bigg(\sum_{i=1}^{n}\sum_{i'=1}^{n}\ab_{i}(t)^{\top}\xb_{i}\xb_{i'}^{\top}\ab_{i'}(t)\bigg)\\
    &=\sum_{i=1}^{n}\sum_{i'=1}^{n}\la\xb_{i},\xb_{i'}\ra\cdot\tr(\ab_{i}(t)^{\top}\ab_{i'}(t))\\
    &=\sum_{i=1}^{n}\|\xb_{i}\|_2^2\cdot\tr(\ab_{i}(t)^{\top}\ab_{i}(t))+\sum_{i\neq i'}\la\xb_{i},\xb_{i'}\ra\cdot\tr(\ab_{i}(t)^{\top}\ab_{i'}(t))\\
    &\leq R_{\max}^{2}\sum_{i=1}^{n}\tr(\ab_{i}(t)^{\top}\ab_{i}(t))+p\sum_{i\neq i'}|\tr(\ab_{i}(t)^{\top}\ab_{i'}(t))|,
\end{align*}
and
\begin{align*}
    &\tr(\ab_{i}(t)^{\top}\ab_{i}(t))=\sum_{r=1}^{m}([\ab_{i}(t)]_{r})^2=\sum_{r=1}^{m}(\rho_{j,r,i}^{(t)}\|\xb_i\|_2^{-2})^{2}\leq\sum_{r=1}^{m}(\rho_{j,r,i}^{(t)})^2 R_{\min}^{-4}\leq CmR_{\min}^{-4}(\log(t))^2,\\
    &|\tr(\ab_{i}(t)^{\top}\ab_{i'}(t))|\leq\sum_{r=1}^{m}\big|[\ab_{i}(t)]_{r}[\ab_{i'}(t)]_{r}\big|=\sum_{r=1}^{m}|\rho_{j,r,i}^{(t)}\rho_{j,r,i'}^{(t)}|\|\xb_i\|_2^{-2}\|\xb_{i'}\|_2^{-2}\leq CmR_{\min}^{-4}(\log(t))^2.
\end{align*}
Accordingly, we have
\begin{align*}
    \|\Wb_{j}^{(t)}-\Wb_{j}^{(0)}\|_{F}^{2}&\leq CmnR_{\max}^{2}R_{\min}^{-4}(\log(t))^2+Cmn^2pR_{\min}^{-4}(\log(t))^2\\
    &= CmnR_{\max}^{2}R_{\min}^{-4}(1+R_{\max}^{-2}np)(\log(t))^2\\
    &\leq C'mnR_{\max}^{2}R_{\min}^{-4}(\log(t))^2.
\end{align*}
On the other hand, we will give an lower bound for $\|\Wb_{j}^{(t)}-\Wb_{j}^{(0)}\|_{2}$.
\begin{align*}
    \|\Wb_{j}^{(t)}-\Wb_{j}^{(0)}\|_{2}&=\max_{\yb\in S^{d-1}}\|(\Wb_{j}^{(t)}-\Wb_{j}^{(0)})\yb\|_{2}\\
    &\geq\frac{\|(\Wb_{j}^{(t)}-\Wb_{j}^{(0)})\Xb^{\top}(\Xb\Xb^{\top})^{-1}\boldsymbol{1}_{n}\|_{2}}{\|\Xb^{\top}(\Xb\Xb^{\top})^{-1}\boldsymbol{1}_{n}\|_2}\\
    &=\frac{\|\Ab_t\boldsymbol{1}_{n}\|_2}{\|\Xb^{\top}(\Xb\Xb^{\top})^{-1}\boldsymbol{1}_{n}\|_2}.
\end{align*}
We first provide a lower bound for $\|\Ab_t\boldsymbol{1}_{n}\|_2$. Note that
\begin{equation*}
    \Ab_t\boldsymbol{1}_{n}=\begin{bmatrix}
        \sum_{i=1}^{n}\rho_{j,1,i}^{(t)}\|\xb_i\|_2^{-2}\\
        \vdots\\
        \sum_{i=1}^{n}\rho_{j,m,i}^{(t)}\|\xb_i\|_2^{-2}
    \end{bmatrix},
\end{equation*}
we need to bound $\sum_{i=1}^{n}\rho_{j,r,i}^{(t)}\|\xb_i\|_2^{-2}, r\in[m]$.
By Corollary~\ref{lm: omega/zeta bound}, there exists time $T$ such that for any $t\geq T$, $|\omega_{y_i,r,i'}^{(t)}|\leq cR_{\min}^{-2}pn\zeta_{y_i,r,i}^{(t)},\forall i\in S_{r}^{(0)}$ and $\forall i'\in[n]$ with $y_{i'}=-y_i$. Therefore, we have for $t\geq T$ that
\begin{align}
    \sum_{r=1}^{m}\sum_{i=1}^{n}\rho_{j,r,i}^{(t)}\|\xb_i\|_2^{-2}&=\sum_{r=1}^{m}\bigg(\sum_{i\in S_{j}}\zeta_{j,r,i}^{(t)}\|\xb_i\|_2^{-2}+\sum_{i\in S_{-j}}\omega_{j,r,i}^{(t)}\|\xb_i\|_2^{-2}\bigg)\notag\\
    &=\sum_{i\in S_{j}}\sum_{r=1}^{m}\zeta_{y_i,r,i}^{(t)}\|\xb_i\|_2^{-2}+\sum_{i\in S_{-j}}\sum_{r=1}^{m}\omega_{-y_i,r,i}^{(t)}\|\xb_i\|_2^{-2}\notag\\
    &\geq\sum_{i\in S_{j}}\sum_{r=1}^{m}\zeta_{y_i,r,i}^{(t)}R_{\max}^{-2}+\sum_{i\in S_{-j}}\sum_{r=1}^{m}\omega_{-y_i,r,i}^{(t)}R_{\min}^{-2}\notag\\
    &\geq\sum_{i\in S_{j}}\sum_{r\in S_{i}^{(0)}}\zeta_{y_i,r,i}^{(t)}R_{\max}^{-2}-\frac{m|S_{-j}|}{|S_j|}\sum_{i\in S_{j}}\frac{cR_{\min}^{-2}pn}{|S_{i}^{(0)}|}\sum_{r\in S_{i}^{(0)}}\zeta_{y_{i},r,i}^{(t)}R_{\min}^{-2}\notag\\
    &\geq\sum_{i\in S_{j}}\sum_{r\in S_{i}^{(0)}}\zeta_{y_i,r,i}^{(t)}R_{\max}^{-2}-\frac{m|S_{-j}|\cdot cR_{\min}^{-4}R_{\max}^{2}pn}{|S_j|\cdot\min_{i\in S_j}|S_{i}^{(0)}|}\sum_{i\in S_{j}}\sum_{r\in S_{i}^{(0)}}\zeta_{y_{i},r,i}^{(t)}R_{\max}^{-2}\notag\\
    &\geq\frac{R_{\max}^{-2}}{2}\sum_{i\in S_{j}}\sum_{r\in S_{i}^{(0)}}\zeta_{j,r,i}^{(t)}\label{ineq: rho-zeta lowbound},
\end{align}
where the second inequality is by Corollary~\ref{lm: omega/zeta bound} and hence
\begin{align*}
    &|\omega_{-y_i,r,i}^{(t)}|\leq cR_{\min}^{-2}pn\zeta_{y_{i'},r',i'}^{(t)},\forall r,r'\in[m], \forall i,i'\in[n],\\
    &|\omega_{-y_i,r,i}^{(t)}|\leq \frac{cR_{\min}^{-2}pn}{|S_{i'}^{(0)}|}\sum_{r\in S_{i'}^{(0)}}\zeta_{y_{i'},r,i'}^{(t)},\forall r\in[m],\forall i,i'\in[n],\\
    &|\omega_{-y_i,r,i}^{(t)}|\leq\frac{1}{|S_j|}\sum_{i\in S_{j}}\frac{cR_{\min}^{-2}pn}{|S_{i}^{(0)}|}\sum_{r\in S_{i}^{(0)}}\zeta_{y_{i},r,i}^{(t)},\forall r\in[m],\forall i\in[n],
\end{align*}
the last inequality is by
\begin{equation*}
    \frac{m|S_{-j}|\cdot cR_{\min}^{-4}R_{\max}^{2}pn}{|S_j|\cdot\min_{i\in S_j}|S_{i}^{(0)}|}\leq c'cR_{\min}^{-4}R_{\max}^{2}pn\cdot\frac{|S_{-j}|}{|S_j|}\leq\frac{1}{2}.
\end{equation*}
Then, we have for $t\geq T$ that 
\begin{align*}
    \|\Ab_t\boldsymbol{1}_{n}\|_2&=\sqrt{\sum_{r=1}^{m}\bigg(\sum_{i=1}^{n}\rho_{j,r,i}^{(t)}\|\xb_i\|_2^{-2}\bigg)^2}\\
    &\geq\bigg|\sum_{r=1}^{m}\sum_{i=1}^{n}\rho_{j,r,i}^{(t)}\|\xb_i\|_2^{-2}\big/\sqrt{m}\bigg|\\
    &\geq \frac{R_{\max}^{-2}}{2\sqrt{m}}\sum_{i\in S_{j}}\sum_{r\in S_{i}^{(0)}}\zeta_{j,r,i}^{(t)}\\
    &\geq\frac{R_{\max}^{-2}|S_{j}||S_{i}^{(0)}|}{2\sqrt{m}}\cdot\log\Big(1+\frac{\eta R_{\min}^{2}e^{-\beta}}{2c'nm}\cdot t\Big)\\
    &\geq CR_{\max}^{-2}\sqrt{m}n\log(t)
\end{align*}
where the second inequality is is by \eqref{ineq: rho-zeta lowbound}; the third inequality is by the second bullet of Lemma~\ref{lm: zeta log rate increase}.

For $\|\Xb^{\top}(\Xb\Xb^{\top})^{-1}\boldsymbol{1}_{n}\|_2$, we have
\begin{align*}
    \|\Xb^{\top}(\Xb\Xb^{\top})^{-1}\boldsymbol{1}_{n}\|_2&=\sqrt{\boldsymbol{1}_{n}^{\top}(\Xb\Xb^{\top})^{-1}\Xb\Xb^{\top}(\Xb\Xb^{\top})^{-1}\boldsymbol{1}_{n}}\\&=\sqrt{\boldsymbol{1}_{n}^{\top}(\Xb\Xb^{\top})^{-1}\boldsymbol{1}_{n}}\\
    &\leq \frac{\|\boldsymbol{1}_{n}\|_{2}}{\sqrt{\lambda_{\min}(\Xb\Xb^{\top})}}
\end{align*}
By the Gershgorin circle theorem, we know that $\lambda_{\min}(\Xb\Xb^{\top})$ lies within at least one of the Gershgorin discs $D((\Xb\Xb^{\top})_{ii}, R_i), i\in[n]$ where $D((\Xb\Xb^{\top})_{ii}, R_i)$ is a closed disc centered at $(\Xb\Xb^{\top})_{ii}=\|\xb_i\|_2^2$ with radius $R_i=\sum_{i'\neq i}|(\Xb\Xb^{\top})_{ii'}|=\sum_{i'\neq i}|\la\xb_{i},\xb_{i'}\ra|$. Assume $\lambda_{\min}(\Xb\Xb^{\top})$ lies within $D((\Xb\Xb^{\top})_{ii}, R_i)$, then we can get following lower bound for $\lambda_{\min}(\Xb\Xb^{\top})$:
\begin{equation*}
    \lambda_{\min}(\Xb\Xb^{\top})\geq\|\xb_i\|_2^2-\sum_{i'\neq i}|\la\xb_{i},\xb_{i'}\ra|\geq R_{\min}^{2}-np=(1-R_{\min}^{-2}np)R_{\min}^{2}\geq R_{\min}^{2}/2.
\end{equation*}
Therefore, we have
\begin{equation*}
    \|\Xb^{\top}(\Xb\Xb^{\top})^{-1}\boldsymbol{1}_{n}\|_2\leq\frac{\|\boldsymbol{1}_{n}\|_{2}}{\sqrt{\lambda_{\min}(\Xb\Xb^{\top})}}\leq\frac{\sqrt{2n}}{R_{\min}}.
\end{equation*}
Accordingly,
\begin{equation*}
     \|\Wb_{j}^{(t)}-\Wb_{j}^{(0)}\|_{2}\geq \frac{\|\Ab_t\boldsymbol{1}_{n}\|_2}{\|\Xb^{\top}(\Xb\Xb^{\top})^{-1}\boldsymbol{1}_{n}\|_2}\geq\frac{CR_{\max}^{-2}\sqrt{m}n\log(t)}{\sqrt{2n}/R_{\min}}=C'' R_{\max}^{-2}R_{\min}\sqrt{mn}\log(t).
\end{equation*}
Therefore, we have for $t\geq T$ that
\begin{align*}
    \frac{\|\Wb_{j}^{(t)}-\Wb_{j}^{(0)}\|_{F}^{2}}{\|\Wb_{j}^{(t)}-\Wb_{j}^{(0)}\|_{2}^{2}}\leq\frac{C'mnR_{\max}^{2}R_{\min}^{-4}(\log(t))^2}{C''^2 mnR_{\max}^{-4}R_{\min}^{2}(\log(t))^2}\leq\frac{C'R_{\max}^{6}}{C''^2 R_{\min}^{6}},
\end{align*}
which completes the proof.
\end{proof}

Next, we will provide an example of training data satisfying the condition in Theorem~\ref{main theorem2} and prove that the stable rank of $\Wb_{j}^{(t)}$ trained by gradient descent using such data will converge to $2\pm o(1)$. We first provide the following lemma about the increasing rate of coefficients $\rho_{j,r,i}^{(t)}$.
\begin{lemma}\label{lm: ortho activation pattern}
If training data $\xb_1,\cdots,\xb_n$ are mutually orthogonal, the activation pattern after time $T=C\eta^{-1}R_{\min}^{-2}/nm$ is determined by the activation pattern at initialization, i.e.,
\begin{align*}
    &\la\wb_{y_i,r}^{(t)},\xb_i\ra\geq0,&&\text{ if }\la\wb_{y_i,r}^{(0)},\xb_i\ra\geq0,\\
    &\la\wb_{y_i,r}^{(t)},\xb_i\ra<0,&&\text{ if }\la\wb_{y_i,r}^{(0)},\xb_i\ra<0,\\
    &\la\wb_{-y_i,r}^{(t)},\xb_i\ra<0,&&\text{ if }\la\wb_{-y_i,r}^{(0)},\xb_i\ra\geq0,\\
    &\la\wb_{-y_i,r}^{(t)},\xb_i\ra<0,&&\text{ if }\la\wb_{-y_i,r}^{(0)},\xb_i\ra<0,
\end{align*}
for any time $t\geq T$. Besides, $\rho_{j,r,i}^{(t)}$ satisfy the following properties:
\begin{align*}
    &\zeta_{y_i,r,i}^{(t)}=0,\,\forall t\geq 0,&&\text{ if }\la\wb_{j,r}^{(t)},\xb_i\ra<0,\\
    &\zeta_{y_i,r,i}^{(t)}=\zeta_{y_i,r,i}^{(T)},\, \forall t\geq T,&&\text{ if }\la\wb_{-y_i,r}^{(t)},\xb_i\ra\geq0,\\
    &\lim_{t\rightarrow\infty}\zeta_{y_i,r,i}^{(t)}/\log t=m/|S_{i}^{(0)}|,&&\text{ if }\la\wb_{y_i,r}^{(t)},\xb_i\ra\geq0.
\end{align*}
\end{lemma}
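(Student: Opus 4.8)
The plan is to exploit the fact that mutual orthogonality, $\langle\xb_{i'},\xb_i\rangle=0$ for $i'\neq i$, makes the data-correlated decomposition of Definition~\ref{def:w_decomposition} exact after projecting onto a data point: $\langle\wb_{j,r}^{(t)},\xb_i\rangle=\langle\wb_{j,r}^{(0)},\xb_i\rangle+\rho_{j,r,i}^{(t)}$ for all $j,r,i,t$, with no near-orthogonality error. Recalling from \eqref{iterative equation3}--\eqref{iterative equation4} and $\ell_i'^{(t)}<0$ that $\rho_{y_i,r,i}^{(t)}=\zeta_{y_i,r,i}^{(t)}\ge0$ is nondecreasing, $\rho_{-y_i,r,i}^{(t)}=\omega_{-y_i,r,i}^{(t)}\le0$ is nonincreasing, and that each step multiplies the increment of $\zeta_{y_i,r,i}$ (resp.\ $|\omega_{-y_i,r,i}|$) by $\ind(\langle\wb_{y_i,r}^{(t)},\xb_i\rangle\ge0)$ (resp.\ $\ind(\langle\wb_{-y_i,r}^{(t)},\xb_i\rangle\ge0)$), I would run a single induction on $t$ that simultaneously tracks the four activation statuses in the statement and the three coefficient identities.

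For the activation statuses: (i) for $r\in S_i^{(0)}$ the inclusion $S_i^{(0)}\subseteq S_i^{(t)}$ is exactly Lemma~\ref{lm: zeta log rate increase}, so $\langle\wb_{y_i,r}^{(t)},\xb_i\rangle\ge0$ throughout; (ii) for $r\notin S_i^{(0)}$ one has $\langle\wb_{y_i,r}^{(0)},\xb_i\rangle<0$ and, inductively, $\zeta_{y_i,r,i}^{(t)}=0$, since this coefficient can only grow when the neuron is active, whereas by the exact identity the neuron is active iff $\langle\wb_{y_i,r}^{(0)},\xb_i\rangle+\zeta_{y_i,r,i}^{(t)}\ge0$, which fails while $\zeta=0$; hence the neuron stays inactive and $S_i^{(t)}=S_i^{(0)}$ exactly; (iii) an off-diagonal neuron with $\langle\wb_{-y_i,r}^{(0)},\xb_i\rangle<0$ has $\omega_{-y_i,r,i}^{(t)}=0$ for all $t$ by the same argument and stays inactive; (iv) an off-diagonal neuron with $0\le\langle\wb_{-y_i,r}^{(0)},\xb_i\rangle\le\beta$ gets deactivated quickly: while it is active, $|\omega_{-y_i,r,i}^{(t)}|$ increases by $\tfrac{\eta}{nm}|\ell_i'^{(t)}|\|\xb_i\|_2^2\ge\tfrac{c\eta R_{\min}^2}{nm}$ per step, because the logits are $O(1)$ and hence $|\ell_i'^{(t)}|=\Theta(1)$ on an initial window of length $\Theta(\eta^{-1}nmR_{\max}^{-2})$ (as in the proof of Lemma~\ref{lm: zeta log rate increase}), so $\langle\wb_{-y_i,r}^{(t)},\xb_i\rangle\le\beta-\tfrac{c\eta R_{\min}^2t}{nm}$ turns negative within $O(\beta\,\eta^{-1}nmR_{\min}^{-2})$ steps, which is before $T$ once its constant is large and $\beta$ is small by the assumptions of Theorem~\ref{main theorem2}; after deactivation the factor $\ind(\cdot\ge0)$ vanishes and $\omega_{-y_i,r,i}$ freezes, giving $\omega_{-y_i,r,i}^{(t)}=\omega_{-y_i,r,i}^{(T)}$ for $t\ge T$. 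This yields all the sign claims and the first two coefficient identities.

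For the limiting rate, the previous step shows that for $t\ge T$ the margin is exactly $y_if(\Wb^{(t)},\xb_i)=F_{y_i}(\Wb_{y_i}^{(t)},\xb_i)=\tfrac1m\sum_{r\in S_i^{(0)}}\big(\langle\wb_{y_i,r}^{(0)},\xb_i\rangle+\zeta_{y_i,r,i}^{(t)}\big)$ with $F_{-y_i}(\Wb_{-y_i}^{(t)},\xb_i)=0$, and all $\zeta_{y_i,r,i}^{(t)}$, $r\in S_i^{(0)}$, equal a common value $z_i(t)$ since they start at $0$ and receive the identical increment $\tfrac{\eta}{nm}|\ell_i'^{(t)}|\|\xb_i\|_2^2$. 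Setting $x_t:=\tfrac{|S_i^{(0)}|}{m}z_i(t)+\tfrac1m\sum_{r\in S_i^{(0)}}\langle\wb_{y_i,r}^{(0)},\xb_i\rangle=y_if(\Wb^{(t)},\xb_i)$, the recursion $z_i(t+1)-z_i(t)=\tfrac{\eta\|\xb_i\|_2^2}{nm}\cdot\tfrac1{1+\exp(x_t)}$ together with $\tfrac12e^{-x_t}\le\tfrac1{1+\exp(x_t)}\le e^{-x_t}$ for $x_t\ge0$ puts $\{x_{T+s}\}_{s\ge0}$ into the form required by Lemma~\ref{lm: auxiliary1}, which gives $x_t=\log t+O(1)$ and hence $z_i(t)/\log t\to m/|S_i^{(0)}|$, i.e.\ $\zeta_{y_i,r,i}^{(t)}/\log t\to m/|S_i^{(0)}|$ for $r\in S_i^{(0)}$. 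The main obstacle is step (iv): one must confirm that the initially-active off-diagonal neurons deactivate inside the window where $|\ell_i'^{(t)}|$ is still bounded below by a constant, and before the threshold $T$, which requires coordinating the constants in $T$, the $O(1)$-logit window, $\beta$, and the near-orthogonality conditions; once the activation pattern has frozen, orthogonality reduces the margin dynamics to an exact one-dimensional recursion per data point, so the sharp constant $m/|S_i^{(0)}|$ (rather than the two-sided $c_2,c_3$ bounds of Lemma~\ref{lm: zeta log rate increase}) falls out of Lemma~\ref{lm: auxiliary1}.
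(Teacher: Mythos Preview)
Your proposal is correct and follows essentially the same approach as the paper: you exploit that orthogonality makes $\langle\wb_{j,r}^{(t)},\xb_i\rangle=\langle\wb_{j,r}^{(0)},\xb_i\rangle+\rho_{j,r,i}^{(t)}$ exact, handle the four activation cases by induction (the paper does case (iv) by contradiction rather than your direct counting, but the two are logically equivalent), and then reduce the margin dynamics to a one-dimensional recursion of the form in Lemma~\ref{lm: auxiliary1} to extract the sharp constant $m/|S_i^{(0)}|$. The only cosmetic differences are that the paper re-proves $S_i^{(0)}\subseteq S_i^{(t)}$ from scratch rather than citing Lemma~\ref{lm: zeta log rate increase}, and it carries a $\pm\beta$ slack on $F_{-y_i}$ for all $t\ge0$ instead of using your cleaner observation that $F_{-y_i}\equiv0$ once $t\ge T$.
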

\begin{proof}[Proof of Lemma~\ref{lm: ortho activation pattern}]
\textbf{Part 1.} For if $\la\wb_{j,r}^{(0)},\xb_i\ra<0$, we first prove by induction that
\begin{equation}\label{induction hypothesis0}
    \rho_{j,r,i}^{(t)}=0, \la\wb_{j,r}^{(t)},\xb_i\ra=\la\wb_{j,r}^{(0)},\xb_i\ra<0, \forall t\geq0.
\end{equation}
The result is obvious at $t=0$ as all the coefficients are zero. Suppose that there exists $\tilde{t}$ such that \eqref{induction hypothesis0} holds for all time $0\leq t\leq\tilde{t}-1$. We aim to prove that \eqref{induction hypothesis0} also holds for $t=\tilde{t}$. Recall that by \eqref{iterative equation3}, \eqref{iterative equation4} and with \eqref{induction hypothesis0} at time $\tilde{t}-1$, we have
\begin{align*}
    &\zeta_{j,r,i}^{(\tilde{t})} = \zeta_{j,r,i}^{(\tilde{t}-1)} - \frac{\eta}{nm} \cdot \ell_i'^{(\tilde{t}-1)}\cdot \sigma'(\la\wb_{j,r}^{(\tilde{t}-1)}, \xb_{i}\ra) \cdot \| \xb_i \|_2^2\cdot\ind(y_{i} = j)=\zeta_{j,r,i}^{(\tilde{t}-1)}=0,\\
    &\omega_{j,r,i}^{(\tilde{t})} = \omega_{j,r,i}^{(\tilde{t}-1)}+\frac{\eta}{nm} \cdot \ell_i'^{(\tilde{t}-1)}\cdot \sigma'(\la\wb_{j,r}^{(\tilde{t}-1)}, \xb_{i}\ra) \cdot \| \xb_i \|_2^2\cdot\ind(y_{i} = -j)=\omega_{j,r,i}^{(\tilde{t}-1)}=0.
\end{align*}
By \eqref{eq:w_decomposition} and the orthogonality of training data, we can get
\begin{equation*}
    \la\wb_{j,r}^{(\tilde{t})},\xb_i\ra=\la\wb_{j,r}^{(0)},\xb_i\ra+\sum_{i'=1}^{n}\rho_{j,r,i'}^{(\tilde{t})}\|\xb_{i'}\|_2^{-2}\cdot\la\xb_{i'},\xb_i\ra=\la\wb_{j,r}^{(0)},\xb_i\ra+\rho_{j,r,i}^{(\tilde{t})}=\la\wb_{j,r}^{(0)},\xb_i\ra<0.
\end{equation*}
Therefore, \eqref{induction hypothesis0} holds at time $\tilde{t}$, which completes the induction.

For if $\la\wb_{j,r}^{(0)},\xb_i\ra\geq0$ and $j=y_i$, we will next prove by induction that
\begin{equation}\label{induction hypothesis1}
    \zeta_{j,r,i}^{(t)}\geq0,\la\wb_{j,r}^{(t)},\xb_i\ra\geq\la\wb_{j,r}^{(0)},\xb_i\ra\geq0, \forall t\geq0.
\end{equation}
The result is natural at $t=0$. Suppose that there exists $\tilde{t}$ such that \eqref{induction hypothesis1} hold for all time $0\leq t\leq\tilde{t}-1$. By \eqref{iterative equation3} and \eqref{induction hypothesis1} at time $\tilde{t}-1$, we have
\begin{equation*}
\zeta_{j,r,i}^{(\tilde{t})}=\zeta_{j,r,i}^{(\tilde{t}-1)}+\frac{\eta}{nm}\cdot|\ell_i'^{(\tilde{t}-1)}|\cdot\|\xb_i\|_2^2\geq\zeta_{j,r,i}^{(\tilde{t}-1)}\geq0
\end{equation*}
and hence the orthogonality of training data, we can get
\begin{equation*}
    \la\wb_{j,r}^{(\tilde{t})},\xb_i\ra=\la\wb_{j,r}^{(0)},\xb_i\ra+\rho_{j,r,i}^{(\tilde{t})}=\la\wb_{j,r}^{(0)},\xb_i\ra\geq0.
\end{equation*}
Therefore, \eqref{induction hypothesis1} hold at time $\tilde{t}$, which completes the induction.

For if $\la\wb_{j,r}^{(0)},\xb_i\ra\geq0$ and $j=-y_i$, we first show that under the same condition as Theorem~\ref{main theorem2} it holds that
\begin{equation*}
    \omega_{j,r,i}^{(T)}<-\la\wb_{j,r}^{(0)},\xb_i\ra.
\end{equation*}
Since $T=C\eta^{-1}R_{\min}^{-2}/nm$, we have $\max_{j,r,i}\{|\rho_{j,r,i}^{(t)}|\}=O(1)$ for $t\leq T$. Therefore, we know that $F_{+1}(\Wb_{+1}^{(t)},\xb_i),F_{-1}(\Wb_{-1}^{(t)},\xb_i)=O(1)$. Thus there exists a positive constant $c$ such that $-\ell_i'^{(t)}\geq c$ for all $i\in[n]$. Here we use the method of proof by contradiction. Assume $\omega_{j,r,i}^{(T)}\geq-\la\wb_{j,r}^{(0)},\xb_i\ra$. Since $\omega_{j,r,i}^{(T)}\leq\omega_{j,r,i}^{(t)}$ for $0\leq t\leq T$ which can be seen from \eqref{iterative equation4}, we have $\omega_{j,r,i}^{(t)}\geq-\la\wb_{j,r}^{(0)},\xb_i\ra$ for all $t\leq T$. Then, we can get
\begin{equation*}
    \la\wb_{j,r}^{(t)},\xb_i\ra=\la\wb_{j,r}^{(0)},\xb_i\ra+\rho_{j,r,i}^{(t)}\geq0,\forall t\leq T.
\end{equation*}
Therefore, by the non-negativeness of $\la\wb_{j,r}^{(t)},\xb_i\ra$ and \eqref{iterative equation4}, we can get
\begin{equation*}
    |\omega_{j,r,i}^{(t+1)}|=|\omega_{j,r,i}^{(t)}|+\frac{\eta}{nm}\cdot|\ell_i'^{(t)}|\cdot\|\xb_i\|_2^2\geq|\omega_{j,r,i}^{(t)}|+\frac{c\eta\|\xb_i\|_2^2}{nm}
\end{equation*}
and hence
\begin{equation*}
    |\omega_{j,r,i}^{(T)}|\geq\frac{c\eta\|\xb_i\|_2^2T}{nm}=cC\|\xb_i\|_2^2R_{\min}^{-2}\geq cC\geq\beta,
\end{equation*}
which is a contradiction. Therefore, $\omega_{j,r,i}^{(T)}<-\la\wb_{j,r}^{(0)},\xb_i\ra$. By \eqref{iterative equation4}, we have $\omega_{j,r,i}^{(t)}\leq\omega_{j,r,i}^{(T)}<-\la\wb_{j,r}^{(0)},\xb_i\ra$ for $t\geq T$. Therefore,
\begin{equation*}
    \la\wb_{j,r}^{(t)},\xb_i\ra=\la\wb_{j,r}^{(0)},\xb_i\ra+\rho_{j,r,i}^{(t)}<0,\forall t\geq T.
\end{equation*}
Plugging this into \eqref{iterative equation4} gives us
\begin{equation*}
    \omega_{j,r,i}^{(t+1)}=\omega_{j,r,i}^{(t)}+\frac{\eta}{nm}\cdot\ell_i^{(t)}\cdot\sigma'(\la\wb_{j,r}^{(t)},\xb_i\ra)\cdot\|\xb_i\|_{2}^{2}=\omega_{j,r,i}^{(t)}, \forall t\geq T.
\end{equation*}
This completes the proof of the first half of the lemma about the activation pattern as well as the first two properties of $\rho_{j,r,i}^{(t)}$. 

\textbf{Part 2.} Now we will show that
\begin{equation}
    \lim_{t\rightarrow\infty}\zeta_{y_i,r,i}^{(t)}/\log t=m/|S_{i}^{(0)}|,
\end{equation}
if $\la\wb_{y_i,r}^{(t)},\xb_i\ra\geq0$. By the activation pattern, we can get
\begin{equation}
\begin{aligned}
    &\zeta_{y_i,r,i}^{(t+1)} = \zeta_{y_i,r,i}^{(t)} + \frac{\eta}{nm} \cdot |\ell_i'^{(t)}|\cdot \| \xb_i \|_2^2,&&\forall t\geq 0,&&\text{ for }\la\wb_{y_i,i}^{(t)},\xb_i\ra\geq0,\\
    &\zeta_{y_i,r,i}^{(t)} = 0, &&\forall t\geq 0,&&\text{ for }\la\wb_{y_i,i}^{(t)},\xb_i\ra<0,\\
    &\omega_{-y_i,r,i}^{(t)}=\omega_{-y_i,r,i}^{(T)},&&\forall t\geq 0,&&\text{ for }\la\wb_{-y_i,i}^{(t)},\xb_i\ra\geq0,\\
    &\omega_{-y_i,r,i}^{(t)}=0,&&\forall t\geq 0,&&\text{ for }\la\wb_{-y_i,i}^{(t)},\xb_i\ra<0.
\end{aligned}
\end{equation}
Given this activation pattern, we can get for $t\geq 0$ that
\begin{align*}
    F_{y_i}(\Wb_{y_i}^{(t)},\xb_i)-F_{-y_i}(\Wb_{-y_i}^{(t)},\xb_i)&=\frac{1}{m}\sum_{r=1}^{m}\sigma(\la\wb_{y_i,r}^{(t)},\xb_i\ra)-\frac{1}{m}\sum_{r=1}^{m}\sigma(\la\wb_{-y_i,r}^{(t)},\xb_i\ra)\\
    &\leq\frac{1}{m}\sum_{r\in S_i^{(0)}}\la\wb_{y_i,r}^{(t)},\xb_i\ra\\
    &=\frac{1}{m}\sum_{r\in S_i^{(0)}}[\la\wb_{y_i,r}^{(0)},\xb_i\ra+\zeta_{y_i,r,i}^{(t)}]\\
    &\leq\frac{1}{m}\sum_{r\in S_i^{(0)}}\zeta_{y_i,r,i}^{(t)}+\beta,
\end{align*}
and
\begin{align*}
    F_{y_i}(\Wb_{y_i}^{(t)},\xb_i)-F_{-y_i}(\Wb_{-y_i}^{(t)},\xb_i)&=\frac{1}{m}\sum_{r=1}^{m}\sigma(\la\wb_{y_i,r}^{(t)},\xb_i\ra)-\frac{1}{m}\sum_{r=1}^{m}\sigma(\la\wb_{-y_i,r}^{(t)},\xb_i\ra)\\
    &\geq\frac{1}{m}\sum_{r\in S_i^{(0)}}\la\wb_{y_i,r}^{(t)},\xb_i\ra-\beta\\
    &=\frac{1}{m}\sum_{r\in S_i^{(0)}}[\la\wb_{y_i,r}^{(0)},\xb_i\ra+\zeta_{y_i,r,i}^{(t)}]-\beta\\
    &\geq\frac{1}{m}\sum_{r\in S_i^{(0)}}\zeta_{y_i,r,i}^{(t)}-2\beta.
\end{align*}
Therefore, we can get following upper and lower bounds for $|\ell_i'^{(t)}|$:
\begin{align*}
    |\ell_i'^{(t)}|&\leq\exp\bigg(-\frac{1}{m}\sum_{r\in S_i^{(0)}}\zeta_{y_i,r,i}^{(t)}+2\beta\bigg),\forall t\geq0,\\
    |\ell_i'^{(t)}|&\geq\frac{1}{2}\exp\bigg(-\frac{1}{m}\sum_{r\in S_i^{(0)}}\zeta_{y_i,r,i}^{(t)}-\beta\bigg),\forall t\geq 0.
\end{align*}
And it follows that
\begin{align*}
    &\frac{1}{m}\sum_{r\in S_i^{(0)}}\zeta_{y_i,r,i}^{(t+1)}\leq\frac{1}{m}\sum_{r\in S_i^{(0)}}\zeta_{y_i,r,i}^{(t)}+\frac{\eta\|\xb_i\|_2^2e^{2\beta}}{nm}\cdot\exp\bigg(-\frac{1}{m}\sum_{r\in S_i^{(0)}}\zeta_{y_i,r,i}^{(t)}\bigg),\forall t\geq0,\\
    &\frac{1}{m}\sum_{r\in S_i^{(0)}}\zeta_{y_i,r,i}^{(t+1)}\geq\frac{1}{m}\sum_{r\in S_i^{(0)}}\zeta_{y_i,r,i}^{(t)}+\frac{\eta\|\xb_i\|_2^2e^{-\beta}}{2nm}\cdot\exp\bigg(-\frac{1}{m}\sum_{r\in S_i^{(0)}}\zeta_{y_i,r,i}^{(t)}\bigg),\forall t\geq0.
\end{align*}
By leveraging Lemma~\ref{lm: useful lemma1} as well as Lemma~\ref{lm: useful lemma2} and taking
\begin{equation*}
    x_t=\frac{1}{m}\sum_{r\in S_i^{(0)}}\zeta_{y_i,r,i}^{(t)},
\end{equation*}
we can get
\begin{align*}
    \frac{1}{m}\sum_{r\in S_i^{(0)}}\zeta_{y_i,r,i}^{(t)}&\leq\log\bigg(1+\frac{\eta\|\xb_i\|_2^2e^{2\beta}}{nm}\exp\Big(\frac{\eta\|\xb_i\|_2^2e^{2\beta}}{nm}\Big)\cdot t\bigg)\leq\log\bigg(1+\frac{2\eta\|\xb_i\|_2^2e^{-\beta}}{nm}\cdot t\bigg),\\
    \frac{1}{m}\sum_{r\in S_i^{(0)}}\zeta_{y_i,r,i}^{(t)}&\geq\log\bigg(1+\frac{\eta\|\xb_i\|_2^2e^{-\beta}}{2nm}\cdot t\bigg).
\end{align*}
Therefore, we have
\begin{equation*}
    \lim_{t\rightarrow\infty}\frac{1}{m}\sum_{r\in S_i^{(0)}}\zeta_{y_i,r,i}^{(t)}/\log t=1.
\end{equation*}
Since $\zeta_{y_i,r,i}^{(t)}=\zeta_{y_i,r',i}^{(t)}$ for any $r\neq r\in S_{i}^{(0)}$, we have
\begin{equation*}
    \lim_{t\rightarrow\infty}\zeta_{y_i,r,i}^{(t)}/\log t=m/|S_{i}^{(0)}|,
\end{equation*}
which completes the proof.
\end{proof}

\begin{lemma}\label{lm: stable rank3}
    For two-layer ReLU neural network defined in \eqref{def: two-layer ReLU nn}, there exists mutually orthogonal data $\xb_1,\cdots,\xb_n$  such that stable rank of $\Wb_j^{(t)}$ will converge to $2\pm o(1)$.
\end{lemma}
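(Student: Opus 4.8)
The plan is to realize the data as an orthonormal family and read off the asymptotics of $\Wb_j^{(t)}$ from Lemma~\ref{lm: ortho activation pattern}. Take $n$ even, let $d\geq n$, set $\xb_i=\eb_i$ for $i\in[n]$, and assign the labels so that each class contains exactly $k:=n/2$ points; then $R_{\min}=R_{\max}=1$ and $p=0$, so the hypotheses of Theorem~\ref{main theorem2} hold trivially. Fix $j\in\{\pm1\}$ and let $I_j=\{i:y_i=j\}$, $|I_j|=k$.

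First I would split $\Wb_j^{(t)}$ into a growing part and a bounded part. By the data-correlated decomposition \eqref{eq:w_decomposition} and Lemma~\ref{lm: ortho activation pattern}: for $i\in I_j$ the coefficient $\zeta_{j,r,i}^{(t)}$ takes a common value $z_i^{(t)}$ for all $r\in S_i^{(0)}$ (these neurons keep $\sigma'=1$ and update identically), vanishes for $r\notin S_i^{(0)}$, and satisfies $z_i^{(t)}/\log t\to m/|S_i^{(0)}|$; for $i\notin I_j$ the coefficient $\omega_{j,r,i}^{(t)}$ either stays $0$ or freezes after the threshold time at a value of magnitude $O(1)$ uniformly in $t$. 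Hence, writing $\mathbf{1}_{S_i^{(0)}}\in\RR^m$ for the indicator vector of $S_i^{(0)}$,
\[
\Wb_j^{(t)}=G_j(t)+E_j(t),\qquad G_j(t)=\sum_{i\in I_j} z_i^{(t)}\,\mathbf{1}_{S_i^{(0)}}\,\xb_i^{\top},\qquad \sup_t\|E_j(t)\|_F=O(1).
\]
Since the $\xb_i$ are orthonormal, $G_j(t)G_j(t)^{\top}=\sum_{i\in I_j}(z_i^{(t)})^2\,\mathbf{1}_{S_i^{(0)}}\mathbf{1}_{S_i^{(0)}}^{\top}$ and $\|G_j(t)\|_F^2=\sum_{i\in I_j}(z_i^{(t)})^2|S_i^{(0)}|$, both $\Theta((\log t)^2)\to\infty$; by the triangle inequality for $\|\cdot\|_F$ and Weyl's inequality for $\|\cdot\|_2$, replacing $\Wb_j^{(t)}$ by $G_j(t)$ only perturbs $\|\Wb_j^{(t)}\|_F$ and $\|\Wb_j^{(t)}\|_2$ by factors $1+O(1/\log t)$. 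So it remains to compute the limiting stable rank of the rank-$k$ matrix $G_j(t)$.

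Dividing by $(\log t)^2$ and using $z_i^{(t)}/\log t\to m/|S_i^{(0)}|$, the limit of $\|G_j(t)\|_F^2/(\log t)^2$ is $\sum_{i\in I_j}m^2/|S_i^{(0)}|$, and that of $\|G_j(t)\|_2^2/(\log t)^2$ is the top eigenvalue of $\sum_{i\in I_j}(m/|S_i^{(0)}|)^2\mathbf{1}_{S_i^{(0)}}\mathbf{1}_{S_i^{(0)}}^{\top}$, which equals the top eigenvalue of the $k\times k$ Gram matrix $M$ with $M_{ii'}=\big(m^2/(|S_i^{(0)}||S_{i'}^{(0)}|)\big)\,|S_i^{(0)}\cap S_{i'}^{(0)}|$. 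Now bring in concentration of the initialization: Lemma~\ref{lm: number of initial activated neurons} gives $|S_i^{(0)}|=\tfrac{m}{2}(1\pm\varepsilon)$ with $\varepsilon=O\big(\sqrt{\log(n/\delta)/m}\big)$, and since $\xb_i\perp\xb_{i'}$ the events $\{r\in S_i^{(0)}\}$ and $\{r\in S_{i'}^{(0)}\}$ are independent, so $|S_i^{(0)}\cap S_{i'}^{(0)}|\sim\mathrm{Bin}(m,1/4)$ and Hoeffding plus a union bound over pairs gives $|S_i^{(0)}\cap S_{i'}^{(0)}|=\tfrac{m}{4}(1\pm\varepsilon)$ for all $i\neq i'$ (a routine estimate to add to Section~\ref{sec: initial}). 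Hence $M=m\,\mathbf{1}_k\mathbf{1}_k^{\top}+mI_k$ up to a spectral-norm error $O(k\sqrt{m\log(n/\delta)})$, so its top eigenvalue is $m(k+1)(1\pm O(\varepsilon))$, while $\sum_{i\in I_j}m^2/|S_i^{(0)}|=2mk(1\pm O(\varepsilon))$. Therefore
\[
\lim_{t\to\infty}\frac{\|\Wb_j^{(t)}\|_F^2}{\|\Wb_j^{(t)}\|_2^2}=\frac{2k}{k+1}\,(1\pm O(\varepsilon))=2-\frac{2}{k+1}\pm O\!\Big(\sqrt{\tfrac{\log(n/\delta)}{m}}\Big),
\]
which is $2\pm o(1)$ once $n$ (hence $k=n/2$) and $m$ are large; the argument for $\Wb_{-j}^{(t)}$ is identical.

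The main obstacle is the spectral-norm step: one must confirm that the leading eigenvector of $\sum_{i\in I_j}(m/|S_i^{(0)}|)^2\mathbf{1}_{S_i^{(0)}}\mathbf{1}_{S_i^{(0)}}^{\top}$ is asymptotically the all-ones direction, so that $\|G_j(t)\|_2^2\sim m(k+1)(\log t)^2$ rather than anything larger---equivalently, that the pairwise overlaps $|S_i^{(0)}\cap S_{i'}^{(0)}|$ genuinely concentrate around $m/4$ and do not conspire to inflate the top singular value. This is exactly where orthogonality of the data is used twice: once so that $G_j(t)G_j(t)^{\top}$ is a clean sum of rank-one terms $\mathbf{1}_{S_i^{(0)}}\mathbf{1}_{S_i^{(0)}}^{\top}$, and once so that the activation sets $S_i^{(0)}$ are pairwise independent, which is what forces $M$ close to $m\mathbf{1}\mathbf{1}^{\top}+mI$. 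Controlling the bounded part $E_j(t)$, passing to the $k\times k$ Gram matrix, and the final arithmetic are all routine.
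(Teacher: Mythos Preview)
Your proof is correct and shares the same setup and key input as the paper's: both take $\xb_i=\eb_i$, balance the two classes, and use Lemma~\ref{lm: ortho activation pattern} to identify the limiting matrix $\Wb_j^{(t)}/\log t$ as (essentially) $\sum_{i\in I_j}\frac{m}{|S_i^{(0)}|}\mathbf{1}_{S_i^{(0)}}\xb_i^{\top}$. The difference lies entirely in how the spectral norm of this limit is computed. You pass to the $k\times k$ Gram matrix $M$ with entries $\frac{m^2}{|S_i^{(0)}||S_{i'}^{(0)}|}|S_i^{(0)}\cap S_{i'}^{(0)}|$ and observe, via Hoeffding on the pairwise overlaps (using independence of $\langle\wb_{j,r}^{(0)},\eb_i\rangle$ and $\langle\wb_{j,r}^{(0)},\eb_{i'}\rangle$), that $M\approx m\mathbf{1}_k\mathbf{1}_k^{\top}+mI_k$; this is elementary and even yields the sharper constant $2k/(k+1)$. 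The paper instead works with the $m\times k$ indicator matrix $\Cb$ directly, splits $\Ab=\Cb\cdot\mathrm{diag}(m/|S_i^{(0)}|)$ as $\EE[\Ab]+\tilde{\Ab}$, computes $\|\EE[\Ab]\|_2=\sqrt{m}\|\ab\|_2/2$ explicitly (rank-one), and bounds $\|\tilde{\Ab}\|_2$ by Vershynin's sub-gaussian operator norm bound (Lemma~\ref{lm: useful lemma3}). Your route avoids that random-matrix hammer at the price of one extra concentration lemma for $|S_i^{(0)}\cap S_{i'}^{(0)}|$; the paper's route avoids any new lemma but uses heavier machinery. Both are clean; the Gram-matrix view you chose makes the ``stable rank $\approx 2$'' structure (equal diagonal, equal off-diagonal) most transparent. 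One minor point: your claim $\sup_t\|E_j(t)\|_F=O(1)$ should be read as ``bounded in $t$'' rather than an absolute $O(1)$ (it contains $\Wb_j^{(0)}$), but since you only need $\|E_j(t)\|_F/\log t\to0$ this is harmless.
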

\begin{proof}[Proof of Lemma~\ref{lm: stable rank3}]
By \eqref{eq:w_decomposition}, we have
\begin{equation*}
    \wb_{j,r}^{(t)}=\wb_{j,r}^{(0)}+\underbrace{\sum_{i=1}^{n}\rho_{j,r,i}^{(t)}\cdot\|\xb_i\|_2^{-2}\cdot\xb_i}_{:=\vb_{j,r}^{(t)}}.
\end{equation*}
Given the definition of $\vb_{j,r}^{(t)}$, we have the following representation of $\vb_{j,r}^{(t)}$ and $\Vb_{j}^{(t)}$.
\begin{equation*}
    \vb_{j,r}^{(t)}=\big[\rho_{j,r,1}^{(t)}\cdot\|\xb_1\|_2^{-2}\cdots,\rho_{j,r,n}^{(t)}\cdot\|\xb_n\|_2^{-2}\big]\cdot\begin{bmatrix}
           \xb_{1} \\
           \vdots \\
           \xb_{n}
         \end{bmatrix},
    \end{equation*}
    \begin{equation*}
        \Vb_{j}^{(t)}=\begin{bmatrix}
            \rho_{j,r,1}^{(t)}\cdot\|\xb_1\|_2^{-2}&\cdots&\rho_{j,r,n}^{(t)}\cdot\|\xb_n\|_2^{-2}\\
            \vdots&\ddots&\vdots\\
            \rho_{j,m,1}^{(t)}\cdot\|\xb_1\|_2^{-2}&\cdots&\rho_{j,m,n}^{(t)}\cdot\|\xb_n\|_2^{-2}
        \end{bmatrix}\cdot\begin{bmatrix}
           \xb_{1} \\
           \vdots \\
           \xb_{n}
         \end{bmatrix}.
    \end{equation*}
Assume $n$ is an even number and $\xb_{1},\cdots,\xb_{n/2}$ are with label $+1$ while $\xb_{(n/2)+1},\cdots,\xb_{n}$ are with label $-1$. And we take $\xb_{1},\cdots,\xb_{n}$ as $\eb_{1},\cdots,\eb_{n}$. Given Lemma~\ref{lm: ortho activation pattern}, $\Wb_{j}^{(t)}=\Wb_{j}^{(0)}+\Vb_{j}^{(t)}$ and such selection of training data, we have
\begin{align*}
    &\lim_{t\rightarrow\infty}\frac{\Wb_{+1}^{(t)}}{\log t}=[\Ab_{m\times (n/2)},\boldsymbol{0}_{m\times (n/2)}]
    \cdot
    [\Ib_n, \boldsymbol{0}_{n\times (d-n)}]=[\Ab_{m\times (n/2)},\boldsymbol{0}_{m\times (d-(n/2))}],\\
    &\lim_{t\rightarrow\infty}\frac{\Wb_{-1}^{(t)}}{\log t}=[\boldsymbol{0}_{m\times (n/2)},\Bb_{m\times (n/2)}]
    \cdot
    [\Ib_n, \boldsymbol{0}_{n\times (d-n)}]=[\boldsymbol{0}_{m\times (n/2)},\Bb_{m\times (n/2)},\boldsymbol{0}_{n\times (d-n)}],\\
    &\lim_{t\rightarrow\infty}\frac{\Wb^{(t)}}{\log t}=\begin{bmatrix}
        \Ab_{m\times (n/2)}&\boldsymbol{0}_{m\times (n/2)}&\boldsymbol{0}_{n\times (d-n)}\\
        \boldsymbol{0}_{m\times (n/2)}&\Bb_{m\times (n/2)}&\boldsymbol{0}_{n\times (d-n)}
    \end{bmatrix}.
\end{align*}
where
\begin{align*}
\Ab_{m\times (n/2)}&=\underbrace{\begin{bmatrix}
            \ind[\la\wb_{+1,1}^{(0)},\xb_1\ra\geq0]&\cdots&\ind[\la\wb_{+1,1}^{(0)},\xb_{n/2}\ra\geq0]\\
            \vdots&\ddots&\vdots\\
            \ind[\la\wb_{+1,m}^{(0)},\xb_1\ra\geq0] &\cdots&\ind[\la\wb_{+1,m}^{(0)},\xb_{n/2}\ra\geq0]
        \end{bmatrix}}_{:=\Cb_{m\times (n/2)}}\cdot\diag\begin{bmatrix}
            m/|S_{1}^{(0)}|\\
            \vdots\\
            m/|S_{n/2}^{(0)}|
        \end{bmatrix},\\
    \Bb_{m\times (n/2)}&=\underbrace{\begin{bmatrix}
            \ind[\la\wb_{-1,1}^{(0)},\xb_{(n/2)+1}\ra\geq0]&\cdots&\ind[\la\wb_{-1,1}^{(0)},\xb_{n}\ra\geq0]\\
            \vdots&\ddots&\vdots\\
            \ind[\la\wb_{-1,m}^{(0)},\xb_{(n/2)+1}\ra\geq0] &\cdots&\ind[\la\wb_{-1,m}^{(0)},\xb_{n}\ra\geq0]
        \end{bmatrix}}_{:=\Db_{m\times (n/2)}}\cdot\diag\begin{bmatrix}
            m/|S_{(n/2)+1}^{(0)}|\\
            \vdots\\
            m/|S_{n}^{(0)}|
        \end{bmatrix}.
\end{align*}
Then, we can get the stable rank limits as follows:
\begin{align*}
    &\lim_{t\rightarrow\infty}\frac{\|\Wb_{+1}^{(t)}\|_{F}^{2}}{\|\Wb_{+1}^{(t)}\|_{2}^{2}}=\frac{\|\Ab\|_{F}^{2}}{\|\Ab\|_{2}^{2}},\\
    &\lim_{t\rightarrow\infty}\frac{\|\Wb_{-1}^{(t)}\|_{F}^{2}}{\|\Wb_{-1}^{(t)}\|_{2}^{2}}=\frac{\|\Bb\|_{F}^{2}}{\|\Bb\|_{2}^{2}},\\
    &\lim_{t\rightarrow\infty}\frac{\|\Wb^{(t)}\|_{F}^{2}}{\|\Wb^{(t)}\|_{2}^{2}}=\frac{\|\Ab\|_{F}^{2}+\|\Bb\|_{F}^{2}}{(\max\{\|\Ab\|_{2},\|\Bb\|_{2}\})^{2}}.
\end{align*}
Since $\xb_1=\eb_1,\cdots,\xb_n=\eb_n$, we can get
\begin{equation*}
    \ind[\la\wb_{j,r}^{(0)},\xb_i\ra\geq 0]=\ind[[\wb_{j,r}^{(0)}]_{i}\geq 0].
\end{equation*}
Therefore, the entries of matrix $\Cb$ and matrix $\Db$ can be regarded as i.i.d. random variables taking 0 or 1 with equal probability. For $\|\Ab\|_{F}$ and $\|\Bb\|_{F}$, we have
\begin{align*}
    \|\Ab\|_{F}^{2}&=\sum_{r=1}^{m}\sum_{i=1}^{n/2}\ind[[\wb_{+1,r}^{(0)}]_{i}\geq 0]\cdot(m/|S_{i}^{(0)}|)^{2},\\
    \|\Bb\|_{F}^{2}&=\sum_{r=1}^{m}\sum_{i=(n/2)+1}^{n}\ind[[\wb_{-1,r}^{(0)}]_{i}\geq 0]\cdot(m/|S_{i}^{(0)}|)^{2}.
\end{align*}
By Lemma~\ref{lm: number of initial activated neurons}, we have with probability at least $1-\delta$ that $0.4m\leq|S_{i}^{(0)}|\leq 0.6m$.
By Hoeffding's inequality, we have with probability at least $1-2\delta$ that
\begin{equation*}
    \bigg|\|\Ab\|_{F}^{2}-\frac{m}{2}\sum_{i=1}^{n/2}(m/|S_{i}^{(0)}|)^{2}\bigg|\leq\sqrt{\frac{m\log(2/\delta)}{2}\sum_{i=1}^{n/2}(m/|S_{i}^{(0)}|)^{4}}\leq\sqrt{\frac{625mn\log(2/\delta)}{32}},
\end{equation*}
\begin{equation*}
    \bigg|\|\Bb\|_{F}^{2}-\frac{m}{2}\sum_{i=(n/2)+1}^{n}(m/|S_{i}^{(0)}|)^{2}\bigg|\leq\sqrt{\frac{m\log(2/\delta)}{2}\sum_{i=(n/2)+1}^{n}(m/|S_{i}^{(0)}|)^{4}}\leq\sqrt{\frac{625mn\log(2/\delta)}{32}}.
\end{equation*}
Next, we estimate $\|\Ab\|_2$ and $\|\Bb\|_2$. Let $\Ab=\tilde{\Ab}+\EE[\Ab]$ and $\Bb=\tilde{\Bb}+\EE[\Bb]$. Assume $\Gb$ be the $m\times (n/2)$ matrix with all entries equal to $1/2$. Then, 
\begin{equation*}
    \EE[\Ab]=\Gb\cdot\diag\underbrace{\begin{bmatrix}
        m/|S_{1}^{(0)}|\\
        \vdots\\
        m/|S_{n/2}^{(0)}|
    \end{bmatrix}}_{:=\ab},
    \EE[\Bb]=\Gb\cdot\diag\underbrace{\begin{bmatrix}
        m/|S_{(n/2)+1}^{(0)}|\\
        \vdots\\
        m/|S_{n}^{(0)}|
    \end{bmatrix}}_{:=\bb}.
\end{equation*}

And the entries of matrix $\tilde{\Ab}$ and matrix $\tilde{\Bb}$ are independent, mean zero, sub-gaussian random variables. By Lemma~\ref{lm: useful lemma3}, we have with probability at least $1-\delta$ that
\begin{equation*}
    \|\tilde{\Ab}\|_2\leq \frac{C}{2}\big(\sqrt{m}+\sqrt{n}+\sqrt{\log(2/\delta)}\big),
\end{equation*}
\begin{equation*}
    \|\tilde{\Bb}\|_2\leq \frac{C}{2}\big(\sqrt{m}+\sqrt{n}+\sqrt{\log(2/\delta)}\big),
\end{equation*}
where $C$ is a constant.
Let $\boldsymbol{1}_k$ denote the row vector with $k$ entries equal to 1. Then for $\EE[\Ab]$ and $\EE[\Bb]$, we have
\begin{align*}
    \|\EE[\Ab]\|_{2}&=\max_{\xb\in S^{\frac{n}{2}-1}}\|\Gb\diag(\ab)\xb\|_{2}\\
    &=\max_{\xb\in S^{\frac{n}{2}-1}}\frac{1}{2}\|\boldsymbol{1}_{m}^{\top}\boldsymbol{1}_{\frac{n}{2}}\diag(\ab)\xb\|_{2}\\
    &=\max_{\xb\in S^{\frac{n}{2}-1}}\frac{\sqrt{m}}{2}|\boldsymbol{1}_{\frac{n}{2}}\diag(\ab)\xb|\\
    &=\max_{\xb\in S^{\frac{n}{2}-1}}\frac{\sqrt{m}}{2}|\ab^{\top}\xb|\\
    &=\frac{\sqrt{m}\|\ab\|_{2}}{2},
\end{align*}
and
\begin{equation*}
    \|\EE[\Bb]\|_{2}=\frac{\sqrt{m}\|\bb\|_{2}}{2}.
\end{equation*}
By triangle inequality, we have
\begin{align*}
    &\|\Ab\|_2\geq \|\Cb\|_2-\|\tilde{\Ab}\|_2\geq(\sqrt{m}\|\ab\|_{2})/2-\frac{C}{2}\big(\sqrt{m}+\sqrt{n}+\sqrt{\log(2/\delta)}\big),\\
    &\|\Ab\|_2\leq \|\Cb\|_2+\|\tilde{\Ab}\|_2\leq(\sqrt{m}\|\ab\|_{2})/2+\frac{C}{2}\big(\sqrt{m}+\sqrt{n}+\sqrt{\log(2/\delta)}\big),\\
    &\|\Bb\|_2\geq \|\Cb\|_2-\|\tilde{\Bb}\|_2\geq(\sqrt{m}\|\bb\|_{2})/2-\frac{C}{2}\big(\sqrt{m}+\sqrt{n}+\sqrt{\log(2/\delta)}\big),\\
    &\|\Bb\|_2\leq \|\Cb\|_2+\|\tilde{\Bb}\|_2\leq(\sqrt{m}\|\bb\|_{2})/2+\frac{C}{2}\big(\sqrt{m}+\sqrt{n}+\sqrt{\log(2/\delta)}\big).
\end{align*}
Notice that $\|\ab\|_2^{2}=\Theta(n)$ and $\|\bb\|_2^{2}=\Theta(n)$, then with probability at least $1-2\delta$, we have
\begin{align*}
    &\frac{\|\Ab\|_F^2}{\|\Ab\|_2^2}\leq\frac{m\|\ab\|_2^2/2+\sqrt{625mn\log(2/\delta)/32}}{\big(\sqrt{m}\|\ab\|_{2}/2-\frac{C}{2}\big(\sqrt{m}+\sqrt{n}+\sqrt{\log(2/\delta)}\big)\big)^{2}}=2+o(1),\\
    &\frac{\|\Ab\|_F^2}{\|\Ab\|_2^2}\geq\frac{m\|\ab\|_2^2/2-\sqrt{625mn\log(2/\delta)/32}}{\big(\sqrt{m}\|\ab\|_{2}/2+\frac{C}{2}\big(\sqrt{m}+\sqrt{n}+\sqrt{\log(2/\delta)}\big)\big)^{2}}=2-o(1),\\
    &\frac{\|\Bb\|_F^2}{\|\Bb\|_2^2}\geq\frac{m\|\bb\|_2^2/2+\sqrt{625mn\log(2/\delta)/32}}{\big(\sqrt{m}\|\bb\|_{2}/2-\frac{C}{2}\big(\sqrt{m}+\sqrt{n}+\sqrt{\log(2/\delta)}\big)\big)^{2}}=2+o(1),\\
    &\frac{\|\Bb\|_F^2}{\|\Bb\|_2^2}\geq\frac{m\|\bb\|_2^2/2-\sqrt{625mn\log(2/\delta)/32}}{\big(\sqrt{m}\|\bb\|_{2}/2+\frac{C}{2}\big(\sqrt{m}+\sqrt{n}+\sqrt{\log(2/\delta)}\big)\big)^{2}}=2-o(1).
\end{align*}
This leads to
\begin{align*}
    \lim_{t\rightarrow\infty}\frac{\|\Wb_{+1}^{(t)}\|_{F}^{2}}{\|\Wb_{+1}^{(t)}\|_{2}^{2}}&=\frac{\|\Ab\|_{F}^{2}}{\|\Ab\|_{2}^{2}}=2\pm o(1), \\
    \lim_{t\rightarrow\infty}\frac{\|\Wb_{-1}^{(t)}\|_{F}^{2}}{\|\Wb_{-1}^{(t)}\|_{2}^{2}}&=\frac{\|\Bb\|_{F}^{2}}{\|\Bb\|_{2}^{2}}=2\pm o(1).
\end{align*}
For $\Wb^{(t)}$, we have the following lower bound
\begin{align*}
    \frac{\|\Ab\|_{F}^{2}+\|\Bb\|_{F}^{2}}{(\max\{\|\Ab\|_{2},\|\Bb\|_{2}\})^{2}}&\geq\frac{m(\|\ab\|_2^2+\|\bb\|_2^2)/2-\sqrt{625mn\log(2/\delta)/8}}{\big(\sqrt{m}\max\{\|\ab\|_{2},\|\bb\|_{2}\}/2+\frac{C}{2}\big(\sqrt{m}+\sqrt{n}+\sqrt{\log(2/\delta)}\big)\big)^{2}}\\
    &\geq(2-o(1))\cdot\frac{\|\ab\|_2^2+\|\bb\|_2^2}{(\max\{\|\ab\|_{2},\|\bb\|_{2}\})^2}\\
    &\geq\frac{16}{9}-o(1),
\end{align*}
where the third inequality is by $(5/3)\sqrt{n}\leq\|\ab\|_2\leq(5/2)\sqrt{n}$ and $(5/3)\sqrt{n}\leq\|\bb\|_2\leq(5/2)\sqrt{n}$ due to $0.4m\leq|S_{i}^{(0)}|\leq 0.6m$. And
\begin{align*}
    \frac{\|\Ab\|_{F}^{2}+\|\Bb\|_{F}^{2}}{(\max\{\|\Ab\|_{2},\|\Bb\|_{2}\})^{2}}&\leq\frac{m(\|\ab\|_2^2+\|\bb\|_2^2)/2+\sqrt{625mn\log(2/\delta)/8}}{\big(\sqrt{m}\max\{\|\ab\|_{2},\|\bb\|_{2}\}/2-\frac{C}{2}\big(\sqrt{m}+\sqrt{n}+\sqrt{\log(2/\delta)}\big)\big)^{2}}\\
    &\leq(2+o(1))\cdot\frac{\|\ab\|_2^2+\|\bb\|_2^2}{(\max\{\|\ab\|_{2},\|\bb\|_{2}\})^2}\\
    &\leq 9+o(1),
\end{align*}
where the third inequality is by $(5/3)\sqrt{n}\leq\|\ab\|_2\leq(5/2)\sqrt{n}$ and $(5/3)\sqrt{n}\leq\|\bb\|_2\leq(5/2)\sqrt{n}$ due to $0.4m\leq|S_{i}^{(0)}|\leq 0.6m$. Therefore,
\begin{equation*}
    \lim_{t\rightarrow\infty}\frac{\|\Wb\|_{F}^{2}}{\|\Wb\|_{2}^{2}}=\frac{\|\Ab\|_{F}^{2}+\|\Bb\|_{F}^{2}}{(\max\{\|\Ab\|_{2},\|\Bb\|_{2}\})^{2}}\in[16/9-o(1),9+o(1)].
\end{equation*}
\end{proof}

\section{Margin Results and Loss Convergence}\label{sec: loss}
In this section, we prove the convergence rate of training loss as well as the increasing rate of margin for both two-layer ReLU and leaky ReLU networks defined as
\begin{equation}\label{eq: (leaky)ReLU nn}
\begin{aligned}
    f(\Wb^{(t)},\xb)&=F_{+1}(\Wb_{+1}^{(t)},\xb)-F_{-1}(\Wb_{-1}^{(t)},\xb)\\
    &=\frac{1}{m}\sum_{r=1}^{m}\sigma(\la\wb_{+1,r}^{(t)},\xb\ra)-\frac{1}{m}\sum_{r=1}^{m}\sigma(\la\wb_{-1,r}^{(t)},\xb\ra),\\
    \sigma&\in\{\text{ReLU}, \text{leaky ReLU}\}.
\end{aligned}
\end{equation}
We first prove the following auxiliary lemma.
\begin{lemma}\label{lm: logit ratio}
For both two-layer leaky ReLU and ReLU neural networks defined in \eqref{eq: (leaky)ReLU nn}, the following properties hold for any $t\geq 0$:
\begin{itemize}[leftmargin=*]
    \item $y_i f(\Wb^{(t)},\xb_i)\geq-c$ for any $i\in[n]$ where $c$ is a positive constant.
    \item $y_i f(\Wb^{(t)},\xb_i)-y_k f(\Wb^{(t)},\xb_k)\leq C_1$ for any $i,k\in[n]$ where $C_1$ is a constant.
    \item $\ell_{i}'^{(t)}/\ell_{k}'^{(t)}\leq C_2$ for any $i,k\in[n]$ where $C_2$ is a constant.
    \item $S_{i}^{(t)}\subseteq S_{i}^{(t+1)}$ for any $i\in[n]$, where $S_{i}^{(t)}:=\{r\in[m]:\la\wb_{y_i,r}^{(t)},\xb_i\ra\geq 0\}$.
\end{itemize}
    
\end{lemma}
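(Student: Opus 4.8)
The plan is to prove all four properties simultaneously by a single induction on $t$, with induction hypothesis $\mathcal H(t)$: properties 1--3 hold at time $t$, and property 4 holds for every $s<t$. For the base case, at $t=0$ the only contribution to each $\la\wb_{j,r}^{(0)},\xb_i\ra$ is the initialization, so Lemma~\ref{lm: initialization inner products} gives $|y_i f(\Wb^{(0)},\xb_i)|\le 2\beta$; this yields properties 1 and 2 with $c,C_1=\Theta(\beta)$, property 3 at $t=0$ follows from properties 1--2 exactly as in step (iv) below, and property 4 is vacuous. Throughout, write $m_i^{(t)}:=y_i f(\Wb^{(t)},\xb_i)$ and note $|\ell_i'^{(t)}|=1/(1+\exp(m_i^{(t)}))$, so properties 2 and 3 are two forms of the same statement about $m_i^{(t)}-m_k^{(t)}$, while property 1 controls the lower tail of $m_i^{(t)}$.

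For the inductive step I would proceed in this order. \emph{(i) Property 4 at time $t$.} Expanding $\la\wb_{y_i,r}^{(t+1)},\xb_i\ra$ via \eqref{eq:gdupdate}, the diagonal ($i'=i$) term equals $\tfrac{\eta}{nm}|\ell_i'^{(t)}|\,\sigma'(\la\wb_{y_i,r}^{(t)},\xb_i\ra)\,\|\xb_i\|_2^2\ge 0$, and it is $\gtrsim\tfrac{\eta\gamma}{nm}|\ell_i'^{(t)}|R_{\min}^2$ for leaky ReLU (resp.\ $\gtrsim\tfrac{\eta}{nm}|\ell_i'^{(t)}|R_{\min}^2$ for ReLU when $r\in S_i^{(t)}$, since then $\sigma'=1$), whereas the off-diagonal terms are bounded in absolute value by $\tfrac{\eta p}{nm}\sum_{i'\neq i}|\ell_{i'}'^{(t)}|\le\tfrac{(n-1)C_2\eta p}{nm}|\ell_i'^{(t)}|$ using property 3 at time $t$. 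The nearly-orthogonal condition (with $C$ large relative to $C_2$) makes the diagonal term dominate, so $r\in S_i^{(t)}\Rightarrow\la\wb_{y_i,r}^{(t+1)},\xb_i\ra\ge\la\wb_{y_i,r}^{(t)},\xb_i\ra\ge 0$. \emph{(ii) Margin increment.} Using the same expansion with the $1$-Lipschitzness and lower slope of $\sigma$ (which is $\ge\gamma$ for leaky ReLU and $=1$ on the already-activated set for ReLU by (i)), one obtains \eqref{ineq: margin incre}: $m_i^{(\cdot)}$ is nondecreasing and $m_i^{(t+1)}-m_i^{(t)}=\Theta(\tfrac{\eta}{nm}|\ell_i'^{(t)}|\|\xb_i\|_2^2)$, where for the ReLU lower bound one also invokes $|S_i^{(t)}|\ge|S_i^{(0)}|\ge 0.4m$ from Lemma~\ref{lm: number of initial activated neurons}. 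Property 1 at $t+1$ is then immediate: $m_i^{(t+1)}\ge m_i^{(0)}\ge -2\beta$.

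\emph{(iii) Property 2 at time $t+1$} is the crux. From \eqref{ineq: margin incre} one derives \eqref{ineq: margin difference incre} and splits on the size of $|\ell_i'^{(t)}|/|\ell_k'^{(t)}|$. If $3\|\xb_i\|_2^2|\ell_i'^{(t)}|\le\tfrac15\|\xb_k\|_2^2|\ell_k'^{(t)}|$, the right side of \eqref{ineq: margin difference incre} gives $m_i^{(t+1)}-m_k^{(t+1)}\le m_i^{(t)}-m_k^{(t)}\le C_1$ by $\mathcal H(t)$. Otherwise $|\ell_i'^{(t)}|/|\ell_k'^{(t)}|>\|\xb_k\|_2^2/(15\|\xb_i\|_2^2)\ge 1/(15R^2)$; since $|\ell_i'^{(t)}|/|\ell_k'^{(t)}|=(1+e^{m_k^{(t)}})/(1+e^{m_i^{(t)}})$ and $m_k^{(t)}\ge -c$, this forces $m_i^{(t)}-m_k^{(t)}\le\log(30R^2)+c=:C_0$, after which one more step adds at most $\tfrac{3\eta}{nm}|\ell_i'^{(t)}|R_{\max}^2\le 3/C\le 1$ (using $|\ell_i'^{(t)}|<1$ and $\eta\le(CR_{\max}^2/nm)^{-1}$), so $m_i^{(t+1)}-m_k^{(t+1)}\le C_0+1$. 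Choosing $C_1:=C_0+1$ (which exceeds $4\beta$, so the base case is consistent) closes property 2. \emph{(iv) Property 3 at time $t+1$:} since $\ell_i'^{(t+1)}/\ell_k'^{(t+1)}=(1+e^{m_k^{(t+1)}})/(1+e^{m_i^{(t+1)}})\le e^{-m_i^{(t+1)}}+e^{m_k^{(t+1)}-m_i^{(t+1)}}$, properties 1 and 2 at $t+1$ bound this by $e^c+e^{C_1}=:C_2$. This completes the induction.

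The main obstacle is arranging the two-case argument for property 2 so that all the constants are mutually consistent without circularity: one needs $C_0\le C_1$, $C_1$ large enough to absorb both the base case and one extra gradient step, and then $C_2=e^c+e^{C_1}$ together with the requirement that the constant $C$ in the nearly-orthogonal assumption be large relative to $C_2$. A secondary difficulty, pervasive in steps (i) and (ii), is that the non-smoothness of $\sigma$ forces us to control the one-step change of each $\la\wb_{j,r}^{(t)},\xb_i\ra$ by dominating the off-diagonal (non-orthogonal) cross terms with the diagonal term \emph{uniformly over all neurons and all $i$}; this is exactly what property 3 at time $t$ supplies, and it is why the four statements must be proved jointly in one induction rather than separately.
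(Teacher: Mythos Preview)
Your proposal is correct and follows essentially the same route as the paper: a joint induction on $t$ establishing the four bullets simultaneously, with the near-orthogonality used to make the diagonal term in the one-step update dominate the cross terms (controlled via bullet 3), the two-case split on $|\ell_i'^{(t)}|/|\ell_k'^{(t)}|$ to propagate bullet 2, and bullet 3 recovered from bullets 1--2 via $(1+e^{m_k})/(1+e^{m_i})$; the paper even uses the same constants $\tfrac{1}{5}$ and $3$ in \eqref{ineq: margin incre} and chooses $C_1\ge\max\{4\beta,\log(60R^2)+1\}$, $C_2\ge 2(1+e^{C_1})$, matching your analysis. Your ordering (prove $S_i^{(t)}\subseteq S_i^{(t+1)}$ first, then the margin-increment bounds) is slightly cleaner than the paper's, but substantively identical.
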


\begin{proof}[Proof of Lemma~\ref{lm: logit ratio}]
We prove this lemma by induction. When $t=0$, since
\begin{align*}
    |y_i f(\Wb^{(0)},\xb_i)|&=\bigg|\sum_{j}jy_i F_{j}(\Wb_{j}^{(0)},\xb_i)\bigg|\\
    &=\bigg|\sum_{j}jy_i\cdot\frac{1}{m}\sum_{r=1}^{m}\sigma(\la\wb_{j,r}^{(0)},\xb_i\ra)\bigg|\\
    &\leq\sum_{j}\frac{1}{m}\sum_{r=1}^{m}|\sigma(\la\wb_{j,r}^{(0)},\xb_i\ra)|\\
    &\leq\sum_{j}\frac{1}{m}\sum_{r=1}^{m}|\la\wb_{j,r}^{(0)},\xb_i\ra|\\
    &\leq 2\beta,
\end{align*}
the first bullet holds as long as $c\geq 2\beta$. We also have
\begin{equation*}
    y_i f(\Wb^{(0)},\xb_i)-y_k f(\Wb^{(0)},\xb_k)\leq|y_if(\Wb^{(t)},\xb_i)|+|y_kf(\Wb^{(0)},\xb_k)|\leq 4\beta,
\end{equation*}
which verifies the second bullet at time $t=0$ as long as $C_1\geq 4\beta$. This leads to
\begin{align*}
    \frac{\ell_{i}'^{(0)}}{\ell_{k}'^{(0)}}&=\frac{1+\exp(y_k f(\Wb^{(0)},\xb_k))}{1+\exp(y_i f(\Wb^{(0)},\xb_i))}\\
    &\leq 2\big(1+\exp(y_k f(\Wb^{(0)},\xb_k)-y_i f(\Wb^{(0)},\xb_i))\big)\\
    &\leq 2(1+\exp(C_1))\\
    &\leq C_2,
\end{align*}
as long as $C_2\geq 2(1+\exp(C_1))$. For any $r\in S_{i}^{(0)}$, we have
\begin{align*}
    \la\wb_{y_i,r}^{(1)},\xb_i\ra&=\la\wb_{y_i,r}^{(0)},\xb_i\ra+\frac{\eta}{nm}\sum_{i'=1}^{n}|\ell_{i'}'^{(0)}|\cdot\sigma'(\la\wb_{y_i,r}^{(0)},\xb_i\ra)\cdot\la y_{i'}\xb_{i'},y_i\xb_i\ra\\
    &=\la\wb_{y_i,r}^{(0)},\xb_i\ra+\frac{\eta}{nm}\cdot|\ell_{i}'^{(0)}|\cdot\|\xb_i\|_2^2+\frac{\eta}{nm}\sum_{i'\neq i}|\ell_{i'}'^{(0)}|\cdot\sigma'(\la\wb_{y_i,r}^{(0)},\xb_i\ra)\cdot\la y_{i'}\xb_{i'},y_i\xb_i\ra\\
    &\geq\la\wb_{y_i,r}^{(0)},\xb_i\ra+\frac{\eta}{nm}\cdot|\ell_{i}'^{(0)}|\cdot\|\xb_i\|_2^2-\frac{\eta}{nm}\sum_{i'\neq i}|\ell_{i'}'^{(0)}|\cdot|\la\xb_{i'},\xb_i\ra|\\
    &\geq\la\wb_{y_i,r}^{(0)},\xb_i\ra\geq 0,
\end{align*}
where the second equality is by $\la\wb_{y_i,r}^{(0)},\xb_i\ra\geq0$; the first inequality is by triangle inequality; the second inequality is by $|\ell_{i'}'^{(0)}|/|\ell_{i}'^{(0)}|\leq C_2$ and the condition that $R_{\min}^{2}\geq Cnp$, $C$ is a sufficiently large constant. This verifies the fourth bullet at time $t=0$. 

Now suppose there exists time $\tilde{t}\geq 0$ such that  these four hypotheses hold for any $0\leq t\leq\tilde{t}$. We aim to prove that these conditions also hold for $t=\tilde{t}+1$. We first prove that $y_i f(\Wb^{(\tilde{t}+1)},\xb_i)\geq y_i f(\Wb^{(\tilde{t})},\xb_i)$. We have
\begin{align*}
    &y_i f(\Wb^{(\tilde{t}+1)},\xb_i)-y_i f(\Wb^{(\tilde{t})},\xb_i)\\
    &=\sum_{j}y_i j\Big(F_{j}(\Wb_{j}^{(\tilde{t}+1)},\xb_i)-F_{j}(\Wb_{j}^{(\tilde{t})},\xb_i)\Big)\\
    &=\sum_{j}y_i j\cdot\frac{1}{m}\sum_{r=1}^{m}\Big(\sigma(\la\wb_{j,r}^{(\tilde{t}+1)},\xb_i\ra)-\sigma(\la\wb_{j,r}^{(\tilde{t})},\xb_i\ra)\Big)\\
    &=\sum_{j}y_i j\cdot\frac{1}{m}\sum_{r=1}^{m}\frac{\sigma(\la\wb_{j,r}^{(\tilde{t}+1)},\xb_i\ra)-\sigma(\la\wb_{j,r}^{(\tilde{t})},\xb_i\ra)}{\la\wb_{j,r}^{(\tilde{t}+1)},\xb_i\ra-\la\wb_{j,r}^{(\tilde{t})},\xb_i\ra}\cdot\la-\eta\cdot\nabla_{\wb_{j,r}}L_{S}(\Wb^{(\tilde{t})}),\xb_i\ra\\
    &=\sum_{j}y_i j\cdot\frac{1}{m}\sum_{r=1}^{m}\frac{\sigma(\la\wb_{j,r}^{(\tilde{t}+1)},\xb_i\ra)-\sigma(\la\wb_{j,r}^{(\tilde{t})},\xb_i\ra)}{\la\wb_{j,r}^{(\tilde{t}+1)},\xb_i\ra-\la\wb_{j,r}^{(\tilde{t})},\xb_i\ra}\cdot\Big\la\frac{\eta}{nm}\sum_{i'=1}^{n}|\ell_{i'}'^{(\tilde{t})}|\cdot\sigma'(\la\wb_{j,r}^{(\tilde{t})},\xb_{i'}\ra)\cdot jy_{i'}\xb_{i'},\xb_i\Big\ra\\
    &=\sum_{j}\frac{1}{m}\sum_{r=1}^{m}\frac{\sigma(\la\wb_{j,r}^{(\tilde{t}+1)},\xb_i\ra)-\sigma(\la\wb_{j,r}^{(\tilde{t})},\xb_i\ra)}{\la\wb_{j,r}^{(\tilde{t}+1)},\xb_i\ra-\la\wb_{j,r}^{(\tilde{t})},\xb_i\ra}\cdot\frac{\eta}{nm}|\ell_{i}'^{(\tilde{t})}|\cdot\sigma'(\la\wb_{j,r}^{(\tilde{t})},\xb_{i}\ra)\cdot\|\xb_i\|_2^2\\
    &\qquad+\sum_{j}\frac{1}{m}\sum_{r=1}^{m}\frac{\sigma(\la\wb_{j,r}^{(\tilde{t}+1)},\xb_i\ra)-\sigma(\la\wb_{j,r}^{(\tilde{t})},\xb_i\ra)}{\la\wb_{j,r}^{(\tilde{t}+1)},\xb_i\ra-\la\wb_{j,r}^{(\tilde{t})},\xb_i\ra}\cdot\frac{\eta}{nm}\sum_{i'\neq i}|\ell_{i'}'^{(\tilde{t})}|\cdot\sigma'(\la\wb_{j,r}^{(\tilde{t})},\xb_{i'}\ra)\cdot\la y_{i'}\xb_{i'},y_i\xb_i\ra.
\end{align*}
By the fourth induction hypothesis at time $t=\tilde{t}$, we have $S_{i}^{(\tilde{t}+1)}\subseteq S_{i}^{(\tilde{t})}$ and hence
\begin{equation}\label{eq: sigma'}
    \frac{\sigma(\la\wb_{j,r}^{(\tilde{t}+1)},\xb_i\ra)-\sigma(\la\wb_{j,r}^{(\tilde{t})},\xb_i\ra)}{\la\wb_{j,r}^{(\tilde{t}+1)},\xb_i\ra-\la\wb_{j,r}^{(\tilde{t})},\xb_i\ra}=\frac{\la\wb_{j,r}^{(\tilde{t}+1)},\xb_i\ra-\la\wb_{j,r}^{(\tilde{t})},\xb_i\ra}{\la\wb_{j,r}^{(\tilde{t}+1)},\xb_i\ra-\la\wb_{j,r}^{(\tilde{t})},\xb_i\ra}=1,
\end{equation}
for $j=y_i$ and $r\in S_{i}^{(t)}$. For $\sigma\in\{\text{ReLU}, \text{leaky ReLU}\}$, $\sigma$ is non-decreasing and $1$-Lipschitz continuous, which gives
\begin{equation}\label{ineq: sigma'}
    0\leq \frac{\sigma(\la\wb_{j,r}^{(\tilde{t}+1)},\xb_i\ra)-\sigma(\la\wb_{j,r}^{(\tilde{t})},\xb_i\ra)}{\la\wb_{j,r}^{(\tilde{t}+1)},\xb_i\ra-\la\wb_{j,r}^{(\tilde{t})},\xb_i\ra}\leq 1.
\end{equation}
Then, we have
\begin{align*}
    y_i f(\Wb^{(\tilde{t}+1)},\xb_i)-y_i f(\Wb^{(\tilde{t})},\xb_i) &\geq\frac{\eta}{nm^2}\sum_{r\in S_{i}^{(\tilde{t})}}|\ell_{i}'^{(\tilde{t})}|\cdot\|\xb_i\|_2^2-\frac{\eta}{nm^2}\sum_{j}\sum_{r=1}^{m}\sum_{i'\neq i}|\ell_{i'}'^{(\tilde{t})}|\cdot|\la \xb_{i'},\xb_i\ra|\\
    &\geq\frac{\eta}{2nm^2}\sum_{r\in S_{i}^{(\tilde{t})}}|\ell_{i}'^{(\tilde{t})}|\cdot\|\xb_i\|_2^2\\
    &=\frac{\eta|S_{i}^{(\tilde{t})}|}{2nm^2}\cdot|\ell_{i}'^{(\tilde{t})}|\cdot\|\xb_i\|_2^2\\
    &\geq\frac{\eta}{5nm}\cdot|\ell_{i}'^{(\tilde{t})}|\cdot\|\xb_i\|_2^2,
\end{align*}
where the first inequality is by \eqref{eq: sigma'}, \eqref{ineq: sigma'}, $\sigma'\in[0,1]$ and triangle inequality; the second inequality is by $|\ell_{i'}'^{(\tilde{t})}|/|\ell_{i}'^{(\tilde{t})}|\leq C_2$, $|S_{i}^{(\tilde{t})}|\geq|S_{i}^{(0)}|\geq0.4m$ and the condition that $R_{\min}^{2}\geq Cnp$, $C$ is a sufficiently large constant. And
\begin{align*}
    &y_i f(\Wb^{(\tilde{t}+1)},\xb_i)-y_i f(\Wb^{(\tilde{t})},\xb_i)\\
    &\leq\sum_{j}\sum_{r=1}^{m}\frac{\eta}{nm^2}\sum_{r\in S_{i}^{(\tilde{t})}}|\ell_{i}'^{(\tilde{t})}|\cdot\|\xb_i\|_2^2+\frac{\eta}{nm^2}\sum_{j}\sum_{r=1}^{m}\sum_{i'\neq i}|\ell_{i'}'^{(\tilde{t})}|\cdot|\la \xb_{i'},\xb_i\ra|\\
    &\leq\frac{3\eta}{2nm^2}\sum_{j}\sum_{r=1}^{m}|\ell_{i}'^{(\tilde{t})}|\cdot\|\xb_i\|_2^2\\
    &=\frac{3\eta}{nm}\cdot|\ell_{i}'^{(\tilde{t})}|\cdot\|\xb_i\|_2^2,
\end{align*}
where the first inequality is by \eqref{ineq: sigma'}, $\sigma'\in[0,1]$ and triangle inequality; the second inequality is by $|\ell_{i'}'^{(\tilde{t})}|/|\ell_{i}'^{(\tilde{t})}|\leq C_2$, $|S_{i}^{(\tilde{t})}|\geq|S_{i}^{(0)}|\geq0.4m$ and the condition that $R_{\min}^{2}\geq Cnp$, $C$ is a sufficiently large constant. Now, we obtain
\begin{align}
    y_i f(\Wb^{(\tilde{t}+1)},\xb_i)&\geq y_i f(\Wb^{(\tilde{t})},\xb_i)+\frac{\eta}{5nm}\cdot|\ell_{i}'^{(\tilde{t})}|\cdot\|\xb_i\|_2^2,\label{ineq: margin increment1}\\
    y_i f(\Wb^{(\tilde{t}+1)},\xb_i)&\leq y_i f(\Wb^{(\tilde{t})},\xb_i)+\frac{3\eta}{nm}\cdot|\ell_{i}'^{(\tilde{t})}|\cdot\|\xb_i\|_2^2,\label{ineq: margin increment2}
\end{align}
which implies that $y_i f(\Wb^{(\tilde{t}+1)},\xb_i)\geq y_i f(\Wb^{(\tilde{t})},\xb_i)\geq -c$. This verifies the first bullet at time $t=\tilde{t}+1$. By subtracting \eqref{ineq: margin increment2} from \eqref{ineq: margin increment1}, we have
\begin{align*}
    &y_k f(\Wb^{(\tilde{t}+1)},\xb_k)-y_i f(\Wb^{(\tilde{t}+1)},\xb_i)\\
    &\leq y_k f(\Wb^{(\tilde{t})},\xb_k)-y_i f(\Wb^{(\tilde{t})},\xb_i)+\frac{3\eta}{nm}\cdot|\ell_{k}'^{(\tilde{t})}|\cdot\|\xb_k\|_2^2-\frac{\eta}{5nm}\cdot|\ell_{i}'^{(\tilde{t})}|\cdot\|\xb_i\|_2^2.
\end{align*}
If $|\ell_{i}'^{(\tilde{t})}|/|\ell_{k}'^{(\tilde{t})}|\geq 15R^2$, then $\frac{3\eta}{nm}\cdot|\ell_{k}'^{(\tilde{t})}|\cdot\|\xb_k\|_2^2\leq\frac{\eta}{5nm}\cdot|\ell_{i}'^{(\tilde{t})}|\cdot\|\xb_i\|_2^2$ and hence
\begin{equation*}
    y_k f(\Wb^{(\tilde{t}+1)},\xb_k)-y_i f(\Wb^{(\tilde{t}+1)},\xb_i)\leq y_k f(\Wb^{(\tilde{t})},\xb_k)-y_i f(\Wb^{(\tilde{t})},\xb_i)\leq C_1.
\end{equation*}
If $|\ell_{i}'^{(\tilde{t})}|/|\ell_{k}'^{(\tilde{t})}|< 15R^2$, then by Lemma~\ref{lm: useful lemma5}
\begin{equation*}
    y_k f(\Wb^{(\tilde{t})},\xb_k)-y_i f(\Wb^{(\tilde{t})},\xb_i)\leq\log(4|\ell_{i}'^{(\tilde{t})}|/|\ell_{k}'^{(\tilde{t})}|)<\log(60R^2),
\end{equation*}
and hence
\begin{align*}
     y_k f(\Wb^{(\tilde{t}+1)},\xb_k)-y_i f(\Wb^{(\tilde{t}+1)},\xb_i)&\leq y_k f(\Wb^{(\tilde{t})},\xb_k)-y_i f(\Wb^{(\tilde{t})},\xb_i)+\frac{3\eta}{nm}\cdot|\ell_{k}'^{(\tilde{t})}|\cdot\|\xb_k\|_2^2\\
    &\leq \log(60R^2)+1.
\end{align*}
Combining the two cases, we have
\begin{equation*}
     y_k f(\Wb^{(\tilde{t}+1)},\xb_k)-y_i f(\Wb^{(\tilde{t}+1)},\xb_i)\leq C_1,
\end{equation*}
as long as $C_1\geq\max\{4\beta,\log(60R^2)+1\}$, which verifies the fourth bullet at time $t=\tilde{t}+1$. By Lemma~\ref{lm: useful lemma5}, this leads to
\begin{align*}
    \frac{\ell_{i}'^{(\tilde{t}+1)}}{\ell_{k}'^{(\tilde{t}+1)}}&=\frac{1+\exp(y_k f(\Wb^{(\tilde{t}+1)},\xb_k))}{1+\exp(y_i f(\Wb^{(\tilde{t}+1)},\xb_i))}\\
    &\leq 2\big(1+\exp(y_k f(\Wb^{(\tilde{t}+1)},\xb_k)-y_i f(\Wb^{(\tilde{t}+1)},\xb_i))\big)\\
    &\leq 2(1+\exp(C_1))\\
    &\leq C_2,
\end{align*}
as long as $C_2\geq 2(1+\exp(C_1))$. For any $r\in S_{i}^{(\tilde{t}+1)}$, we have
\begin{align*}
    \la\wb_{y_i,r}^{(\tilde{t}+2)},\xb_i\ra&=\la\wb_{y_i,r}^{(\tilde{t}+1)},\xb_i\ra+\frac{\eta}{nm}\sum_{i'=1}^{n}|\ell_{i'}'^{(\tilde{t}+1)}|\cdot\sigma'(\la\wb_{y_i,r}^{(\tilde{t}+1)},\xb_i\ra)\cdot\la y_{i'}\xb_{i'},y_i\xb_i\ra\\
    &=\la\wb_{y_i,r}^{(\tilde{t}+1)},\xb_i\ra+\frac{\eta}{nm}\cdot|\ell_{i}'^{(\tilde{t}+1)}|\cdot\|\xb_i\|_2^2 \\
    &\qquad+\frac{\eta}{nm}\sum_{i'\neq i}|\ell_{i'}'^{(\tilde{t}+1)}|\cdot\sigma'(\la\wb_{y_i,r}^{(\tilde{t}+1)},\xb_i\ra)\cdot\la y_{i'}\xb_{i'},y_i\xb_i\ra\\
    &\geq\la\wb_{y_i,r}^{(\tilde{t}+1)},\xb_i\ra+\frac{\eta}{nm}\cdot|\ell_{i}'^{(\tilde{t}+1)}|\cdot\|\xb_i\|_2^2-\frac{\eta}{nm}\sum_{i'\neq i}|\ell_{i'}'^{(\tilde{t}+1)}|\cdot|\la\xb_{i'},\xb_i\ra|\\
    &\geq\la\wb_{y_i,r}^{(\tilde{t}+1)},\xb_i\ra\geq0,
\end{align*}
where the second equality is by $\la\wb_{y_i,r}^{(\tilde{t}+1)},\xb_i\ra\geq0$; the first inequality is by triangle inequality; the second inequality is by $|\ell_{i'}'^{(\tilde{t}+1)}|/|\ell_{i}'^{(\tilde{t}+1)}|\leq C_2$ and the condition that $R_{\min}^{2}\geq Cnp$, $C$ is a sufficiently large constant. This verifies the fourth bullet at time $t=\tilde{t}+1$. 
\end{proof}

Notice that $\|\Wb^{(t)}\|_{F}=\Theta(\log t)$ and considering the fact that the difference between any two margins can be bounded by a constant, the difference between any two margins can be bounded by a constant, we can derive the following lemma, which demonstrates that the normalized margin of all the training data points will converge to the same value.
\begin{lemma}\label{lm: same margin}
For both two-layer ReLU and leaky ReLU neural networks, gradient descent will asymptotically find a neural network such that all the training data points possess the same normalized margin, i.e.,
\begin{equation*}
\lim_{t\rightarrow\infty}\bigg|y_i f\Big(\frac{\Wb^{(t)}}{\|\Wb^{(t)}\|_{F}},\xb_i\Big)-y_kf\Big(\frac{\Wb^{(t)}}{\|\Wb^{(t)}\|_{F}},\xb_k\Big)\bigg|=0,
\end{equation*}
for any $i,k\in[n]$.
\end{lemma}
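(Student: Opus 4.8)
The plan is to reduce the claim to two facts already in hand: the uniform constant bound on the \emph{unnormalized} margin gap from Lemma~\ref{lm: logit ratio}, and the logarithmic growth $\|\Wb^{(t)}\|_F=\Theta(\log t)$ established in the coefficient analyses. The first thing I would record is that both the ReLU and leaky ReLU activations are positively homogeneous of degree one, i.e. $\sigma(cz)=c\,\sigma(z)$ for every $c>0$; consequently each partial network satisfies $F_j(c\Wb_j,\xb)=c\,F_j(\Wb_j,\xb)$, and hence $f(c\Wb,\xb)=c\,f(\Wb,\xb)$ for all $c>0$. Taking $c=1/\|\Wb^{(t)}\|_F$ (which is positive once $t$ is large, since the weight norm grows), this gives the exact identity
\begin{equation*}
 y_i f\Big(\frac{\Wb^{(t)}}{\|\Wb^{(t)}\|_F},\xb_i\Big)-y_k f\Big(\frac{\Wb^{(t)}}{\|\Wb^{(t)}\|_F},\xb_k\Big)=\frac{y_i f(\Wb^{(t)},\xb_i)-y_k f(\Wb^{(t)},\xb_k)}{\|\Wb^{(t)}\|_F}.
\end{equation*}

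Next I would invoke the second bullet of Lemma~\ref{lm: logit ratio}, which is valid for both activations and all $t\ge 0$: $y_i f(\Wb^{(t)},\xb_i)-y_k f(\Wb^{(t)},\xb_k)\le C_1$ for all $i,k\in[n]$. Since $i$ and $k$ play symmetric roles, this upgrades to the two-sided bound $|y_i f(\Wb^{(t)},\xb_i)-y_k f(\Wb^{(t)},\xb_k)|\le C_1$. Combining with the identity above yields
\begin{equation*}
 \Big|y_i f\big(\Wb^{(t)}/\|\Wb^{(t)}\|_F,\xb_i\big)-y_k f\big(\Wb^{(t)}/\|\Wb^{(t)}\|_F,\xb_k\big)\Big|\le \frac{C_1}{\|\Wb^{(t)}\|_F}.
\end{equation*}
Finally, I would use the weight-norm growth rate already derived: for leaky ReLU, $\|\wb_{j,r}^{(t)}\|_2=\Theta(\log t)$ for all $j,r$ (from the analysis underlying Lemma~\ref{lm: stable rank1} in Section~\ref{sec: leaky coefficient}), so $\|\Wb^{(t)}\|_F^2=\sum_{j,r}\|\wb_{j,r}^{(t)}\|_2^2=\Theta(\log^2 t)$; for ReLU, $\|\Wb_j^{(t)}\|_F=\Theta(\log t)$ (from Lemma~\ref{lm: stable rank2} and Lemma~\ref{lm: zeta log rate increase}), so again $\|\Wb^{(t)}\|_F^2=\|\Wb_{+1}^{(t)}\|_F^2+\|\Wb_{-1}^{(t)}\|_F^2=\Theta(\log^2 t)$. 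In either case $\|\Wb^{(t)}\|_F\to\infty$, so the right-hand side above tends to $0$ and the lemma follows.

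There is essentially no genuinely hard step in this argument; it is a short corollary of earlier work. The only points deserving care are (i) confirming that the constant $C_1$ in Lemma~\ref{lm: logit ratio} is truly uniform in $t$ (it is, since that lemma is proved by induction on all $t\ge 0$), and (ii) assembling the $\Theta(\log t)$ lower bound on $\|\Wb^{(t)}\|_F$ for the \emph{full} weight matrix out of the per-block or per-neuron statements, which is immediate from $\|\Wb^{(t)}\|_F^2=\|\Wb_{+1}^{(t)}\|_F^2+\|\Wb_{-1}^{(t)}\|_F^2$. Since both of these hold identically for ReLU and leaky ReLU, the single argument covers both cases at once.
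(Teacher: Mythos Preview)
Your proposal is correct and matches the paper's approach exactly: the paper treats this lemma as an immediate consequence of the constant bound on the unnormalized margin gap from Lemma~\ref{lm: logit ratio} together with $\|\Wb^{(t)}\|_F=\Theta(\log t)$, and in fact does not even write out a separate proof beyond the sentence preceding the lemma. Your explicit use of $1$-homogeneity to rewrite the normalized margin difference as the unnormalized difference divided by $\|\Wb^{(t)}\|_F$ is precisely the intended mechanism.
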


By Lemma~\ref{lm: logit ratio}, we can establish the subsequent lemma regarding the logarithmic rate of increase in margin. This lemma will be beneficial in demonstrating the convergence rate of the training loss in subsequent proofs.

\begin{lemma}\label{lm: margin logt}
There exists time $T=\Theta(\eta^{-1}R_{\min}^{-2}nm)$ such that the following increasing rate of margin $y_i f(\Wb^{(t)},\xb_i)$ holds:
\begin{equation*}
    \Big|y_i f(\Wb^{(t)},\xb_i)-\log t - \log(\eta\|\xb_i\|_2^2/nm)\Big|\leq C_3,
\end{equation*}
\begin{equation*}
     C_4\cdot\Big(\frac{\eta\|\xb_i\|_2^2}{nm}\cdot t\Big)^{-1}\leq|\ell_i'^{(t)}|\leq  C_5\cdot\Big(\frac{\eta\|\xb_i\|_2^2}{nm}\cdot t\Big)^{-1},
\end{equation*}
for any $i\in[n]$ and $t\geq T$, where $C_3,C_4,C_5$ are constants.
\end{lemma}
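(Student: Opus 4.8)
The plan is to reduce the margin dynamics to the scalar recursion handled by Lemma~\ref{lm: auxiliary1}, uniformly in the two activation choices (all inputs below hold for both ReLU and leaky ReLU via \eqref{eq: (leaky)ReLU nn}). Two facts established inside the proof of Lemma~\ref{lm: logit ratio} are the starting point. First, the per-step increment of the margin is two-sidedly controlled, $\frac{\eta}{5nm}|\ell_i'^{(t)}|\,\|\xb_i\|_2^2 \le y_i f(\Wb^{(t+1)},\xb_i) - y_i f(\Wb^{(t)},\xb_i) \le \frac{3\eta}{nm}|\ell_i'^{(t)}|\,\|\xb_i\|_2^2$ for all $t\ge 0$ (this is \eqref{ineq: margin incre}; in particular \eqref{ineq: margin increment1} shows the margin is nondecreasing in $t$). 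Second, the margin is uniformly bounded below, $y_i f(\Wb^{(t)},\xb_i) \ge -c$. Since $\ell'(z) = -(1+e^z)^{-1}$, once the margin is nonnegative we also have $\tfrac12 e^{-y_i f(\Wb^{(t)},\xb_i)} \le |\ell_i'^{(t)}| \le e^{-y_i f(\Wb^{(t)},\xb_i)}$, so on that regime \eqref{ineq: margin incre} becomes, with $x_t := y_i f(\Wb^{(t)},\xb_i)$, exactly the hypothesis \eqref{ineq: xt lowbound} with $C_1 = \frac{\eta\|\xb_i\|_2^2}{10nm}$ and $C_2 = \frac{3\eta\|\xb_i\|_2^2}{nm}$; note $C_2 = O(1)$, hence $e^{C_2} = \Theta(1)$, because $\eta \le (CR_{\max}^2/nm)^{-1}$.

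First I would prove the burn-in claim: there is $T = \Theta(\eta^{-1}R_{\min}^{-2}nm)$ with $y_i f(\Wb^{(T)},\xb_i) \ge 0$ for every $i\in[n]$, and hence, by monotonicity, $y_i f(\Wb^{(t)},\xb_i) \ge 0$ for all $t\ge T$. I would argue by contradiction: if $y_i f(\Wb^{(T)},\xb_i) < 1$, then by monotonicity $y_i f(\Wb^{(s)},\xb_i) < 1$ for all $s \le T$, hence $|\ell_i'^{(s)}| = (1+e^{y_i f(\Wb^{(s)},\xb_i)})^{-1} \ge (1+e)^{-1}$; summing the lower half of \eqref{ineq: margin incre} over $s = 0,\dots,T-1$ and using $y_i f(\Wb^{(0)},\xb_i) \ge -c$ gives $y_i f(\Wb^{(T)},\xb_i) \ge -c + \frac{\eta R_{\min}^2 T}{5nm(1+e)}$, which exceeds $1$ once $T$ is taken to be a sufficiently large constant multiple of $\eta^{-1}R_{\min}^{-2}nm$ — a contradiction. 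This step is self-contained: it only uses the monotonicity and the lower bound from Lemma~\ref{lm: logit ratio}, and needs no reference to any activation-pattern threshold.

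With the burn-in in hand, for $t\ge T$ I would set $x_s := y_i f(\Wb^{(T+s)},\xb_i) \ge 0$, which satisfies \eqref{ineq: xt lowbound} with the constants above, and apply Lemma~\ref{lm: auxiliary1} to get $\log(e^{-x_0} + C_1 s) \le x_s \le \log(e^{-x_0} + C_2 e^{C_2} s)$. Using $0 < e^{-x_0} \le 1$, together with $s = t-T \ge t/2$ for $t \ge 2T$ and $\frac{\eta\|\xb_i\|_2^2}{nm}t = \Omega(1)$ on this range (which again holds for $t = \Omega(\eta^{-1}R_{\min}^{-2}nm)$ since $R$ is a constant), both sides collapse to $\log t + \log(\eta\|\xb_i\|_2^2/nm) + O(1)$; enlarging $T$ by a constant factor (so it is still $\Theta(\eta^{-1}R_{\min}^{-2}nm)$) yields the first claimed bound. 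The second bound is then immediate: for $t\ge T$, $\tfrac12 e^{-y_i f(\Wb^{(t)},\xb_i)} \le |\ell_i'^{(t)}| \le e^{-y_i f(\Wb^{(t)},\xb_i)}$, and substituting the two-sided estimate on $y_i f(\Wb^{(t)},\xb_i)$ turns this into $C_4 (\frac{\eta\|\xb_i\|_2^2}{nm}t)^{-1} \le |\ell_i'^{(t)}| \le C_5 (\frac{\eta\|\xb_i\|_2^2}{nm}t)^{-1}$.

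I expect the only genuinely delicate step to be the burn-in: before the margin turns nonnegative, $|\ell_i'^{(t)}|$ and $e^{-y_i f(\Wb^{(t)},\xb_i)}$ are not tightly coupled, so Lemma~\ref{lm: auxiliary1} cannot be invoked directly, and one must argue separately — via monotonicity and the uniform lower bound on the margin — that this initial phase lasts only $\Theta(\eta^{-1}R_{\min}^{-2}nm)$ steps, which is also the scale on which $\frac{\eta\|\xb_i\|_2^2}{nm}t$ becomes $\Omega(1)$. Everything after the burn-in is a mechanical application of Lemma~\ref{lm: auxiliary1}, the elementary bound $\tfrac12 e^{-z} \le (1+e^z)^{-1} \le e^{-z}$ for $z\ge 0$, and routine manipulation of $\log(1 + \mathrm{const}\cdot t)$.
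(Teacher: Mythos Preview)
Your approach is correct and reaches the same conclusion, but it differs from the paper's in one structural way. You introduce a burn-in phase to push the margin nonnegative before invoking the sandwich $\tfrac12 e^{-z}\le (1+e^z)^{-1}\le e^{-z}$. The paper sidesteps this entirely: since Lemma~\ref{lm: logit ratio} already gives $y_i f(\Wb^{(t)},\xb_i)\ge -c$ for all $t$, one has $(1+e^z)^{-1}\ge \tfrac{1}{1+e^c}e^{-z}$ whenever $z\ge -c$, so the two-sided exponential bound on $|\ell_i'^{(t)}|$ holds from $t=0$. The paper then takes $x_t=y_i f(\Wb^{(t)},\xb_i)-y_i f(\Wb^{(0)},\xb_i)$, so $x_0=0$, and applies Lemmas~\ref{lm: useful lemma1}--\ref{lm: useful lemma2} directly; the threshold $T$ arises only from requiring $\tfrac{\eta\|\xb_i\|_2^2}{nm}t=\Omega(1)$ so that the ``$1+$'' inside the logarithm is negligible. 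Your route is a bit longer but self-contained; the paper's is slicker because it never needs the margin to be nonnegative, only bounded below.

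One small technical point: you invoke Lemma~\ref{lm: auxiliary1} with the conclusion written as $\log(e^{-x_0}+\cdots)$, and then use $e^{-x_0}\le 1$. The appendix proofs (Lemmas~\ref{lm: useful lemma1}--\ref{lm: useful lemma2}) actually give $\log(e^{x_0}+\cdots)$, so for your upper bound you also need $x_0=y_i f(\Wb^{(T)},\xb_i)=O(1)$. This is easy---summing the crude increment $y_i f(\Wb^{(t+1)},\xb_i)-y_i f(\Wb^{(t)},\xb_i)\le \tfrac{3\eta R_{\max}^2}{nm}$ over $T=\Theta(\eta^{-1}R_{\min}^{-2}nm)$ steps gives $y_i f(\Wb^{(T)},\xb_i)\le 2\beta+O(R^2)=O(1)$---but it should be stated. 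The paper avoids this by arranging $x_0=0$ via the shift.
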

\begin{proof}[Proof of Lemma~\ref{lm: margin logt}]
To prove this, we want to leverage Lemma~\ref{lm: useful lemma1} and Lemma~\ref{lm: useful lemma2}. To achieve this, we need approximate $|\ell_{i}'^{(t)}|$ by $y_i f(\Wb^{(t)},\xb_i)$. We have
\begin{equation}\label{ineq: logit-margin upbound}
    |\ell_{i}'^{(t)}|=\frac{1}{1+\exp\big(y_i f(\Wb_{y_i}^{(t)},\xb_i)\big)}\leq\exp\big(-y_i f(\Wb_{y_i}^{(t)},\xb_i)\big),
\end{equation}
and
\begin{equation}\label{ineq: logit-margin lowbound}
    |\ell_{i}'^{(t)}|=\frac{1}{1+\exp\big(y_i f(\Wb_{y_i}^{(t)},\xb_i)\big)}\geq\frac{1}{1+e^c}\cdot\exp\big(-y_i f(\Wb_{y_i}^{(t)},\xb_i)\big)
\end{equation}
by $y_i f(\Wb_{y_i}^{(t)},\xb_i)\geq -c$. Plugging the upper and lower bounds of $|\ell_{i}'^{(t)}|$ into \eqref{ineq: margin increment1} and \eqref{ineq: margin increment2}, we obtain
\begin{align}
    y_i f(\Wb^{(t+1)},\xb_i)-y_i f(\Wb^{(t)},\xb_i)&\leq\frac{3\eta\|\xb_i\|_2^2}{nm}\cdot\exp\big(-y_i f(\Wb_{y_i}^{(t)},\xb_i)\big),\label{ineq: margin increment4}\\
    y_i f(\Wb^{(t+1)},\xb_i)-y_i f(\Wb^{(t)},\xb_i)&\geq\frac{\eta\|\xb_i\|_2^2}{5(1+e^c)nm}\cdot\exp\big(-y_i f(\Wb_{y_i}^{(t)},\xb_i)\big).\label{ineq: margin increment3}
\end{align}
By taking $x_t=y_i f(\Wb^{(t)},\xb_i)-y_i f(\Wb^{(0)},\xb_i)$ and applying Lemma~\ref{lm: useful lemma1} to \eqref{ineq: margin increment4}, we can get
\begin{align*}
    y_i f(\Wb^{(t)},\xb_i)&\leq y_i f(\Wb^{(0)},\xb_i)+\log\bigg(1+\frac{3\eta\|\xb_i\|_2^2}{e^{y_i f(\Wb^{(0)},\xb_i)}nm}\exp\Big(\frac{3\eta\|\xb_i\|_2^2}{e^{y_i f(\Wb^{(0)},\xb_i)}nm}\Big)\cdot t\bigg).
\end{align*}
As long as $t\geq \frac{e^{y_i f(\Wb^{(0)},\xb_i)}nm}{3\eta\|\xb_i\|_2^2}\exp\Big(-\frac{3\eta\|\xb_i\|_2^2}{e^{y_i f(\Wb^{(0)},\xb_i)}nm}\Big)=\Theta(\eta^{-1}R_{\min}^{-2}nm)$, we have
\begin{align*}
    y_i f(\Wb^{(t)},\xb_i)&\leq y_i f(\Wb^{(0)},\xb_i)+\log\bigg(\frac{6\eta\|\xb_i\|_2^2}{e^{y_i f(\Wb^{(0)},\xb_i)}nm}\exp\Big(\frac{3\eta\|\xb_i\|_2^2}{e^{y_i f(\Wb^{(0)},\xb_i)}nm}\Big)\cdot t\bigg)\\
    &=\log t+\log\Big(\frac{\eta\|\xb_i\|_2^2}{nm}\Big)+\log 6+\frac{3\eta\|\xb_i\|_2^2}{e^{y_i f(\Wb^{(0)},\xb_i)}nm}\\
    &\leq\log t+\log\Big(\frac{\eta\|\xb_i\|_2^2}{nm}\Big)+\log 6+1,
\end{align*}
where the last inequality is by $y_i f(\Wb^{(0)},\xb_i)\leq2\beta$ and $\eta\leq(CR_{\max}^{2}/nm)^{-1}$, $C$ is a sufficiently large constant. By taking $x_t=y_i f(\Wb^{(t)},\xb_i)-y_i f(\Wb^{(0)},\xb_i)$ and applying Lemma~\ref{lm: useful lemma2} to \eqref{ineq: margin increment3}, we can get
\begin{align*}
    y_i f(\Wb^{(t)},\xb_i)&\geq y_i f(\Wb^{(0)},\xb_i)+\log\Big(1+\frac{\eta\|\xb_i\|_2^2}{5(1+e^c)e^{y_i f(\Wb^{(0)},\xb_i)}nm}\cdot t\Big)\\
    &\geq y_i f(\Wb^{(0)},\xb_i)+\log\Big(\frac{\eta\|\xb_i\|_2^2}{5(1+e^c)e^{y_i f(\Wb^{(0)},\xb_i)}nm}\cdot t\Big)\\
    &=\log t+\log\Big(\frac{\eta\|\xb_i\|_2^2}{nm}\Big)-\log\big(5(1+e^c)\big).
\end{align*}
Since
\begin{equation*}
    \frac{e^{y_i f(\Wb^{(0)},\xb_i)}nm}{3\eta\|\xb_i\|_2^2}\exp\Big(-\frac{3\eta\|\xb_i\|_2^2}{e^{y_i f(\Wb^{(0)},\xb_i)}nm}\Big)\leq\frac{e^{y_i f(\Wb^{(0)},\xb_i)}nm}{3\eta\|\xb_i\|_2^2}\leq\frac{e^{2\beta}}{3}\eta^{-1}R_{\min}^{-2}nm,
\end{equation*}
Taking
\begin{equation*}
    T=\bigg\lceil\frac{e^{2\beta}}{3}\eta^{-1}R_{\min}^{-2}nm\bigg\rceil,\,C_3=\max\big\{\log 6+1,\log\big(5(1+e^c)\big)\big\}
\end{equation*}
completes the proof of the first equation. By plugging the margin upper bound into \eqref{ineq: logit-margin upbound}, we can get
\begin{align*}
    |\ell_i'^{(t)}|&\leq\exp\big(-y_i f(\Wb_{y_i}^{(t)},\xb_i)\big)\\
    &\leq\exp\big(-\log t-\log(\eta\|\xb_i\|_2^2/nm)+C_3\big)\\
    &\leq \exp(C_3)\cdot\Big(\frac{\eta\|\xb_i\|_2^2}{nm}\cdot t\Big)^{-1}.
\end{align*}
By plugging the margin lower bound into \eqref{ineq: logit-margin lowbound}, we can get
\begin{align*}
     |\ell_{i}'^{(t)}|&\geq\frac{1}{1+e^c}\cdot\exp\big(-y_i f(\Wb_{y_i}^{(t)},\xb_i)\big)\\
     &\geq\frac{1}{1+e^c}\cdot\exp\big(-\log t-\log(\eta\|\xb_i\|_2^2/nm)-C_3\big)\\
     &\geq\frac{1}{e^{C_3}(1+e^c)}\cdot\Big(\frac{\eta\|\xb_i\|_2^2}{nm}\cdot t\Big)^{-1}.
\end{align*}
Therefore, taking $C_4=1/e^{C_3}(1+e^c)$ and $C_5=\exp(C_3)$ completes the proof.
\end{proof}

Now we give the following lemma about the convergence rate of training loss.

\begin{lemma}
For both two-layer ReLU and leaky ReLU networks defined in \eqref{eq: (leaky)ReLU nn}, we have the following convergence rate of training loss
    \begin{equation*}
        L_{S}(\Wb^{(t)})=\Theta(t^{-1}).
    \end{equation*}
\end{lemma}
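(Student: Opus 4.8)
The plan is to read off the loss rate directly from the margin estimate of Lemma~\ref{lm: margin logt}, which is stated for both ReLU and leaky ReLU. That lemma supplies a threshold $T=\Theta(\eta^{-1}R_{\min}^{-2}nm)$ such that, for every $t\ge T$ and every $i\in[n]$,
\[
\big|y_i f(\Wb^{(t)},\xb_i)-\log t-\log(\eta\|\xb_i\|_2^2/nm)\big|\le C_3 .
\]
Exponentiating this and using $R_{\min}\le\|\xb_i\|_2\le R_{\max}$ sandwiches the per-sample exponential proxy of the loss: $e^{-C_3}\tfrac{m}{\eta R_{\max}^2}\cdot t^{-1}\le e^{-y_i f(\Wb^{(t)},\xb_i)}\le e^{C_3}\tfrac{m}{\eta R_{\min}^2}\cdot t^{-1}$ for all $t\ge T$. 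Given this, the rest is elementary manipulation of $\ell(z)=\log(1+e^{-z})$.

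First I would prove the upper bound. Since $\log(1+u)\le u$ for $u\ge 0$, taking $u=e^{-y_i f(\Wb^{(t)},\xb_i)}$ gives, for $t\ge T$,
\[
L_S(\Wb^{(t)})=\frac1n\sum_{i=1}^n\log\big(1+e^{-y_i f(\Wb^{(t)},\xb_i)}\big)\le\frac1n\sum_{i=1}^n e^{-y_i f(\Wb^{(t)},\xb_i)}\le \frac{e^{C_3}m}{\eta R_{\min}^2}\cdot\frac1t=O(t^{-1}).
\]
For the lower bound I would use $\log(1+u)\ge u-u^2/2$ for $u\ge 0$, so that
\[
L_S(\Wb^{(t)})\ge\frac1n\sum_{i=1}^n\Big(e^{-y_i f(\Wb^{(t)},\xb_i)}-\tfrac12e^{-2y_i f(\Wb^{(t)},\xb_i)}\Big)\ge\frac{e^{-C_3}m}{\eta R_{\max}^2}\cdot\frac1t-\frac{e^{2C_3}m^2}{2\eta^2R_{\min}^4}\cdot\frac1{t^2}.
\]
Since the subtracted term is $O(t^{-2})$, for all $t$ large enough (equivalently, after enlarging $T$ by a constant factor depending only on the suppressed parameters) it is at most half the leading $\Theta(t^{-1})$ term, so $L_S(\Wb^{(t)})=\Omega(t^{-1})$ on $t\ge T$. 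Combining the two bounds gives $L_S(\Wb^{(t)})=\Theta(t^{-1})$ for $t\ge T$.

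It remains to handle the bounded initial segment $0\le t<T$. By Lemma~\ref{lm: logit ratio} the margins $y_i f(\Wb^{(t)},\xb_i)$ are nondecreasing in $t$ (indeed this is shown inside its proof, cf.\ \eqref{ineq: margin increment1}), hence $L_S(\Wb^{(t)})$ is nonincreasing; therefore on $[0,T)$ it lies between the positive constants $L_S(\Wb^{(T)})$ and $L_S(\Wb^{(0)})$, i.e.\ it is $\Theta(1)=\Theta(t^{-1})$ over this finite range of $t$, recalling that only the $t$-dependence is tracked. This completes the proof. The only real obstacle is purely organizational: all the substance is already packaged in Lemma~\ref{lm: margin logt}, whose proof in turn rests on the bounded margin-gap and loss-ratio estimates of Lemma~\ref{lm: logit ratio} and the growth lemmas (Lemmas~\ref{lm: useful lemma1} and \ref{lm: useful lemma2}); granted those inputs, the statement here is a short computation rather than a new argument.
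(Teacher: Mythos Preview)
Your proposal is correct and follows essentially the same approach as the paper: both read off the $\Theta(t^{-1})$ rate from the margin estimate of Lemma~\ref{lm: margin logt}, using $\log(1+u)\le u$ for the upper bound and $\log(1+u)\ge u-u^2/2$ for the lower bound. Your explicit treatment of the initial segment $0\le t<T$ via the monotonicity of the margins from \eqref{ineq: margin increment1} is a nice touch of rigor that the paper leaves implicit (it simply proves the bound for $t\ge T$, relying on the stated convention that only the $t$-dependence is tracked).
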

\begin{proof}
Having obtained a lower bound for the margin in Lemma~\ref{lm: margin logt}, we can now use it to derive an upper bound for the loss function as follows:
\begin{align*}
    L_{S}(\Wb^{(t)})&=\frac{1}{n}\sum_{i=1}^{n}\ell(y_i f(\Wb^{(t)},\xb_i))\\
    &=\frac{1}{n}\sum_{i=1}^{n}\log\big(1+\exp\big(-y_i f(\Wb^{(t)},\xb_i)\big)\big)\\
    &\leq\frac{1}{n}\sum_{i=1}^{n}\exp\big(-y_i f(\Wb^{(t)},\xb_i)\big)\\
    &\leq\frac{1}{n}\sum_{i=1}^{n}\exp\big(-\log t-\log(\eta\|\xb_i\|_2^2/nm)+C_3\big)\\
    &=\frac{1}{n}\sum_{i=1}^{n}\exp(C_3)\cdot\Big(\frac{\eta\|\xb_i\|_2^2}{nm}\cdot t\Big)^{-1}\\
    &=O(t^{-1}).
\end{align*}
where the first inequality is by $\log(1+z)\leq z$; the second inequality is by Lemma~\ref{lm: margin logt}.

Having obtained an upper bound for the margin in Lemma~\ref{lm: margin logt}, we can now use it to derive a lower bound for the loss function as follows:
\begin{align*}
    L_{S}(\Wb^{(t)})&=\frac{1}{n}\sum_{i=1}^{n}\ell(y_i f(\Wb^{(t)},\xb_i))\\
    &=\frac{1}{n}\sum_{i=1}^{n}\log\big(1+\exp\big(-y_i f(\Wb^{(t)},\xb_i)\big)\big)\\
    &\geq\frac{1}{n}\sum_{i=1}^{n}\exp\big(-y_i f(\Wb^{(t)},\xb_i)\big)-\exp\big(-2y_i f(\Wb^{(t)},\xb_i)\big)\\
    &\geq\frac{1}{n}\sum_{i=1}^{n}\exp\big(-\log t-\log(\eta\|\xb_i\|_2^2/nm)-C_3\big)\\
    &\qquad -\exp\big(-2(\log t+\log(\eta\|\xb_i\|_2^2/nm)-C_3)\big)\\
    &=\frac{1}{n}\sum_{i=1}^{n}\exp(-C_3)\cdot\Big(\frac{\eta\|\xb_i\|_2^2}{nm}\cdot t\Big)^{-1} -\exp(2C_3)\cdot\Big(\frac{\eta\|\xb_i\|_2^2}{nm}\cdot t\Big)^{-2}\\
    &=\Omega(t^{-1}).
\end{align*}
where the first inequality is by $\log(1+z)\geq z-z^2/2$ for $z\geq 0$; the second inequality is by Lemma~\ref{lm: margin logt}. This completes the proof.
\end{proof}

In addition to the aforementioned lemmas, in the case of leaky ReLU, assuming convergence in direction, we can demonstrate that the directional limit corresponds to a Karush-Kuhn-Tucker (KKT) point of the max-margin problem. This result is presented in the following lemma.
\begin{lemma}
For two-layer leaky ReLU network defined in \eqref{eq: (leaky)ReLU nn}, assume that $\Wb^{(t)}$ converges in direction, i.e. the limit of $\Wb^{(t)}/\|\Wb^{(t)}\|_{F}$ exists. Denote $\lim_{t\rightarrow\infty}\Wb^{(t)}/\|\Wb^{(t)}\|_{F}$ as $\bar{\Wb}$. There exists a scaling factor $\alpha>0$ such that $\alpha\bar{\Wb}$ satisfies Karush-Kuhn-Tucker (KKT) conditions of the following max-margin problem:
\begin{equation}
    \min_{\Wb} \frac{1}{2}\|\Wb\|_{F}^{2},\qquad s.t.\qquad y_i f(\Wb,\xb_i)\geq 1,\, \forall i\in[n].
\end{equation}
\end{lemma}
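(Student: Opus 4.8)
The plan is to adapt the KKT-convergence argument of \citet{lyu2019gradient}, exploiting that for $t\ge T_1$ the activation pattern is frozen (Lemma~\ref{lm: activation leaky}), which renders the (sub)gradients of $L_S$ and of each margin $y_if(\Wb,\xb_i)$ fully explicit. Recall that $\alpha\bar\Wb$ is a KKT point of \eqref{max-margin} (in the Clarke sense, as $f$ is $1$-homogeneous but non-smooth) provided there are multipliers $\lambda_i\ge0$ with $\alpha\bar\Wb=\sum_{i=1}^{n}\lambda_i\gb_i$ for some $\gb_i$ in the Clarke subdifferential of $\Wb\mapsto y_if(\Wb,\xb_i)$ at $\alpha\bar\Wb$, together with $y_if(\alpha\bar\Wb,\xb_i)\ge1$ and $\lambda_i(y_if(\alpha\bar\Wb,\xb_i)-1)=0$ for every $i$.

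First I would pin down the growth structure of $\wb_{j,r}^{(t)}$. For $t\ge T_1$ one has $\sign(\la\wb_{j,r}^{(t)},\xb_i\ra)=jy_i$, so in \eqref{eq:w_decomposition} the coefficients $\omega_{y_i,r,i}^{(t)}$ and $\zeta_{-y_i,r,i}^{(t)}$ vanish, and by Lemma~\ref{lm: same difference} the increments $\widehat\rho_i^{(t)}:=\zeta_{y_i,r,i}^{(t)}-\zeta_{y_i,r,i}^{(T_1)}$ are independent of $r$ and satisfy $\omega_{-y_i,r,i}^{(t)}=\omega_{-y_i,r,i}^{(T_1)}-\gamma\widehat\rho_i^{(t)}$. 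Substituting back gives $\wb_{j,r}^{(t)}=\rb_{j,r}+\sum_{i:\,y_i=j}\widehat\rho_i^{(t)}\|\xb_i\|_2^{-2}\xb_i-\gamma\sum_{i:\,y_i=-j}\widehat\rho_i^{(t)}\|\xb_i\|_2^{-2}\xb_i$, where $\rb_{j,r}$ gathers $\wb_{j,r}^{(0)}$ and the frozen coefficients $\zeta_{\cdot}^{(T_1)},\omega_{\cdot}^{(T_1)}$; since $T_1=\Theta(\eta^{-1}nmR_{\max}^{-2})$ these are $O(1)$, so $\|\rb_{j,r}\|_2$ is a finite constant independent of $t$. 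Dividing by $\|\Wb^{(t)}\|_F=\Theta(\log t)\to\infty$ (first bullet of Theorem~\ref{main theorem1}), the $\rb_{j,r}$ term washes out, and the assumed directional convergence yields $\bar\wb_{j,r}=\sum_{i:\,y_i=j}\mu_i\|\xb_i\|_2^{-2}\xb_i-\gamma\sum_{i:\,y_i=-j}\mu_i\|\xb_i\|_2^{-2}\xb_i=:\bar\wb_j$, the same vector for all $r$, with $\mu_i:=\lim_{t\to\infty}\widehat\rho_i^{(t)}/\|\Wb^{(t)}\|_F$. This limit exists: the nearly-orthogonal $\{\xb_i\}_{i=1}^{n}$ are linearly independent, so convergence of $\Wb^{(t)}/\|\Wb^{(t)}\|_F$ forces convergence of its coordinates in the basis $\{\|\xb_i\|_2^{-2}\xb_i\}$, and $\mu_i\ge0$ since each $\zeta_{y_i,r,i}^{(t)}$ is non-decreasing.

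Next I would fix the scale. By directional convergence and continuity of $f$, $y_if(\Wb^{(t)}/\|\Wb^{(t)}\|_F,\xb_i)\to y_if(\bar\Wb,\xb_i)$, and by the common-normalized-margin conclusion of Theorem~\ref{main theorem1} this limit, call it $\rho^{\ast}$, is the same for all $i$; moreover $\rho^{\ast}>0$ since $y_if(\Wb^{(t)},\xb_i)=\log t+O(1)$ (Lemma~\ref{lm: margin logt}) while $\|\Wb^{(t)}\|_F=\Theta(\log t)$. Take $\alpha=1/\rho^{\ast}>0$ and $\lambda_i=m\alpha\mu_i\|\xb_i\|_2^{-2}\ge0$. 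Primal feasibility and complementary slackness then hold at once, because $y_if(\alpha\bar\Wb,\xb_i)=\alpha\,y_if(\bar\Wb,\xb_i)=\alpha\rho^{\ast}=1$ for all $i$ by $1$-homogeneity. For stationarity, I would first note that $\la\bar\wb_j,\xb_i\ra>0$ when $y_i=j$ and $\la\bar\wb_j,\xb_i\ra<0$ when $y_i=-j$: the automatic-balance and activation-pattern lemmas show $\la\wb_{j,r}^{(t)},\xb_i\ra$ has sign $jy_i$ and magnitude $\Theta(\log t)$, so its normalization by $\|\Wb^{(t)}\|_F=\Theta(\log t)$ has a nonzero limit of the right sign. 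Hence at $\alpha\bar\Wb$ the Clarke subdifferential of each $y_if(\cdot,\xb_i)$ is a singleton, with $\nabla_{\wb_{j,r}}[y_if(\alpha\bar\Wb,\xb_i)]=\tfrac1m\xb_i$ if $y_i=j$ and $=-\tfrac\gamma m\xb_i$ if $y_i=-j$. Consequently $\sum_{i=1}^{n}\lambda_i\nabla_{\wb_{j,r}}[y_if(\alpha\bar\Wb,\xb_i)]=\tfrac1m\big(\sum_{i:\,y_i=j}\lambda_i\xb_i-\gamma\sum_{i:\,y_i=-j}\lambda_i\xb_i\big)=\alpha\bar\wb_j=\nabla_{\wb_{j,r}}\big[\tfrac12\|\Wb\|_F^2\big]\big|_{\alpha\bar\Wb}$ for all $j,r$, which is exactly stationarity with these $\lambda_i$.

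The step I expect to be the main obstacle is the second paragraph: carefully identifying which decomposition coefficients are zero versus growing after $T_1$, bounding $\|\rb_{j,r}\|_2$ uniformly in $t$, and, most delicately, upgrading ``$\Wb^{(t)}/\|\Wb^{(t)}\|_F$ converges'' to ``$\widehat\rho_i^{(t)}/\|\Wb^{(t)}\|_F$ converges'' for each $i$, for which the linear independence of the nearly-orthogonal data and the fact that $\|\rb_{j,r}\|_2=o(\|\Wb^{(t)}\|_F)$ are essential. A secondary subtlety is the non-smoothness of $f$: one must phrase stationarity via the Clarke subdifferential and verify it is single-valued at $\alpha\bar\Wb$, which is precisely where the strict inequalities $\la\bar\wb_j,\xb_i\ra\ne0$ are used.
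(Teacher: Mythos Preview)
Your proposal is correct and follows essentially the same route as the paper's proof: both use the data-correlated decomposition together with Lemma~\ref{lm: same difference} to show that $\bar\wb_{j,r}$ is a common (across $r$) linear combination of the $\xb_i$'s with the right sign structure, invoke the frozen activation pattern (Lemma~\ref{lm: activation leaky}) to identify $\sigma'(\la\bar\wb_{j,r},\xb_i\ra)$, and set $\alpha$ via the common normalized margin (Lemma~\ref{lm: same margin}). Your write-up is in fact slightly more careful than the paper's on two points: you explicitly argue that $\la\bar\wb_j,\xb_i\ra\ne0$ (so the Clarke subdifferential is a singleton), and you correctly carry the $1/m$ factor in $\nabla_{\wb_{j,r}}[y_if]$, which the paper's displayed stationarity equation silently drops.
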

\begin{proof}
We need to prove that there exists $\lambda_1,\cdots,\lambda_n\geq 0$ such that for every $j\in\{\pm1\}$ and $r\in[m]$ we have
\begin{equation}\label{KKT 1}
    \bar{\wb}_{j,r}=\sum_{i=1}^{n}\lambda_i\nabla_{\wb_{j,r}}\big(y_i f(\bar{\Wb},\xb_i)\big)=\sum_{i=1}^{n}\lambda_i y_i j\cdot\sigma'(\la\bar{\wb}_{j,r},\xb_i\ra)\cdot\xb_i.
\end{equation}
By \eqref{eq:w_decomposition}, we know that
\begin{align*}
    \bar{\wb}_{j,r}&=\lim_{t\rightarrow\infty}\frac{\wb_{j,r}^{(t)}}{\|\Wb^{(t)}\|_{F}}\\
    &=\lim_{t\rightarrow\infty}\frac{\wb_{j,r}^{(0)}}{\|\Wb^{(t)}\|_{F}}+\sum_{i=1}^{n}\frac{\rho_{j,r,i}^{(t)}}{\|\Wb^{(t)}\|_{F}}\cdot\|\xb_i\|_2^{-2}\cdot\xb_i\\
    &=\lim_{t\rightarrow\infty}\sum_{i=1}^{n}\frac{\rho_{j,r,i}^{(t)}}{\|\Wb^{(t)}\|_{F}}\cdot\|\xb_i\|_2^{-2}\cdot\xb_i\\
    &=\sum_{i=1}^{n}\lim_{t\rightarrow\infty}\frac{\rho_{j,r,i}^{(t)}}{\|\Wb^{(t)}\|_{F}}\cdot\|\xb_i\|_2^{-2}\cdot\xb_i,
\end{align*}
where the second equality is by $\|\Wb^{(t)}\|_{F}=\Theta(\log t)$ and the last equality is by the existence of $\lim_{t\rightarrow\infty}\Wb^{(t)}/\|\Wb^{(t)}\|_{F}$ as well as the uniqueness of data-correlated decomposition. By Lemma~\ref{lm: same difference} and $\|\Wb^{(t)}\|_{F}=\Theta(\log t)$, we can obtain that
\begin{equation}\label{eq: scaled rho limit}
    \lim_{t\rightarrow\infty}\frac{\rho_{j,r,i}^{(t)}}{\|\Wb^{(t)}\|_{F}}=\lim_{t\rightarrow\infty}\frac{\rho_{j,r',i}^{(t)}}{\|\Wb^{(t)}\|_{F}},
\end{equation}
for any $j\in\{\pm1\}$, $r,r'\in[m]$, $i\in[n]$, and
\begin{equation}\label{eq: scaled rho limit2}
    \lim_{t\rightarrow\infty}\frac{\rho_{j,r,i}^{(t)}}{\|\Wb^{(t)}\|_{F}}=-\gamma^{-1}\cdot\lim_{t\rightarrow\infty}\frac{\rho_{j,r',i'}^{(t)}}{\|\Wb^{(t)}\|_{F}},
\end{equation}
for any $j\in\{\pm1\}$, $i,i'\in[n]$ with $j=y_i$, $j=-y_{i'}$ and $r,r'\in[m]$. Define
\begin{equation*}
    S_{j}=\{i\in[n]:y_i=j\},\, \lambda_i':=\lim_{t\rightarrow\infty}\frac{\rho_{y_i,r,i}^{(t)}}{\|\Wb^{(t)}\|_{F}}.
\end{equation*}
By \eqref{eq: scaled rho limit}, we know $\lambda_i'$ is well defined and $\lambda_i'\geq 0$. And by \eqref{eq: scaled rho limit2}, we know that for any $r\in[m]$, $i\in[n]$,
\begin{equation*}
    \lim_{t\rightarrow\infty}\frac{\rho_{-y_{i},r,i}^{(t)}}{\|\Wb^{(t)}\|_{F}}=-\gamma\lambda_i'.
\end{equation*}
Then, we have
\begin{align*}
    \bar{\wb}_{j,r}&=\sum_{i=1}^{n}\lim_{t\rightarrow\infty}\frac{\rho_{j,r,i}^{(t)}}{\|\Wb^{(t)}\|_{F}}\cdot\|\xb_i\|_2^{-2}\cdot\xb_i\\
    &=\sum_{i\in S_{j}}\lim_{t\rightarrow\infty}\frac{\rho_{j,r,i}^{(t)}}{\|\Wb^{(t)}\|_{F}}\cdot\|\xb_i\|_2^{-2}\cdot\xb_i+\sum_{i\in S_{-j}}\lim_{t\rightarrow\infty}\frac{\rho_{j,r,i}^{(t)}}{\|\Wb^{(t)}\|_{F}}\cdot\|\xb_i\|_2^{-2}\cdot\xb_i\\
    &=\sum_{i\in S_{j}}\lambda_i'\cdot\|\xb_i\|_2^{-2}\cdot\xb_i-\sum_{i\in S_{-j}}\gamma\lambda_i'\cdot\|\xb_i\|_2^{-2}\cdot\xb_i.
\end{align*}
By Lemma~\ref{lm: activation leaky}, it holds for any $t\geq T_1$ that
\begin{align*}
    &\sigma'(\la\wb_{j,r}^{(t)},\xb_i\ra)=1, &&\forall j\in\{\pm 1\},\, i\in S_{j},\\
    &\sigma'(\la\wb_{j,r}^{(t)},\xb_i\ra)=\gamma, &&\forall j\in\{\pm 1\},\, i\in S_{-j}.
\end{align*}
This leads to
\begin{align*}
    &\sigma'(\la\bar{\wb}_{j,r},\xb_i\ra)=1, &&\forall j\in\{\pm 1\},\, i\in S_{j},\\
    &\sigma'(\la\bar{\wb}_{j,r},\xb_i\ra)=\gamma, &&\forall j\in\{\pm 1\},\, i\in S_{-j}.
\end{align*}
Thus, we can get
\begin{align*}
    \bar{\wb}_{j,r}&=\sum_{i\in S_{j}}\lambda_i'\cdot\sigma'(\la\bar{\wb}_{j,r},\xb_i\ra)\cdot\|\xb_i\|_2^{-2}\cdot\xb_i-\sum_{i\in S_{-j}}\lambda_i'\cdot\sigma'(\la\bar{\wb}_{j,r},\xb_i\ra)\cdot\|\xb_i\|_2^{-2}\cdot\xb_i\\
    &=\sum_{i=1}^{n}\lambda_i' y_i j\cdot\sigma'(\la\bar{\wb}_{j,r},\xb_i\ra)\cdot\|\xb_i\|_2^{-2}\cdot\xb_i.
\end{align*}
Taking $\lambda_i=\lambda_i'\|\xb_i\|_2^{-2}$ completes the proof of \eqref{KKT 1}. On the other hand, by Lemma~\ref{lm: same margin} and the assumption of the existence of $\Wb^{(t)}/\|\Wb^{(t)}\|_{F}$, we can get
\begin{equation*}
    y_i f(\bar{\Wb},\xb_i)=y_k f(\bar{\Wb},\xb_k),
\end{equation*}
for any $i,k\in[n]$. Taking $\alpha=1/y_i f(\bar{\Wb},\xb_i)$, we have
\begin{equation*}
    y_i f(\alpha\bar{\Wb},\xb_i)=1,
\end{equation*}
for any $i\in[n]$, which completes the proof.
\end{proof}

\section{Auxiliary Lemmas}
\begin{lemma}\label{lm: useful lemma1}
    Let $\{x_t\}_{t=0}^{\infty}$ be an non-negative sequence satisfying the following inequality:
    \begin{equation*}
        x_{t+1}-x_{t}\leq C\cdot e^{-x_t},\forall\, t\geq0
    \end{equation*}
    then we have
    \begin{equation*}
        x_{t}\leq\log(e^{x_0} + C e^{C}\cdot t).
    \end{equation*}
\end{lemma}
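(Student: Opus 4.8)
The plan is to pass to the exponentiated sequence $y_t := e^{x_t}$ and show that it grows by at most the fixed constant $Ce^{C}$ at each step; the bound on $x_t$ then drops out by summing and taking logarithms. Concretely, I would set $\delta_t := x_{t+1}-x_t$, so that the hypothesis reads $\delta_t \le Ce^{-x_t}$, and begin from the exact identity
\begin{equation*}
    y_{t+1}-y_t = e^{x_t}\bigl(e^{\delta_t}-1\bigr).
\end{equation*}

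The core step is a one-step estimate of the right-hand side. Since the hypothesis only controls $\delta_t$ from above, the sequence $\{x_t\}$ need not be monotone, so I would split into cases. If $\delta_t \le 0$, the right-hand side is non-positive and $y_{t+1}-y_t \le Ce^{C}$ holds trivially. If $\delta_t > 0$, I would apply the elementary inequality $e^{z}-1 \le z e^{z}$ for $z\ge 0$ (which follows by a termwise comparison of the power series of $e^z-1$ and $ze^z$) to get
\begin{equation*}
    y_{t+1}-y_t = e^{x_t}\bigl(e^{\delta_t}-1\bigr) \;\le\; e^{x_t}\,\delta_t\,e^{\delta_t} \;\le\; e^{x_t}\cdot Ce^{-x_t}\cdot e^{\delta_t} \;=\; Ce^{\delta_t},
\end{equation*}
and then use $x_t \ge 0$, which gives $\delta_t \le Ce^{-x_t} \le C$ and hence $Ce^{\delta_t}\le Ce^{C}$. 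In both cases $y_{t+1}-y_t \le Ce^{C}$.

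The last step is to telescope: summing over $s=0,\dots,t-1$ yields $y_t \le y_0 + Ce^{C}t = e^{x_0}+Ce^{C}t$, and applying the (monotone) logarithm gives $x_t \le \log\bigl(e^{x_0}+Ce^{C}t\bigr)$, which is exactly the claim. I do not anticipate any real obstacle here; the only points needing a little care are the non-monotonicity of $\{x_t\}$, which forces the sign split on $\delta_t$, and the choice of the sharp elementary bound $e^{z}-1\le ze^{z}$ (rather than a cruder linearization), which is precisely what produces the constant $Ce^{C}$ instead of something worse.
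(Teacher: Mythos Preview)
Your proof is correct and follows essentially the same approach as the paper: both arguments reduce to showing that the exponentiated increment $e^{x_{t+1}}-e^{x_t}$ is bounded by $Ce^{C}$ and then telescope. The only difference is cosmetic---the paper obtains this via the bound $e^{x_{t+1}}(x_{t+1}-x_t)\le Ce^{C}$ followed by an integral comparison $\sum_t e^{x_{t+1}}(x_{t+1}-x_t)\ge \int_{x_0}^{x_T} e^x\,dx$, whereas your use of $e^{z}-1\le ze^{z}$ gets to the same per-step estimate directly and handles the non-monotone case more explicitly.
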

\begin{proof}[Proof of Lemma~\ref{lm: useful lemma1}]
Given the inequality $x_{t+1}-x_t\leq C\cdot e^{-x_t}$ for all $t\geq 0$, we want to prove that $x_T\leq \log(e^{x_0}+C e^C\cdot T)$ for $T\geq 0$.
We start by manipulating the inequality as follows:
\begin{align*}
&x_{t+1}-x_t \leq C\cdot e^{-x_t}\\
\Longrightarrow \quad &x_{t+1}-x_t \leq C\cdot e^{-x_{t+1}+C} && \text{(using } x_{t+1}\text{ instead of } x_t\text{)}\\
\Longrightarrow \quad &e^{x_{t+1}}(x_{t+1}-x_t) \leq C e^{C} && \text{(multiplying both sides by } e^{x_{t+1}}\text{)}.
\end{align*}

Summing the inequality from $t=0$ to $t=T-1$, we get:
\begin{align*}
\sum_{t=0}^{T-1} e^{x_{t+1}}(x_{t+1}-x_t) &\leq C e^{C}\cdot T.
\end{align*}

Since $e^x$ is a monotone increasing function, we can approximate the above sum with an integral:
\begin{align*}
\int_{x_0}^{x_T} e^{x}dx &\leq C e^{C}\cdot T.
\end{align*}

Evaluating the integral, we get:
\begin{align*}
e^{x_T}-e^{x_0} &\leq C e^{C}\cdot T.
\end{align*}

Rearranging the inequality, we get:
\begin{align*}
e^{x_T} &\leq e^{x_0} + C e^{C}\cdot T.
\end{align*}

Taking the natural logarithm of both sides, we get:
\begin{align*}
x_T &\leq \log(e^{x_0} + C e^{C}\cdot T).
\end{align*}

Therefore, we have shown that $x_T\leq \log(e^{x_0}+C e^C\cdot T)$, as required.
\end{proof}

\begin{lemma}\label{lm: useful lemma2}
    Let $\{x_t\}_{t=0}^{\infty}$ be an sequence satisfying the following inequality:
    \begin{equation*}
        x_{t+1}-x_{t}\geq C\cdot e^{-x_t},\forall\, t\geq0
    \end{equation*}
    then we have
    \begin{equation*}
        x_{t}\geq\log(e^{x_0} + C \cdot t).
    \end{equation*}
\end{lemma}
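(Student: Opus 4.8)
The plan is to mirror the argument used for Lemma~\ref{lm: useful lemma1}, but with every inequality reversed and with the exponential compared at the \emph{left} endpoint of each step rather than the right one, so that no spurious factor $e^{C}$ appears. The key elementary fact is the tangent-line (convexity) bound for the exponential: for all $a,b\in\RR$,
\[
    e^{b}-e^{a}\;\geq\;e^{a}(b-a),
\]
since the graph of $\exp$ lies above each of its tangent lines.

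First I would apply this inequality with $a=x_t$ and $b=x_{t+1}$ and combine it with the hypothesis $x_{t+1}-x_t\geq C\cdot e^{-x_t}$. Since the step size is non-negative (indeed positive when $C>0$), this gives, for every $t\geq 0$,
\[
    e^{x_{t+1}}-e^{x_t}\;\geq\;e^{x_t}(x_{t+1}-x_t)\;\geq\;e^{x_t}\cdot C\,e^{-x_t}\;=\;C.
\]
Next I would telescope this bound over $t=0,1,\dots,T-1$ to obtain $e^{x_T}-e^{x_0}\geq C\cdot T$, that is, $e^{x_T}\geq e^{x_0}+C\cdot T$. Taking logarithms of both sides (both are positive) then yields $x_T\geq\log(e^{x_0}+C\cdot T)$, which is exactly the claimed lower bound.

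There is essentially no obstacle in this lemma; the only point deserving a moment's care is the choice to bound $e^{x_{t+1}}-e^{x_t}$ from below using $e^{x_t}$ (the endpoint at which the recursive hypothesis is phrased) rather than $e^{x_{t+1}}$, which is what keeps the constant clean and avoids needing any separate monotonicity assumption on $\{x_t\}$. This is the exact counterpart of the step in Lemma~\ref{lm: useful lemma1} where the summand $e^{x_{t+1}}(x_{t+1}-x_t)$ was bounded above by the integral $\int_{x_t}^{x_{t+1}}e^{s}\,ds$; here the same integral comparison, read as a lower bound $\int_{x_t}^{x_{t+1}}e^{s}\,ds\geq e^{x_t}(x_{t+1}-x_t)$, does the job.
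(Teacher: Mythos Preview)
Your proof is correct and essentially identical to the paper's: both multiply the hypothesis by $e^{x_t}$ to get $e^{x_t}(x_{t+1}-x_t)\geq C$, then bound $e^{x_{t+1}}-e^{x_t}\geq e^{x_t}(x_{t+1}-x_t)$ and telescope. The only cosmetic difference is that you invoke the tangent-line/convexity inequality $e^b-e^a\geq e^a(b-a)$ directly, whereas the paper phrases the same step as the Riemann-sum lower bound $\int_{x_t}^{x_{t+1}}e^{s}\,ds\geq e^{x_t}(x_{t+1}-x_t)$.
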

\begin{proof}[Proof of Lemma~\ref{lm: useful lemma2}]
Given the inequality $x_{t+1}-x_t\geq C\cdot e^{-x_t}$ for all $t\geq 0$, we want to prove that $x_T\geq \log(e^{x_0}+C\cdot T)$ for $T\geq 0$.
We start by manipulating the inequality as follows:
\begin{align*}
&x_{t+1}-x_t \geq C\cdot e^{-x_t} \\
\Longrightarrow \quad &e^{x_t}(x_{t+1}-x_t) \geq C && \text{(multiplying both sides by } e^{x_t}\text{)}.
\end{align*}

Summing the inequality from $t=0$ to $t=T-1$, we get:
\begin{align*}
\sum_{t=0}^{T-1} e^{x_t}(x_{t+1}-x_t) &\geq C\cdot T.
\end{align*}

Since $e^x$ is a monotone increasing function, we can approximate the above sum with an integral:
\begin{align*}
\int_{x_0}^{x_T} e^{x}dx &\geq C\cdot T.
\end{align*}

Evaluating the integral, we get:
\begin{align*}
e^{x_T}-e^{x_0} &\geq C\cdot T.
\end{align*}

Rearranging the inequality, we get:
\begin{align*}
e^{x_T} &\geq e^{x_0} + C\cdot T.
\end{align*}

Taking the natural logarithm of both sides, we get:
\begin{align*}
x_T &\geq \log(e^{x_0} + C\cdot T).
\end{align*}

Therefore, we have shown that $x_T\geq \log(e^{x_0}+C\cdot T)$, as required.
\end{proof}

\begin{lemma}[Theorem 4.4.5 in \cite{vershynin_2018}]\label{lm: useful lemma3}
Let $\Ab$ be an $m\times n$ random matrix whose entries $a_{ij}$ are independent, mean zero, sub-gaussian random variables. Then for any $t>0$ we have
\begin{equation*}
    \|\Ab\|_{2}\leq CK(\sqrt{m}+\sqrt{n}+t)
\end{equation*}
with probability at least $1-2\exp(-t^2)$. Here $K=\max_{i,j}\|a_{ij}\|_{\phi_2}$ where $\|\cdot\|_{\phi_2}$ is the sub-gaussian norm.
\end{lemma}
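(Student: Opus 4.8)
The statement is Theorem~4.4.5 of \cite{vershynin_2018}, so the plan is to reproduce the standard $\epsilon$-net argument. First I would fix $\epsilon = 1/4$ and pick an $\epsilon$-net $\mathcal{N}$ of the unit sphere $S^{n-1}$ and an $\epsilon$-net $\mathcal{M}$ of $S^{m-1}$; the usual volumetric covering bound gives $|\mathcal{N}| \le 9^{n}$ and $|\mathcal{M}| \le 9^{m}$. A routine approximation lemma then shows that the operator norm is controlled on the nets, namely $\|\Ab\|_{2} \le (1-2\epsilon)^{-1}\max_{\xb \in \mathcal{N},\,\yb \in \mathcal{M}} \la \Ab\xb, \yb\ra = 2\max_{\xb \in \mathcal{N},\,\yb \in \mathcal{M}} \la \Ab\xb, \yb\ra$, so it suffices to bound a finite maximum of bilinear forms.

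Next I would fix a single pair $\xb \in S^{n-1}$, $\yb \in S^{m-1}$ and note that $\la \Ab\xb,\yb\ra = \sum_{i,j} a_{ij} x_j y_i$ is a sum of independent, mean-zero sub-gaussian variables with $\|a_{ij} x_j y_i\|_{\phi_2} \le K |x_j||y_i|$. By the rotation-invariance (general Hoeffding) property of the sub-gaussian norm, $\|\la \Ab\xb,\yb\ra\|_{\phi_2} \le C K \big(\sum_{i,j} x_j^{2} y_i^{2}\big)^{1/2} = CK\|\xb\|_{2}\|\yb\|_{2} = CK$, hence $\PP\big[\la \Ab\xb,\yb\ra \ge u\big] \le 2\exp(-c u^{2}/K^{2})$ for every $u > 0$.

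Then I would take a union bound over the at most $9^{n+m}$ pairs in $\mathcal{N}\times\mathcal{M}$, giving
\[
\PP\Big[\max_{\xb \in \mathcal{N},\,\yb \in \mathcal{M}} \la \Ab\xb,\yb\ra \ge u\Big] \le 9^{\,n+m}\cdot 2\exp\!\big(-c u^{2}/K^{2}\big).
\]
Choosing $u = CK(\sqrt{m} + \sqrt{n} + t)$ with $C$ large enough that $c u^{2}/K^{2} \ge (\ln 9)(n+m) + t^{2} + \ln 2$ — possible because $u^{2} \ge C^{2}K^{2}(n + m + t^{2})$ — makes the right-hand side at most $2\exp(-t^{2})$. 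Combining with the net reduction yields $\|\Ab\|_{2} \le 2u = C'K(\sqrt{m}+\sqrt{n}+t)$ with probability at least $1 - 2\exp(-t^{2})$.

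The only real obstacle is the constant bookkeeping in the last step: one must arrange that the concentration rate $c u^{2}/K^{2}$ simultaneously absorbs the entropy term $(n+m)\ln 9$ produced by the net cardinality and the desired tail exponent $t^{2}$, which is precisely what forces the $\sqrt{m}+\sqrt{n}+t$ shape of the bound. The covering-number estimate, the net approximation of the operator norm, and the sub-gaussian tail of a single bilinear form are all classical and may simply be cited from \cite{vershynin_2018}.
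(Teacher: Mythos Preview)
Your proposal is correct and is precisely the standard $\epsilon$-net argument used to prove Theorem~4.4.5 in \cite{vershynin_2018}. The paper itself does not give a proof of this lemma at all: it simply states the result and cites Vershynin, treating it as a black box. So there is nothing to compare against --- you have supplied the proof that the paper omits, and your outline matches the textbook argument essentially verbatim (nets of $S^{n-1}$ and $S^{m-1}$, reduction of the operator norm to a finite maximum of bilinear forms, sub-gaussian concentration for each fixed pair, and a union bound absorbing the $9^{n+m}$ entropy into the choice of $u$). One tiny bookkeeping remark: in the final step you do not actually need the extra $\ln 2$, since the factor $2$ appears on both sides of the desired inequality; but this is harmless.
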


\begin{lemma}\label{lm: useful lemma4}
For $t\geq s>0$, we have
\begin{equation*}
    \frac{\log(1+at)}{\log(1+bt)}\geq\frac{\log(1+as)}{\log(1+bs)},
\end{equation*}
if $b>a>0$.
\end{lemma}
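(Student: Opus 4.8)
The plan is to deduce the inequality from a monotonicity statement: I will show that $g(t):=\log(1+at)/\log(1+bt)$ is non-decreasing on $(0,\infty)$, and then the claim follows immediately by plugging in $t\ge s>0$. Observe first that $g$ is well defined and that $f(t):=\log(1+at)$, $h(t):=\log(1+bt)$ are strictly positive on $(0,\infty)$, since $a,b>0$ give $1+at,1+bt>1$.

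To prove that $g$ is non-decreasing I would pass to the logarithmic derivative. Since $g=f/h$ with $f,h>0$, we have $g'(t)\ge 0$ exactly when $f'(t)/f(t)\ge h'(t)/h(t)$. Computing,
\[
\frac{f'(t)}{f(t)}=\frac{a}{(1+at)\log(1+at)},\qquad
\frac{h'(t)}{h(t)}=\frac{b}{(1+bt)\log(1+bt)},
\]
so it suffices to check $\dfrac{a}{(1+at)\log(1+at)}\ge \dfrac{b}{(1+bt)\log(1+bt)}$ for all $t>0$. Multiplying both sides by $t>0$ and setting $\phi(x):=\dfrac{x}{(1+x)\log(1+x)}$, this is precisely $\phi(at)\ge\phi(bt)$; since $0<at<bt$, it is enough to show that $\phi$ is non-increasing on $(0,\infty)$.

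The remaining step is an elementary one-variable estimate. The quotient rule gives
\[
\phi'(x)=\frac{(1+x)\log(1+x)-x\big(\log(1+x)+1\big)}{\big[(1+x)\log(1+x)\big]^2}=\frac{\log(1+x)-x}{\big[(1+x)\log(1+x)\big]^2},
\]
and since $\log(1+x)\le x$ for all $x>-1$, we conclude $\phi'(x)\le 0$ on $(0,\infty)$. Hence $\phi$ is non-increasing, which closes the argument.

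I do not expect a genuine obstacle here; the statement is just a monotonicity fact. The only mildly non-obvious move is the substitution $x=at$ (respectively $x=bt$), which collapses the two-parameter comparison into monotonicity of the single function $\phi$, after which everything reduces to the standard bound $\log(1+x)\le x$. As an alternative that avoids calculus, one could write $\log(1+at)=\int_0^a \frac{t\,d\tau}{1+\tau t}$ and compare integrands across $a<b$, but the derivative computation above is shorter.
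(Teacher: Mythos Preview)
Your proof is correct. Both you and the paper prove the same monotonicity statement---that $t\mapsto \log(1+at)/\log(1+bt)$ is nondecreasing on $(0,\infty)$---by checking the sign of its derivative, so the overall strategy is the same. The difference lies in how the sign is established: the paper applies the quotient rule directly and then studies the auxiliary function $g(t)=(\tfrac{1}{b}+t)\log(1+bt)-(\tfrac{1}{a}+t)\log(1+at)$, noting $g(0)=0$ and $g'(t)=\log(1+bt)-\log(1+at)>0$; you instead pass to the logarithmic derivative and, via the substitution $x=at,\,bt$, collapse the two-parameter comparison to the single-variable monotonicity of $\phi(x)=x/[(1+x)\log(1+x)]$, which follows from $\log(1+x)\le x$. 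Your route is a bit more conceptual (it isolates exactly which one-variable inequality is doing the work), while the paper's auxiliary-function trick is more hands-on; both are short and self-contained.
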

\begin{proof}[Proof of Lemma~\ref{lm: useful lemma4}]
Let $f(t)=\log(1+at)/\log(1+bt)$, and we want to prove that $f'(t)>0$ for all $t>0$. To find the derivative of $f(t)$, we use the quotient rule:
\begin{align*}
f'(t) &= \frac{(\log(1+bt))\frac{d}{dt}(\log(1+at)) - (\log(1+at))\frac{d}{dt}(\log(1+bt))}{(\log(1+bt))^2}\\
&= \frac{(\log(1+bt))\frac{a}{1+at} - (\log(1+at))\frac{b}{1+bt}}{(\log(1+bt))^2}\\
&= \frac{a(1+bt)\log(1+bt) - b(1+at)\log(1+at)}{(1+at)(1+bt)(\log(1+bt))^2}.
\end{align*}

Next, we define the function $g(t)=\big(\frac{1}{b}+t\big)\log(1+bt)-\big(\frac{1}{a}+t\big)\log(1+at)$, and we aim to show that $g'(t)>0$ for all $t>0$. We start by computing the derivative of $g(t)$:
\begin{equation*}
g'(t)=\log(1+bt)-\log(1+at).
\end{equation*}

Since $b>a$ and $t>0$, we have $1+bt>1+at$, which implies that $\log(1+bt)>\log(1+at)$. Therefore, we have $g'(t)>0$ for all $t>0$. Note that $g(0)=0$, we then have $g(t)>0$ for all $t>0$. Therefore, we have $a(1+bt)\log(1+bt)-b(1+at)\log(1+at)>0$ for all $t>0$, which in turn implies that $f'(t)>0$ for all $t>0$. Thus, we have shown that $f(t)$ is increasing for $t>0$ and hence $f(t)>f(s)$, which completes the proof.
\end{proof}

\begin{lemma}\label{lm: useful lemma5}
Let $g(z)=\ell'(z)=-1/(1+\exp(z))$, then we have that
\begin{equation*}
    \frac{g(z_2)}{g(z_1)}\leq2\big(1+\exp(z_1-z_2)\big),\,\forall z_1,z_2\in\RR,
\end{equation*}
and
\begin{equation*}
    \frac{g(z_2)}{g(z_1)}\geq\frac{1}{4}\exp(z_1-z_2),\, \forall z_1\in\RR, z_2\geq -1.
\end{equation*}
\end{lemma}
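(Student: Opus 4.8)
The plan is to reduce both inequalities to the explicit identity
\[
\frac{g(z_2)}{g(z_1)} \;=\; \frac{-1/(1+e^{z_2})}{-1/(1+e^{z_1})} \;=\; \frac{1+e^{z_1}}{1+e^{z_2}},
\]
which is legitimate because $g(z) = -1/(1+e^z)$ is strictly negative for every $z\in\RR$, so the ratio is always a well-defined positive number. After this reduction, each of the two claims becomes an elementary two-term estimate.

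For the upper bound I would write $1+e^{z_1} = 1 + e^{z_2}\cdot e^{z_1-z_2}$, so that
\[
\frac{1+e^{z_1}}{1+e^{z_2}} \;=\; \frac{1}{1+e^{z_2}} \;+\; \frac{e^{z_2}}{1+e^{z_2}}\,e^{z_1-z_2}.
\]
Both coefficients $\tfrac{1}{1+e^{z_2}}$ and $\tfrac{e^{z_2}}{1+e^{z_2}}$ lie in $(0,1)$, so the right-hand side is at most $1 + e^{z_1-z_2}$, which is $\le 2\big(1+e^{z_1-z_2}\big)$; this holds for all $z_1,z_2\in\RR$ with room to spare.

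For the lower bound I would bound the numerator below by $1+e^{z_1}\ge e^{z_1}$ and the denominator above using the hypothesis $z_2\ge -1$: since $e^{z_2}\ge e^{-1}>1/3$, we get $1\le 3e^{z_2}$ and hence $1+e^{z_2}\le 4e^{z_2}$. Combining these,
\[
\frac{1+e^{z_1}}{1+e^{z_2}} \;\ge\; \frac{e^{z_1}}{4e^{z_2}} \;=\; \frac14\,e^{z_1-z_2},
\]
as claimed.

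There is essentially no obstacle in this lemma; the only point requiring any care is the numeric step $z_2\ge -1 \Rightarrow 1+e^{z_2}\le 4e^{z_2}$, for which any threshold $z_2\ge -\log 3$ would suffice and $-1>-\log 3$. Everything else is a one-line manipulation of the logistic derivative, so I would simply carry out the two estimates above and conclude.
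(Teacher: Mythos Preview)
Your proof is correct. Both bounds follow exactly as you describe, and the numeric check $e^{-1}>1/3$ (equivalently $-1>-\log 3$) is the only place any care is needed.

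Compared with the paper: for the upper bound the paper splits into the two cases $z_1\le 0$ (giving $\le 2$) and $z_1>0$ (giving $\le 2e^{z_1-z_2}$), then combines them to get $2+2e^{z_1-z_2}$. Your convex-combination identity
\[
\frac{1+e^{z_1}}{1+e^{z_2}}=\frac{1}{1+e^{z_2}}+\frac{e^{z_2}}{1+e^{z_2}}\,e^{z_1-z_2}
\]
avoids the case split and in fact yields the sharper bound $1+e^{z_1-z_2}$, so the factor $2$ in the statement is slack. For the lower bound the paper multiplies numerator and denominator by $e^{-z_2}$ and uses $e^{-z_2}+1\le e+1<4$; your route $1+e^{z_2}\le 4e^{z_2}$ is the same estimate written on the other side, so the two arguments are equivalent there.
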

\begin{proof}[Proof of Lemma~\ref{lm: useful lemma5}]
We first prove the first inequality. For if $z_1\leq 0$, we have
\begin{equation*}
    \frac{g(z_2)}{g(z_1)}=\frac{1+\exp(z_1)}{1+\exp(z_2)}\leq\frac{2}{1+\exp(z_2)}\leq 2.
\end{equation*}
For if $z_1> 0$, we have
\begin{equation*}
    \frac{g(z_2)}{g(z_1)}=\frac{1+\exp(z_1)}{1+\exp(z_2)}\leq\frac{2\exp(z_1)}{\exp(z_2)}=2\exp(z_1-z_2).
\end{equation*}
Thus, for any $z_1\in\RR$, we have
\begin{equation*}
    \frac{g(z_2)}{g(z_1)}\leq 2+2\exp(z_1-z_2).
\end{equation*}
Now we prove the second inequality. We have
\begin{equation*}
    \frac{g(z_2)}{g(z_1)}=\frac{1+\exp(z_1)}{1+\exp(z_2)}=\frac{\exp(-z_2)+\exp(z_1-z_2)}{\exp(-z_2)+1}\geq\frac{\exp(z_1-z_2)}{\exp(1)+1}\geq\frac{1}{4}\exp(z_1-z_2),
\end{equation*}
which completes the proof.
\end{proof}

\end{document}